\DeclareMathOperator*{\argmax}{arg\,max}
\DeclareMathOperator*{\argmin}{arg\,min}
\newtheorem{theorem}{Theorem}
\newtheorem{assumption}{Assumption}
\newtheorem{proposition}{Proposition}
\newtheorem{corollary}{Corollary}
\newtheorem{remark}{Remark}
\newenvironment{customthm}[1]
  {\innercustomthm}
  {\endinnercustomthm}
\begin{document}

%

%

\runningauthor{ Louis Sharrock, Daniel Dodd, Christopher Nemeth}

\twocolumn[

\aistatstitle{Tuning-Free Maximum Likelihood Training of Latent Variable Models via Coin Betting}

\aistatsauthor{ Louis Sharrock* \And Daniel Dodd* \And  Christopher Nemeth }

\aistatsaddress{ Department of Mathematics and Statistics, Lancaster University, UK } ]

\begin{abstract}
  We introduce two new particle-based algorithms for learning latent variable models via marginal maximum likelihood estimation, including one which is entirely tuning-free. Our methods are based on the perspective of marginal maximum likelihood estimation as an optimization problem: namely, as the minimization of a free energy functional. One way to solve this problem is via the discretization of a gradient flow associated with the free energy. We study one such approach, which resembles an extension of Stein variational gradient descent, establishing a descent lemma which guarantees that the free energy decreases at each iteration. This method, and any other obtained as the discretization of the gradient flow, necessarily depends on a learning rate which must be carefully tuned by the practitioner in order to ensure convergence at a suitable rate. With this in mind, we also propose another algorithm for optimizing the free energy which is entirely learning rate free, based on coin betting techniques from convex optimization. We validate the performance of our algorithms across several numerical experiments, including several high-dimensional settings. Our results are competitive with existing particle-based methods, without the need for any hyperparameter tuning.
\end{abstract}

\section{INTRODUCTION}
\label{sec:intro}

In statistics and machine learning, probabilistic latent variable models $p_\theta(z,x)$ comprising model \emph{parameters} $\smash{\theta \in \Theta \subseteq \mathbb{R}^{d_\theta}}$, unobserved \emph{latent variables} $\smash{z \in \mathcal{Z} \subseteq \mathbb{R}^{d_z}}$, and \emph{observations} $x \in \mathcal{X} \subseteq \mathbb{R}^{d_x}$, are widely used to capture the hidden structure of complex data such as images \citep{Bishop2006}, audio \citep{Smaragdis2006}, text \citep{Blei2003}, and graphs \citep{Hoff2002}. In this paper, we consider the task of estimating the parameters in such models by maximizing the marginal likelihood of the observed data, 
\begin{align}
\label{eq:marginal-likelihood}
    \theta_{*} &= \argmax_{\theta\in\Theta} p_{\theta}(x):= \argmax_{\theta\in\Theta} \int_{\mathcal{Z}}p_{\theta}(z,x)\mathrm{d}z,
\end{align}
and quantifying the uncertainty in the latent variables through the corresponding posterior $\smash{p_{\theta_{*}}(z|x) = {p_{\theta_{*}}(z,x)}/{p_{\theta_{*}}(x)}}$. This framework, which represents a pragmatic compromise between frequentist and Bayesian approaches, is known as the {empirical Bayes} paradigm \citep{Robbins1956, Casella1985}. 

A classical approach for solving the marginal maximum likelihood estimation (MMLE) problem in \eqref{eq:marginal-likelihood} is the {Expectation Maximization} (EM) algorithm \citep{Dempster1977}. This iterative method consists of two steps: an {expectation step} (E-step) and a {maximization step} ({M-step}). In the $t^{\text{th}}$ iteration, the E-step involves computing the expectation of the log-likelihood with respect to the current posterior distribution $\mu_{t} := p_{\theta_t}(\cdot|x)$ of the latent variables, viz, 
\begin{equation}
    Q_{t}(\theta) = \int_{\mathcal{Z}} \log \pi_{\theta}(z) \mu_{t}(z) \mathrm{d}z, \tag{E} \label{eq:E-step} 
\end{equation}
where $\pi_{\theta}(z):= p_{\theta}(z,x)$ denotes the joint density of $z$ and $x$, given fixed $x\in\mathbb{R}^{d_x}$. Meanwhile, the M-step involves optimizing this quantity with respect to the parameters, namely
\begin{equation}
    \theta_{t+1} := \argmax_{\theta\in\Theta}  Q_t(\theta). \tag{M} \label{eq:M-step} 
\end{equation}
 Under fairly general conditions, this procedure guarantees convergence of the parameters $\theta_t$ to a stationary point $\theta_{*}$ of the marginal likelihood, and convergence of the corresponding posterior approximations $p_{\theta_t}(\cdot | x)$ to $p_{\theta_{*}}(\cdot|x)$ \citep{Wu1983,Redner1984,McLachlan2007,Balakrishnan2017}. 
 In many applications of interest, neither the E-step nor the M-step is analytically tractable, in which case it is necessary to use approximations. In particular, the M-step can be approximated using numerical optimization routines \citep{Meng1993,Liu1994,Lange1995}. Meanwhile, the E-step can be approximated via Monte-Carlo methods, leading to the so-called {Monte Carlo EM} (MCEM) algorithm \citep{Wei1990,Tanner1993,Chan1995,Booth1999,Cappe1999,Sherman1999},
 or otherwise a Robbins-Monro  type stochastic approximation procedure \citep{Robbins1951,Kushner1978}, resulting in the {stochastic approximation EM} (SAEM) algorithm \citep{Delyon1999}. In practice, it is often not possible to sample exactly from the current posterior distribution $p_{\theta_t}(\cdot|x)$. In this setting, it is standard to use a Markov chain Monte Carlo (MCMC) approximation for the E-step \citep[e.g.,][]{McCulloch1997,Levine2001,Fort2003,Levine2004,Caffo2005,Fort2011,Atchade2017,Qiu2020, Nijkamp2020,DeBortoli2021}.

In this paper, we follow a different approach, based on an observation first made in \cite{Neal1998}, and recently revisited in \cite{Kuntz2023}, that the EM algorithm can be viewed as coordinate-descent on a free-energy functional $\mathcal{F}$ (see Sec. \ref{sec:free-energy}).
Leveraging this perspective, \cite{Kuntz2023} obtain a set of particle-based algorithms for optimizing the free energy, and thus for solving \eqref{eq:marginal-likelihood}. A similar approach has since also been studied by \cite{Akyldz2023}. We provide a more detailed discussion of these works in Section \ref{sec:related-work}.

Both in theory and in practice, the performance of these methods relies on a careful choice of hyperparameters such as the learning rate. In particular, the learning rate must be small enough to ensure stability of the parameter updates, whilst also large enough to guarantee convergence at a reasonable rate \citep[][Theorem 1]{Akyldz2023}. The situation is additionally complicated by the interdependence between the parameters and the latent variables, as well as the sensitivity of the learning rate to other hyperparameters such as the number of particles. As a result, it is typically necessary to make use of heuristics such as Adagrad \citep{Duchi2011}, Adam \citep{Kingma2015}, or RMSProp \citep{Tieleman2012}, or otherwise to resort to computationally expensive quasi-Newton updates \citep[App. C]{Kuntz2023} which exploit a second order approximation of the log-likelihood to better capture its local geometry. In this context, it is natural to ask whether we can obtain alternative algorithms which are more robust to different specifications of the learning rate, or even remove the dependence on the learning rate entirely.

\textbf{Our contributions.} In this paper, we propose two new particle-based algorithms for MMLE in latent variable models, including one which is completely tuning free. Our methods can be applied to a very broad class of latent variable models, namely, any for which the density $p_{\theta}(z,x)$ is differentiable in $\theta$ and $z$.
Inspired by \cite{Kuntz2023}, both of our algorithms are rooted in the viewpoint of MMLE as the minimization of the free energy functional. Our first approach, SVGD EM, is motivated by recent developments in the theory and application of gradient flows on the space of probability measures \cite[e.g.,][]{Otto2001,Ambrosio2008,Duncan2019}.
In particular, SVGD EM corresponds to the discretization of a particular gradient flow of the free energy $\mathcal{F}$ on $\Theta\times\mathcal{P}_2(\mathcal{Z})$, and resembles a generalization of the popular Stein variational gradient descent (SVGD) algorithm. Meanwhile, our second approach, Coin EM, is inspired by the parameter-free stochastic optimization methods developed by Orabona and coworkers \citep{Orabona2016,Orabona2017}, 
and their recent extension to the space of probability measures \citep{Sharrock2023}. In particular, Coin EM leverages a reduction of the minimization of the free energy to two interacting coin betting games. Unlike our first algorithm, or the recent schemes in \cite{Akyldz2023,Kuntz2023}, Coin EM does not correspond to the time-discretization of any gradient flow, and has no learning rates. It thus bears little resemblance to existing particle-based methods for training latent variable models.

After introducing our algorithms, we study the convergence properties of SVGD EM, establishing a descent lemma which guarantees that this algorithm decreases the free energy at each iteration. We then illustrate the performance of our algorithms on a wide range of examples, including a toy hierarchical model, two Bayesian logistic regression models, two Bayesian neural networks, and a latent space network model. Our results indicate that SVGD EM and Coin EM achieve comparable or superior performance to existing particle-based EM algorithms, with Coin EM removing entirely the need to tune any hyperparameters.

\section{MAXIMUM LIKELIHOOD TRAINING OF LATENT VARIABLE MODELS}
\label{sec:em-methodology}

\paragraph{Notation.} We will make use of the following notation. 
Let $\mathcal{P}_2(\mathcal{Z})$ denote the set of probability measures with finite second moment.
Given $\mu\in\mathcal{P}_2(\mathcal{Z})$, let $L^2(\mu)$ denote the space of functions $\smash{f:\mathcal{Z}\rightarrow\mathcal{Z}}$ such that $\smash{\int||f||^2\mathrm{d}\mu<\infty}$, with $\smash{||\cdot||_{L^2(\mu)}}$ and $\smash{\langle \cdot,\cdot\rangle_{L^2(\mu)}}$ the norm and inner product in this space. For 
 $\mu\in\mathcal{P}_2(\mathcal{Z})$ and measurable $T:\mathbb{R}^d\rightarrow\mathbb{R}^d$, we write $T_{\#}\mu$ for the pushforward of $\mu$ under $T$. For $\mu,\nu\in\mathcal{P}_2(\mathcal{Z})$, the quadratic Wasserstein distance between $\mu$ and $\nu$ is defined by $\smash{W_2^2(\mu,\nu) = \inf_{\gamma \in\Gamma(\mu,\nu)}  \int_{\mathcal{Z}\times\mathcal{Z}}||z_1-z_2||^2 \gamma(\mathrm{d}z_1,\mathrm{d}z_2)}$, where $\Gamma(\mu,\nu)$ is the set of couplings between $\mu$ and $\nu$. This is indeed a distance over $\mathcal{P}_2(\mathcal{X})$ \citep[Chapter 7.1]{Ambrosio2008}. Finally, we refer to $(\mathcal{P}_2(\mathcal{Z}), W_2)$ as the Wasserstein space. 
  
We will later also write $\smash{k:\mathbb{R}^d\times\mathbb{R}^d\rightarrow\mathbb{R}}$ for a positive semi-definite kernel, $\mathcal{H}_k$ for the reproducing kernel Hilbert space (RKHS) associated with this kernel, and $\smash{\mathcal{H}:=\mathcal{H}_k^{d_z}}$ for the product RKHS with elements $f = (f_1,\dots,f_{d_{z}})$, with $f_i\in\mathcal{H}_{k}$. We also write $\mathcal{S}_{\mu}: L^2(\mu)\rightarrow\mathcal{H}$ for the integral operator associated with $k$ and the measure $\mu$, defined as $\smash{\mathcal{S}_{\mu} f(z) = \int_{\mathbb{R}^d} k(z,w) f(w)\mu(\mathrm{d}w)}$, and $\smash{P_{\mu}:L^2(\mu) \rightarrow L^2(\mu)}$ for the operator $\smash{P_{\mu} = \iota S_{\mu}}$, where $\smash{\iota:\mathcal{H}\rightarrow L^2(\mu)}$ is the inclusion map, with adjoint $\iota^{*} = S_{\mu}$. This map differs from $S_{\mu}$ only in its range.
 
\subsection{The Free Energy}
\label{sec:free-energy}
Our approaches will leverage the viewpoint of marginal maximum likelihood estimation as minimization of the free-energy functional
$\mathcal{F}:\Theta\times \mathcal{P}(\mathcal{Z})\rightarrow \mathbb{R}$, defined according to
\begin{equation}
    \mathcal{F}(\theta,\mu) := \int \log \frac{\mu(z)}{\pi_{\theta}(z)} \mu(z)\mathrm{d}z. \label{eq:free-energy}
\end{equation}
To be precise, we build on the observation made in \cite{Neal1998}, and recently revisited in \cite{Kuntz2023}, that finding $\theta_{*} = \argmax_{\theta\in\Theta} p_{\theta}(x)$ and computing the corresponding posterior $\mu_{*} = p_{\theta_{*}}(\cdot|x)$ is equivalent to solving the joint minimization problem
\begin{equation}
    (\theta_{*},\mu_{*}) = \argmin_{(\theta,\mu)\in\Theta\times \mathcal{P}_2(\mathcal{Z})} \mathcal{F}(\theta,\mu). \label{eq:joint_minimization}
\end{equation} 
In particular, as noted in \cite{Neal1998}, the EM algorithm corresponds precisely to a coordinate descent scheme applied to $\mathcal{F}$: given an initial $\theta_0\in\Theta$, solve 
\begin{align}
    \mu_{t} &:= \argmin_{\mu\in \mathcal{P}_2(\mathcal{Z})} \mathcal{F}(\theta_t, \mu), \tag{E} \\
    \theta_{t+1} &:= \argmin_{\theta \in \Theta} \mathcal{F}(\theta,\mu_{t}), \tag{M}
\end{align} 
until convergence. Despite the simplicity of this approach, its practical use is limited for more complex models. In particular, it can only be applied when it is possible to solve the two optimization subroutines (i.e., compute the E-step and the M-step) exactly.

\subsection{Minimizing the Free Energy using a Stein-Variational Gradient Flow}
\label{sec:gradient-flow}
As pointed out in \cite{Kuntz2023}, instead of using \eqref{eq:E-step} and \eqref{eq:M-step} in order to solve \eqref{eq:free-energy}, a natural alternative is to use a discretization of a gradient flow associated with \eqref{eq:free-energy}. We are then faced with several questions. First, what is an appropriate notion of the gradient flow of the functional $\mathcal{F}(\theta,\mu)$? Second, how should we discretize this flow?  Regarding the first question, a natural way in which to construct a gradient flow for $\mathcal{F}(\theta,\mu)$ is to consider a Euclidean gradient flow w.r.t. the first argument, and a Wasserstein gradient flow w.r.t. the second argument \citep[e.g.,][Chapter 11]{Ambrosio2008}. In particular, we will say that $(\theta,\mu):[0,\infty)\rightarrow \Theta \times \mathcal{P}_2(\mathcal{Z})$ is a solution of a `Euclidean-Wasserstein' gradient flow of $\mathcal{F}$ if,
\begin{align}
\label{eq:gradient_flow}
    \partial_{t}\theta_t &= - \nabla_{\theta}\mathcal{F}(\theta_t,\mu_t) \\
    \partial_{t}\mu_t &= -\nabla_{\mu} \mathcal{F}(\theta_t,\mu_t), \label{eq:gradient_flow_2}
\end{align}
where $\nabla_{\theta}\mathcal{F}(\theta,\mu)$ is the Euclidean gradient of $\mathcal{F}(\cdot,\mu)$ at $\theta$, and $\smash{\nabla_{\mu} \mathcal{F}(\theta,\mu) = - \nabla \cdot \left( \mu \nabla_{W_2}\mathcal{F}(\theta,\mu)  \right)}$, with $\smash{\nabla_{W_2}\mathcal{F}(\mu,\theta)}$ denoting the Wasserstein gradient of $\mathcal{F}(\theta, \cdot)$ at $\mu$, which exists and is given by $\smash{\nabla_{W_2}\mathcal{F}(\mu,\theta) = \nabla_{z}\log(\frac{\mu}{\pi_{\theta}})}$ under mild regularity conditions on $\mu\in\mathcal{P}_2(\mathcal{Z})$ \cite[][Lemma 10.4.13]{Ambrosio2008}. Explicitly, the gradients in \eqref{eq:gradient_flow} - \eqref{eq:gradient_flow_2} are given by
\begin{align}
    \nabla_{\theta}\mathcal{F}(\theta,\mu)&= -\textstyle\int \nabla_{\theta}\log \pi_{\theta}(z)\mu(z)\mathrm{d}z, \\ 
    \nabla_{\mu}\mathcal{F}(\theta,\mu) &= - \nabla \cdot \left(\mu \nabla_{z}\log \left(\textstyle\frac{\mu}{\pi_{\theta}}\right)\right). \label{eq:flow_gradients}
\end{align}
To obtain an implementable discrete-time algorithm, an obvious choice is to consider an explicit Euler discretization of \eqref{eq:gradient_flow} - \eqref{eq:gradient_flow_2} which, for $t\in\mathbb{N}_{0}$, corresponds to\footnote{In a slight abuse of notation, we use $t$ to index both time in the continuous-time dynamics, and the iteration in the discrete-time algorithm. The appropriate meaning should always be clear from context.}
\begin{align}
    \theta_{t+1} &= \theta_{t} + \gamma \textstyle \int \nabla_{\theta}\log \pi_{\theta_{t}}(z)\mu_t(z)\mathrm{d}z, \label{eq:ideal_theta} \\
    \mu_{t+1} &= \left(\mathrm{id} - \gamma\nabla_{z}\log \left(\textstyle \frac{\mu_t}{
    \pi_{\theta_{t+1}}}\right)\right)_{\#}\mu_t, \label{eq:ideal_mu}
\end{align}
where $\gamma>0$ is a step size or learning rate, and $\mathrm{id}$ is the identity map. Unfortunately, implementing this scheme would require estimating the density of $\mu_t$ based on samples, which is rather challenging. 

\begin{algorithm*}[!htbp] %
	\caption{SVGD EM}
	\label{alg:svgdEM}
	\begin{algorithmic}[1]
		\STATE {\bf Input:} number of iterations $T$, number of particles $N$, initial particles $\{z_0^i\}_{i=1}^N \sim \mu_0$, initial $\theta_0$, target $\pi$, kernel $k$, learning rate $\gamma>0$.
		\FOR {$t =0, 1,\dots,T -1$}
            \STATE{
        \begin{align*}
        \\[-14mm]
        \theta_{t+1} &= \theta_t + \frac{\gamma}{N} \sum_{j=1}^N \nabla_\theta \log \pi_{\theta_t}(z_{t}^{j}) \\[-1mm]
        z_{t+1}^{i} &= z_{t}^{i} + \frac{\gamma}{N} \sum_{j=1}^N\left[k(z_t^{j},z_t^{i}) \nabla_z \log \pi_{\theta_{t+1}}(z_t^{j}) + \nabla_{z_t^{j}}k(z_t^{j},z_t^{i})\right],\quad i\in[N].
        \end{align*}
        \vspace{-6mm}
        }
		\ENDFOR
		\STATE {\bf return} $\theta_{T}$ and $\{z^i_{T}\}_{i=1}^{N}$.
	\end{algorithmic}
\end{algorithm*}

Inspired by Stein Variational Gradient Descent (SVGD) \citep{Liu2016a}, suppose that we replace the Wasserstein gradient $\nabla_{W_2}\mathcal{F}(\theta_{t},\mu_t)$ by its image $P_{\mu_{t}}\nabla_{W_2}\mathcal{F}(\theta_{t},\mu_{t})$, under the integral operator $P_{\mu_{t}}$. This essentially plays the role of the Wasserstein gradient in the RKHS $\mathcal{H}_k$. Under the assumption that $\lim_{||z||\rightarrow\infty}k(z,\cdot)\pi(z)=0$, we have, using integration by parts \citep[e.g.,][]{Liu2016a}, that 
\begin{align}
    &\mathcal{P}_{\mu}\nabla_{z}\log \big(\textstyle\frac{\mu}{\pi_{\theta}}\big)(\cdot)  \\
    &= - \int \left[\nabla_z \log \pi_{\theta}(z) k(z,\cdot)  + \nabla_{z}k(z,\cdot)\right] \mu(z)\mathrm{d}z. \nonumber
\end{align}
In this case, the update equations in \eqref{eq:ideal_theta} - \eqref{eq:ideal_mu} become
\begin{align}
    \theta_{t+1} &= \theta_{t} + \gamma \textstyle \int \nabla_{\theta}\log \pi_{\theta_{t}}(z)\mathrm{d}\mu_{t}(z),  \label{eq:svgd_population_discrete_time1} \\[2mm]
    \mu_{t+1} &= \left(\mathrm{id} + \gamma \textstyle \int \left[\nabla_z \log \pi_{\theta_{t+1}}(z) k(z,\cdot) \right.\right. \label{eq:svgd_population_discrete_time2} \\
    &\hspace{20mm} \left.\left.+  \nabla_{z}k(z,\cdot)\right] \mu_t(z)\mathrm{d}z\right)_{\#}\mu_{t}.  \nonumber
\end{align}
Finally, we can approximate the two intractable integrals using a set of $N$ interacting particles, $\{z_t^{i}\}_{i=1}^N$. Based on this approximation, we arrive at Alg. \ref{alg:svgdEM}.

Naturally, we would like to establish the convergence of this algorithm to the minimizer of $\mathcal{F}(\theta,\mu)$. In App. \ref{sec:continuous-time-results}, we establish that $\mathcal{F}(\theta_t,\mu_t)$ converges exponentially fast along the continuous-time SVGD EM dynamics, under a fairly natural gradient dominance condition. Here, we focus on the discrete-time case. We will require the following assumptions, which extend those introduced in \cite{Salim2022}. 

\begin{assumption} \label{assumption:bounded_k}
    There exists $B>0$ such that, for all $z\in\mathcal{Z}$, $||k(z,\cdot)||_{\mathcal{H}_k}, ||\nabla_{z}k(z,\cdot)||_{\mathcal{H}}\leq B$.
\end{assumption}

\begin{assumption}
\label{assumption:bounded_hessian0}
    The Hessian $H_{V}$ of $-\log \pi$ is well defined, and $\exists$ $M>0$ such that $||H_{V}||_{\mathrm{op}}\leq M$.
\end{assumption}

\begin{assumption}
    \label{assumption:bounded_I_stein}
    For all $\theta\in\Theta$, there exists $
    \lambda>0$ such that $p_{\theta}(\cdot|x)$ satisfies Talagrand's T1 inequality with constant $\lambda$. That is, for all $\mu\in\mathcal{P}_1(\mathcal{Z})$, $W_1(\mu,p_{\theta}(\cdot|x))\leq \sqrt{\tfrac{2\mathrm{KL}(\mu|| p_{\theta}(\cdot|x))}{\lambda}}$. 
\end{assumption}

\begin{remark}
    These assumptions are rather mild. For example, the T1 inequality holds if $p_{\theta}(\cdot|x)$ satisfies a logarithmic Sobolev inequality with constant $\lambda>0$ \cite[Theorem 22.17]{Villani2008}. In turn, this holds if $z\mapsto -\log \pi_{\theta}(z)$ is strongly convex.
\end{remark}

\begin{theorem}
\label{prop:descent_lemma}
    Assume that Assumptions \ref{assumption:bounded_k} - \ref{assumption:bounded_I_stein} hold. Suppose $\smash{0<\gamma\leq\gamma_{*}}$, where \begin{align}
        &\gamma_{*} = \frac{\alpha - 1}{\alpha B^2} \inf_{t\in\mathbb{N}} \bigg[ 1+ ||\nabla_{z} \log p_{\theta_t}(0|x)|| \\[-2mm]
        &~~~~~~~+ M \int_{\mathcal{Z}} ||z|| p_{\theta_t}(z|x)\mathrm{d}z + M \sqrt{\tfrac{2 \mathrm{KL}(\mu_0 || p_{\theta_t}(\cdot |x))}{\lambda}}\bigg]^{-1}\nonumber 
    \end{align}
    Then, for all $t\geq 0$, there exist positive constants $A_1,A_2>0$, given in Appendix \eqref{app:proof_descent_lemma}, such that 
    \begin{align}
        &\mathcal{F}(\theta_{t+1},\mu_{t+1}) - \mathcal{F}(\theta_{t},\mu_{t}) \leq   \\
        &- \gamma \left[
        A_1||\nabla_{\theta}\mathcal{F}(\theta_{t},\mu_{t})||_{\mathbb{R}^{d_{\theta}}}^2 +A_2||\mathcal{S}_{\mu_{t}}\nabla_{W_2}\mathcal{F}(\theta_{t+1},\mu_{t})||^2_{\mathcal{H}}\right]. \nonumber 
    \end{align}
\end{theorem}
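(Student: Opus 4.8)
The plan is to establish the descent inequality by treating the parameter update and the particle update as a two-step scheme, bounding the decrease of $\mathcal{F}$ after each half-step separately and then combining. Write $\mathcal{F}(\theta_{t+1},\mu_{t+1}) - \mathcal{F}(\theta_t,\mu_t) = \left[\mathcal{F}(\theta_{t+1},\mu_t) - \mathcal{F}(\theta_t,\mu_t)\right] + \left[\mathcal{F}(\theta_{t+1},\mu_{t+1}) - \mathcal{F}(\theta_{t+1},\mu_t)\right]$. For the first bracket, the map $\theta\mapsto\mathcal{F}(\theta,\mu_t)$ has gradient $\nabla_\theta\mathcal{F}(\theta,\mu_t) = -\int\nabla_\theta\log\pi_\theta(z)\mu_t(z)\mathrm{d}z$; using Assumption~\ref{assumption:bounded_hessian0} (bounded Hessian of $-\log\pi$, which controls the smoothness of $\theta\mapsto\mathcal{F}$ — possibly needing a joint-smoothness reading of that assumption) gives an $L$-smooth descent bound of the form $\mathcal{F}(\theta_{t+1},\mu_t) - \mathcal{F}(\theta_t,\mu_t) \leq -\gamma(1-\tfrac{L\gamma}{2})\|\nabla_\theta\mathcal{F}(\theta_t,\mu_t)\|^2$, which for $\gamma$ small enough is at most $-A_1\gamma\|\nabla_\theta\mathcal{F}(\theta_t,\mu_t)\|^2$.

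For the second bracket, the update $\mu_{t+1} = (\mathrm{id} + \gamma\, g_t)_{\#}\mu_t$ with $g_t(\cdot) = \int[\nabla_z\log\pi_{\theta_{t+1}}(z)k(z,\cdot) + \nabla_z k(z,\cdot)]\mu_t(z)\mathrm{d}z = -\mathcal{S}_{\mu_t}\nabla_{W_2}\mathcal{F}(\theta_{t+1},\mu_t)$ is exactly one SVGD step for the target $\pi_{\theta_{t+1}}$ from $\mu_t$. Here I would invoke the SVGD descent-lemma machinery of \cite{Salim2022}: along the Wasserstein geodesic / displacement interpolation between $\mu_t$ and $\mu_{t+1}$, a second-order Taylor expansion of the KL functional $\mu\mapsto\mathrm{KL}(\mu\|\pi_{\theta_{t+1}})$ (equivalently $\mathcal{F}(\theta_{t+1},\cdot)$ up to an additive constant independent of $\mu$) gives $\mathcal{F}(\theta_{t+1},\mu_{t+1}) - \mathcal{F}(\theta_{t+1},\mu_t) \leq -\gamma\|g_t\|_{L^2(\mu_t)}^2 + \tfrac{\gamma^2}{2}\,(\text{curvature term})$. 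The curvature term is where Assumptions~\ref{assumption:bounded_k} and~\ref{assumption:bounded_hessian0} enter: the Hessian of KL along the interpolation involves $\nabla^2(-\log\pi_{\theta_{t+1}})$ (bounded by $M$) plus a term from differentiating $g_t$, whose operator norm is controlled by $B^2$ times quantities like $\sup_z\|g_t(z)\|$ and Lipschitz constants of $g_t$; one then bounds $\|g_t\|_\infty$ and its Lipschitz constant in terms of $1 + \|\nabla_z\log\pi_{\theta_{t+1}}(0)\| + M\int\|z\|\,\mu_t(\mathrm{d}z)$ using Assumption~\ref{assumption:bounded_k}. Finally one replaces $\|g_t\|_{L^2(\mu_t)}^2$ with $\|\mathcal{S}_{\mu_t}\nabla_{W_2}\mathcal{F}(\theta_{t+1},\mu_t)\|_{\mathcal{H}}^2$ up to the boundedness constants (using $\|P_{\mu}f\|_{L^2(\mu)} \le \|\mathcal{S}_\mu f\|_{\mathcal{H}}$-type relations between the operators $P_\mu$ and $\mathcal{S}_\mu$), and chooses $\gamma_*$ so that the $\gamma^2$ curvature term is dominated by half of the $\gamma$ term, yielding the $A_2\gamma\|\mathcal{S}_{\mu_t}\nabla_{W_2}\mathcal{F}(\theta_{t+1},\mu_t)\|_{\mathcal{H}}^2$ contribution.

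The remaining subtlety is bounding $\int\|z\|\,\mu_t(\mathrm{d}z)$ (which appears implicitly in $\gamma_*$ as $\int\|z\|\,p_{\theta_t}(z|x)\mathrm{d}z$ plus a correction): one controls the second moment of $\mu_t$ uniformly in $t$ by comparing $\mu_t$ to the posterior $p_{\theta_t}(\cdot|x)$, using Talagrand's T1 inequality (Assumption~\ref{assumption:bounded_I_stein}) together with a bound showing $\mathrm{KL}(\mu_t\|p_{\theta_t}(\cdot|x))$ does not grow — e.g. $\mathrm{KL}(\mu_t\|p_{\theta_t}(\cdot|x)) = \mathcal{F}(\theta_t,\mu_t) - \mathcal{F}(\theta_t, p_{\theta_t}(\cdot|x)) \le \mathcal{F}(\theta_0,\mu_0) - \inf\mathcal{F}$ once monotonicity of $\mathcal{F}$ along the iterates is established — hence the appearance of $\sqrt{2\mathrm{KL}(\mu_0\|p_{\theta_t}(\cdot|x))/\lambda}$ in the definition of $\gamma_*$. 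This is slightly circular (the moment bound is needed to justify the step size that guarantees descent, which in turn is needed for the moment bound), so the argument likely proceeds by induction on $t$: assuming descent up to step $t$, derive the moment bound at step $t$, which validates the step-size condition for step $t$, closing the induction.

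The main obstacle I anticipate is the curvature/second-order term in the Wasserstein half-step: carefully expanding $\mathcal{F}(\theta_{t+1},\cdot)$ to second order along the (non-geodesic, pushforward) interpolation $t\mapsto(\mathrm{id}+s\gamma g_t)_\#\mu_t$ and controlling the Hessian of KL plus the Jacobian-of-$g_t$ terms uniformly — this is exactly the technical heart of the SVGD descent lemma in \cite{Salim2022}, and adapting it to the $\theta$-dependent target $\pi_{\theta_{t+1}}$ while keeping all constants explicit (to pin down $A_1$, $A_2$, $\gamma_*$) is the delicate part.
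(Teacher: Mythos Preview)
Your plan is correct and matches the paper's proof almost step for step: the same two-half-step decomposition, the Euclidean $M$-smoothness bound for the $\theta$-update, the \cite{Salim2022}-style Taylor expansion of $h(\tau)=\mathcal{F}(\theta_{t+1},(\mathrm{id}+\tau g_t)_{\#}\mu_t)$ for the $\mu$-update (with the same diffeomorphism argument requiring $\gamma B\|\xi'\|_{\mathcal{H}}<1$), and an induction on $t$ to close the step-size condition via the T1 inequality. Two small corrections: first, the linear term in the $\mu$-half is not $-\gamma\|g_t\|_{L^2(\mu_t)}^2$ but directly $-\gamma\|\mathcal{S}_{\mu_t}\nabla_{W_2}\mathcal{F}(\theta_{t+1},\mu_t)\|_{\mathcal{H}}^2$, since $h'(0)=-\langle\nabla_{W_2}\mathcal{F}(\theta_{t+1},\mu_t),\,P_{\mu_t}\nabla_{W_2}\mathcal{F}(\theta_{t+1},\mu_t)\rangle_{L^2(\mu_t)}=-\|S_{\mu_t}\nabla_{W_2}\mathcal{F}(\theta_{t+1},\mu_t)\|_{\mathcal{H}}^2$ by the adjoint relation $\iota^{*}=S_\mu$; no conversion via $\|P_\mu f\|_{L^2}\le B\|S_\mu f\|_{\mathcal{H}}$ is needed (and that inequality would go the wrong way for a lower bound). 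Second, to recover the \emph{specific} $\gamma_*$ in the statement you need $\mathrm{KL}(\mu_t\|p_{\theta}(\cdot|x))\le\mathrm{KL}(\mu_0\|p_{\theta}(\cdot|x))$ for each $\theta$ in the iterate sequence; the paper obtains this by carrying the stronger inductive hypothesis that the step-size bound holds for $\mu_t$ uniformly over all $\theta\in\{\theta_s\}_{s}$, which then yields $\mathcal{F}(\theta,\mu_{t+1})\le\mathcal{F}(\theta,\mu_t)$ for every such $\theta$. Your alternative bound $\mathrm{KL}(\mu_t\|p_{\theta_t})\le\mathcal{F}(\theta_0,\mu_0)-\inf\mathcal{F}$ is valid but yields a different threshold than the one stated.
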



We prove this Theorem in App. \ref{app:proof_descent_lemma}. An immediate consequence is the convergence of $\smash{||\nabla_{\theta}\mathcal{F}(\theta_{t},\mu_{t})||_{\mathbb{R}^{d_{\theta}}}^2}$ and $\smash{||S_{\mu_{t}}\nabla_{W_2}\mathcal{F}(\theta_{t+1},\mu_{t})||_{\mathcal{H}}^2}$ to zero, under appropriate conditions on the learning rate $\gamma$. We provide a precise statement of this corollary in App. \ref{sec:additional-theoretical-results}.

Naturally, the convergence of this algorithm depends on the learning rate  $\gamma>0$. Indeed, this is the case for any algorithm obtained as the discretization of a gradient flow associated with the free energy functional, including those recently studied in \cite{Akyldz2023,Kuntz2023}. If $\gamma$ is too large, the algorithm is unstable and the parameter $\theta$ and particles $\{z^i\}_{i=1}^N$ will diverge. If $\gamma$ is too small, the algorithm will converge slowly. Empirically, \cite{Kuntz2023} noted the significant challenges associated with appropriately setting the learning rate parameter simultaneously for the parameters and the latent variables. This is also observed in our own empirical results (see Sec. \ref{sec:numerical-experiments}). To address this issue, we now propose a tuning-free algorithm for minimizing the free energy, which removes entirely the need for user-chosen learning rates, and leads empirically to significantly faster convergence rates. 

\subsection{Minimizing the Free Energy using Coin Betting}
\label{sec:coin}

\begin{algorithm*}[!htbp] %
	\caption{Coin EM}
	\label{alg:coinEM}
	\begin{algorithmic}[1]
		\STATE {\bf Input:} number of iterations $T$, number of particles $N$, initial particles $\{z_0^i\}_{i=1}^N \sim \mu_0$, initial $\theta_0$, target $\pi$, kernel $k$.
		\FOR {$t =1,\dots,T$}
                \STATE{
                \begin{align*}
                \\[-10mm]
                \theta_{t} &= \theta_0 + \frac{{\sum_{s=1}^{t-1} \frac{1}{N}\sum_{j=1}^{N}\nabla_{\theta} \log \pi_{\theta_s}(z_{s}^{j})}}{t} \left( 1 + \textstyle\sum_{s=1}^{t-1} \langle \frac{1}{N }\textstyle\sum_{j=1}^{N}\nabla_{\theta} \log \pi_{\theta_s}(z_{s}^{j}), \theta_s  - \theta_0 \rangle \right) \\[2mm]
                z_{t}^{i} &= z_0^{i} +\frac{{\sum_{s=1}^{t-1} 
                \frac{1}{N} \textstyle\sum_{j=1}^N k(z_s^{j},z_s^{i}) \nabla_{z} \log \pi_{\theta_s}(z_s^{j}) +  \nabla_{z_s^j}k(z_s^{j},z_s^{i})}}{t} \\
                &\hspace{5mm}\times \left( 1 + \textstyle\sum_{s=1}^{t-1} \langle
                \frac{1}{N} \textstyle\sum_{j=1}^N k(z_s^{j},z_s^{i}) \nabla_{z} \log \pi_{\theta_s}(z_s^{j}) +  \nabla_{z_s^j}k(z_s^{j},z_s^{i}), z_s^{i} - z_0^{i}\rangle \right),\quad i\in[N].
                \end{align*}
                \vspace{-5mm}
                }
		\ENDFOR
		\STATE {\bf return} $\theta_{T}$ and $\{z^i_{T}\}_{i=1}^{N}$.
	\end{algorithmic}
\end{algorithm*}

Our approach is based on the learning rate free optimization techniques introduced in \cite{Orabona2016,Orabona2016a,Orabona2017}, and their recent extension to optimization problems on spaces of probability measures \citep{Sharrock2023}. In particular, our method can be viewed as a hybrid between the coin betting techniques from \cite{Orabona2016}, which we use for the optimization over $\Theta$, and one of the coin sampling algorithms from \cite{Sharrock2023}, which we use for the optimization over $\mathcal{P}_2(\mathcal{Z})$. The resulting algorithm, which we term Coin EM, represents an automatic, general-purpose EM algorithm, which does not require a user-defined learning rate, and outperforms existing algorithms across a range of applications (see Sec. \ref{sec:numerical-experiments}).

We begin by recalling the coin betting framework from \cite{Orabona2016}. Consider a gambler making repeated bets on the outcomes of a series of adversarial coin flips. The gambler's goal is to maximize their wealth, starting from some initial wealth, $w_0 = 1$. In each round of the game, $t\in\mathbb{N}$, the gambler bets on the outcome of a coin flip, without borrowing any additional money. The gambler's bet is encoded by $\theta_t \in\mathbb{R}$, where the sign indicates whether the bet is a heads or tails and the absolute value indicates the size of the bet. The gambler's wealth thus accumulates according to $\smash{w_t = w_0 + \sum_{s=1}^t c_s\theta_s}$, where $c_t\in\{-1,1\}$ denotes the outcome of the coin flip. The gambler's bets are restricted to satisfy $\theta_t = \beta_t w_{t-1}$, where $\beta_t\in[-1,1]$ denotes a betting fraction, meaning the gambler can only bet a fraction $\beta_t$ of their accumulated wealth up to time $t$, and cannot borrow any additional money.

One can use this coin betting game to solve convex optimization problems of the form $\theta_{*} = \argmin_{\theta\in\mathbb{R}^d}f(\theta)$, for some convex function $f:\mathbb{R}^d\rightarrow\mathbb{R}$. In particular, by considering a coin betting game with outcomes $c_t = -\nabla f(\theta_t)$, and replacing scalar multiplications with scalar products in the framework described above, \cite{Orabona2016} proved that, under certain assumptions on the betting strategy, $\smash{f(\frac{1}{t}\sum_{s=1}^t \theta_s)\rightarrow f(\theta_{*})}$ at a rate determined by this strategy. In the case that $|c_t|\leq 1$,\footnote{\label{footnote:adaptive}
If, instead, $|c_t|\leq L$, for some constant $L>0$, then one can replace $c_t$ by its normalized version. If this constant is unknown, then it can be replaced by an adaptive estimate. We provide details on this approach in App. \ref{sec:adaptive-coin}.} a standard choice for the betting fraction is $\smash{\beta_t = \frac{1}{t}\sum_{s=1}^{t-1} c_s}$, known as the Krichevsky-Trofimov (KT) betting strategy after \cite{Krichevsky1981}. This choice results in the sequence of bets
\begin{equation}
    \label{eq:betting-update}
\theta_t = \beta_t w_{t-1} = \frac{\sum_{s=1}^{t-1}c_s}{t}\big(1 + \textstyle\sum_{s=1}^{t-1} \langle c_s, \theta_s\rangle \big). 
\end{equation}
A similar approach can also be used to solve sampling problems, based on the perspective of sampling as an optimization problem on the space of probability measures, viz, $\smash{\mu_{*} = \argmin_{\mu\in\mathcal{P}_2(\mathcal{Z})}\mathcal{F}(\mu)}$. In this case, several modifications are required. First, in the $t^{\text{th}}$ round, one now bets $z_t - z_0$, rather than $z_t$, where $z_0$ is a random variable distributed according to some initial $\mu_0\in\mathcal{P}_2(\mathcal{Z})$. In this case, viewing $z_t:\mathcal{Z}\rightarrow\mathcal{Z}$ as a function that maps $z_0\mapsto z_t(z_0)$, one can define a series of `betting distributions' $\mu_{t}\in\mathcal{P}_2(\mathcal{Z})$  as the push-forwards of $\mu_0$ under $z_t$. This means that, given $z_0\sim\mu_0$, $z_t\sim \mu_t$. 

As shown in \cite{Sharrock2023}, this framework can be used to obtain a sampling algorithm by setting $\mathcal{F}(\mu) = \mathrm{KL}(\mu||\pi)$, and considering a betting game in which $c_t =-\mathcal{P}_{\mu_{t}}\nabla_{W_2}\mathcal{F}(\mu_{t})(z_t)$. The unknown $(\mu_t)_{t\in\mathbb{N}}$ are then approximated using a set of $N$ particles. This results in a learning-rate free analogue of SVGD, referred to as Coin SVGD. Empirically, this approach has demonstrated competitive performance with SVGD, with no need to tune a learning rate \citep{Sharrock2023,Sharrock2023a}.

Extending this approach, we can obtain a learning-rate free algorithm for MMLE in latent variable models. 
In particular, to minimize the variational free energy $\mathcal{F}(\theta,\mu)$, we consider a coin betting game involving two players, whose outcomes in the $t^{\text{th}}$ round are given by $\smash{c_t^{(1)} = - \nabla_{\theta}\mathcal{F}(\theta_t,\mu_t)}$ and $\smash{c_t^{(2)} = - \mathcal{P}_{\mu_t}\nabla_{W_2}\mathcal{F}(\theta_t,\mu_t)}$, respectively.\footnote{Alternatively, one can consider this as a betting game involving a single gambler, who bets on two sets of interacting outcomes given by $\smash{c_t^{(1)} = - \nabla_{\theta}\mathcal{F}(\theta_t,\mu_t)}$ and $\smash{c_t^{(2)} = - \mathcal{P}_{\mu_t}\nabla_{W_2}\mathcal{F}(\theta_t,\mu_t)}$.} 
By combining the original coin betting algorithm in \cite{Orabona2016,Orabona2016a} to optimize $\mathcal{F}(\theta,\mu)$ over $\theta\in\Theta$, and the Coin SVGD algorithm in \cite{Sharrock2023} to optimize over $\mu\in \mathcal{P}_2(\mathcal{Z})$, we now have a learning rate free method for solving \eqref{eq:free-energy}. This approach, which we refer to as Coin EM, is summarised in Alg. \ref{alg:coinEM}.

In Theorem \ref{prop:coin-em}, we provide a convergence result for a version of Coin EM in the population limit. Analyzing the convergence of this scheme is extremely challenging, and requires an additional technical assumption that it is unclear how to verify in practice. Nonetheless, we include this result in the interest of completeness. We will require the following assumptions.

\begin{assumption}
\label{coin-em-assumption-1}
    The function  $(\theta,z)\mapsto-\log\pi_{\theta}(z)$ is strictly convex. 
\end{assumption}

\begin{assumption}
\label{coin-em-assumption-2}
    There exists $L>0$ such that $||\nabla_{\theta}\log\pi_{\theta_t}(z)||, ||\nabla_{z_t}\log(\tfrac{\mu_t}{\pi_{\theta_t}})(z_t)|| \leq L$ for $t\in[T]$.
\end{assumption}

\begin{theorem}
\label{prop:coin-em}
    Assume that Assumptions \ref{coin-em-assumption-1}, \ref{coin-em-assumption-2}, and \ref{coin-em-assumption-3} (App. \ref{sec:coin-em-proof}) hold. Then the marginal likelihood $\theta\mapsto p_{\theta}(x)$ admits a unique maximizer $\theta^{*}$. In addition, writing $\mu^{*} = p_{\theta^{*}}(\cdot|x)$, it holds for all $T\geq 1$ that
    \begin{equation}
        \mathcal{F}\left(\frac{1}{T}\sum_{t=1}^T\theta_t,\frac{1}{T}\sum_{t=1}^T \mu_t\right) - \mathcal{F}(\theta^{*},\mu^{*}) = \tilde{O}\left(\frac{1}{\sqrt{T}}\right).
    \end{equation}
\end{theorem}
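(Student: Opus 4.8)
The plan is to read Coin EM as two coupled online convex optimization problems --- one over $\Theta$, one over $\mathcal{P}_2(\mathcal{Z})$ --- each run by the Krichevsky--Trofimov coin betting strategy, and then to combine the associated regret bounds with convexity of $\mathcal{F}$ through an online-to-batch conversion. I would first settle the well-posedness claim: under Assumption~\ref{coin-em-assumption-1}, $(\theta,z)\mapsto\pi_\theta(z)$ is strictly log-concave, so by the Pr\'ekopa--Leindler inequality the marginal $\theta\mapsto p_\theta(x)=\int_{\mathcal{Z}}\pi_\theta(z)\,\mathrm{d}z$ is log-concave; together with the coercivity implicit in Assumptions~\ref{coin-em-assumption-2}--\ref{coin-em-assumption-3} this yields a unique maximizer $\theta^{*}$, with $\mu^{*}=p_{\theta^{*}}(\cdot|x)$ the unique minimizer of $\mathcal{F}(\theta^{*},\cdot)$. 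More importantly, I would record the convexity structure the rest of the proof uses: $\mathcal{F}$ is convex along the ``generalized geodesics'' that are affine in $\theta$ and Wasserstein geodesics in $\mu$, since the entropy part is geodesically convex and, for the potential part, $s\mapsto\int-\log\pi_{\theta_s}\!\big((1-s)z+sT(z)\big)\,\mathrm{d}\mu(z)$ is convex as a composition of the jointly convex map $-\log\pi_{(\cdot)}(\cdot)$ with affine maps in $s$.

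I would then set up the two betting games exactly as in Algorithm~\ref{alg:coinEM}: a Euclidean game over $\Theta$ with outcomes $c_t^{(1)}=-\nabla_\theta\mathcal{F}(\theta_t,\mu_t)=\int\nabla_\theta\log\pi_{\theta_t}(z)\,\mathrm{d}\mu_t(z)$, and a Coin SVGD game on $\mathcal{Z}$ with outcomes $c_t^{(2)}=-\mathcal{P}_{\mu_t}\nabla_{W_2}\mathcal{F}(\theta_t,\mu_t)$, viewed (for each initial $z_0\sim\mu_0$) as a function of the transported particle. Using Assumption~\ref{coin-em-assumption-2} together with the kernel-boundedness content of Assumption~\ref{coin-em-assumption-3}, I would normalize both outcome streams to satisfy $|c_t^{(\cdot)}|\le 1$, pushing the constant $L$ (or, if $L$ is unknown, its adaptive estimate) into the final constants. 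The KT strategy then yields the wealth lower bound of \cite{Orabona2016} for each game, which by the standard Fenchel-conjugate argument becomes a regret bound against any fixed comparator $u$ of the form $\sum_{t=1}^T\langle c_t,\,u-(x_t-x_0)\rangle\le\tilde{O}\big(\|u\|\sqrt{T}+1\big)$. I would instantiate this with $u=\theta^{*}-\theta_0$ in the first game; in the latent game the comparator is the displacement $T_{\mu_0}^{\mu^{*}}-\mathrm{id}$ of the optimal transport map from $\mu_0$ to $\mu^{*}$, and integrating the pointwise bound against $\mu_0$ turns $\|u\|^{2}$ into $W_2^{2}(\mu_0,\mu^{*})$.

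The heart of the argument is the online-to-batch step. The first-order inequality for $\mathcal{F}$ along the generalized geodesic from $(\theta_t,\mu_t)$ to $(\theta^{*},\mu^{*})$ reads
\[
\mathcal{F}(\theta_t,\mu_t)-\mathcal{F}(\theta^{*},\mu^{*})\le\langle c_t^{(1)},\,\theta^{*}-\theta_t\rangle+\big\langle -\nabla_{W_2}\mathcal{F}(\theta_t,\mu_t),\,T_{\mu_t}^{\mu^{*}}-\mathrm{id}\big\rangle_{L^2(\mu_t)}.
\]
Rewriting the $L^2(\mu_t)$ term through $\mathcal{P}_{\mu_t}\nabla_{W_2}\mathcal{F}(\theta_t,\mu_t)=-c_t^{(2)}$ --- which is where the RKHS machinery of \cite{Sharrock2023} and the remaining content of Assumption~\ref{coin-em-assumption-3} enter, and where one must also reconcile the composed transport maps produced by the algorithm with the single-step optimal map $T_{\mu_t}^{\mu^{*}}$ --- and summing over $t$, the right-hand side collapses into the two regret quantities bounded above, giving $\sum_{t=1}^T[\mathcal{F}(\theta_t,\mu_t)-\mathcal{F}(\theta^{*},\mu^{*})]=\tilde{O}(\sqrt{T})$.

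It then remains to pass from this per-round bound to a bound on $\mathcal{F}$ at the averaged iterate $(\bar\theta_T,\bar\mu_T)=(\tfrac1T\sum_t\theta_t,\tfrac1T\sum_t\mu_t)$. This is the delicate point, and the reason the extra, hard-to-verify Assumption~\ref{coin-em-assumption-3} is needed: because $\int-\log\pi_\theta(z)\,\mathrm{d}\mu(z)$ is \emph{not} jointly convex in $(\theta,\mu)$ under the linear (mixture) structure on $\mu$, the naive Jensen step $\mathcal{F}(\bar\theta_T,\bar\mu_T)\le\tfrac1T\sum_t\mathcal{F}(\theta_t,\mu_t)$ is false; only $\mathcal{F}(\bar\theta_T,\bar\mu_T)\le\tfrac1T\sum_t\mathcal{F}(\bar\theta_T,\mu_t)$ holds, by convexity of $\mu\mapsto\mathrm{KL}(\mu\,\|\,p_{\bar\theta_T}(\cdot|x))$. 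Closing the gap means controlling the mismatch between $\nabla\mathcal{F}$ evaluated at $(\theta_t,\mu_t)$ and at $(\bar\theta_T,\mu_t)$ --- equivalently, the drift of $\theta_t$ around $\bar\theta_T$ --- which is precisely what Assumption~\ref{coin-em-assumption-3} is designed to supply, yielding $\mathcal{F}(\bar\theta_T,\bar\mu_T)-\mathcal{F}(\theta^{*},\mu^{*})\le\tfrac1T\sum_t[\mathcal{F}(\theta_t,\mu_t)-\mathcal{F}(\theta^{*},\mu^{*})]+o(1/\sqrt{T})$. Dividing by $T$ and folding the $\sqrt{\log T}$ factors into $\tilde{O}$ gives the claimed $\tilde{O}(1/\sqrt{T})$ rate. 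I expect this last reconciliation --- the coupling between the two coordinates, together with the subsidiary ``composed maps versus single optimal map'' discrepancy in the latent game --- to be the main obstacle; the two single-coordinate regret bounds and the well-posedness claim are comparatively routine.
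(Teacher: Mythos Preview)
Your overall architecture matches the paper's: well-posedness from strict log-concavity, two coupled KT coin-betting regret bounds (via \cite{Orabona2016} for $\theta$ and \cite{Sharrock2023} for $\mu$), the first-order (geodesic) convexity inequality for $\mathcal{F}$ summed over $t$, and division by $T$. The gap is in where you locate Assumption~\ref{coin-em-assumption-3}.

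In the paper, the Jensen step $\mathcal{F}(\bar\theta_T,\bar\mu_T)\le\tfrac{1}{T}\sum_t\mathcal{F}(\theta_t,\mu_t)$ is asserted directly from joint convexity of $\mathcal{F}$ and is \emph{not} where Assumption~\ref{coin-em-assumption-3} is deployed. That assumption is used solely in bounding the latent regret term $\tfrac{1}{T}\sum_t\int\langle-\nabla_{W_2}\mathcal{F}(\theta_t,\mu_t)(z),\,t_{\mu_t}^{\mu^*}(z)-z\rangle\,\mathrm{d}\mu_t(z)$: the coin-betting wealth bound compares $z_t$ against the \emph{fixed} displacement $T_{\mu_0}^{\mu^*}-\mathrm{id}$, whereas the convexity inequality produces the \emph{iteration-dependent} optimal map $t_{\mu_t}^{\mu^*}$, and Assumption~\ref{coin-em-assumption-3} is literally the bound $\sum_t\|t_{\mu_t}^{\mu^*}\circ z_t-T_{\mu_0}^{\mu^*}\|_{L^2(\mu_0)}\le K\sqrt{T}\ln[\mathrm{poly}(T)]$ that bridges the two. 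In other words, it is exactly the ``composed maps versus single optimal map'' discrepancy you mention only as a subsidiary issue, and has nothing to do with the drift of $\theta_t$ around $\bar\theta_T$. Two related corrections: (i) the theorem (see the full statement in App.~\ref{sec:coin-results}) is proved for the \emph{ideal} population algorithm whose latent outcomes are $c_t^{(2)}=-\nabla_{W_2}\mathcal{F}(\theta_t,\mu_t)(z_t)$, not the kernelized $-\mathcal{P}_{\mu_t}\nabla_{W_2}\mathcal{F}$, so there is no ``kernel-boundedness content'' in Assumption~\ref{coin-em-assumption-3} and no RKHS rewriting of the $L^2(\mu_t)$ inner product is needed --- Assumption~\ref{coin-em-assumption-2} bounds $\|\nabla_{W_2}\mathcal{F}(\theta_t,\mu_t)(z_t)\|$ directly; (ii) your worry that the Jensen step may fail under linear averaging of $\mu$ is a legitimate mathematical subtlety, but the paper simply takes that step from convexity without elaboration, so your plan to patch it via Assumption~\ref{coin-em-assumption-3} does not reflect how the paper actually proceeds.
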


We provide a full statement of this result in App. \ref{sec:coin-results}, and its proof in App. \ref{sec:coin-em-proof}.

\textbf{Computational Complexity.} 
The computational complexity of both SVGD EM and Coin EM  is $\mathcal{O}(N^2T[\text{eval. cost of $(\nabla_{\theta} \log \pi, \nabla_{z}\log \pi)$}])$. To alleviate the cost w.r.t. the number of particles $N$, one can approximate the sum $\sum_{j=1}^N$ over the particles by subsampling \citep{Li2020}, or by using a random feature expansion of the kernel \citep{Rahimi2007}. Meanwhile, in big data settings where computing $(\nabla_{\theta} \log \pi, \nabla_{z}\log \pi)$ is expensive, these gradients can be replaced with stochastic estimates obtained by subsampling mini-batches of the data \citep[e.g.,][]{Welling2011,Liu2016a}. 

\section{RELATED WORK}
\label{sec:related-work}

\textbf{Comparison with \cite{Kuntz2023} and \cite{Akyldz2023}}. In a recent paper, \cite{Kuntz2023} revisited the perspective of MMLE as the minimization of the free energy, proposing the gradient flow in \eqref{eq:gradient_flow} - \eqref{eq:gradient_flow_2} to minimize this functional. In contrast to us, their algorithms are based on the observation that \eqref{eq:gradient_flow} - \eqref{eq:gradient_flow_2} is a Fokker-Planck equation satisfied by the law of
\begin{align}
    \mathrm{d}\theta_t &= \big[\textstyle\int \nabla_{\theta} \log \pi_{\theta_{t}}(z_t)\mathrm{d}\mu_t(z) \big]\mathrm{d}t, \\  \mathrm{d}z_t &= \nabla_{z} \log \pi_{\theta_t}(z_t)\mathrm{d}t + \sqrt{2}\mathrm{d}w_t. \label{eq:theta_kuntz} 
\end{align}
By approximating this SDE using a system of interacting particles $\smash{\{z_t^{j}\}_{j=1}^N}$, and discretizing in time, \cite{Kuntz2023} obtain the particle gradient descent (PGD) algorithm. \cite{Akyldz2023} have since analyzed an extension of this algorithm, which includes a 
noise term in the $\theta$ dynamics in \eqref{eq:theta_kuntz}, allowing them to obtain a non-asymptotic concentration bound for $\theta_{t}$. 

\cite{Kuntz2023} also develop two variants of PGD. The first is Particle Quasi-Newton, which includes a preconditioning term 
in the $\theta$ dynamics \citep[App. C]{Kuntz2023}. In principle, one could obtain a similar version of SVGD EM based on the techniques in \cite{Detommaso2018}. 
The second is Particle Marginal Gradient Descent, in which the $\theta$ update in \eqref{eq:theta_kuntz} is replaced by $\smash{\theta_t = \theta_{*}({\mu}_t^N)}$, where $\smash{\theta_{*}(\mu) = \argmin_{\mu\in\mathcal{P}(\mathcal{Z})}\mathcal{F}(\theta,\mu)}$, which can be applied whenever it is possible to compute $\mu\mapsto\theta_{*}(\mu)$ in closed form \citep[App. D]{Kuntz2023}. 
In a similar vein, we can obtain marginal variants of our algorithms. These are given in App. \ref{sec:marginal_algorithms}.

\textbf{Learning-rate free methods for optimization and sampling}. The idea of using coin betting for parameter free online learning was first introduced in \cite{Orabona2016,Orabona2016a} and has since been extensively developed by Orabona and coworkers \citep{Orabona2017,Jun2017,Cutkosky2018,Jun2019,Chen2022,Chen2022a}. On the other hand, aside from \citet{Sharrock2023,Sharrock2023a}, the literature on learning-rate free sampling methods is sparse. In practice, the standard technique when using gradient-based methods is to use a grid search, running the algorithm of choice for multiple learning rates, and selecting the value which minimizes an appropriately chosen metric. While there have been some efforts to automate the design of effective learning rate schedules \citep{Chen2016,Li2016a,Coullon2021,Kim2022}, 
typically these approaches still rely on various hyperparameters.

\section{NUMERICAL EXPERIMENTS}
\label{sec:numerical-experiments}

We now benchmark SVGD EM (Alg. \ref{alg:svgdEM}) and Coin EM (Alg. \ref{alg:coinEM}) against other comparable approaches. In all experiments, we use the RBF kernel $k(x,x') = \exp(-\frac{1}{h}||x-x'||^2)$, with the bandwidth $h$ chosen according to the median heuristic \citep{Liu2016a}. We provide additional experimental details in App. \ref{app:additional-experimental-details}, and additional numerical results in App. \ref{app:additional-numerical-results}. The code to reproduce our results can be found on GitHub at \url{https://github.com/chris-nemeth/coinem}.

\subsection{Toy hierarchical model}
\label{sec:toy_results}

{\setlength{\subfigcapskip}{-.5mm}
\begin{figure*}[t]
\vspace{-2mm}
  \centering
  \subfigure[$\mathrm{MSE}(\theta_{t})$ vs learning rate. \label{fig:toy_1a}]{\includegraphics[width=0.325\textwidth, trim = 0 0 0 0, clip]{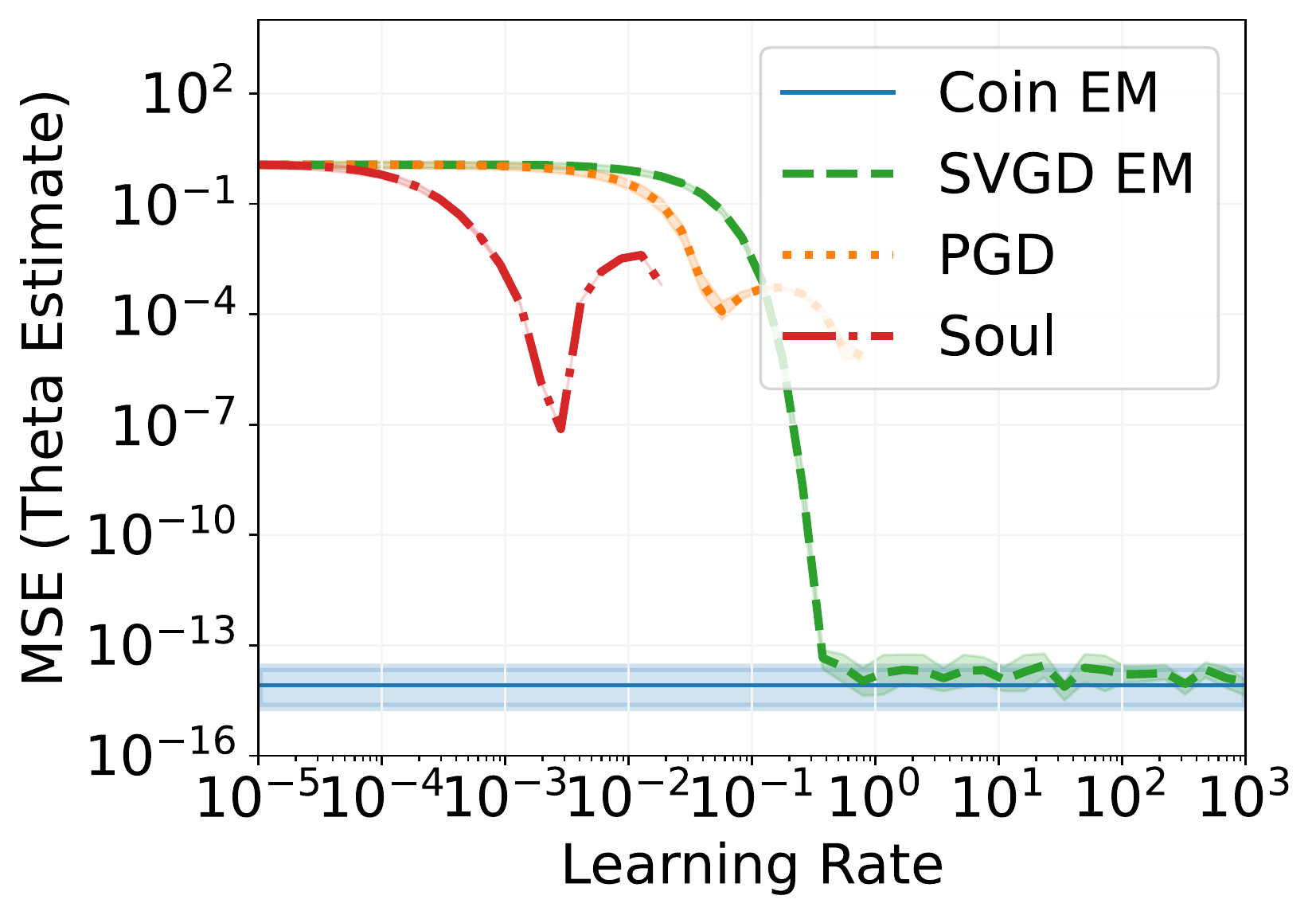}}
  \subfigure[$\mathrm{MSE}(\theta_{t})$ vs $t$. \label{fig:toy_1b}]{\includegraphics[width=0.32\textwidth, trim = 0 0 0 0, clip]{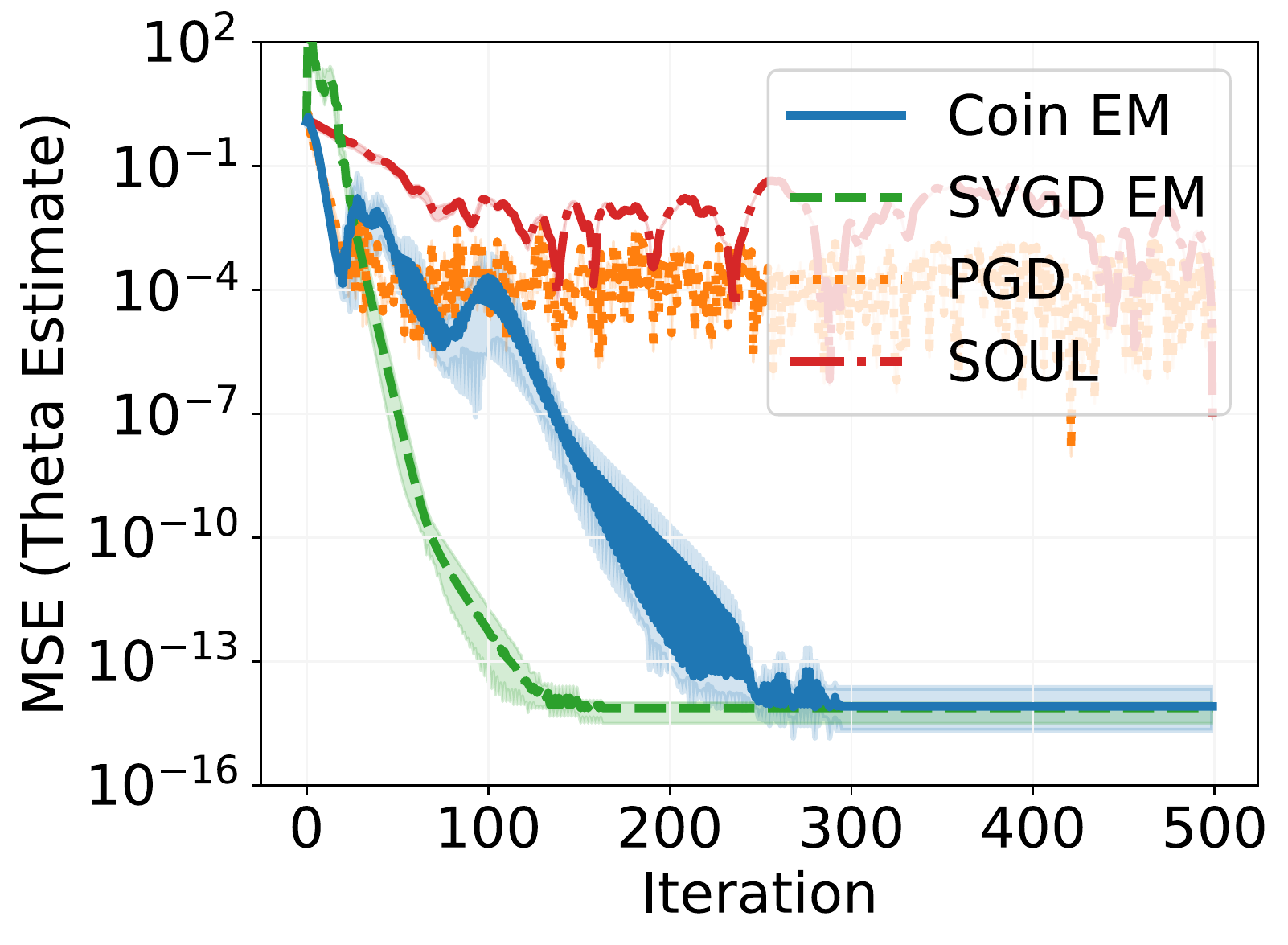}}
    \subfigure[$\mathrm{MSE}\left(\frac{1}{N}\sum_{i=1}^N z_t^{i}\right)$ vs $t$. \label{fig:toy_1c}]{\includegraphics[width=0.32\textwidth, trim = 0 0 0 0, clip]{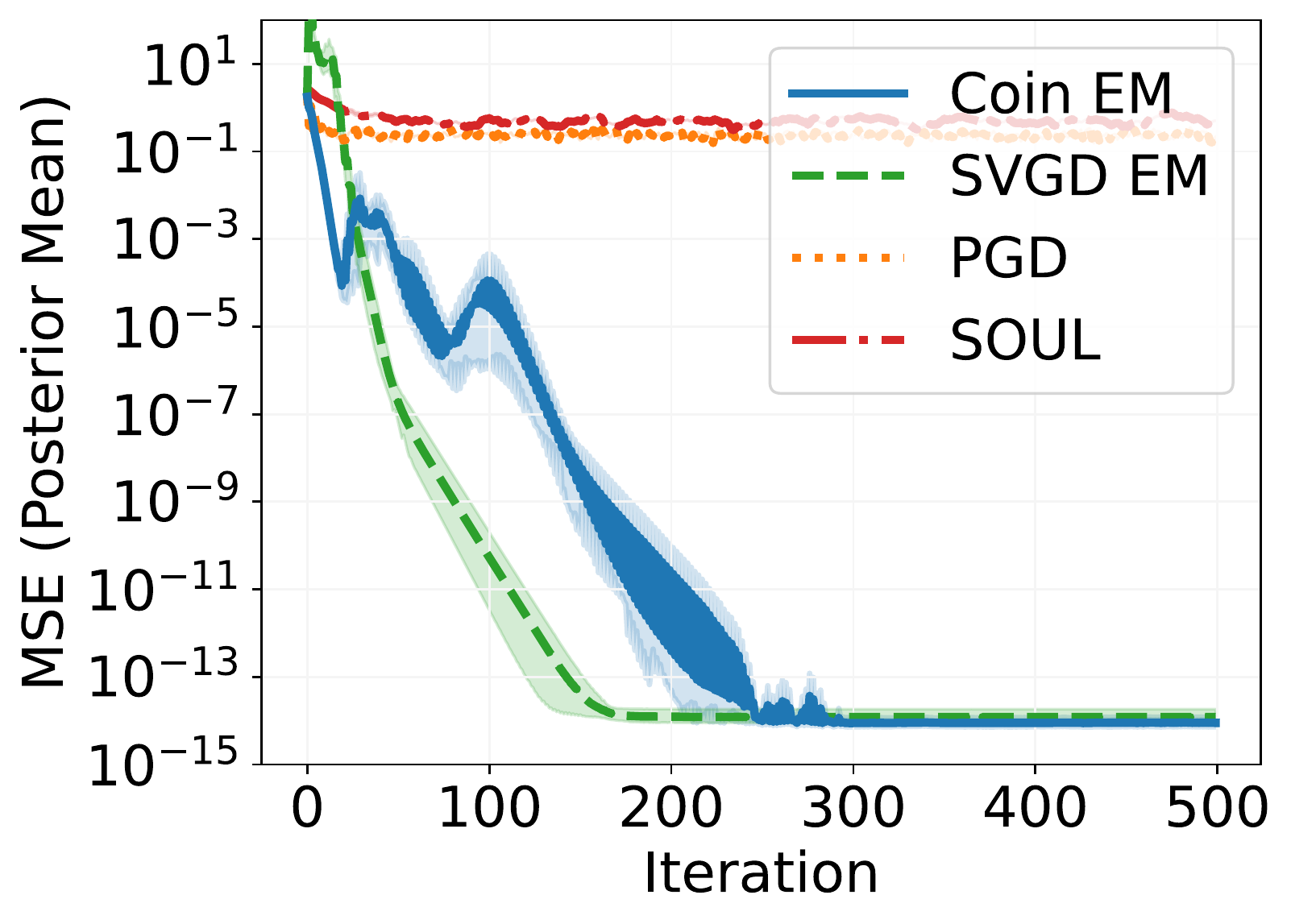}}
  \vspace{-4mm}
  \caption{\textbf{Results for the toy hierarchical model}. MSE of the parameter estimate $\theta_{t}$ as a function of the learning rate after $T=500$ iterations (a); and MSE of the parameter estimate (b) and the posterior mean (c) as a function of the number of iterations, using the optimal learning rate from (a).}
  \label{fig:toy_1}
\vspace{-4mm}
\end{figure*}

\begin{figure*}[b]
\vspace{-2mm}
  \centering
  \subfigure[Coin EM. \label{fig:toy_2a}]{\includegraphics[width=0.325\textwidth, trim = 0 0 0 0, clip]{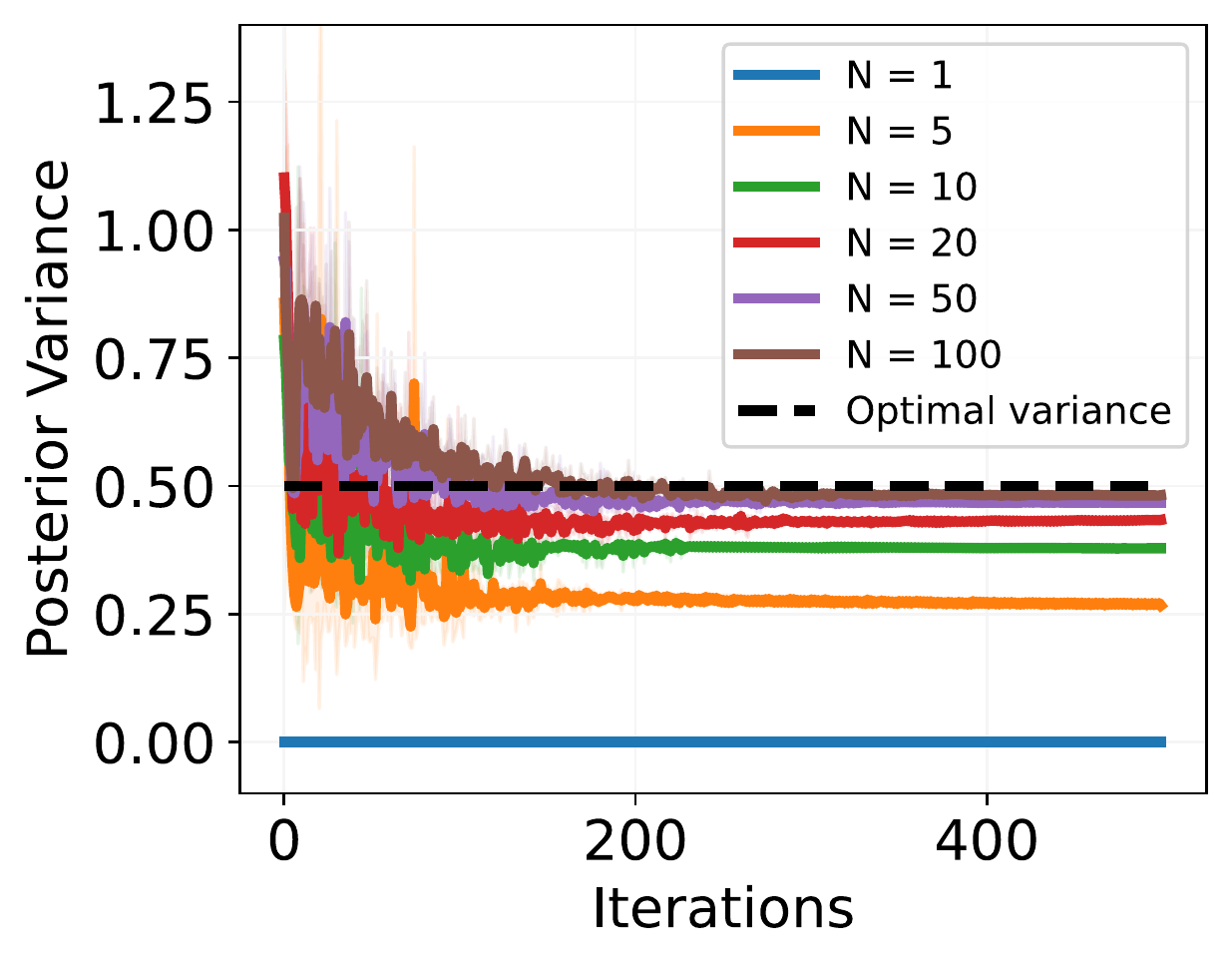}}
  \subfigure[SVGD EM.\label{fig:toy_2b}]{\includegraphics[width=0.325\textwidth, trim = 0 0 0 0, clip]{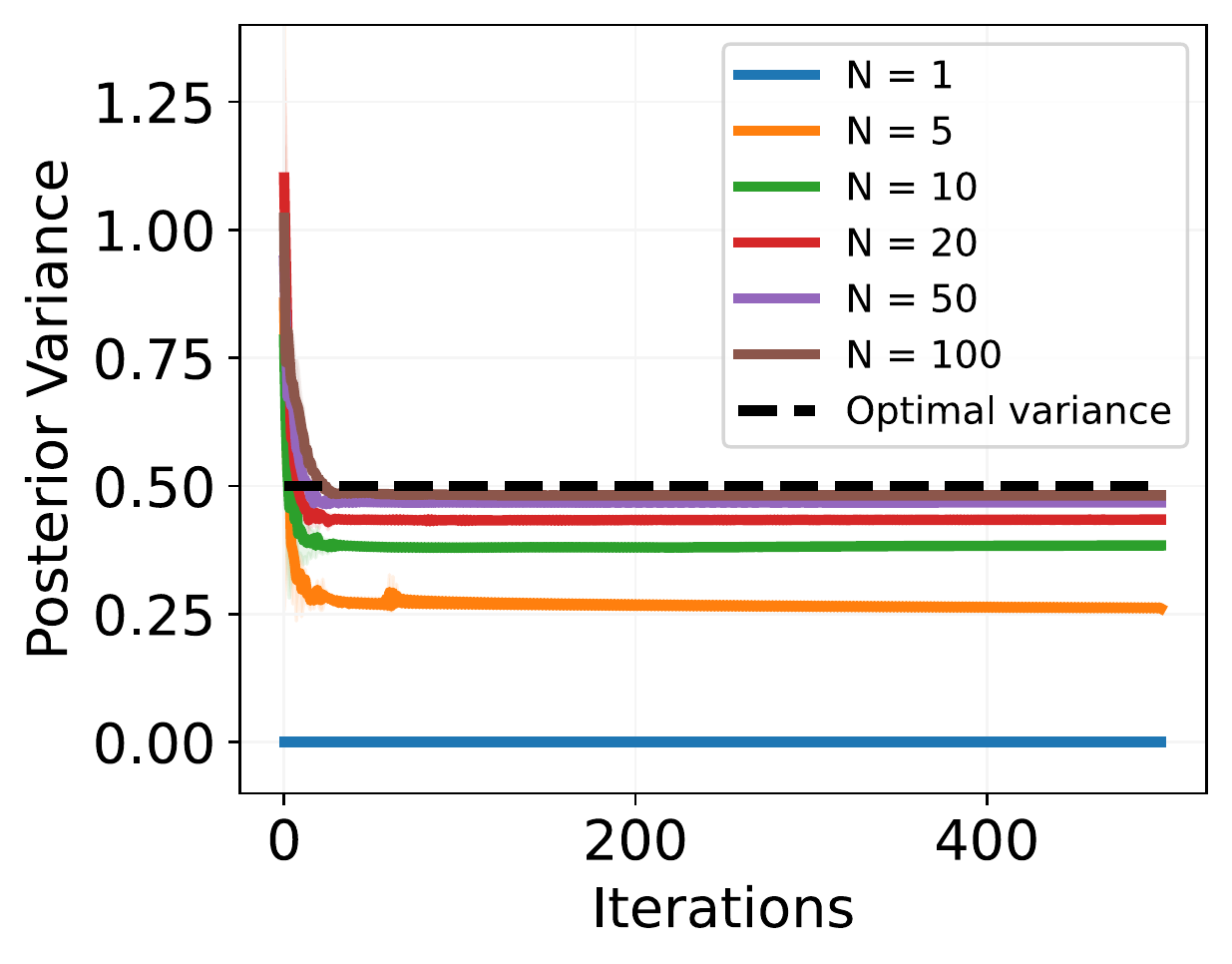}}
  \subfigure[Comparison.\label{fig:toy_2c}]{\includegraphics[width=0.325\textwidth, trim = 0 0 0 0, clip]{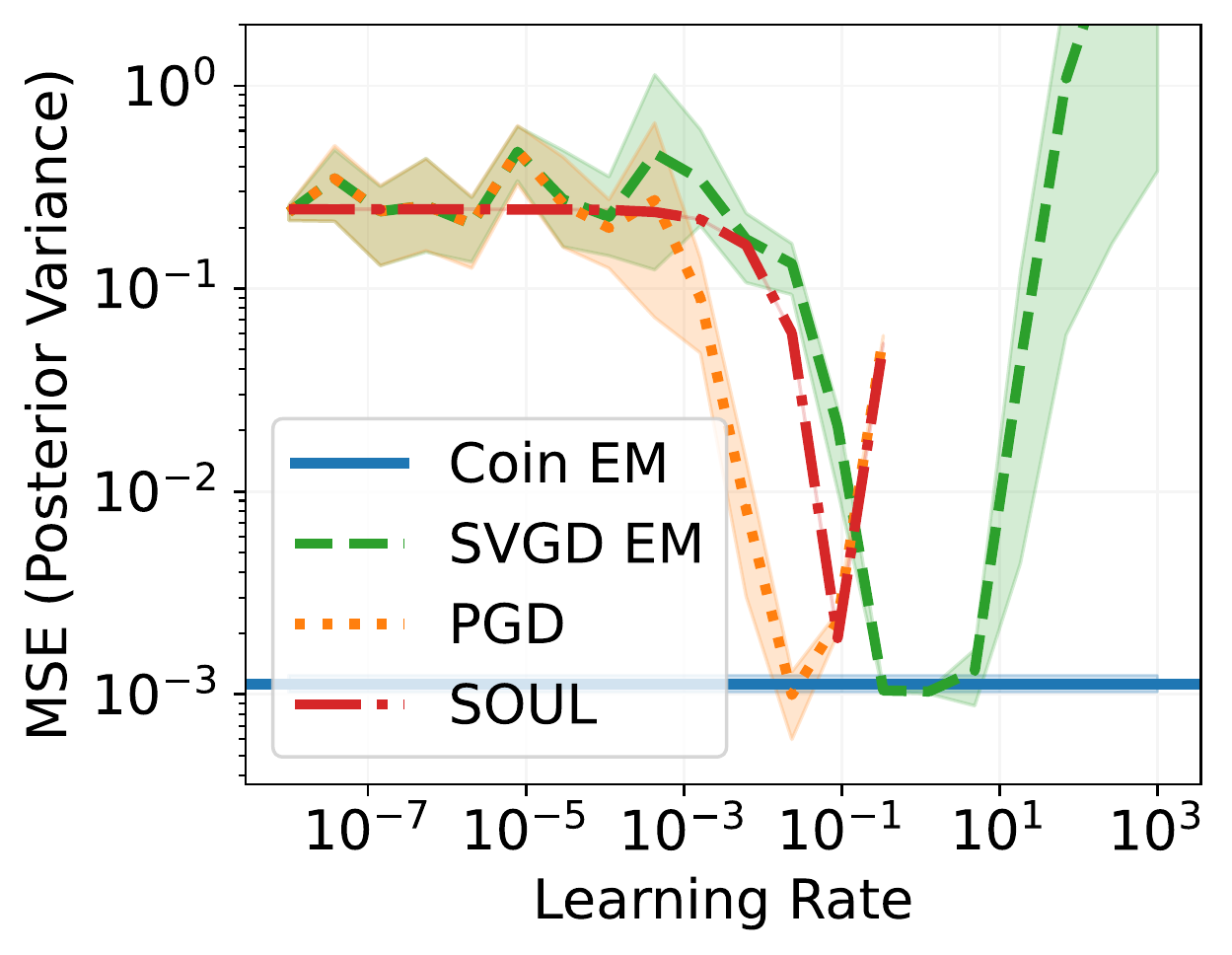}}
  \vspace{-3mm}
  \caption{\textbf{Additional results for the toy hierarchical model}. Estimates for the posterior variance in the case $d_z=1$ obtained using (a) Coin EM and (b) SVGD EM, as a function of the number of iterations. In (c), we plot the MSE of the posterior variance estimate as a function of the learning rate, for Coin EM, SVGD EM, PGD, and SOUL, after $T=250$ iterations and with $N=50$ particles.}
  \label{fig:toy_2}
\vspace{-3mm}
\end{figure*}
}

{\setlength{\subfigcapskip}{-.5mm}
\begin{figure*}[t]
\vspace{-2mm}
  \centering
  \subfigure[Parameter Estimates. \label{fig:bayes_lr_a}]{\includegraphics[width=0.338\textwidth, trim = 0 0 0 0, clip]{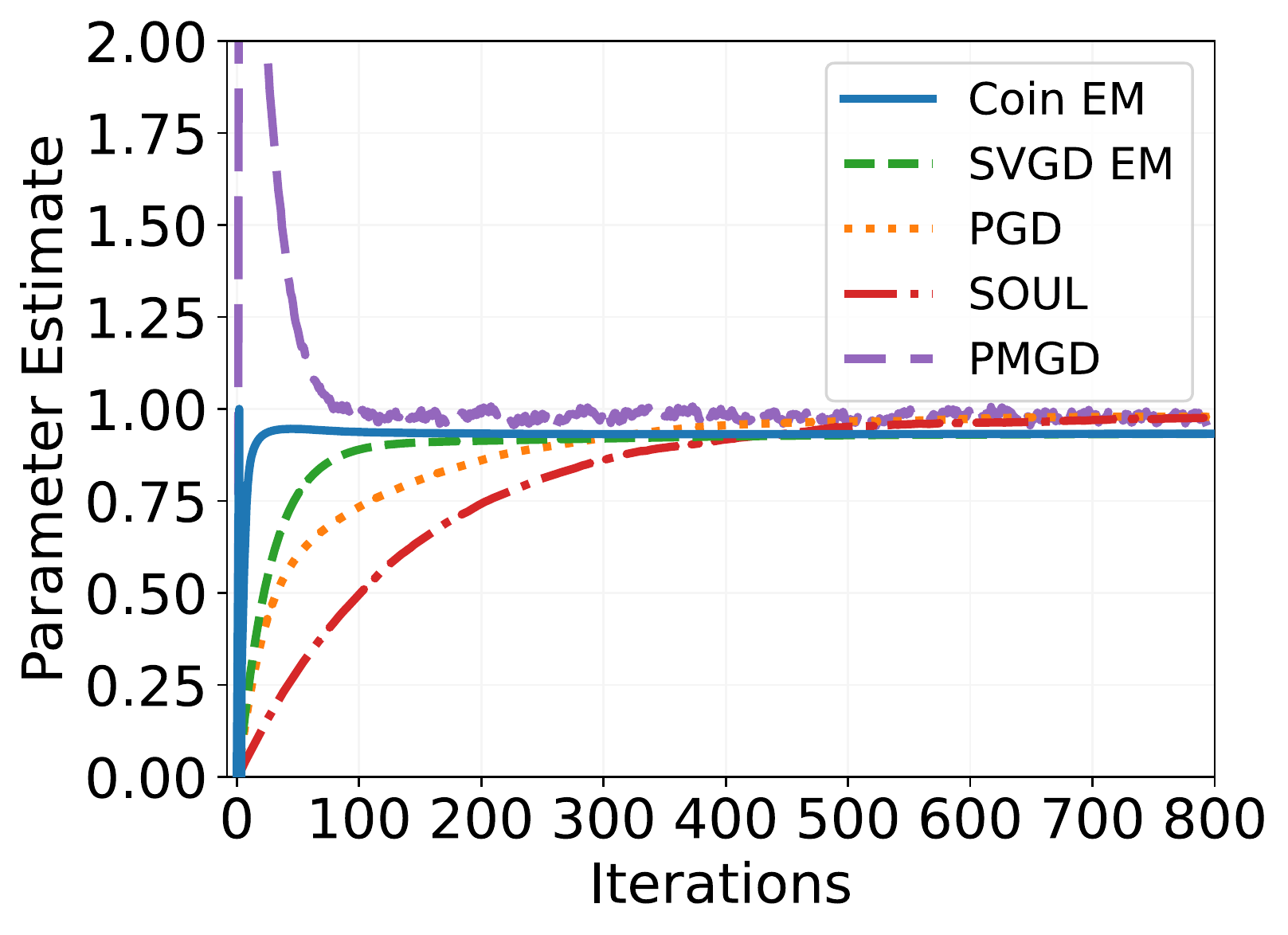}}
  \subfigure[Posterior Estimates. \label{fig:bayes_lr_b}]{\includegraphics[width=0.31\textwidth, trim = 0 0 0 0, clip]{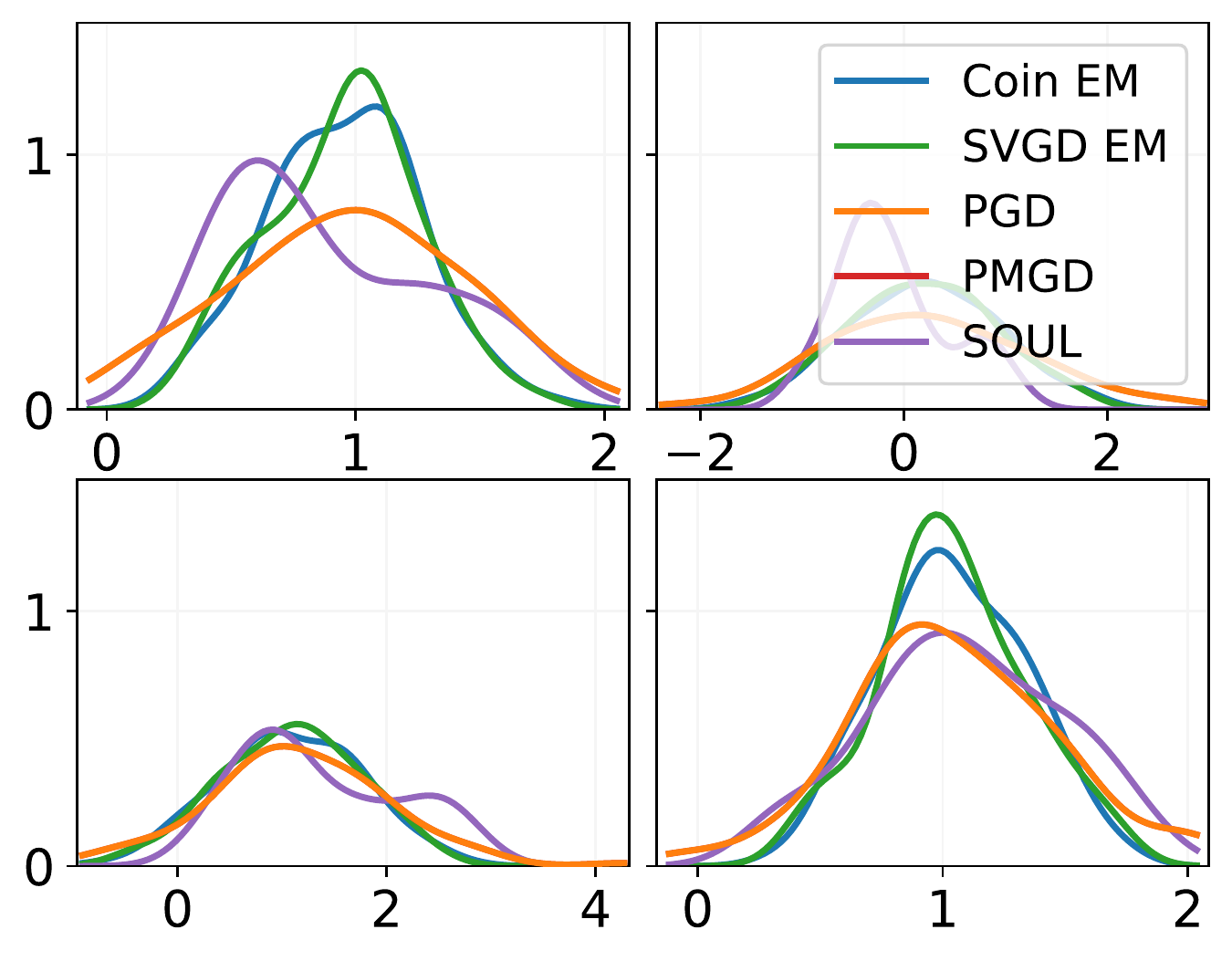}}
\subfigure[Test Error. \label{fig:bayes_lr_c}]{\includegraphics[width=0.317\textwidth, trim = 0 0 0 0, clip]{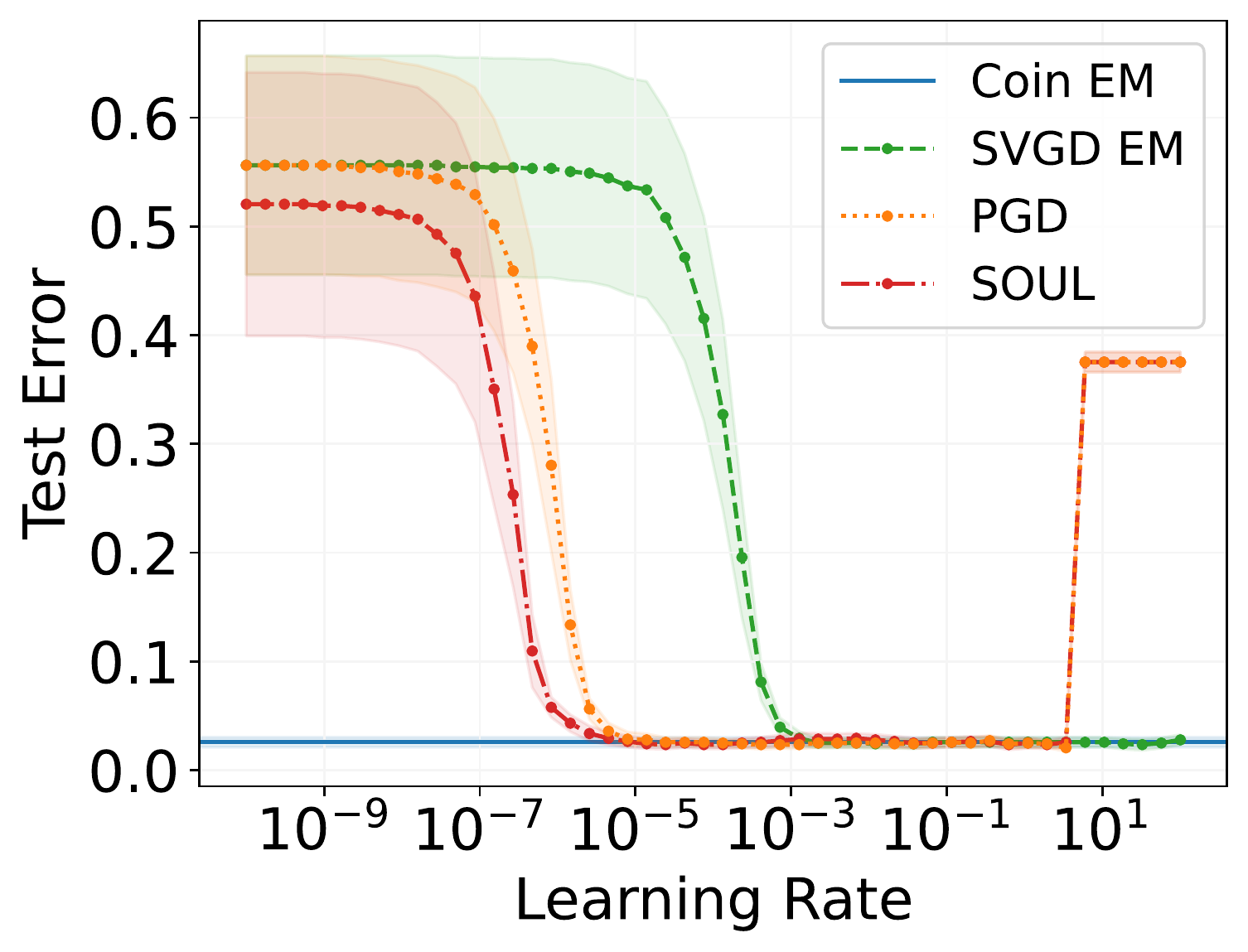}}
\vspace{-3mm}
  \caption{\textbf{Results for the Bayesian logistic regression}. Plots of (a) the sequence of parameter estimates $\theta_{t}$ initialized at zero, (b) the kernel density estimate of four components of the posterior approximation $\hat{\mu}_{800}^n = \frac{1}{n}\sum_{j=1}^n\delta_{z_{800}^{j}}$,  (c) the test error as a function of the learning rate.}
  \label{fig:bayes_lr}
  \vspace{-3mm}
\end{figure*}
}

We begin by considering a toy hierarchical model from \cite{Kuntz2023}. Suppose that we observe data $\smash{x=(x_1,\dots,x_{d_{z}})^{\top}}$, generated according to $\smash{x_i|z_i \stackrel{\mathrm{i.i.d.}}{\sim}\mathcal{N}(x_i|z_i,1)}$, where $\smash{z_i\stackrel{\mathrm{i.i.d.}}{\sim} \mathcal{N}(\theta,1)}$, for some parameter $\theta\in\mathbb{R}$. Our model is thus given by  
\begin{equation}
    p_{\theta}(z,x)= \prod_{i=1}^{d_z} \frac{1}{2\pi} \exp\left[-\frac{(z_i - \theta)^2}{2} - \frac{(x_i-z_i)^2}{2}\right].
\end{equation}
In this case, the marginal likelihood $\theta\mapsto p_{\theta}(x)$ has a unique maximum given by $\smash{\theta_{*} = d_z^{-1}\sum_{i=1}^{d_z}x_i}$, and one can obtain an explicit expression for the corresponding posterior $p_{\theta_{*}}(\cdot|x)$ \citep[][App. E.1]{Kuntz2023}.

In Fig. \ref{fig:toy_1}, we evaluate the performance of SVGD EM and Coin EM on this model, setting $d_z=100$ and $\theta=1$. We also include results for PGD \citep{Kuntz2023} and SOUL \citep{DeBortoli2021}. 
In this case, both of our methods generate parameters $\theta_t$ which converge rapidly to $\theta_{*}$, and particles $\{z_t^{i}\}_{i=1}^N$ whose mean converges to the corresponding posterior mean (Fig. \ref{fig:toy_1b} - \ref{fig:toy_1c}). Even in this toy example, it is clear that PGD, SOUL, and to a lesser extent SVGD EM, are very sensitive to the learning rate (Fig. \ref{fig:toy_1a}). If the learning rate is too small, then convergence is slow; if it is too large, then the parameter estimates are unstable and may fail to converge (see also Fig. \ref{fig:toy_1_different_LR} in App. \ref{sec:toy-add-results}). Coin EM circumvents this problem entirely, obtaining comparable or superior performance to the competing methods, with no need to tune a learning rate. 

In Fig. \ref{fig:toy_2}, we further investigate the performance of our methods, plotting the posterior variance estimates from Coin EM and SVGD EM in the case $d_z=1$. In this case, there is a significant bias when using a small number of particles, an observation which also holds true for other particle-based methods such as PGD \citep[][Figure 2]{Kuntz2023}. This should not be a surprise: even if the parameters were fixed, the SVGD updates in Alg. \ref{alg:svgdEM} would only converge to the true posterior in the mean-field limit \cite[e.g.,][]{Liu2017, Duncan2019,Korba2020,Salim2022}. Nonetheless, as is evident in Fig. \ref{fig:toy_2}, this bias can be all but eliminated by taking a sufficiently large number of particles. In any case, in terms of posterior variance, Coin EM and SVGD EM are once again comparable or superior to the optimal performance of PGD and SOUL (Fig. \ref{fig:toy_2c}). 
Additional results, illustrating the robustness of our method to changes in the number of particles (Fig. \ref{fig:toy_1_20_particles}, Fig. \ref{fig:toy_1_50_particles}) and the initialization (Fig. \ref{fig:toy_1_different_init}), can be found in App. \ref{sec:toy-add-results}.

\subsection{Bayesian logistic regression}
\label{sec:bayes_lr_results}
We next consider a standard Bayesian logistic regression with Gaussian priors, fit using the Wisconsin dataset \cite{Wolberg1990}. In this case, the latent variables are the regression weights. We place an isotropic Gaussian prior $\mathcal{N}(\theta\mathbf{1}_{d_z}, 51_{d_z})$ on the weights, and aim to estimate the unique maximizer $\theta_{*}$ of the marginal likelihood. We provide full details on this model in App. \ref{sec:bayes-lr-details}, and additional results for an alternative model in App. \ref{sec:bayes-lr-alt-results}.  

In Fig. \ref{fig:bayes_lr}, we compare the performance of our algorithms with PGD \citep{Kuntz2023}, PMGD \citep{Kuntz2023}, and SOUL \citep{DeBortoli2021}. We first plot an illustrative sequence of parameter estimates (Fig. \ref{fig:bayes_lr_a}) for each method, initialized at zero, using $N=100$ particles and $T=800$ iterations. All methods converge to approximately the same $\theta_{*}$. In this case, Coin EM converges noticeably faster than its competitors. The same is true when the parameter is initialized further from $\theta_{*}$, in which case the shorter transient period exhibited by Coin EM is even more pronounced (Fig. \ref{fig:bayes_lr_different_init} in App. \ref{sec:bayes-lr-add-results}). Interestingly, by increasing the learning rates of PGD, PMGD, and SOUL, one can improve their convergence rates, but this comes at the cost of a (significant) bias in the resulting parameter estimate (Fig. \ref{fig:bayes_lr_different_lr} in App. \ref{sec:bayes-lr-add-results}). 

Meanwhile, the posterior estimates obtained by each method are rather similar, as is their predictive performance. In fact, in this case the predictive power of the posterior approximations obtained by all of the methods are robust both to the choice of learning rate (Fig. \ref{fig:bayes_lr_c}) and the number of particles (see Fig. \ref{fig:bayes_lr_different_N} in App. \ref{sec:bayes-lr-add-results}). This can be attributed to the relatively simple nature of the posterior, which is both peaked and unimodal; see also \cite[Sec. 3.1]{Kuntz2023}, \cite[Sec. 4.1]{DeBortoli2021}.

\subsection{Bayesian neural network}
\label{sec:bayes_nn_results}
We now consider an example with a notably more complex posterior, namely, a Bayesian neural network (see App. \ref{sec:bayes-nn-alt-details} for full details). 
We consider the setting described in Section 5 in \cite{Yao2022a}; Section 3.2 in \cite{Kuntz2023}, using a simple two-layer neural network to classify MNIST images \citep{Lecun1998}.  The input layer consists of 40 nodes and 784 inputs, while the output layer has 2 nodes. The latent variables are the weights $w\in\mathbb{R}^{784\times 40}$ and $v\in\mathbb{R}^{2\times 40}$ of the output and input layers. We place zero mean isotropic Gaussian priors on the weights, with variances $e^{2\alpha}$ and $e^{2\beta}$, respectively. Rather than assigning hyperpriors to $\theta=(\alpha,\beta)$, we will learn these parameters from data.


{\setlength{\subfigcapskip}{-1.5mm}
\begin{figure*}[!htb]
\vspace{-2mm}
  \centering
  \subfigure[$N=5$. \label{fig:bnn_mnist_compare_methods_N5}]{\includegraphics[width=0.325\textwidth, trim=0 0 0 0, clip]{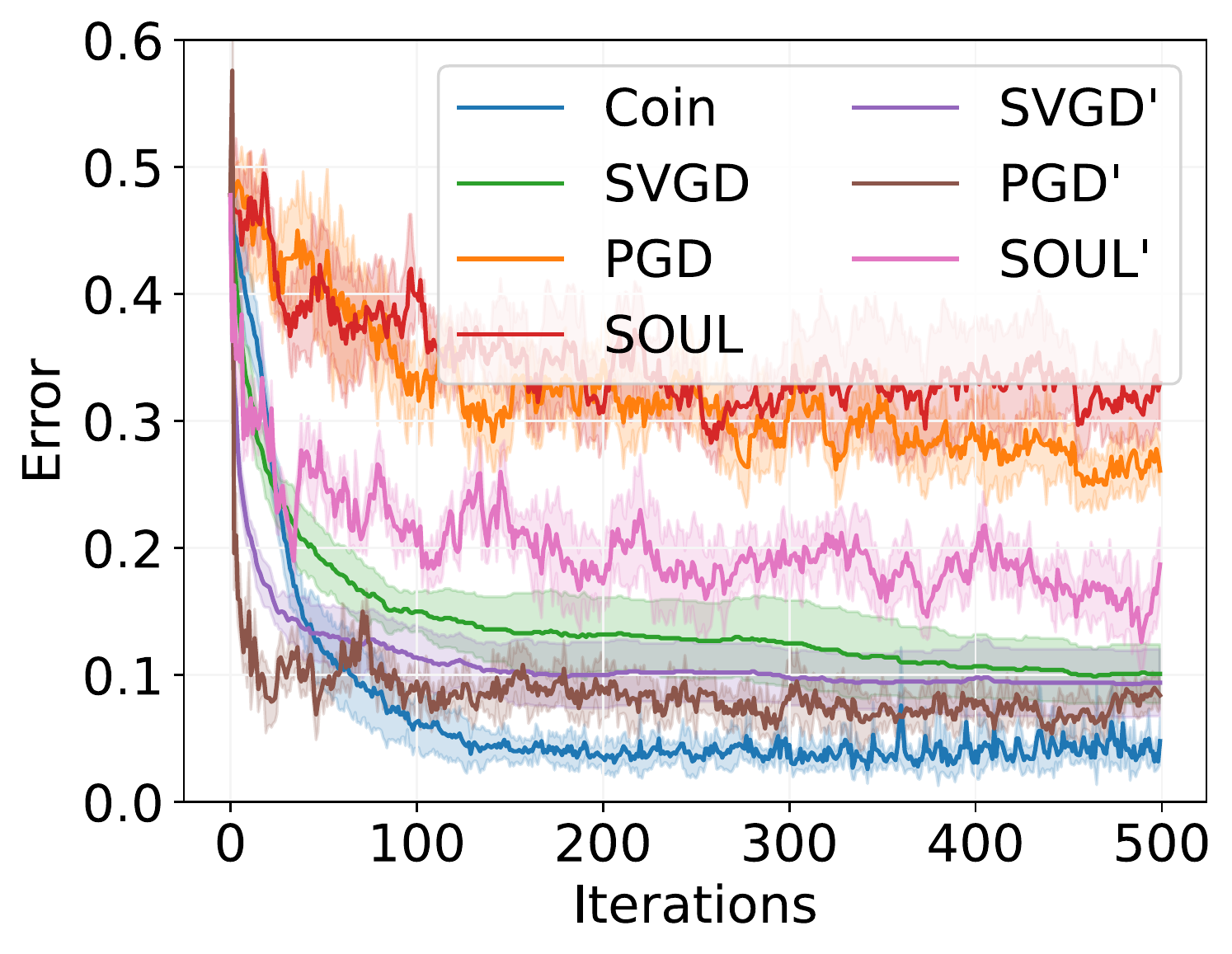}}
  \subfigure[$N=20$. \label{fig:bnn_mnist_compare_methods_N20}]{\includegraphics[width=0.325\textwidth, trim=0 0 0 0, clip]{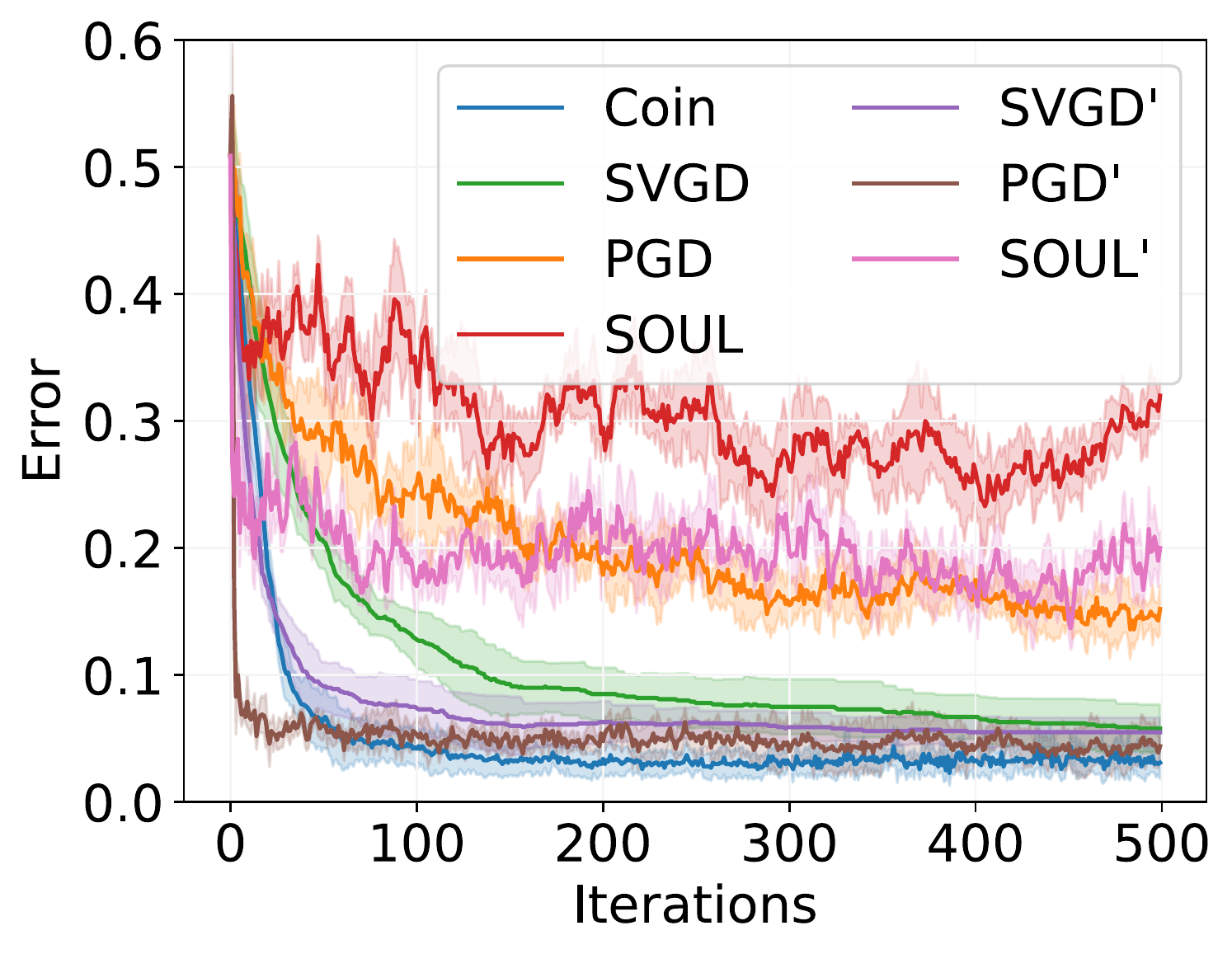}}
  \subfigure[$N=100$. \label{fig:bnn_mnist_compare_methods_N100}]{\includegraphics[width=0.325\textwidth, trim=0 0 0 0, clip]{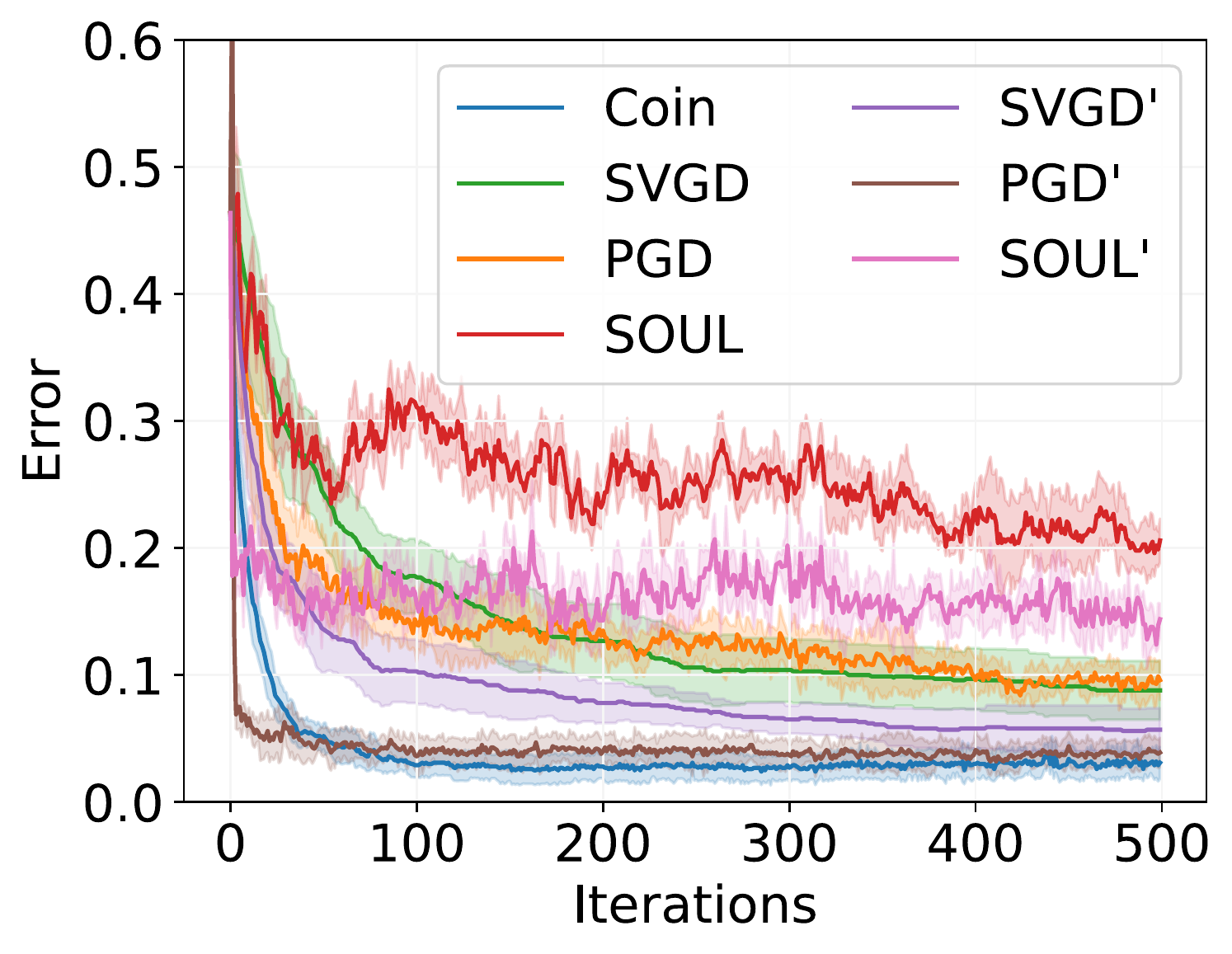}}
  \vspace{-3mm}
  \caption{\textbf{Results for the Bayesian neural network model.} Test error over $T=500$ training iterations, for different $N$. For all learning-rate dependent methods, we use the best learning rate as determined by Fig. \ref{fig:bnn_mnist_compare_lr}.}
  \label{fig:bnn_mnist_compare_methods}
  \vspace{-3mm}
\end{figure*}
}

{\setlength{\subfigcapskip}{-.5mm}
\begin{figure*}[!bp]
\vspace{-2mm}
  \centering
  \subfigure[Season 1]{\includegraphics[width=0.325\textwidth, trim = 20 20 20 40, clip]{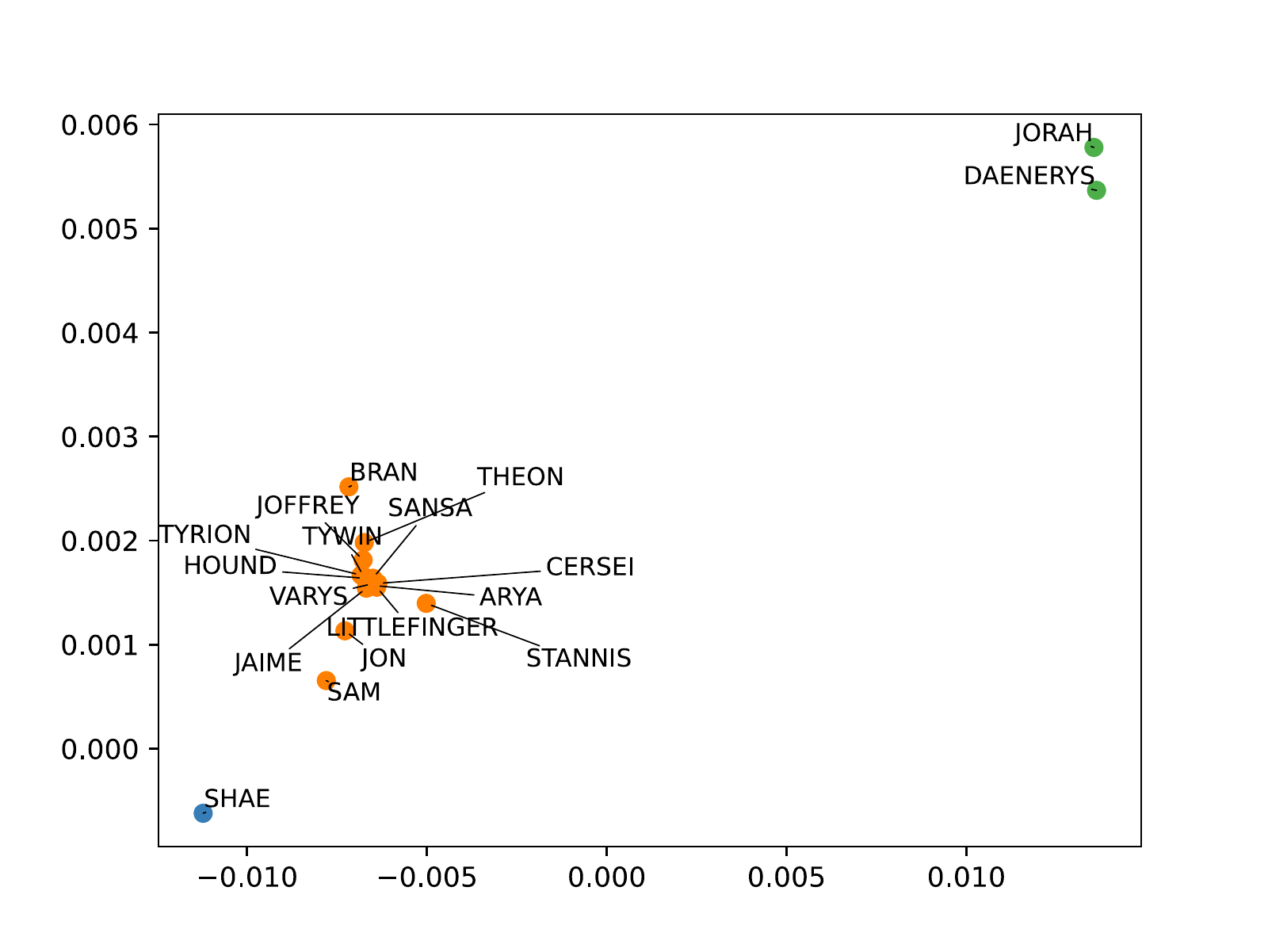}}
  \subfigure[Season 2]{\includegraphics[width=0.325\textwidth, trim = 20 20 20 40, clip]{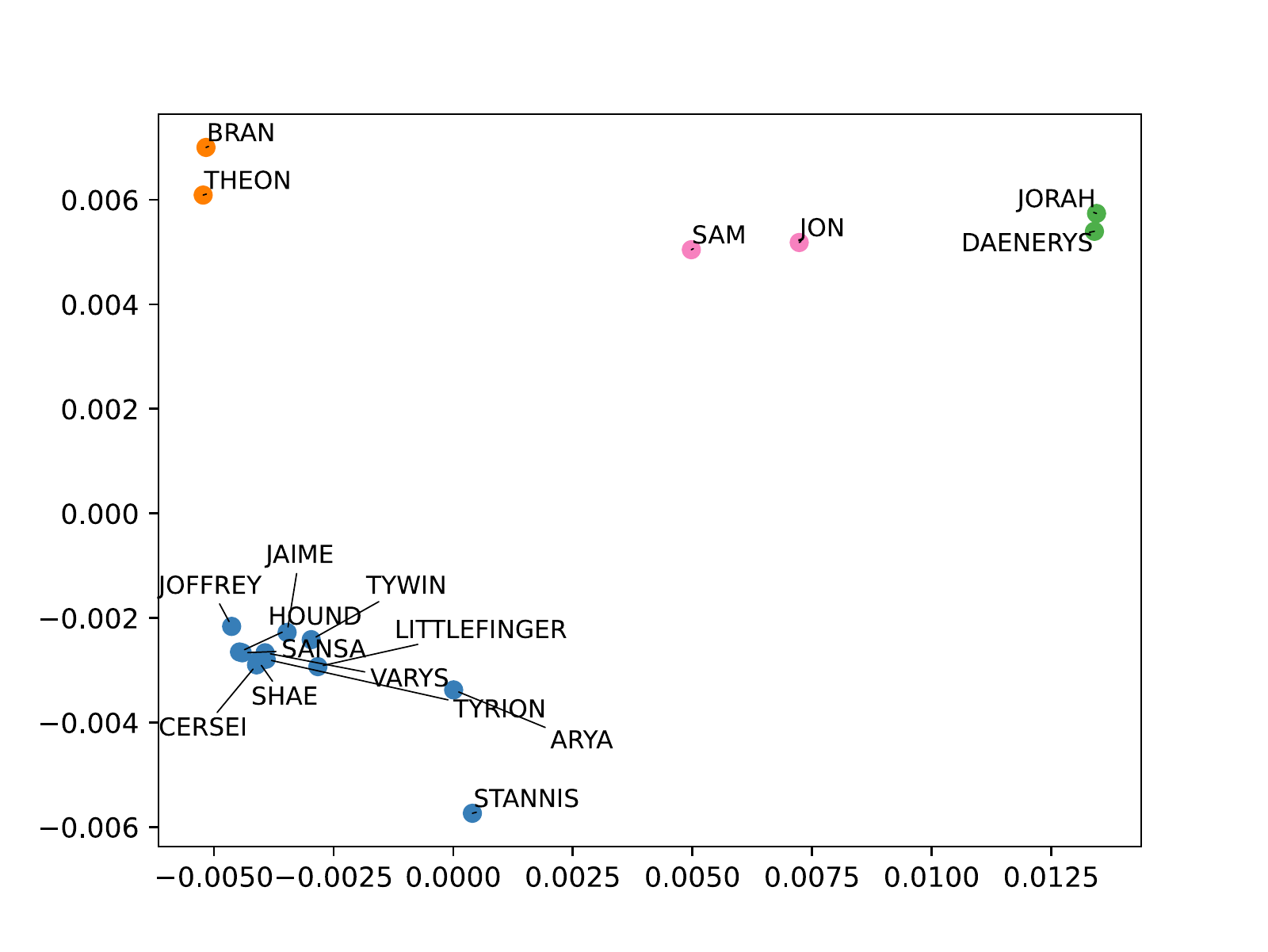}}
  \subfigure[Season 3]{\includegraphics[width=0.325\textwidth, trim = 20 20 20 40, clip]{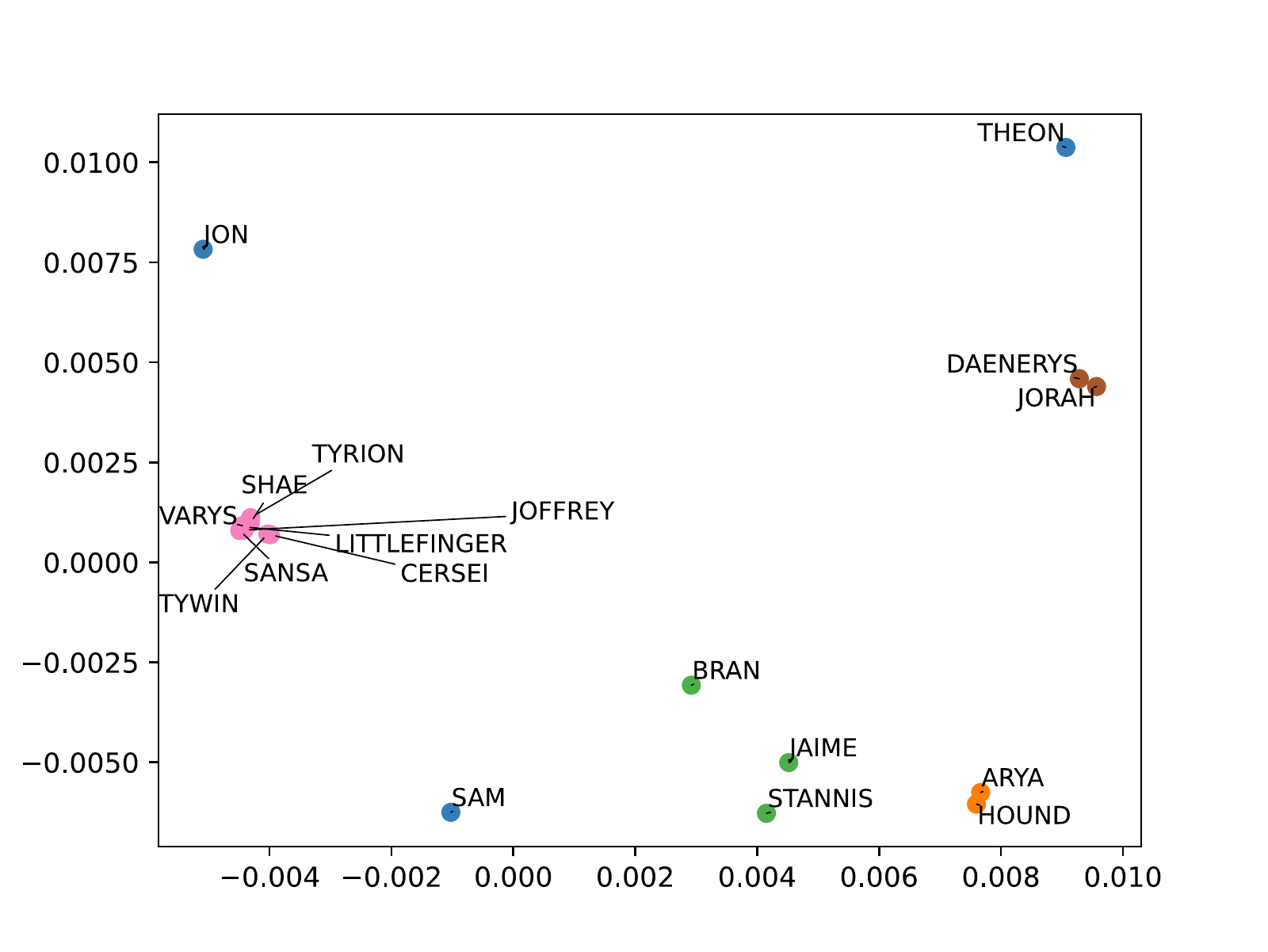}}
  \vspace{-3mm}
  \caption{\textbf{Results for the latent space network model}. Mean of the particles $\{z_{T}^i\}_{i=1}^N$ output by Coin EM afer $T=500$ iterations. Each node of the network represents a Game of Thrones character.}
  \label{fig:network_model}
\vspace{-3mm}
\end{figure*}
}

In Fig. \ref{fig:bnn_mnist_compare_methods}, we plot the test error achieved by Coin EM, SVGD EM, PGD, and SOUL for different numbers of particles.\footnote{We also include results for SVGD EM, PGD, and SOUL when using a scaling heuristic recommended in \cite[][Sec. 2]{Kuntz2023}, designed to stabilize the updates and avoid ill-conditioning (SVGD EM', PGD', and SOUL').}  For the learning-rate dependent methods, we use the best learning-rate as determined by a search over a fine grid (see Fig. \ref{fig:bnn_mnist_compare_lr} in App. \ref{sec:bayes-nn-alt-results}). In this example, Coin EM offers comparable or superior performance to the best competing method (PGD'). While the initial convergence of PGD' is typically slightly faster, the asymptotic error of Coin EM is consistently lower, particularly for small numbers of particles. In fact, our methods are generally more robust to changes in the number of particles (Fig. \ref{fig:bnn_mnist_compare_N} in App. \ref{sec:bayes-nn-alt-results}), the need for a scaling heuristic to avoid ill conditioning (Fig. \ref{fig:bnn_mnist_compare_lr} in App. \ref{sec:bayes-nn-alt-results}), and, of course, the learning rate (Fig. \ref{fig:bayes_nn_mnist} in App. \ref{sec:bayes-nn-alt-results}). In App. \ref{sec:bayes-nn-add-results}, we provide additional results for another Bayesian neural network (see App. \ref{sec:bayes-nn-details} for details), with similar conclusions.

\subsection{Latent space network model}
\label{sec:latent_space_model}
Finally, we consider a latent space network model \citep{Hoff2002,Loyal2023}. 
Such models assume that each node in a network has an unobserved latent position in a low-dimensional Euclidean space, and that the probability of a link between two nodes $i$ and $j$ depends on the distance between their latent variables $||z_{(i)}-z_{(j)}||$. Latent space network models can account for transitivity, homophily, and other network properties, and provide a natural way to analyze social network data. In particular, the estimated latent positions can be used to visualize the network, and also for downstream tasks such as node classification or community detection through clustering.

 Here, we consider fitting a latent space network model to a dataset curated by \cite{Beveridge2018}, which consists of a sequence of binary undirected networks indicating whether or not an interaction occurred between two characters in the TV series \emph{Game of Thrones}, with one network for each of the eight seasons (see App. \ref{sec:network-details} for further details). 
 
 In Fig. \ref{fig:network_model}, we plot the latent representation of the character interactions obtained using Coin EM. Plots for the other algorithms are given in App. \ref{sec:network-add-results}. In this case, we find that the latent representations obtained via Coin EM successfully capture several natural groupings of the characters, and how these groupings evolve through the first three seasons. For example, the latent representations of characters with significant interactions (e.g., Daenerys and Jorah in Season 1; 
 Arya and the Hound in Season 3) tend to appear in similar clusters. Meanwhile, this is not the case for the latent representations determined by PGD (Fig. \ref{fig:pgd_network_model} in App. \ref{sec:network-add-results}) or SOUL (Fig. \ref{fig:soul_network_model} in App. \ref{sec:network-add-results}), both of which fail to capture known relationships between characters, or how these evolve.

\section{DISCUSSION}
\vspace{-\baselineskip}
\textbf{Summary}. In this paper, we introduced two new algorithms for MMLE in latent variable models, including one which is entirely tuning-free. Our first algorithm, SVGD EM (Sec. \ref{sec:gradient-flow}), can be viewed as a particular form of gradient descent on the free-energy functional $\mathcal{F}$ over the product space $\Theta\times \mathcal{P}_2(\mathcal{Z})$. Our second algorithm, Coin EM (Sec. \ref{sec:coin}), is entirely different, and leverages coin betting ideas to remove any dependence on learning rates. 

\textbf{Limitations and Future Work}. We highlight several limitations, as well as some directions for future work. First, similar to SVGD, our algorithms have a $\mathcal{O}(N^2)$ cost per update. This compares unfavourably to the $\mathcal{O}(N)$ cost for PGD \citep{Kuntz2023}. Second, our convergence results for SVGD were derived in the population limit. In principle, one could obtain finite-particle convergence rates for SVGD EM using the techniques in \citet{Shi2022a}, or by considering `virtual particle' variants of SVGD EM, extending the approach in \citet{Das2023}.  Finally, even in the population limit, the convergence result for Coin EM was obtained under a technical sufficient condition, which it is unclear how to verify in practice. The problem of establishing the convergence of Coin EM without this assumption remains an interesting direction for future work.

\subsubsection*{Acknowledgments}
We thank the anonymous reviewers for their constructive feedback. We are grateful to Juan Kuntz for many insightful discussions. LS and CN were supported by the Engineering and Physical Sciences Research Council (EPSRC), grant number EP/V022636/1. CN acknowledges further support from the EPSRC, grant numbers EP/R01860X/1 and  EP/Y028783/1. DD was supported by the EPSRC-funded STOR-i Centre for Doctoral Training, grant number EP/S022252/1.

\bibliographystyle{apalike}
\bibliography{references}

\begin{thebibliography}{}

\bibitem[Akyıldız et~al., 2023]{Akyldz2023}
Akyıldız, {\"{O}}.~D., Crucinio, F.~R., Girolami, M., Johnston, T., and
  Sabanis, S. (2023).
\newblock {Interacting Particle Langevin Algorithm for Maximum Marginal
  Likelihood Estimation}.
\newblock {\em arXiv preprint}.

\bibitem[Ambrosio et~al., 2008]{Ambrosio2008}
Ambrosio, L., Gigli, N., and {Giuseppe Savar{\'{e}}} (2008).
\newblock {\em {Gradient Flows: In Metric Spaces and in the Space of
  Probability Measures}}.
\newblock Birkh{\"{a}}user, Basel.

\bibitem[Atchad{\'{e}} et~al., 2017]{Atchade2017}
Atchad{\'{e}}, Y.~F., Fort, G., and Moulines, E. (2017).
\newblock {On Perturbed Proximal Gradient Algorithms}.
\newblock {\em Journal of Machine Learning Research}, 18:1--33.

\bibitem[Balakrishnan et~al., 2017]{Balakrishnan2017}
Balakrishnan, S., Wainwright, M.~J., and Yu, B. (2017).
\newblock {Statistical guarantees for the EM algorithm: From population to
  sample-based analysis}.
\newblock {\em The Annals of Statistics}, 45(1):77--120.

\bibitem[Beveridge and Chemers, 2018]{Beveridge2018}
Beveridge, A. and Chemers, M. (2018).
\newblock {The Game of Game of Thrones: Networked Concordances and Fractal
  Dramaturgy}.
\newblock In {\em Reading Contemporary Serial Television Universes}, pages
  201--225. Routledge.

\bibitem[Bishop, 2006]{Bishop2006}
Bishop, C. (2006).
\newblock {\em {Pattern Recognition and Machine Learning}}.
\newblock Springer-Verlag, New York.

\bibitem[Blei et~al., 2003]{Blei2003}
Blei, D.~M., Ng, A.~Y., and Jordan, M.~I. (2003).
\newblock {Latent Dirichlet Allocation}.
\newblock {\em Journal of Machine Learning Research}, 3:993--1022.

\bibitem[Booth and Hobert, 1999]{Booth1999}
Booth, J.~G. and Hobert, J.~P. (1999).
\newblock {Maximizing generalized linear mixed model likelihoods with an
  automated Monte Carlo EM algorithm}.
\newblock {\em Journal of the Royal Statistical Society: Series B (Statistical
  Methodology)}, 61(1):265--285.

\bibitem[Bradbury et~al., 2018]{Bradbury2018}
Bradbury, J., Frostig, R., Hawkins, P., Johnson, M.~J., Leary, C., Maclaurin,
  D., Necula, G., Paszke, A., VanderPlas, J., Wanderman-Milne, S., and Zhang,
  Q. (2018).
\newblock {JAX: composable transformations of Python+NumPy programs}.

\bibitem[Brenier, 1991]{Brenier1991}
Brenier, Y. (1991).
\newblock {Polar factorization and monotone rearrangement of vector-valued
  functions}.
\newblock {\em Communications on Pure and Applied Mathematics}, 44(4):375--417.

\bibitem[Caffo et~al., 2005]{Caffo2005}
Caffo, B.~S., Jank, W., and Jones, G.~L. (2005).
\newblock {Ascent-based Monte Carlo expectation– maximization}.
\newblock {\em Journal of the Royal Statistical Society: Series B (Statistical
  Methodology)}, 67(2):235--251.

\bibitem[Capp{\'{e}} et~al., 1999]{Cappe1999}
Capp{\'{e}}, O., Doucet, A., Lavielle, M., and Moulines, E. (1999).
\newblock {Simulation-based methods for blind maximum-likelihood filter
  identification}.
\newblock {\em Signal Processing}, 73(1):3--25.

\bibitem[Casella, 1985]{Casella1985}
Casella, G. (1985).
\newblock {An Introduction to Empirical Bayes Data Analysis}.
\newblock {\em The American Statistician}, 39(2):83--87.

\bibitem[Chan and Ledolter, 1995]{Chan1995}
Chan, K.~S. and Ledolter, J. (1995).
\newblock {Monte Carlo EM Estimation for Time Series Models Involving Counts}.
\newblock {\em Journal of the American Statistical Association},
  90(429):242--252.

\bibitem[Chen et~al., 2016]{Chen2016}
Chen, C., Carlson, D., Gan, Z., Li, C., and Carin, L. (2016).
\newblock {Bridging the Gap between Stochastic Gradient MCMC and Stochastic
  Optimization}.
\newblock In {\em Proceedings of the 19th International Conference on
  Aritificial Intelligence and Statistics (AISTATS 2016)}, Cadiz, Spain.

\bibitem[Chen et~al., 2022a]{Chen2022}
Chen, K., Cutkosky, A., and Orabona, F. (2022a).
\newblock {Implicit Parameter-free Online Learning with Truncated Linear
  Models}.
\newblock In {\em Proceedings of the 33rd International Conference on
  Algorithmic Learning Theory (ALT 2022)}, Paris, France.

\bibitem[Chen et~al., 2022b]{Chen2022a}
Chen, K., Langford, J., and Orabona, F. (2022b).
\newblock {Better Parameter-Free Stochastic Optimization with ODE Updates for
  Coin-Betting}.
\newblock In {\em Proceedings of the Thirty-Sixth AAAI Conference on Artificial
  Intelligence (AAAI-22)}, Online.

\bibitem[Coullon et~al., 2021]{Coullon2021}
Coullon, J., South, L., and Nemeth, C. (2021).
\newblock {Efficient and Generalizable Tuning Strategies for Stochastic
  Gradient MCMC}.
\newblock {\em arXiv preprint}.

\bibitem[Cutkosky and Orabona, 2018]{Cutkosky2018}
Cutkosky, A. and Orabona, F. (2018).
\newblock {Black-Box Reductions for Parameter-free Online Learning in Banach
  Spaces}.
\newblock In {\em Proceedings of the 31st Annual Conference on Learning Theory
  (COLT 2018)}, Stockholm, Sweden.

\bibitem[Das and Nagaraj, 2023]{Das2023}
Das, A. and Nagaraj, D. (2023).
\newblock {Provably Fast Finite Particle Variants of SVGD via Virtual Particle
  Stochastic Approximation}.
\newblock In {\em Proceedings of the 37th Annual Conference on Neural
  Information Processing Systems (NeurIPS 2023)}, New Orleans, LA.

\bibitem[{De Bortoli} et~al., 2021]{DeBortoli2021}
{De Bortoli}, V., Durmus, A., Pereyra, M., and Vidal, A.~F. (2021).
\newblock {Efficient stochastic optimisation by unadjusted Langevin Monte
  Carlo}.
\newblock {\em Statistics and Computing}, 31(3):29.

\bibitem[Delyon et~al., 1999]{Delyon1999}
Delyon, B., Lavielle, M., and Moulines, E. (1999).
\newblock {Convergence of a stochastic approximation version of the EM
  algorithm}.
\newblock {\em The Annals of Statistics}, 27(1):94--128.

\bibitem[Dempster et~al., 1977]{Dempster1977}
Dempster, A.~P., Laird, N.~M., and Rubin, D.~B. (1977).
\newblock {Maximum Likelihood from Incomplete Data via the EM Algorithm}.
\newblock {\em Journal of the Royal Statistical Society: Series B (Statistical
  Methodology)}, 39(1):1--38.

\bibitem[Detommaso et~al., 2018]{Detommaso2018}
Detommaso, G., Cui, T., Spantini, A., Marzouk, Y., and Scheichl, R. (2018).
\newblock {A Stein variational Newton method}.
\newblock In {\em Proceedings of the 32nd Annual Conference on Neural
  Information Processing Systems (NIPS 2018)}, Montreal, Canada.

\bibitem[Dua and Graff, 2019]{Dua2019}
Dua, D. and Graff, C. (2019).
\newblock {UCI Machine Learning Repository}.
\newblock Technical report, University of California, Irvine, School of
  Information and Computer Sciences.

\bibitem[Duchi et~al., 2011]{Duchi2011}
Duchi, J., Hazan, E., and Singer, Y. (2011).
\newblock {Adaptive subgradient methods for online learning and stochastic
  optimization}.
\newblock {\em Journal of Machine Learning Research}, 12(61):2121--2159.

\bibitem[Duncan et~al., 2023]{Duncan2019}
Duncan, A., N{\"{u}}sken, N., and Szpruch, L. (2023).
\newblock {On the geometry of Stein variational gradient descent}.
\newblock {\em Journal of Machine Learning Research}, 24:1--40.

\bibitem[Ester et~al., 1996]{Ester1996}
Ester, M., Kriegel, H.-P., Sander, J., and Xu, X. (1996).
\newblock {A Density-Based Algorithm for Discovering Clusters in Large Spatial
  Databases with Noise}.
\newblock In {\em Proceedings of the Second International Conference on
  Knowledge Discovery and Data Mining}, pages 226--231. AAAI Press.

\bibitem[Fort and Moulines, 2003]{Fort2003}
Fort, G. and Moulines, E. (2003).
\newblock {Convergence of the Monte Carlo expectation maximization for curved
  exponential families}.
\newblock {\em The Annals of Statistics}, 31(4):1220--1259.

\bibitem[Fort et~al., 2011]{Fort2011}
Fort, G., Moulines, E., and Priouret, P. (2011).
\newblock {Convergence of adaptive and interacting Markov chain Monte Carlo
  algorithms}.
\newblock {\em The Annals of Statistics}, 39(6):3262--3289.

\bibitem[Gershman et~al., 2012]{Gershman2012}
Gershman, S.~J., Hoffman, M.~D., and Blei, D.~M. (2012).
\newblock {Nonparametric Variational Inference}.
\newblock In {\em Proceedings of the 29th International Conference on Machine
  Learning (ICML 2012)}, Edinburgh, UK.

\bibitem[Hernandez-Lobato and Adams, 2015]{Hernandez-Lobato2015}
Hernandez-Lobato, J.~M. and Adams, R.~P. (2015).
\newblock {Probabilistic Backpropagation for Scalable Learning of Bayesian
  Neural Networks}.
\newblock In {\em Proceedings of the 32nd International Conference on Machine
  Learning (ICML 2015)}, Lille, France.

\bibitem[Hoff et~al., 2002]{Hoff2002}
Hoff, P.~D., Raftery, A.~E., and Handcock, M.~S. (2002).
\newblock {Latent Space Approaches to Social Network Analysis}.
\newblock {\em Journal of the American Statistical Association},
  97(460):1090--1098.

\bibitem[Jun and Orabona, 2019]{Jun2019}
Jun, K.-S. and Orabona, F. (2019).
\newblock {Parameter-Free Online Convex Optimization with Sub-Exponential
  Noise}.
\newblock In {\em Proceedings of the 32nd Annual Conference on Learning Theory
  (COLT 2019)}, Phoenix, AZ.

\bibitem[Jun et~al., 2017]{Jun2017}
Jun, K.-S., Orabona, F., Wright, S., and Willett, R. (2017).
\newblock {Online Learning for Changing Environments using Coin Betting}.
\newblock {\em Electronic Journal of Statistics}, 11.

\bibitem[Kim et~al., 2022]{Kim2022}
Kim, S., Song, Q., and Liang, F. (2022).
\newblock {Stochastic gradient Langevin dynamics with adaptive drifts}.
\newblock {\em Journal of Statistical Computation and Simulation},
  92(2):318--336.

\bibitem[Kingma and Ba, 2015]{Kingma2015}
Kingma, D.~P. and Ba, J. (2015).
\newblock {Adam: a method for stochastic optimisation}.
\newblock In {\em Proceedings of the 3rd International Conference on Learning
  Representations (ICLR 2015)}, pages 1--13, San Diego, CA.

\bibitem[Korba et~al., 2020]{Korba2020}
Korba, A., Salim, A., Arbel, M., Luise, G., and Gretton, A. (2020).
\newblock {A Non-Asymptotic Analysis for Stein Variational Gradient Descent}.
\newblock In {\em Proceedings of the 34th Annual Conference on Neural
  Information Processing Systems (NeurIPS 2020)}, Vancouver, Canada.

\bibitem[Krichevsky and Trofimov, 1981]{Krichevsky1981}
Krichevsky, R.~E. and Trofimov, V.~K. (1981).
\newblock {The performance of universal encoding}.
\newblock {\em IEEE Transactions on Information Theory}, 27(2):199--207.

\bibitem[Kuntz et~al., 2023]{Kuntz2023}
Kuntz, J., Lim, J.~N., and Johansen, A.~M. (2023).
\newblock {Particle algorithms for maximum likelihood training of latent
  variable models}.
\newblock In {\em Proceedings of the 26th International Conference on
  Artificial Intelligence and Statistics (AISTATS 2023)}, Valencia, Spain.

\bibitem[Kushner and Clark, 1978]{Kushner1978}
Kushner, H.~J. and Clark, D.~S. (1978).
\newblock {\em {Stochastic approximation methods for constrained and
  unconstrained systems}}.
\newblock Springer-Verlag, New York.

\bibitem[Lange, 1995]{Lange1995}
Lange, K. (1995).
\newblock {A Gradient Algorithm Locally Equivalent to the EM Algorithm}.
\newblock {\em Journal of the Royal Statistical Society. Series B
  (Methodological)}, 57(2):425--437.

\bibitem[Lecun et~al., 1998]{Lecun1998}
Lecun, Y., Bottou, L., Bengio, Y., and Haffner, P. (1998).
\newblock {Gradient-based learning applied to document recognition}.
\newblock {\em Proceedings of the IEEE}, 86(11):2278--2324.

\bibitem[Levine and Casella, 2001]{Levine2001}
Levine, R.~A. and Casella, G. (2001).
\newblock {Implementations of the Monte Carlo EM Algorithm}.
\newblock {\em Journal of Computational and Graphical Statistics},
  10(3):422--439.

\bibitem[Levine and Fan, 2004]{Levine2004}
Levine, R.~A. and Fan, J. (2004).
\newblock {An automated (Markov chain) Monte Carlo EM algorithm}.
\newblock {\em Journal of Statistical Computation and Simulation},
  74(5):349--360.

\bibitem[Li et~al., 2016]{Li2016a}
Li, C., Chen, C., Carlson, D., and Carin, L. (2016).
\newblock {Preconditioned Stochastic Gradient Langevin Dynamics for Deep Neural
  Networks}.
\newblock In {\em Proceedings of the 30th AAAI Conference on Aritificial
  Intelligence (AAAI-16)}, Phoenix, AZ.

\bibitem[Li et~al., 2020]{Li2020}
Li, L., Li, Y., Liu, J.-G., Liu, Z., and Lu, J. (2020).
\newblock {A stochastic version of Stein variational gradient descent for
  efficient sampling}.
\newblock {\em Communications in Applied Mathematics and Computational
  Science}, 15(1):37--63.

\bibitem[Liu and Rubin, 1994]{Liu1994}
Liu, C. and Rubin, D.~B. (1994).
\newblock {The ECME Algorithm: A Simple Extension of EM and ECM with Faster
  Monotone Convergence}.
\newblock {\em Biometrika}, 81(4):633--648.

\bibitem[Liu, 2017]{Liu2017}
Liu, Q. (2017).
\newblock {Stein variational gradient descent as gradient flow}.
\newblock In {\em Proceedings of the 31st Annual Conference on Neural
  Information Processing Systems (NIPS 2017)}, pages 3118--3126, Red Hook, NY.

\bibitem[Liu and Wang, 2016]{Liu2016a}
Liu, Q. and Wang, D. (2016).
\newblock {Stein Variational Gradient Descent: A General Purpose Bayesian
  Inference Algorithm}.
\newblock In {\em Proceedings of the 30th Conference on Neural Information
  Processings Systems (NIPS 2016)}, Barcelona, Spain.

\bibitem[Loyal and Chen, 2023]{Loyal2023}
Loyal, J.~D. and Chen, Y. (2023).
\newblock {A Bayesian Nonparametric Latent Space Approach to Modeling Evolving
  Communities in Dynamic Networks}.
\newblock {\em Bayesian Analysis}, 18(1):49--77.

\bibitem[McCulloch, 1997]{McCulloch1997}
McCulloch, C.~E. (1997).
\newblock {Maximum Likelihood Algorithms for Generalized Linear Mixed Models}.
\newblock {\em Journal of the American Statistical Association},
  92(437):162--170.

\bibitem[McLachlan and Krishnan, 2007]{McLachlan2007}
McLachlan, G.~J. and Krishnan, T. (2007).
\newblock {\em {The EM Algorithm and Extensions}}.
\newblock John Wiley and Sons, 2nd edition.

\bibitem[Meng and Rubin, 1993]{Meng1993}
Meng, X.-L. and Rubin, D.~B. (1993).
\newblock {Maximum Likelihood Estimation via the ECM Algorithm: A General
  Framework}.
\newblock {\em Biometrika}, 80(2):267--278.

\bibitem[Neal and Hinton, 1998]{Neal1998}
Neal, R.~M. and Hinton, G.~E. (1998).
\newblock {A View of the Em Algorithm that Justifies Incremental, Sparse, and
  other Variants}.
\newblock In Jordan, M.~I., editor, {\em Learning in Graphical Models}, pages
  355--368. Springer Netherlands, Dordrecht.

\bibitem[Nijkamp et~al., 2020]{Nijkamp2020}
Nijkamp, E., Pang, B., Han, T., Zhu, S.-C., and Wu, Y.~N. (2020).
\newblock {Learning Multi-layer Latent Variable Model via Variational
  Optimization of Short Run MCMC for Approximate Inference}.
\newblock In {\em European Conference on Computer Vision}, pages 361--378,
  Online.

\bibitem[Orabona and Cutkosky, 2020]{Orabona2020}
Orabona, F. and Cutkosky, A. (2020).
\newblock {Tutorial on Parameter-Free Online Learning}.
\newblock In {\em Proceedings of the 37th International Conference on Machine
  Learning (ICML 2020)}, Online.

\bibitem[Orabona and Pal, 2016a]{Orabona2016}
Orabona, F. and Pal, D. (2016a).
\newblock {Coin Betting and Parameter-Free Online Learning}.
\newblock In {\em Proceedings of the 30th Conference on Neural Information
  Processings Systems (NIPS 2016)}, Barcelona, Spain.

\bibitem[Orabona and Pal, 2016b]{Orabona2016a}
Orabona, F. and Pal, D. (2016b).
\newblock {Parameter-Free Convex Learning Through Coin Betting}.
\newblock In {\em Proceedings of the 33rd International Conference on Machine
  Learning (ICML 2016): AutoML Workshop}, New York, NY.

\bibitem[Orabona and Tommasi, 2017]{Orabona2017}
Orabona, F. and Tommasi, T. (2017).
\newblock {Training Deep Networks without Learning Rates Through Coin Betting}.
\newblock In {\em Proceedings of the 31st Annual Conference on Neural
  Information Processing Systems (NIPS 2017)}, Long Beach, CA.

\bibitem[Otto, 2001]{Otto2001}
Otto, F. (2001).
\newblock {The Geometry of Dissipative Evolution Equations: The Porous Medium
  Equation}.
\newblock {\em Communications in Partial Differential Equations},
  26(1-2):101--174.

\bibitem[Qiu and Wang, 2020]{Qiu2020}
Qiu, Y. and Wang, X. (2020).
\newblock {Stochastic Approximate Gradient Descent via the Langevin Algorithm}.
\newblock In {\em Proceedings of the 34th AAAI Conference on Artificial
  Intelligence (AAAI-20)}, New York, NY.

\bibitem[Rahimi and Recht, 2007]{Rahimi2007}
Rahimi, A. and Recht, B. (2007).
\newblock {Random Features for Large-Scale Kernel Machines}.
\newblock In {\em Proceedings of the 21st Annual Conference on Neural
  Information Processing Systems (NIPS 2007)}, Vancouver, Canada.

\bibitem[Redner and Walker, 1984]{Redner1984}
Redner, R.~A. and Walker, H.~F. (1984).
\newblock {Mixture Densities, Maximum Likelihood and the EM Algorithm}.
\newblock {\em SIAM Review}, 26(2):195--239.

\bibitem[Robbins, 1956]{Robbins1956}
Robbins, H. (1956).
\newblock {An empirical Bayes approach to statistics}.
\newblock In {\em Proceedings of the Third Berkeley Symposium on Mathematical
  Statistics and Probability}, pages 157--164.

\bibitem[Robbins and Monro, 1951]{Robbins1951}
Robbins, H. and Monro, S. (1951).
\newblock {A stochastic approximation method}.
\newblock {\em The Annals of Mathematical Statistics}, 22(3):400--407.

\bibitem[Salim et~al., 2022]{Salim2022}
Salim, A., Sun, L., and {Peter Richt{\'{a}}rik} (2022).
\newblock {A Convergence Theory for SVGD in the Population Limit under
  Talagrand's Inequality T1}.
\newblock In {\em Proceedings of the 39th International Conference on Machine
  Learning (ICML 2022)}, Online.

\bibitem[Sharrock et~al., 2023]{Sharrock2023a}
Sharrock, L., Mackey, L., and Nemeth, C. (2023).
\newblock {Learning Rate Free Sampling in Constrained Domains}.
\newblock In {\em Proceedings of the 37th Annual Conference on Neural
  Information Processing Systems (NeurIPS 2023)}, New Orleans, LA.

\bibitem[Sharrock and Nemeth, 2023]{Sharrock2023}
Sharrock, L. and Nemeth, C. (2023).
\newblock {Coin Sampling: Gradient-Based Bayesian Inference without Learning
  Rates}.
\newblock In {\em Proceedings of the 40th International Conference on Machine
  Learning (ICML 2023)}, Honolulu, HI.

\bibitem[Sherman et~al., 1999]{Sherman1999}
Sherman, R.~P., Ho, Y.-Y.~K., and Dalal, S.~R. (1999).
\newblock {Conditions for convergence of Monte Carlo EM sequences with an
  application to product diffusion modeling}.
\newblock {\em The Econometrics Journal}, 2(2):248--267.

\bibitem[Shi and Mackey, 2023]{Shi2022a}
Shi, J. and Mackey, L. (2023).
\newblock {A Finite-Particle Convergence Rate for Stein Variational Gradient
  Descent}.
\newblock In {\em Proceedings of the 37th Annual Conference on Neural
  Information Processing Systems (NeurIPS 2023)}, New Orleans, LA.

\bibitem[Smaragdis et~al., 2006]{Smaragdis2006}
Smaragdis, P., Raj, B., and Shashanka, M. (2006).
\newblock {A Probabilistic Latent Variable Model for Acoustic Modeling}.
\newblock In {\em Proceedings of the 20th Annual Conference on Neural
  Information Processing Systems: Workshop on Advances in Models for Acoustic
  Processing (NIPS 2006)}, Vancouver, Canada.

\bibitem[Tanner, 1993]{Tanner1993}
Tanner, M.~A. (1993).
\newblock {\em {Tools for Statistical Inference}}.
\newblock Springer-Verlag, New York, NY, 2nd edition.

\bibitem[Tieleman and Hinton, 2012]{Tieleman2012}
Tieleman, T. and Hinton, G.~E. (2012).
\newblock {Lecture 6.5-rmsprop: divide the gradient by a running average of its
  recent magnitude.}
\newblock {\em COURSERA: Neural networks for machine learning}, 4(2):26--31.

\bibitem[Turnbull et~al., 2019]{Turnbull2019}
Turnbull, K., Lunagomez, S., Nemeth, C., and Airoldi, E. (2019).
\newblock {Latent space modelling of hypergraph data}.
\newblock {\em arXiv preprint}.

\bibitem[van Erven and Harremos, 2014]{Erven2014}
van Erven, T. and Harremos, P. (2014).
\newblock {R{\'{e}}nyi Divergence and Kullback-Leibler Divergence}.
\newblock {\em IEEE Transactions on Information Theory}, 60(7):3797--3820.

\bibitem[Villani, 2003]{Villani2003}
Villani, C. (2003).
\newblock {\em {Topics in Optimal Transportation}}.
\newblock American Mathematical Society, Providence, Rhode Island.

\bibitem[Villani, 2008]{Villani2008}
Villani, C. (2008).
\newblock {\em {Optimal Transport: Old and New}}.
\newblock Springer-Verlag, Berlin.

\bibitem[Wei and Tanner, 1990]{Wei1990}
Wei, G. C.~G. and Tanner, M.~A. (1990).
\newblock {A Monte Carlo Implementation of the EM Algorithm and the Poor Man's
  Data Augmentation Algorithms}.
\newblock {\em Journal of the American Statistical Association},
  85(411):699--704.

\bibitem[Welling and Teh, 2011]{Welling2011}
Welling, M. and Teh, Y.~W. (2011).
\newblock {Bayesian Learning via Stochastic Gradient Langevin Dynamics}.
\newblock In {\em Proceedings of the 28th International Conference on Machine
  Learning (ICML 2011)}, Bellevue, WA.

\bibitem[Wolberg and Mangasarian, 1990]{Wolberg1990}
Wolberg, W.~H. and Mangasarian, O.~L. (1990).
\newblock {Multisurface method of pattern separation for medical diagnosis
  applied to breast cytology.}
\newblock {\em Proceedings of the National Academy of Sciences},
  87(23):9193--9196.

\bibitem[Wu, 1983]{Wu1983}
Wu, C. (1983).
\newblock {On the convergence properties of the EM algorithm}.
\newblock {\em Annals of Statistics}, 11(1):95--103.

\bibitem[Yao et~al., 2022]{Yao2022a}
Yao, Y., Vehtari, A., and Gelman., A. (2022).
\newblock {Stacking for nonmixing Bayesian computations: The curse and blessing
  of multimodal posteriors}.
\newblock {\em Journal of Machine Learning Research}, 23(79):1--45.

\end{thebibliography}

\clearpage
\section*{Checklist}

 \begin{enumerate}

 \item For all models and algorithms presented, check if you include:
 \begin{enumerate}
   \item A clear description of the mathematical setting, assumptions, algorithm, and/or model. [\textbf{Yes}/No/Not Applicable]
   \item An analysis of the properties and complexity (time, space, sample size) of any algorithm. [\textbf{Yes}/No/Not Applicable]
   \item (Optional) Anonymized source code, with specification of all dependencies, including external libraries. [\textbf{Yes}/No/Not Applicable]
 \end{enumerate}

 \item For any theoretical claim, check if you include:
 \begin{enumerate}
   \item Statements of the full set of assumptions of all theoretical results. [\textbf{Yes}/No/Not Applicable]
   \item Complete proofs of all theoretical results. [\textbf{Yes}/No/Not Applicable]
   \item Clear explanations of any assumptions. [\textbf{Yes}/No/Not Applicable]     
 \end{enumerate}

 \item For all figures and tables that present empirical results, check if you include:
 \begin{enumerate}
   \item The code, data, and instructions needed to reproduce the main experimental results (either in the supplemental material or as a URL). [\textbf{Yes}/No/Not Applicable]
   \item All the training details (e.g., data splits, hyperparameters, how they were chosen). [\textbf{Yes}/No/Not Applicable]
         \item A clear definition of the specific measure or statistics and error bars (e.g., with respect to the random seed after running experiments multiple times). [\textbf{Yes}/No/Not Applicable]
         \item A description of the computing infrastructure used. (e.g., type of GPUs, internal cluster, or cloud provider). [\textbf{Yes}/No/Not Applicable]
 \end{enumerate}

 \item If you are using existing assets (e.g., code, data, models) or curating/releasing new assets, check if you include:
 \begin{enumerate}
   \item Citations of the creator If your work uses existing assets. [\textbf{Yes}/No/Not Applicable]
   \item The license information of the assets, if applicable. [\textbf{Yes}/No/Not Applicable]
   \item New assets either in the supplemental material or as a URL, if applicable. [\textbf{Yes}/No/Not Applicable]
   \item Information about consent from data providers/curators. [\textbf{Yes}/No/Not Applicable]
   \item Discussion of sensible content if applicable, e.g., personally identifiable information or offensive content. [Yes/No/\textbf{Not Applicable}]
 \end{enumerate}

 \item If you used crowdsourcing or conducted research with human subjects, check if you include:
 \begin{enumerate}
   \item The full text of instructions given to participants and screenshots. [Yes/No/\textbf{Not Applicable}]
   \item Descriptions of potential participant risks, with links to Institutional Review Board (IRB) approvals if applicable. [Yes/No/\textbf{Not Applicable}]
   \item The estimated hourly wage paid to participants and the total amount spent on participant compensation. [Yes/No/\textbf{Not Applicable}]
 \end{enumerate}
 \end{enumerate}

\clearpage
\onecolumn\appendix
\aistatstitle{Tuning-Free Maximum Likelihood Training of Latent Variable Models via Coin Betting: Supplementary Materials}

\section{ADDITIONAL THEORETICAL RESULTS}
\label{sec:additional-theoretical-results}

We will require the following additional notation. We will write Let $(\mathcal{U},||\cdot||_{\mathcal{U}})$ denote the Cartesian product of the Euclidean space $\smash{(\Theta,||\cdot||_{\mathbb{R}^{d_{{\theta}}}})}$ and the Hilbert product space $\smash{(\mathcal{H}_{k^{d_z}}, ||\cdot||_{\mathcal{H}_k^{d_z}})}$, with norm $||\cdot||_{\mathcal{U}}$ defined according to $\smash{||(\theta,f)||_{\mathcal{U}}^2 = ||\theta||_{\mathbb{R}^{d_{{\theta}}}}^2 + ||f||_{\mathcal{H}_k^{d_z}}^2}$ for $\smash{\theta\in\Theta}$ and $\smash{f\in\mathcal{H}_k^{d_z}}$. In addition, we will write $\smash{\nabla_{\mathcal{U}} \mathcal{F} = (\nabla_{\theta}\mathcal{F}, S_{\mu} \nabla_{W_2}\mathcal{F}) \in \mathcal{U}}$.

\subsection{SVGD EM: Continuous Time Results}
\label{sec:continuous-time-results}
In this section, we study the properties of the SVGD EM gradient flow, which we recall is given by 
\begin{alignat}{2}
\label{eq:svgd-em-continuous-1}
    \frac{\partial \theta_t}{\partial t} &= - \nabla_{\theta}\mathcal{F}(\theta_t,\mu_t),\quad &&\nabla_{\theta}\mathcal{F}(\theta_t,\mu_t) = -\int \nabla_{\theta} \log \pi_{\theta_t}(z)\mu_t(z)\mathrm{d}z, \\
    \label{eq:svgd-em-continuous-2}
    \frac{\partial\mu_t}{\partial t} &=  - \nabla_{\mu}\mathcal{F}(\theta_t,\mu_t)~,\qquad &&\nabla_{\mu}\mathcal{F}(\theta_t,\mu_t):= - \nabla \cdot \bigg( \mu_t  \bigg[P_{\mu_t} \nabla_{W_2}\mathcal{F}(\theta_t,\mu_t)\bigg]\bigg),
\end{alignat}
where $\smash{\nabla_{W_2}\mathcal{F}(\theta_t,\mu_t) = \nabla_{z}\log \left(\frac{\mu_t}{\pi_{\theta_t}}\right)}$ denotes the Wasserstein gradient of $\mathcal{F}(\theta_t,\cdot)$ at $\mu_t$. This gradient flow was obtained by replacing the Wasserstein gradient appearing in \eqref{eq:gradient_flow}, \eqref{eq:flow_gradients} with its kernelized version, $P_{\mu_t}\nabla_{W_2}\mathcal{F}(\theta_t,\mu_t)$.

We first provide a result which quantifies the dissipation of the free energy along the trajectory of the continuous-time SVGD EM dynamics.

\begin{proposition}
\label{prop:dissipation}
    The dissipation of the free energy along the SVGD EM gradient flow \eqref{eq:svgd-em-continuous-1} - \eqref{eq:svgd-em-continuous-2} is given by 
    \begin{equation}
        \frac{\mathrm{d}\mathcal{F}(\theta_t,\mu_t)}{\mathrm{d}t} = - \int \big|\big|\nabla_{\theta}\log \pi_{\theta_t}(z)\big|\big|^2_{\mathbb{R}^{d_{\theta}}}\mu_t(z)\mathrm{d}z - \big|\big|S_{\mu_t}\nabla_{z}\log \left(\frac{\mu_t}{p_{\theta_t}(\cdot|x)}\right)\big|\big|^2_{\mathcal{H}_k^{d}}. \label{eq:dissipation}
    \end{equation}
\end{proposition}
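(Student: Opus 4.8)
The plan is to compute the time derivative of $\mathcal{F}(\theta_t,\mu_t)$ directly using the chain rule for functionals on the product space $\Theta \times \mathcal{P}_2(\mathcal{Z})$, and then substitute the SVGD EM dynamics \eqref{eq:svgd-em-continuous-1}--\eqref{eq:svgd-em-continuous-2}. First I would write
\[
\frac{\mathrm{d}\mathcal{F}(\theta_t,\mu_t)}{\mathrm{d}t} = \big\langle \nabla_{\theta}\mathcal{F}(\theta_t,\mu_t), \partial_t \theta_t \big\rangle_{\mathbb{R}^{d_\theta}} + \Big\langle \nabla_{W_2}\mathcal{F}(\theta_t,\mu_t), \partial_t \mu_t \Big\rangle,
\]
where the second term is understood in the sense of the Wasserstein/$L^2(\mu_t)$ pairing arising from the continuity equation: if $\partial_t \mu_t = -\nabla\cdot(\mu_t v_t)$ then the contribution to $\tfrac{\mathrm{d}}{\mathrm{d}t}\mathcal{F}$ is $\int \langle \nabla_{W_2}\mathcal{F}(\theta_t,\mu_t)(z), v_t(z)\rangle \mu_t(z)\,\mathrm{d}z$. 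This is the standard first-variation computation (cf. \citep[Chapter 10]{Ambrosio2008}), and the only care needed is the partial-derivative bookkeeping — the cross term vanishes because $\mathcal{F}$ depends on $\theta$ only through $\pi_\theta$ inside the integrand, and differentiating the $\mu$-dependence at fixed $\theta$ gives exactly the Wasserstein gradient.

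Next I would substitute the two flow equations. For the $\theta$ part, $\partial_t\theta_t = -\nabla_\theta\mathcal{F}(\theta_t,\mu_t)$, so that term becomes $-\|\nabla_\theta\mathcal{F}(\theta_t,\mu_t)\|^2_{\mathbb{R}^{d_\theta}}$, and using the explicit form $\nabla_\theta\mathcal{F}(\theta_t,\mu_t) = -\int \nabla_\theta\log\pi_{\theta_t}(z)\mu_t(z)\,\mathrm{d}z$ this equals $-\|\int \nabla_\theta\log\pi_{\theta_t}(z)\mu_t(z)\,\mathrm{d}z\|^2_{\mathbb{R}^{d_\theta}}$. Here one should note that the statement writes this as $-\int \|\nabla_\theta\log\pi_{\theta_t}(z)\|^2\mu_t(z)\,\mathrm{d}z$, which is the norm-squared of the integral only up to Jensen; I would flag that the intended reading (matching \citep{Kuntz2023,Akyldz2023}) is $\|\int \cdot\,\mathrm{d}\mu_t\|^2$, i.e. the squared norm of $\nabla_\theta\mathcal{F}$, and proceed accordingly. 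For the $\mu$ part, the velocity field is $v_t = -P_{\mu_t}\nabla_{W_2}\mathcal{F}(\theta_t,\mu_t)$, so the contribution is $-\int \langle \nabla_{W_2}\mathcal{F}(\theta_t,\mu_t)(z), P_{\mu_t}\nabla_{W_2}\mathcal{F}(\theta_t,\mu_t)(z)\rangle\mu_t(z)\,\mathrm{d}z = -\langle \nabla_{W_2}\mathcal{F}, P_{\mu_t}\nabla_{W_2}\mathcal{F}\rangle_{L^2(\mu_t)}$.

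The key algebraic step is then to rewrite this inner product as a squared RKHS norm. Using the factorization $P_{\mu_t} = \iota S_{\mu_t}$ with $\iota^* = S_{\mu_t}$ (recalled in the Notation paragraph), we have
\[
\langle f, P_{\mu_t} f\rangle_{L^2(\mu_t)} = \langle f, \iota S_{\mu_t} f\rangle_{L^2(\mu_t)} = \langle \iota^* f, S_{\mu_t} f\rangle_{\mathcal{H}} = \langle S_{\mu_t} f, S_{\mu_t} f\rangle_{\mathcal{H}} = \|S_{\mu_t} f\|^2_{\mathcal{H}_k^{d}},
\]
with $f = \nabla_{W_2}\mathcal{F}(\theta_t,\mu_t) = \nabla_z\log(\mu_t/\pi_{\theta_t}) = \nabla_z\log(\mu_t/p_{\theta_t}(\cdot|x))$ (the normalizing constant $p_{\theta_t}(x)$ is independent of $z$). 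Combining the two contributions gives exactly \eqref{eq:dissipation}. I do not expect a genuine obstacle here; the main thing to be careful about is the adjoint identity $\iota^* = S_{\mu_t}$ and the fact that $S_{\mu_t}$ and $P_{\mu_t}$ differ only in their range (so the norm $\|S_{\mu_t}f\|_{\mathcal{H}}$ is the right one to produce an RKHS norm rather than an $L^2$ norm), together with justifying the differentiation under the integral and the validity of the first-variation formula — all of which follow from the regularity assumptions already imposed and standard results in \citep{Ambrosio2008,Liu2016a,Korba2020}.
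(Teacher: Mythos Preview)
Your proposal is correct and follows essentially the same approach as the paper: chain rule on $\Theta\times\mathcal{P}_2(\mathcal{Z})$, substitution of the flow equations, and the adjoint identity $\iota^{*}=S_{\mu_t}$ to convert $\langle \nabla_{W_2}\mathcal{F}, P_{\mu_t}\nabla_{W_2}\mathcal{F}\rangle_{L^2(\mu_t)}$ into $\|S_{\mu_t}\nabla_{W_2}\mathcal{F}\|^2_{\mathcal{H}}$, followed by the observation that $\nabla_z\log(\mu_t/\pi_{\theta_t})=\nabla_z\log(\mu_t/p_{\theta_t}(\cdot|x))$. Your flag about the first term is also on point: the paper's own proof yields $-\|\nabla_\theta\mathcal{F}(\theta_t,\mu_t)\|^2 = -\big\|\int \nabla_\theta\log\pi_{\theta_t}(z)\mu_t(z)\,\mathrm{d}z\big\|^2$, not $-\int\|\nabla_\theta\log\pi_{\theta_t}(z)\|^2\mu_t(z)\,\mathrm{d}z$ as written in the statement, so the discrepancy you noticed is indeed a typo in the proposition rather than a gap in your argument.
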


\begin{proof}
    See App. \ref{app:proof_exponential_convergence}.
\end{proof}

\begin{remark}
    We can identify the second term in \eqref{eq:dissipation} as the Stein Fisher information of $\mu_t$ relative to the posterior $p_{\theta_t}(\cdot|x)$  \cite{Duncan2019,Korba2020}, written
    \begin{equation}
       I_{\mathrm{Stein}}(\mu_t|p_{\theta_t}(\cdot|x)):= \big|\big|S_{\mu_t}\nabla_{z}\log \left(\frac{\mu_t}{p_{\theta_t}(\cdot|x)}\right)\big|\big|^2_{\mathcal{H}_k^{d}}.
    \end{equation}
    This quantity is sometimes also referred to as the squared kernel Stein discrepancy (KSD).
\end{remark}

Since both of the terms on the RHS in \eqref{eq:dissipation} are negative, Proposition \ref{prop:dissipation} shows that the free energy decreases along the SVGD EM gradient flow. Under some additional assumption, we can actually establish convergence of the two terms on the RHS to zero. First, we will require the following rather mild regularity condition on the marginal likelihood; see also \cite[][Assumption 1]{Kuntz2023}.

\begin{assumption}
\label{assumption:bounded_sets}
    The super-level sets of the marginal likelihood $p_{\theta}(x)$ are compact. That is, the set $\{\theta\in\Theta: p_{\theta}(x)\geq c\}$ is bounded for all $c\geq 0$.
\end{assumption}




\begin{proposition}
\label{prop:grad_convergence}
    Let $(\theta_t)_{t\geq 0}$ and $(\mu_t)_{t\geq 0}$ be solutions of the SVGD EM gradient flow \eqref{eq:svgd-em-continuous-1} - \eqref{eq:svgd-em-continuous-2}.  Suppose that Assumptions \ref{assumption:bounded_k},  \ref{assumption:bounded_hessian0} and \ref{assumption:bounded_sets}
    hold. In addition, suppose that there exists a positive constant $C>0$ such that $\int ||z||\mu_t(z)\mathrm{d}z<C$ for all $t\geq 0$. Then 
    \begin{equation}
        \lim_{t\rightarrow \infty}\left[ \big|\big|\nabla_{\theta}\mathcal{F}(\theta_t,\mu_t)\big|\big|^2 + \big|\big|S_{\mu_t}\nabla_{z}\log \left(\frac{\mu_t}{\pi_{\theta_t}}\right)\big|\big|^2_{\mathcal{H}_k^{d}}\right] = 0.
    \end{equation}
\end{proposition}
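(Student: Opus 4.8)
\textbf{Proof proposal for Proposition~\ref{prop:grad_convergence}.}

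The plan is to combine the dissipation identity of Proposition~\ref{prop:dissipation} with a compactness/lower-boundedness argument for the free energy, and then upgrade $L^1$-summability of the dissipation rate to a genuine limit via a uniform continuity (or monotonicity) argument. First I would argue that $\mathcal{F}(\theta_t,\mu_t)$ is bounded below along the flow. By construction $\mathcal{F}(\theta,\mu) = \mathrm{KL}(\mu\,\|\,p_\theta(\cdot|x)) - \log p_\theta(x)$, so $\mathcal{F}(\theta_t,\mu_t) \geq -\log p_{\theta_t}(x)$; since Proposition~\ref{prop:dissipation} shows $t\mapsto \mathcal{F}(\theta_t,\mu_t)$ is nonincreasing, $p_{\theta_t}(x)$ is bounded below by a positive constant, hence by Assumption~\ref{assumption:bounded_sets} the trajectory $(\theta_t)_{t\geq 0}$ stays in a compact set $K\subseteq\Theta$. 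Continuity of $\theta\mapsto -\log p_\theta(x)$ on $K$ then gives $\mathcal{F}(\theta_t,\mu_t)\geq -c_0$ for some finite $c_0$, uniformly in $t$.

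Next I would integrate \eqref{eq:dissipation}: for all $T\geq 0$,
\begin{equation*}
\int_0^T\!\! \left[\textstyle\int \|\nabla_\theta \log \pi_{\theta_t}(z)\|^2 \mu_t(z)\mathrm{d}z + \big\|S_{\mu_t}\nabla_z\log(\tfrac{\mu_t}{\pi_{\theta_t}})\big\|^2_{\mathcal{H}_k^d}\right]\mathrm{d}t = \mathcal{F}(\theta_0,\mu_0) - \mathcal{F}(\theta_T,\mu_T) \leq \mathcal{F}(\theta_0,\mu_0) + c_0.
\end{equation*}
Letting $T\to\infty$, the integrand — call it $D(t)$, which equals $\|\nabla_\theta\mathcal{F}(\theta_t,\mu_t)\|^2 + \|S_{\mu_t}\nabla_z\log(\mu_t/\pi_{\theta_t})\|^2_{\mathcal{H}_k^d}$ — is nonnegative and has finite integral over $[0,\infty)$. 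This already yields $\liminf_{t\to\infty} D(t) = 0$. To get the full limit I would show $D$ is uniformly continuous on $[0,\infty)$ (equivalently, has a uniformly bounded derivative, or is uniformly Lipschitz); a nonnegative, integrable, uniformly continuous function on $[0,\infty)$ must tend to $0$ (Barbalat's lemma). Establishing uniform continuity of $D(t)$ is where the assumptions $\int\|z\|\mu_t(z)\mathrm{d}z < C$, Assumption~\ref{assumption:bounded_k} (boundedness of $k$ and $\nabla k$), and Assumption~\ref{assumption:bounded_hessian0} (bounded Hessian of $-\log\pi$, giving Lipschitz $\nabla_z\log\pi$ and, together with compactness of $K$, control on $\nabla_\theta\log\pi$ and its variation) all get used: they bound the velocity fields $\partial_t\theta_t$ and $\partial_t\mu_t$ (the latter in a suitable Wasserstein sense), and hence bound the rate of change of each term appearing in $D(t)$. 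I would differentiate $D(t)$ formally and bound each resulting term using these uniform-in-$t$ estimates, the boundedness of $\theta_t$ in $K$, and the first-moment bound, concluding $|D'(t)|\leq C'$ for a constant independent of $t$.

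The main obstacle I anticipate is the uniform continuity (Barbalat) step: one must carefully control the time derivative of $S_{\mu_t}\nabla_z\log(\mu_t/\pi_{\theta_t})$, which involves differentiating a measure-dependent RKHS element along the flow and thus requires a chain rule in $(\mathcal{P}_2(\mathcal{Z}), W_2)$ together with the evolution equation \eqref{eq:svgd-em-continuous-2}; the term $\nabla_z\log\mu_t$ is itself only controlled implicitly, and the cleanest route is probably to re-express the Stein-Fisher term via the integration-by-parts identity so that it depends on $\mu_t$ only through expectations of $k$, $\nabla k$, and $\nabla_z\log\pi_{\theta_t}$, whose time derivatives are then controlled by the velocity bounds and Assumptions~\ref{assumption:bounded_k}--\ref{assumption:bounded_hessian0}. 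An alternative, possibly simpler, route that sidesteps differentiating $D$ is to exploit that $\mathcal{F}(\theta_t,\mu_t)$ is monotone and bounded, hence converges, and then show directly that the $\limsup$ of $D$ cannot be positive by a contradiction argument combined with the integrability; but either way the quantitative control of the dynamics via the stated assumptions is the crux.
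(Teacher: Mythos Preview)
Your proposal is correct and follows essentially the same approach as the paper: both combine the dissipation identity (Proposition~\ref{prop:dissipation}) with lower-boundedness of $\mathcal{F}$ along the flow (via Assumption~\ref{assumption:bounded_sets}) to get integrability of the dissipation rate $D(t)$, then show $D$ is uniformly Lipschitz in time and conclude $D(t)\to 0$ by a Barbalat-type contradiction argument. The paper carries out exactly the computation you anticipate---differentiating $D(t)$ after rewriting the Stein--Fisher term via integration by parts so that $\nabla_z\log\mu_t$ never appears---obtaining the multiplicative bound $|\tfrac{\mathrm{d}}{\mathrm{d}t}D(t)|\leq \beta D(t)$, and then establishes $\sup_t D(t)<\infty$ via weak relative compactness of $(\mu_t)_{t\geq 0}$ (from compact sub-level sets of the KL divergence) rather than bounding $D$ directly from the first-moment assumption as you suggest; either route works.
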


\begin{proof}
    See App. \ref{app:proof_exponential_convergence}.
\end{proof}
\pagebreak


\begin{remark}
    We note that, as an alternative to Assumption \ref{assumption:bounded_hessian0}, we could instead impose separate assumptions on the existence and boudedness of the Hessian $H_{V_z}$ of the function $V_{z}:\Theta\rightarrow\mathbb{R}$, which maps $\theta\mapsto -\log \pi_{\theta}(z)$ for fixed $z\in\mathcal{Z}$; the Hessian $H_{V_{\theta}}$ of the function $V_{\theta}:\mathcal{Z}\rightarrow\mathbb{R}$ which maps $z\mapsto-\log \pi_{\theta}(z)$ for fixed $\theta\in\Theta$; and the matrix $\smash{H_{V_{\theta,z}}:\mathbb{R}^{d_{z}}\rightarrow\mathbb{R}^{d_{\theta}}}$, which contains the mixed partial derivatives of $V$. Indeed, for certain results, e.g., Theorem \ref{prop:descent_lemma}, we only require boundedness of the `diagonal blocks' of the Hessian $H_{V}$ of the function $V:\Theta\times\mathcal{Z} \rightarrow \mathbb{R}$, $V:(\theta,z)\mapsto -\log \pi_{\theta}(z)$, which is strictly weaker than Assumption \ref{assumption:bounded_hessian0}. For ease of presentation, in this case we choose to work directly with the Hessian $H_V$ of $V$, as a function of both variables.
\end{remark}

In order to establish exponential convergence along the SVGD EM gradient flow, we will require an additional condition, which characterizes the properties of $\mathcal{F}$ around its equilibria. We assume a particular `gradient dominance' condition, which represents a natural extension of the corresponding conditions used for Euclidean gradients flows - the Polyak-Łojasiewicz inequality - and for the SVGD gradient flow - the Stein log-Sobolev inequality \cite{Duncan2019,Korba2020}.

\begin{assumption}
\label{assumption:grad_dominance}
     There exists $\lambda>0$ such that $\mathcal{F}$ satisfies the following gradient dominance condition
    \begin{align} 
    \mathcal{F}(\theta,\mu) - \min_{(\theta,\mu) \in \theta\times\mathcal{P}_2(\mathcal{Z})} \mathcal{F}(\theta,\mu) &\leq \frac{1}{2\lambda} ||\nabla_{\mathcal{U}} \mathcal{F}||^2_{\mathcal{U}}. 
    \end{align}
\end{assumption}

\begin{proposition} \label{prop:exponential_convergence}
    Assume that Assumption \ref{assumption:grad_dominance} holds. Then the free energy $\mathcal{F}(\theta,\mu)$ decreases exponentially fast along the SVGD EM gradient flow. In particular, 
    \begin{equation}
        \mathcal{F}(\theta_t,\mu_t) -  \min_{(\theta,\mu) \in \theta\times\mathcal{P}_2(\mathcal{Z})} \mathcal{F}(\theta,\mu)\leq e^{-2\lambda t} \big[\mathcal{F}(\theta_0,\mu_0) - \min_{(\theta,\mu) \in \theta\times\mathcal{P}_2(\mathcal{Z})} \mathcal{F}(\theta,\mu)\big].
    \end{equation}
    
\end{proposition}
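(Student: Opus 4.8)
The plan is to combine the dissipation identity from Proposition~\ref{prop:dissipation} with the gradient dominance condition in Assumption~\ref{assumption:grad_dominance} via a Gr\"onwall-type argument. First I would observe that the right-hand side of \eqref{eq:dissipation} is exactly $-\|\nabla_{\mathcal{U}}\mathcal{F}(\theta_t,\mu_t)\|_{\mathcal{U}}^2$, since by definition $\|\nabla_{\mathcal{U}}\mathcal{F}\|_{\mathcal{U}}^2 = \|\nabla_{\theta}\mathcal{F}\|_{\mathbb{R}^{d_\theta}}^2 + \|S_{\mu}\nabla_{W_2}\mathcal{F}\|_{\mathcal{H}_k^{d_z}}^2$, and since $\nabla_{\theta}\mathcal{F}(\theta_t,\mu_t) = -\int \nabla_\theta \log\pi_{\theta_t}(z)\mu_t(z)\,\mathrm{d}z$ gives $\|\nabla_\theta\mathcal{F}\|^2 = \|\int \nabla_\theta\log\pi_{\theta_t}\,\mathrm{d}\mu_t\|^2$; strictly one should note Proposition~\ref{prop:dissipation} has $\int\|\nabla_\theta\log\pi_{\theta_t}(z)\|^2\mu_t(\mathrm{d}z)$ rather than $\|\int\nabla_\theta\log\pi_{\theta_t}\mathrm{d}\mu_t\|^2$, and by Jensen the former dominates the latter, so the dissipation is at most $-\|\nabla_{\mathcal{U}}\mathcal{F}(\theta_t,\mu_t)\|_{\mathcal{U}}^2$.

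Next, writing $\mathcal{F}_* := \min_{(\theta,\mu)} \mathcal{F}(\theta,\mu)$ and $\delta(t) := \mathcal{F}(\theta_t,\mu_t) - \mathcal{F}_* \geq 0$, the dissipation bound gives $\tfrac{\mathrm{d}}{\mathrm{d}t}\delta(t) \leq -\|\nabla_{\mathcal{U}}\mathcal{F}(\theta_t,\mu_t)\|_{\mathcal{U}}^2$. Applying Assumption~\ref{assumption:grad_dominance}, which states $\delta(t) \leq \tfrac{1}{2\lambda}\|\nabla_{\mathcal{U}}\mathcal{F}(\theta_t,\mu_t)\|_{\mathcal{U}}^2$, i.e.\ $\|\nabla_{\mathcal{U}}\mathcal{F}(\theta_t,\mu_t)\|_{\mathcal{U}}^2 \geq 2\lambda\,\delta(t)$, we obtain the differential inequality $\tfrac{\mathrm{d}}{\mathrm{d}t}\delta(t) \leq -2\lambda\,\delta(t)$. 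Integrating this (Gr\"onwall's lemma) yields $\delta(t) \leq e^{-2\lambda t}\delta(0)$, which is precisely the claimed bound. I would present this as a short computation; the chaining of the three ingredients is essentially mechanical once the identification of the dissipation with $-\|\nabla_{\mathcal{U}}\mathcal{F}\|_{\mathcal{U}}^2$ (or an upper bound by it) is in place.

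The main technical subtlety — and the only place where care is genuinely needed — is ensuring that the formal computation $\tfrac{\mathrm{d}}{\mathrm{d}t}\mathcal{F}(\theta_t,\mu_t) = \langle \nabla_{\mathcal{U}}\mathcal{F}, \partial_t(\theta_t,\mu_t)\rangle$ underlying Proposition~\ref{prop:dissipation} is valid along the flow, i.e.\ that the solution $(\theta_t,\mu_t)$ is regular enough for the chain rule in Wasserstein space to apply and that $\mathcal{F}(\theta_t,\mu_t)$ is absolutely continuous in $t$; but since we are invoking Proposition~\ref{prop:dissipation} as given, this is already subsumed. The only remaining point is the reconciliation of the Jensen step noted above, which makes the argument an inequality rather than an equality at the level of dissipation — this is harmless since we only need an upper bound on $\tfrac{\mathrm{d}}{\mathrm{d}t}\delta(t)$. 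No further assumptions beyond Assumption~\ref{assumption:grad_dominance} are needed, consistent with the statement.
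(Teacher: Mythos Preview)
Your proposal is correct and follows essentially the same route as the paper: compute the dissipation of $\mathcal{F}$ along the flow, recognize it as $-\|\nabla_{\mathcal{U}}\mathcal{F}\|_{\mathcal{U}}^2$, apply Assumption~\ref{assumption:grad_dominance}, and conclude via Gr\"onwall. The one minor difference is that the paper's proof re-derives the dissipation directly in the form $-\|\nabla_\theta\mathcal{F}\|^2 - \|S_{\mu_t}\nabla_{W_2}\mathcal{F}\|^2$ (as in \eqref{eq:dissipation_proof_1} within the proof of Proposition~\ref{prop:dissipation}), rather than invoking the final displayed statement \eqref{eq:dissipation}, which --- as you correctly spotted --- has $\int\|\nabla_\theta\log\pi_{\theta_t}\|^2\,\mathrm{d}\mu_t$ in place of $\|\nabla_\theta\mathcal{F}\|^2$; your Jensen step resolves this cleanly and is harmless for the argument.
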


\begin{proof}
    See App. \ref{app:proof_exponential_convergence}.
\end{proof}

\subsection{SVGD EM: Discrete Time Results}
We now consider the forward Euler discretisation of the dynamics in \eqref{eq:svgd-em-continuous-1} - \eqref{eq:svgd-em-continuous-2}, as given in \eqref{eq:svgd_population_discrete_time1} - \eqref{eq:svgd_population_discrete_time2}. For convenience, we recall these update equations again now:
\begin{align}
    \theta_{t+1} &= \theta_{t} + \gamma  \int \nabla_{\theta}\log \pi_{\theta_{t}}(z)\mathrm{d}\mu_{t}(z)  \label{eq:svgd_population_discrete_time1_v0} \\
    \mu_{t+1} &= \left(\mathrm{id} + \gamma  \int \left[\nabla_z \log \pi_{\theta_{t+1}}(z) k(z,\cdot)  +  \nabla_{z}k(z,\cdot)\right] \mu_t(z)\mathrm{d}z\right)_{\#}\mu_{t} .\label{eq:svgd_population_discrete_time2_v0}
\end{align}
The updates in \eqref{eq:svgd_population_discrete_time1_v0} - \eqref{eq:svgd_population_discrete_time2_v0} represent the population limit of SVGD EM (Alg. \ref{alg:svgdEM}). In Theorem \ref{prop:descent_lemma}, we established a descent lemma, guaranteeing that, given a suitable choice of the learning rate $\gamma$, the free energy decreases at each iteration of the SVGD EM algorithm. As a corollary to this result, we now obtain  a discrete time convergence rate for the average of $||\nabla_{\theta}\mathcal{F}(\theta_t,\mu_t)||_{\mathbb{R}^{d_{\theta}}}^2$ and $\smash{||\mathcal{S}_{\mu_t}\nabla_{W_2}\mathcal{F}(\theta_{t+1},\mu_t)||^2_{\mathcal{H}_k^{d_z}}}$. In particular, we have the following result.

\begin{corollary}
\label{corollary:average_convergence}
Let $\alpha>1$, and $\gamma\leq \min\left(\gamma^{*}, \tfrac{2}{M}, \tfrac{2}{(M+\alpha^2)B^2}\right)$. 
Then, defining $c_{\gamma} = \gamma\left(1 - \tfrac{[M + (M+\alpha^2)B^2]\gamma}{2}\right)$, the discrete-time, population-limit, SVGD EM updates in \eqref{eq:svgd_population_discrete_time1} - \eqref{eq:svgd_population_discrete_time2} satisfy 
\begin{align}
    &\min_{t=1,\dots,T} \left(\left|\left|\nabla_{\theta}\mathcal{F}(\theta_t,\mu_t)\right|\right|_{\mathbb{R}^{d_{\theta}}}^2+\left|\left|\mathcal{S}_{\mu_t}\nabla_{W_2}\mathcal{F}(\theta_{t+1},\mu_t)\right|\right|^2_{\mathcal{H}_k^{d_z}}\right) \\
    &\leq \frac{1}{T}\sum_{t=1}^T \left(\left|\left|\nabla_{\theta}\mathcal{F}(\theta_t,\mu_t)\right|\right|_{\mathbb{R}^{d_{\theta}}}^2+\left|\left|\mathcal{S}_{\mu_t}\nabla_{W_2}\mathcal{F}(\theta_{t+1},\mu_t)\right|\right|^2_{\mathcal{H}_k^{d_z}}\right) \leq \frac{\mathcal{F}(\theta_0,\mu_0) - \min_{(\theta,\mu) \in \theta\times\mathcal{P}_2(\mathcal{Z})} \mathcal{F}(\theta,\mu)}{c_{\gamma}T}.
\end{align}
\end{corollary}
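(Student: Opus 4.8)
The plan is to obtain the corollary purely by summing the per-iteration descent inequality of Theorem~\ref{prop:descent_lemma} and then invoking the fact that $\mathcal{F}$ is bounded below. Throughout, abbreviate the quantity of interest by $g_t := \|\nabla_{\theta}\mathcal{F}(\theta_t,\mu_t)\|_{\mathbb{R}^{d_{\theta}}}^2 + \|\mathcal{S}_{\mu_t}\nabla_{W_2}\mathcal{F}(\theta_{t+1},\mu_t)\|_{\mathcal{H}_k^{d_z}}^2 \ge 0$, so that the statement asserts $\min_{t=1,\dots,T} g_t \le \tfrac{1}{T}\sum_{t=1}^T g_t \le \tfrac{\mathcal{F}(\theta_0,\mu_0) - \min_{(\theta,\mu)}\mathcal{F}(\theta,\mu)}{c_\gamma T}$.

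The first step is to replace the two constants $A_1,A_2$ of Theorem~\ref{prop:descent_lemma} by the single constant $c_\gamma$. Using the explicit expressions for $A_1$ and $A_2$ derived in Appendix~\ref{app:proof_descent_lemma}, together with the hypothesis $\gamma \le \min(\gamma^{*}, \tfrac{2}{M}, \tfrac{2}{(M+\alpha^2)B^2})$, one checks that $\gamma A_1 \ge c_\gamma$, $\gamma A_2 \ge c_\gamma$, and $c_\gamma > 0$: the bounds $\gamma \le \tfrac{2}{M}$ and $\gamma \le \tfrac{2}{(M+\alpha^2)B^2}$ are precisely what keep $A_1$ and $A_2$ non-negative, and $c_\gamma/\gamma$ is obtained from each $A_i$ by subtracting a further non-negative quantity. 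Hence Theorem~\ref{prop:descent_lemma} yields the cleaner bound $\mathcal{F}(\theta_{t+1},\mu_{t+1}) - \mathcal{F}(\theta_t,\mu_t) \le -c_\gamma\, g_t$ for every $t \ge 0$.

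The second step is to telescope and bound below. Summing the last inequality over $t = 0,1,\dots,T$ collapses the left-hand side to $\mathcal{F}(\theta_{T+1},\mu_{T+1}) - \mathcal{F}(\theta_0,\mu_0)$, giving $c_\gamma\sum_{t=0}^{T} g_t \le \mathcal{F}(\theta_0,\mu_0) - \mathcal{F}(\theta_{T+1},\mu_{T+1})$. Since each $g_t \ge 0$, we have $\sum_{t=1}^{T} g_t \le \sum_{t=0}^{T} g_t$; and since $\mathcal{F}(\theta_{T+1},\mu_{T+1}) \ge \min_{(\theta,\mu)\in\Theta\times\mathcal{P}_2(\mathcal{Z})}\mathcal{F}(\theta,\mu)$ — this minimum being finite because $\mathcal{F}(\theta,\mu) = \mathrm{KL}(\mu\,\|\,p_\theta(\cdot|x)) - \log p_\theta(x)$ and the marginal likelihood is bounded above (cf. \eqref{eq:joint_minimization}) — we obtain $\sum_{t=1}^{T} g_t \le c_\gamma^{-1}\big[\mathcal{F}(\theta_0,\mu_0) - \min_{(\theta,\mu)}\mathcal{F}(\theta,\mu)\big]$. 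Dividing by $T$ gives the middle inequality of the statement, and the leftmost inequality is immediate since the minimum of finitely many non-negative numbers is at most their average.

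The argument is entirely routine once Theorem~\ref{prop:descent_lemma} is available; there is no substantive obstacle. The only two points requiring a little care — both resolved by the preceding development — are (i) the constant bookkeeping of the first step, namely verifying $\gamma A_i \ge c_\gamma > 0$ under the stated step-size restriction, and (ii) confirming that $\mathcal{F}$ is bounded below over $\Theta\times\mathcal{P}_2(\mathcal{Z})$ so that the telescoped bound is finite, which follows from the decomposition of $\mathcal{F}$ into a Kullback--Leibler term plus $-\log p_\theta(x)$.
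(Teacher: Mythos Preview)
Your proposal is correct and follows essentially the same approach as the paper: apply the descent inequality of Theorem~\ref{prop:descent_lemma}, replace $A_1,A_2$ by the common lower bound $c_\gamma/\gamma$, telescope, and bound $\mathcal{F}(\theta_{T+1},\mu_{T+1})$ from below by the global minimum of $\mathcal{F}$. Your write-up is in fact more careful than the paper's (which is very terse), in that you explicitly verify $\gamma A_i \ge c_\gamma$, handle the index shift from $t=0,\dots,T$ to $t=1,\dots,T$, and justify that $\mathcal{F}$ is bounded below via the decomposition $\mathcal{F}(\theta,\mu)=\mathrm{KL}(\mu\|p_\theta(\cdot|x))-\log p_\theta(x)$.
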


\begin{proof}
    See App. \ref{app:proof_average_convergence}.
\end{proof}

\subsection{Coin EM}
\label{sec:coin-results}
In this section, we outline how to establish convergence of the population-limit of an `ideal' version of the Coin EM algorithm (Alg. \ref{alg:coinEM-ideal}). In particular, for this result, we will consider a version of the Coin EM algorithm in which the outcomes observed by the two gamblers are $\smash{c_t^{(1)} = -\nabla_{\theta}\mathcal{F}(\theta_t,\mu_t)}$ and $\smash{c_t^{(2)} = -\nabla_{W_2}\mathcal{F}(\theta_t,\mu_t)}$, rather than $\smash{c_t^{(1)} = -\nabla_{\theta}\mathcal{F}(\theta_t,\mu_t)}$ and $\smash{c_t^{(2)} = -\mathcal{P}_{\mu_t}\nabla_{W_2}\mathcal{F}(\theta_t,\mu_t)}$ as in the original Coin EM algorithm (Alg. \ref{alg:coinEM}). 

\begin{algorithm*}[!htbp] %
	\caption{Coin EM (Ideal Algorithm, Population Limit)}
	\label{alg:coinEM-ideal}
	\begin{algorithmic}[1]
		\STATE {\bf Input:} number of iterations $T$, number of particles $N$, initial particle $z_0 \sim \mu_0$, initial $\theta_0$, target $\pi$, kernel $k$.
		\FOR{$t =1,\dots,T$}
            \STATE{
        \begin{align}
        \theta_{t} &= \theta_0 - \frac{{\sum_{s=1}^{t-1} \nabla_{\theta}\mathcal{F}(\theta_s,\mu_s)}}{t}  \big( 1 -  \sum_{s=1}^{t-1} \langle \nabla_{\theta}\mathcal{F}(\theta_s,\mu_s), \theta_s  - \theta_0 \rangle \big), \label{eq:coin-svgd1-v2} \\[2mm]
       z_{t} &= z_0 -\frac{{\sum_{s=1}^{t-1} \mathcal{P}_{\mu_s}\nabla_{W_2}\mathcal{F}(\theta_s,\mu_s)(z_s)}}{t} \big( 1 -  \sum_{s=1}^{t-1} \langle \mathcal{P}_{\mu_s}\nabla_{W_2}\mathcal{F}(\theta_s,\mu_s)(z_s), z_s - z_0\rangle \big), \label{eq:coin-svgd2-v2}
        \end{align}
        }
		\ENDFOR
		\STATE {\bf return} $\theta_{T}$ and $z_T$. 
	\end{algorithmic}
\end{algorithm*}
In order to establish the convergence of this algorithm, we will require an additional technical assumption on the transport maps defined by \eqref{eq:coin-svgd2-v2} in Alg. \ref{alg:coinEM-ideal}. We emphasize that it remains a challenging open question how to verify this assumption in practice; see also the discussion in \cite{Sharrock2023}. We leave this question to future work.
\begin{assumption}
    \label{coin-em-assumption-3}
There exists a transport map $T_{\mu_0}^{p_{\theta^{*}}(\cdot|x)}:\mathbb{R}^{d_z}\rightarrow\mathbb{R}^{d_z}$ from $\mu_0$ to $p_{\theta^{*}}(\cdot|x)$, where $\theta^{*} = \argmax_{\theta\in\Theta}p_{\theta}(x)$, and a constant $K>0$, such that
\begin{align}
\sum_{t=1}^T \left|\left|t_{\mu_t}^{p_{\theta^{*}}(\cdot|x)}\circ z_t - T_{\mu_0}^{p_{\theta^{*}}(\cdot|x)}\right|\right|_{L^2(\mu_0)} \leq K\sqrt{T}\ln\left[\mathrm{poly}(T)\right], \label{K1}
\end{align}
where, for each $t\in[T]$, $t_{\mu_t}^{p_{\theta^{*}}(\cdot|x)}:\mathbb{R}^d\rightarrow\mathbb{R}^d$ denotes the optimal transport map from $\mu_t$ to $p_{\theta^{*}}(\cdot|x)$ \cite[e.g.,][]{Brenier1991}, and $z_t:\mathbb{R}^d\rightarrow\mathbb{R}^d$ denotes the transport map from $\mu_0$ to $\mu_t$ as defined by \eqref{eq:coin-svgd2-v2} in Alg. \ref{alg:coinEM-ideal}.
\end{assumption}
This assumption represents an extension of Assumption B.1' in \cite{Sharrock2023}.  It can, in some sense, be interpreted as a bound on the sum of the distances between some fixed transport map from $\smash{\mu_0}$ to $\smash{p_{\theta^{*}}(\cdot|x)}$ - e.g., the optimal transport map $\smash{t_{\mu_0}^{p_{\theta^{*}}(\cdot|x)}}$ - and the maps $\smash{(t_{\mu_t}^{\pi}\circ z_t)_{t\in[T]}}$ from $\smash{\mu_0}$ to $\smash{p_{\theta^{*}}(\cdot|x)}$, which first transport $\mu_0$ to $(\mu_t)_{t\in[T]}$ according to the transport maps $(z_t)_{t\in[T]}$ defined by Alg. \ref{alg:coinEM-ideal}, and then map $(\mu_t)_{t\in[T]}$ to $\pi$ via the optimal transport maps $\smash{(t_{\mu_t}^{p_{\theta^{*}}(\cdot|x)})_{t\in[T]}}$. 

We are now ready to provide a full statement of Theorem \ref{prop:coin-em}.
\begin{customthm}{2}
\label{prop:coin-em-v2}
    Assume that Assumptions \ref{coin-em-assumption-1}, \ref{coin-em-assumption-2}, and \ref{coin-em-assumption-3} hold. Then the marginal likelihood $\theta\mapsto p_{\theta}(x)$ admits a unique maximizer $\theta^{*}=\argmax_{\theta\in\Theta}p_{\theta}(x)$. In addition, writing $\mu^{*} = p_{\theta^{*}}(\cdot|x)$, it holds for all $T\geq 1$ that
    \begin{align}
        \mathcal{F}\left(\frac{1}{T}\sum_{t=1}^T\theta_t,\frac{1}{T}\sum_{t=1}^T \mu_t\right) - \mathcal{F}(\theta^{*},\mu^{*}) \leq \frac{L}{T} &\bigg[ 2 + K\sqrt{T}\ln\left[\mathrm{poly}(T)\right] + ||\theta^{*}-\theta_0|| \sqrt{T\ln \left(1+24T^2 ||\theta^{*} - \theta_0||^2\right)} \\
        &~~~+ \int_{\mathbb{R}^d} \big|\big|T_{\mu_0}^{p_{\theta^{*}}(\cdot|x)}(z) - z\big|\big| \sqrt{T \ln \big(1+ 24T^2\big|\big|T_{\mu_0}^{p_{\theta^{*}}(\cdot|x)}(z) - z\big|\big|^2\big)} \mu_0(z)\mathrm{d}z \bigg]. \nonumber
    \end{align}
\end{customthm}

\section{PROOFS OF THEORETICAL RESULTS}
\label{app:proofs}

\subsection{Proof of Propositions \ref{prop:dissipation}, \ref{prop:grad_convergence}, and \ref{prop:exponential_convergence}.}
\label{app:proof_exponential_convergence}

\begin{proof}[Proof of Proposition \ref{prop:dissipation}]
Using differential calculus in the product space $\Theta\times\mathcal{P}_2(\mathcal{Z})$, and the chain rule, we have that 
\begin{align}
    \frac{\mathrm{d}\mathcal{F}(\theta_t,\mu_t)}{\mathrm{d}t} &= \langle -\nabla_{\theta}\mathcal{F}(\theta_t,\mu_t), \nabla_{\theta}\mathcal{F}(\theta_t,\mu_t)\rangle_{\mathbb{R}^{d_{\theta}}} + \langle P_{\mu_t} \nabla_{W_2}\mathcal{F}(\theta_t,\mu_t), \nabla_{W_2}\mathcal{F} (\theta_t,\mu_t)\rangle_{L^2(\mu_t)} \nonumber \\
    &= -\big|\big|\nabla_{\theta}\mathcal{F}(\theta_t,\mu_t)\big|\big|_{\mathbb{R}^{d_\theta}}^2 - \big|\big|S_{\mu_t}\nabla_{W_2}\mathcal{F}(\theta_t,\mu_t)\big|\big|_{\mathcal{H}_k^{d_z}}^2 \label{eq:dissipation_proof_1}
\end{align}
where in the second line we have used the fact that, given $\mu\in\mathcal{P}_2(\mathcal{Z})$, and functions $\smash{f,g\in L^2(\mu),\mathcal{H}_k^{d_z}}$, it holds that $\smash{\langle f,\iota g\rangle_{L^2(\mu)} = \langle \iota^{*}f, g\rangle_{\mathcal{H}_k^{d_z}} = \langle S_{\mu} f, g\rangle_{\mathcal{H}_k^{d_z}}}$, since the adjoint of the inclusion $\iota:\mathcal{H}\rightarrow L^2(\mu)$ is $\iota^{*}=S_{\mu}$. To obtain the first term on the RHS of \eqref{eq:dissipation}, we can now just substitute the expression for $\nabla_{\theta}\mathcal{F}(\theta_t,\mu_t)$ from \eqref{eq:svgd-em-continuous-1} into the first term on the RHS of \eqref{eq:dissipation_proof_1}. For the remaining term, recalling the definition of $\pi_{\theta}(z)$, namely, $\pi_{\theta}(z) := p_{\theta}(z,x) = p_{\theta}(z|x)p_{\theta}(x)$, we have
    \begin{equation}
        \nabla_{z}\log\left(\frac{\mu_t}{\pi_{\theta_t}}\right) = \nabla_{z}\log\left(\frac{\mu_t}{p_{\theta_t}(.|x)p_{\theta_t}(x)}\right) = \nabla_{z} \log \left(\frac{\mu_t}{p_{\theta_t}(\cdot|x)}\right)
    \end{equation}
    where the final equality follows from the fact that $p_{\theta_t}(x)$ is independent of $z$. Substituting this into \eqref{eq:dissipation_proof_1} completes the proof.
\end{proof}

\begin{proof}[Proof of Proposition \ref{prop:grad_convergence}] 
We use similar arguments to those in the proofs of \cite[][Theorem 3]{Kuntz2023} and \cite[][Proposition 8]{Korba2020}, adapted appropriately to our setting. For notational convenience, let us define
\begin{equation}
I(\theta_t,\mu_t) := \left|\left|\nabla_{\theta}\mathcal{F}(\theta_t,\mu_t)\right|\right|_{\mathbb{R}^{d_{\theta}}}^2+\left|\left|\mathcal{S}_{\mu_t}\nabla_{W_2}\mathcal{F}(\theta_{t+1},\mu_t)\right|\right|^2_{\mathcal{H}_k^{d_z}}.
\end{equation}
We start by establishing that, under the stated assumptions, there exists a positive constant $\beta>0$ which guarantees that 
\begin{equation}
    \left|\frac{\mathrm{d}I(\theta_t,\mu_t)}{\mathrm{d}t}\right|\leq \beta I(\theta_t,\mu_t). \label{eq:I_dissipation}
\end{equation}
We will then show that this implies convergence of $I(\theta_t,\mu_t)$ to zero. To prove \eqref{eq:I_dissipation}, we first compute
\begin{equation}
    \frac{\mathrm{d}I(\theta_t,\mu_t)}{\mathrm{d}t} = \underbrace{\frac{\mathrm{d}}{\mathrm{d}t}\left|\left|\nabla_{\theta}\mathcal{F}(\theta_t,\mu_t)\right|\right|_{\mathbb{R}^{d_{\theta}}}^2}_{I_t^{(1)}(\theta_t,\mu_t)}+\underbrace{\frac{\mathrm{d}}{\mathrm{d}t}\left|\left|\mathcal{S}_{\mu_t}\nabla_{W_2}\mathcal{F}(\theta_{t},\mu_t)\right|\right|^2_{\mathcal{H}_k^{d_z}}}_{I_t^{(2)}(\theta_t,\mu_t)} \label{eq:i_first_step}
\end{equation}
We begin with $\smash{I_t^{(1)}(\theta_t,\mu_t)}$. Let us define $v_t = \nabla_{\theta}\mathcal{F}(\theta_t,\mu_t) = -\int \nabla_{\theta}\log \pi_{\theta_t}(z)\mu_t(z)\mathrm{d}z$ and $w_t = S_{\mu_t}\nabla_{W_2}\mathcal{F}(\theta_t,\mu_t) = S_{\mu_t}\nabla \log (\frac{\mu_t}{\pi_{\theta_t}})$.
We then have 
\begin{align}
    I_t^{(1)}(\theta_t,\mu_t) &= \frac{\mathrm{d}}{\mathrm{d}t}\left|\left|v_t\right|\right|_{\mathbb{R}^{d_{\theta}}}^2=2 \big\langle v_t, \frac{\mathrm{d}}{\mathrm{d}t}v_t\big\rangle_{\mathbb{R}^{d_{\theta}}}.
\end{align}
We now need to obtain $\frac{\mathrm{d}}{\mathrm{d}t}v_t$. Using the chain rule, and then integration by parts, we have
\begin{align}
 \frac{\mathrm{d}}{\mathrm{d}t}v_t&= -\int \frac{\mathrm{d}}{\mathrm{d}t}\left[\nabla_{\theta} \log \pi_{\theta_t}(z)\mu_t(z)\right]\mathrm{d}z \\
 &=-\int \frac{\mathrm{d}}{\mathrm{d}t}\left[\nabla_{\theta} \log \pi_{\theta_t}(z)\right] \mu_t(z)\mathrm{d}z - \int \nabla_{\theta} \log \pi_{\theta_t}(z) \frac{\partial\mu_t}{\partial t}(z)\mathrm{d}z \\
 &=-\int \nabla^2_{\theta} \log \pi_{\theta_t}(z) \mu_t(z) \frac{\partial\theta_t}{\partial t} \mathrm{d}z  - \int \nabla_{\theta} \log \pi_{\theta_t}(z) \nabla_{z} \cdot (w_t(z)\mu_t(z))\mathrm{d}z  \\
&= \int \nabla^2_{\theta} \log \pi_{\theta_t}(z) ~v_t ~\mu_t(z)\mathrm{d}z  + \int  \nabla_{\theta}\nabla_{z} \log \pi_{\theta_t}(z) ~w_t(z)\mu_t(z)\mathrm{d}z.
\end{align}
It follows, in particular, that 
\begin{align}
    I_t^{(1)}(\theta_t,\mu_t) = &-2 \sum_{i=1}^{d_{\theta}}\sum_{j=1}^{d_{\theta}}  \int v_t^{i} \left[\nabla^2_{\theta} \log \pi_{\theta_t}(z)\right]_{ij}~v_t^{j}~ \mu_t(z)\mathrm{d}z + 2 \sum_{i=1}^{d_{\theta}} \sum_{j=1}^{d_{z}} \int v_t^{i} \left[\nabla_{\theta}\nabla_{z} \log \pi_{\theta_t}(z)\right]_{ij} w_t^{j}(z)\mu_t(z)\mathrm{d}z. \label{eq:i1}
\end{align} 
For the second term, we will work component-wise. First, using the notation $w_t = (w_t^{1},\dots,w_t^{d_z})$, we have $||w_t||_{\mathcal{H}_k}^2= \sum_{j=1}^{d_z}||w_t^{j}||_{\mathcal{H}_k}^2$. Thus, in particular, $I_t^{(2)}(\theta_t,\mu_t) = \frac{\mathrm{d}}{\mathrm{d}t}\sum_{j=1}^{d_z}||w_t^{j}||_{\mathcal{H}_k}^2 = 2\sum_{j=1}^{d_z} \langle w_t^{j}, \frac{\mathrm{d}}{\mathrm{d}t}w_t^{j}\rangle_{\mathcal{H}_k}$. It remains to compute the components $\frac{\mathrm{d}}{\mathrm{d}t}w_t^{j}$, for $j\in[d_z]$. Starting from the definition, using integration by parts, the chain rule, and then integration by parts again, we have
\begin{align}
    \frac{\mathrm{d}}{\mathrm{d}t}w_t^{j}(z) &= \frac{\mathrm{d}}{\mathrm{d}t} \int  k(z',z) \partial_{z'_i} \log \left(\frac{\mu_t}{\pi_{\theta_t}}\right)(z') \mu_t(z')\mathrm{d}z'\\
    &= - \int  \frac{\mathrm{d}}{\mathrm{d}t} \left[ \left[ 
    \partial_{z'_j} \log \pi_{\theta_t}(z')k(z',z)  + \partial_{z'_{j}} k(z',z)\right]\mu_t(z')\right]\mathrm{d}z' \\
    &= -  \int  \big\langle \nabla_{\theta} \partial_{z'_j}  \log \pi_{\theta_t}(z') k(z',z) ,\frac{\partial\theta_t}{\partial t} \big\rangle_{\mathbb{R}^{d_{\theta}}} \mu_t(z')\mathrm{d}z' \nonumber \\
    &~~~~~~- \int \left[ \partial_{z'_j} \log \pi_{\theta_t}(z') k(z',z) + \partial_{z'_{j}} k(z',z) \right]\frac{\partial}{\partial t}\mu_t(z')\mathrm{d}z' 
    \\
    & =  \int \big\langle \nabla_{\theta} \partial_{z'_j} \log \pi_{\theta_t}(z') k(z',z),v_t \big\rangle_{\mathbb{R}^{d_{\theta}}} \mu_t(z')\mathrm{d}z' \nonumber \\
    &~~~~~~+ \int \big\langle \nabla_{z'}\left[ \partial_{z'_j}  \log \pi_{\theta_t}(z') k(z',z) + \partial_{z'_{j}} k(z',z) \right], w_t(z')\big\rangle_{\mathbb{R}^{d_z}}\mu_t(z')\mathrm{d}z' \\
    & =  \int \big\langle \nabla_{\theta} \partial_{z'_j} \log \pi_{\theta_t}(z') k(z',z),v_t \big\rangle_{\mathbb{R}^{d_{\theta}}} \mu_t(z')\mathrm{d}z' \nonumber \\
    &~~~~~~+ \int \sum_{i=1}^{d_z} \left[  \partial_{z'_i}\partial_{z'_j} \log \pi_{\theta_t}(z') k(z',z) +  \partial_{z'_j}\log \pi_{\theta_t}(z')  \partial_{z'_i} k(z',z) \right. \nonumber \\
    &\hspace{50mm}\left. + \partial_{z'_i}\partial_{z'_{j}} k(z',z) \right]  w_t^{j}(z')\mu_t(z')\mathrm{d}z'. 
\end{align}
Now, using the fact that each component of $w_t^{j}$ is an elements of the RKHS $\mathcal{H}_k$, and thus satisfies the reproducing property $w_t^{j}(z) = \langle w_t^{j}, k(z,\cdot)\rangle_{\mathcal{H}_k}$, we can rewrite the previous display as
\begin{align}
    I_t^{(2)}(\theta_t,\mu_t) & = 2\sum_{i=1}^{d_{\theta}} 
    \sum_{j=1}^{d_{z}} \int v_t^{i} \left[\nabla_{\theta}\nabla_{z} \log \pi_{\theta_t}(z)\right]_{ij} w_t^{j}(z) \mu_t(z)\mathrm{d}z \nonumber \\
    &+ 2\sum_{i=1}^d \sum_{j=1}^{d_z} \int \left[  \partial_{z_i}\partial_{z_j}\log \pi_{\theta_t}(z) w_t^{j}(z) +  \partial_{z_i} \log \pi_{\theta_t}(z) \partial_{z_j} w_t^{j}(z)  +  \partial_{z_j}\partial_{z_{i}} w_t^{j}(z) \right] w_t^{j}(z)\mu_t(z)\mathrm{d}z. \label{eq:i2}
\end{align}
Substituting \eqref{eq:i1} and \eqref{eq:i2} into \eqref{eq:i_first_step}, before once more making use of the reproducing property, we thus have 
\begin{align}
    \frac{\mathrm{d}I(\theta_t,\mu_t)}{\mathrm{d}t} &=2 \sum_{i=1}^{d_{\theta}}\sum_{j=1}^{d_{\theta}} \int v_t^{i} \left[-\nabla^2_{\theta} \log \pi_{\theta_t}(z)\right]_{ij}~v_t^{j}~ \mu_t(z)\mathrm{d}z  \nonumber \\
    &+ 2 \sum_{i=1}^{d_{\theta}} \sum_{j=1}^{d_{z}} \int v_t^{i} \left[\nabla_{\theta}\nabla_{z} \log \pi_{\theta_t}(z)\right]_{ij} w_t^{j}(z)\mu_t(z)\mathrm{d}z \nonumber \\
    &+ 2\sum_{i=1}^{d_z} \sum_{j=1}^{d_z} \int \left[  \partial_{z_i}\partial_{z_j}\log \pi_{\theta_t}(z) w_t^{j}(z) +  \partial_{z_i} \log \pi_{\theta_t}(z) \partial_{z_j} w_t^{j}(z)  + \partial_{z_j}\partial_{z_{i}} w_t^{j}(z) \right] w_t^{j}(z)\mu_t(z)\mathrm{d}z \\
    &= \sum_{i=1}^{d_{\theta}}\sum_{j=1}^{d_{\theta}} v_t^{i} A_t^{ij} v_t^{j} + \sum_{i=1}^{d_{\theta}} \sum_{j=1}^{d_{z}} v_t^{i} \langle B_t^{ij}, w_t^{j}  \rangle_{\mathcal{H}_k} + \sum_{i=1}^{d_{z}} \sum_{j=1}^{d_z} \langle w_t^{i}, C_t^{ij} w_t^{j}\rangle_{\mathcal{H}_k},
\end{align}
where in the final line we have defined 
\begin{align}
    A_{t}^{ij} &= 2 \int [-\partial_{\theta_i}\partial_{\theta_j} \log \pi_{\theta_t}(z)]\mu_t(\mathrm{d}z) \\
    B_{t}^{ij} &= 2 \int k(z,\cdot) \partial_{\theta_i}\partial_{z_{i}} \log \pi_{\theta_t}(z)\mu_t(z)\mathrm{d}z \\
    C_t^{ij} &= 2\int k(z,\cdot) \otimes k(z',\cdot)\partial_{z_{i}}\partial_{z_j} \log \pi(z) \mu_t(z)\mu_t(z')\mathrm{d}z\mathrm{d}z' \nonumber \\
    &~~~~+2\int \partial_{z_i} k(z,\cdot) \otimes k(z',\cdot) \partial_{z_i} \log \pi_{\theta_t}(z) \mu_t(z)\mathrm{d}z\mu_t(z')\mathrm{d}z'\nonumber  \\
    &~~~~+ 2\int \partial_{z_i} k(z,\cdot) \otimes \partial_{z_j} k(z',\cdot) \mu_t(z)\mu_t(z')\mathrm{d}z\mathrm{d}z'.
\end{align}
We now need to bound each of these terms in an appropriate sense. In particular, if we can show that $||A_t^{ij}||_{\mathrm{op}}\leq D_1$, $||B_t^{ij}||_{\mathcal{H}_k}\leq D_2$, and $||C_t^{ij}||_{\mathrm{HS}}\leq D_3$, for all $t\geq 0$, where $||\cdot||_{\mathrm{op}}$ denotes the operator norm, and $||\cdot||_{\mathrm{HS}}$ denotes the Hilbert-Schmidt norm, then it follows immediately that 
\begin{align}
    \left|\frac{\mathrm{d}I(\theta_t,\mu_t)}{\mathrm{d}t}\right| &\leq D_1 \sum_{i=1}^{d_{\theta}} ||v_t^{i}||^2 +  D_2 d_{\theta}^{\frac{1}{2}}d_{z}^{\frac{1}{2}} \big(\sum_{i=1}^{d_{\theta}}||v_t^{i}||^2 \big)^{\frac{1}{2}}
    \big(\sum_{i=1}^{d_{z}}||w_t^{j}||^2_{\mathcal{H}_k}\big)^{\frac{1}{2}} + D_3 d_z\sum_{i=1}^{d_z}||w_t^{i}||_{\mathcal{H}_k}^2 \nonumber \\
    &\leq D \left( ||v_t||^2_{\mathbb{R}^{d_{\theta}}} + 2||v_t||_{\mathbb{R}^{d_{\theta}}} ||w_t||_{\mathcal{H}_k^{d_z}} + ||w_t||_{\mathcal{H}_k^{d_z}}^2 \right) \leq 2D \left(||v_t||^2_{\mathbb{R}^{d_{\theta}}} + ||w_t||_{\mathcal{H}_k^{d_z}}^2 \right):= \beta I(\theta_t,\mu_t),\label{eq:54}
\end{align}
where in the first line we have used the Cauchy-Schwarz inequality, in the second line we have defined the constant $\smash{D=\max(D_1,\frac{1}{2}D_2d_{\theta}^{\frac{1}{2}}d_{z}^{\frac{1}{2}}, D_3d_z)}$, and in the final line we have used $a^2 + 2ab + b^2 = (a+b)^2 \leq 2(a^2 + b^2)$, and set $\beta=2D$. It remains to obtain the bounds on $A_t^{ij}$, $B_t^{ij}$, and $C_t^{ij}$.

For the first term, we just require the boundedness of the Hessian $H_{V_{z}}:\mathbb{R}^{d_{\theta}}\rightarrow\mathbb{R}^{d_{\theta}}$ of the function $\smash{V_z:\theta\mapsto -\log\pi_{\theta}(z)}$, for each $z\in\mathcal{Z}$, which is guaranteed by Assumption \ref{assumption:bounded_hessian0}. In particular, by Assumption \ref{assumption:bounded_hessian0}, we certainly have that 
$\smash{||H_{V_z}||_{\mathrm{op}}\leq M}$, 
from which it immediately follows that 
$||A_{t}^{ij}||_{\mathrm{op}} \leq 2 M$.
For the second term, once again using Assumption \ref{assumption:bounded_hessian0}, 
as well as the bound of the kernel (Assumption \ref{assumption:bounded_k}), we have 
\begin{equation}
    ||B_t^{ij}||_{\mathcal{H}_k} \leq 2 \int ||k(z,\cdot)||_{\mathcal{H}_k} |\partial_{\theta_i}\partial{z_i}\log \pi_{\theta_t}(z)|\mu_t(z)\mathrm{d}z\leq BM. \label{eq:B_bound}
\end{equation}
For the final term, we argue similarly to in \cite[][Proposition 8]{Korba2020}. In particular, using the boundedness of the Hessian $H_{V_{\theta}}:\mathbb{R}^{d_z}\rightarrow\mathbb{R}^{d_z}$ of the function $\smash{V_{\theta}: z\mapsto-\log \pi_{\theta}(z)}$ for each $\theta\in\Theta$, which is an immediate consequence of Assumption \ref{assumption:bounded_hessian0}, and the boundedness of the kernel and its gradient (Assumption \ref{assumption:bounded_k}), we have 
\begin{align}
    ||C_{t}^{ij}||_{\mathrm{HS}} &\leq 2 \int ||k(z,\cdot)||_{\mathcal{H}_k} |\partial_{z_i}\partial_{z_j}\log \pi_{\theta_t}(z)| \mu_t(z)\mathrm{d}z ||k(z',\cdot)||_{\mathcal{H}_k}\mu_t(z')\mathrm{d}z' \nonumber \\
    &~~~~+ 2 \int ||\partial_{z_i}k(z,\cdot)||_{\mathcal{H}_k}|\partial_{z_i} \log \pi_{\theta_t}(z)|\mu_t(z)\mathrm{d}z \int ||k(z',\cdot)||_{\mathcal{H}_k} \mu_t(z')\mathrm{d}z' \nonumber \\
    &~~~~+2  \left(\int ||k(z,\cdot)||_{\mathcal{H}_k}\mu_t(z)\mathrm{d}z\right)^2 \leq 2B^2 \left(M + \int\left|\partial_{z_i}\log \pi_{\theta_t}(z)\right|\mu_t(z)\mathrm{d}z + 1\right). \label{eq:C_bound}
\end{align}
To bound the remaining integral, we use the fact that $|\log \pi_{\theta}(z)|\leq |\partial_{z_i}\log \pi_{\theta}(0)| + M||z||$, which follows from Assumption \ref{assumption:bounded_hessian0}. 
It follows that
\begin{equation}
\int |\partial_{z_i}\log \pi_{\theta_t}(z)|\mu_t(\mathrm{d}z) \leq \int \left[|\partial_{z_i}\log \pi_{\theta}(0)|+M||z|| \right] \mu_t(z)\mathrm{d}z \leq |\partial_{z_i}\log \pi_{\theta}(0)| + MC. \label{eq:grad_z_bound}
\end{equation}
Substituting \eqref{eq:grad_z_bound} into \eqref{eq:C_bound}, we thus have that $||C_t^{ij}||_{\mathrm{HS}} \leq 2B^2(1+ M + CM + |\partial_{z_i}\log \pi_{\theta}(0)|)$. Thus, setting $D_1 = 2M$, $D_2 = BM$, $D_3 =2B^2(1+ M + CM + |\partial_{z_i}\log \pi_{\theta}(0)|)$, 
and using the argument in \eqref{eq:54}, we have established the result in \eqref{eq:I_dissipation}.

It remains to establish that $I(\theta_t,\mu_t)\rightarrow 0$. Once more, we will utilize some of the ideas from the proofs of \cite[][Theorem 3]{Kuntz2023} and \cite[][Proposition 8]{Korba2020}. We begin by observing that the free energy can be written as  
\begin{align}
    \mathcal{F}(\theta,\mu) &= \int \nabla_{z}\log\left(\frac{\mu}{\pi_{\theta}}\right) \mu(z)\mathrm{d}z= \int  \log\left(\frac{\mu}{p_{\theta}(.|x)p_{\theta}(x)}\right)\mu(z)\mathrm{d}z = \underbrace{\int \log \left(\frac{\mu_t}{p_{\theta_t}(\cdot|x)}\right)\mu(z)\mathrm{d}z}_{\mathrm{KL}(\mu_t|p_{\theta_t}(\cdot|x))} - \log p_{\theta}(x). \label{eq:free_energy_KL}
\end{align}
Rearranging \eqref{eq:free_energy_KL}, using the non-negativity of the KL divergence, and finally the fact that the free energy dissipates along the flow of the SVGD EM dynamics (Proposition \ref{prop:dissipation}), it then follows that
\begin{equation}
    \log p_{\theta_t}(x) = - \mathcal{F}(\theta_t,\mu_t) + \mathrm{KL}(\mu_t|p_{\theta_t}(\cdot|x)) \geq \mathcal{F}(\theta_t,\mu_t) \geq \mathcal{F}(\theta_0,\mu_0). \label{eq:sub_level}
\end{equation}
Under Assumption \ref{assumption:bounded_sets}, the set $\Theta_{\mathcal{F}_0} = \{\theta\in\Theta:\log p_{\theta}(x)\geq \mathcal{F}(\theta_0,\mu_0)\}$ is compact. This, along with \eqref{eq:sub_level}, immediately implies that $(\theta_t)_{t\geq 0}$ is relatively compact. We now return to \eqref{eq:free_energy_KL}. In particular, once more rearranging this equation, we have that, for each $t\geq 0$, 
\begin{align}
    \inf_{\theta\in\Theta_{\mathcal{F}_0}} \mathrm{KL}(\mu_t|p_{\theta}(\cdot|x)) &\leq \mathrm{KL}(\mu_t|p_{\theta_t}(\cdot|x)) = \mathcal{F}(\theta_t,\mu_t) + \log p_{\theta_t}(x) \\[-2mm]
    &\leq \mathcal{F}(\theta_0,\mu_0) + \log p_{\theta_t}(x) \leq \mathcal{F}(\theta_0,\mu_0) + \sup_{\theta\in\Theta_{\mathcal{F}_0}} \log p_{\theta}(x).
\end{align}
It thus follows, using also the fact that the KL divergence has compact sub-level sets in the weak topology \cite[][Theorem 20]{Erven2014}, that the family $(\mu_t)_{t\geq 0}$ is weakly relatively compact. Based on these two observations, and also the (weak) continuity of $I(\theta,\mu)$, we can conclude that $\smash{\sup_{t\geq 0} I(\theta_t,\mu_t)<\infty}$. Thus, using the bound \eqref{eq:I_dissipation} that we established earlier in the proof, we have that $\smash{|\frac{\mathrm{d}}{\mathrm{d}t} I(\theta_t,\mu_t)|\leq K}$ for some $K>0$. 

Finally, we are ready to show that $I(\theta_t,\mu_t)\rightarrow 0$. We follow closely the final part of the proof of \cite[][Proposition 8]{Korba2020}, which we can now apply almost verbatim. For the interested reader, we also provide the details here. We will argue by contradiction. In particular, suppose that $I(\theta_t,\mu_t)$, so that there exists a sequence $t_m\rightarrow\infty$ such that $I(\theta_{t_m},\mu_m)>\varepsilon>0$. In addition, since $|\frac{\mathrm{d}}{\mathrm{d}t} I(\theta_t,\theta_t)|$ is bounded, it is uniformly $K$ Lipschitz in time. Thus, there exists a sequence of intervals $J_m$ of length $\frac{m}{K}$, and centered at $t_m$, such that $I(\theta_t,\mu_t)\geq \frac{\varepsilon}{2}$ for all $t\in J_k$. Finally, integrating the dissipation over $s\in[0,t]$, we have 
\begin{align}
    \mathcal{F}(\theta_0,\mu_0) - \min_{(\theta,\mu)\in\Theta\times\mathcal{P}_2(\mathcal{Z})}\mathcal{F}(\theta,\mu) &\geq \mathcal{F}(\theta_0,\mu_0) - \mathcal{F}(\mu_t,\theta_t) = \int_0^t I(\theta_s,\mu_s)\mathrm{d}s \geq \sum_{m:t_m\leq t}\frac{\varepsilon^2}{2K},
\end{align}
which diverges as $t\rightarrow\infty$ since the subsequence $t_m\rightarrow\infty$. But this is contradiction, since $\mathcal{F}(\theta_0,\mu_0)<\infty$. This completes the proof.

\end{proof}

\begin{proof}[Proof of Proposition \ref{prop:exponential_convergence}]
    Suppose we write $\mathcal{F}^{*} = \min_{(\theta,\mu) \in \theta\times\mathcal{P}_2(\mathcal{Z})} \mathcal{F}(\theta,\mu)$. Proceeding almost identically to the start of the proof of Proposition \ref{prop:dissipation}, we have
    \begin{align}
        &\frac{\mathrm{d}}{\mathrm{d}t}\big(\mathcal{F}(\theta_t,\mu_t) - \mathcal{F}^{*}\big) \\
        &~~= \left\langle -\nabla_{\theta}\mathcal{F}(\theta_t,\mu_t), \nabla_{\theta}\mathcal{F}(\theta_t,\mu_t)\right\rangle_{\mathbb{R}^{d_{\theta}}} + \left\langle -P_{\mu_t} \nabla_{W_2}\mathcal{F}(\theta_t,\mu_t), \nabla_{W_2}\mathcal{F}(\theta_t,\mu_t)\right\rangle_{L^2(\mu_t)} \\
        &~~= -\left|\left|\nabla_{\theta}\mathcal{F}(\theta_t,\mu_t)\right|\right|_{\mathbb{R}^{d_{\theta}}}^2 - \left|\left|S_{\mu_t} \nabla_{W_2}\mathcal{F}(\theta_t,\mu_t)\right|\right|_{\mathcal{H}_k^{d_z}}^2 \\[1mm]
        &~~\leq -2\lambda \left(\mathcal{F}(\theta_t,\mu_t) - \mathcal{F}^{*}\right). 
    \end{align}
    where in the final line we have used the gradient dominance condition (Assumption \ref{assumption:grad_dominance}). The conclusion now follows straightforwardly via Gr\"omwall's inequality.
\end{proof}

\subsection{Proof of Theorem \ref{prop:descent_lemma}.}
\label{app:proof_descent_lemma}

\begin{proof}
    For fixed $t\in\mathbb{N}_{0}$, consider the following decomposition
    \begin{equation}
        \mathcal{F}(\theta_{t+1},\mu_{t+1}) - \mathcal{F}(\theta_t,\mu_t) =  \underbrace{\mathcal{F}(\theta_{t+1},\mu_{t+1}) - \mathcal{F}(\theta_{t+1},\mu_t)}_{\mathrm{I}} + \underbrace{\mathcal{F}(\theta_{t+1},\mu_{t}) - \mathcal{F}(\theta_t,\mu_{t})}_{\mathrm{II}}.
    \end{equation}
    We will deal with $\mathrm{(I)}$ and $\mathrm{(II)}$ in turn, beginning with (II). First, for each $\tau\geq 0$, and for fixed $t\in\mathbb{N}_{0}$, let $\theta_{\tau} = \theta_{t} - \tau \nabla_{\theta} \mathcal{F}(\theta_t,\mu_{t})$. We then have $\theta_0 = \theta_t$ and $\theta_{\gamma} = \theta_{t} - \gamma\nabla_{\theta}\mathcal{F}(\theta_t,\mu_t) = \theta_{t+1}$. We also have that $\smash{\dot{\theta}_{\tau} = -\nabla_{\theta}\mathcal{F}(\theta_t,\mu_{t+1})}$ and $\smash{\ddot{\theta}_{\tau} = 0}$, where here we emphasize that $\cdot$ and $\cdot\cdot$ denote derivatives with respect to the continuous $\tau\in\mathbb{R}_{+}$, rather than the discrete $t\in\mathbb{N}_{0}$. 
    
    In addition, suppose we let $g(\tau) = \mathcal{F}(\theta_{\tau},\mu_{t})$. We then have $g(0) = \mathcal{F}(\theta_t,\mu_{t})$ and $g(\gamma)= \mathcal{F}(\theta_{t+1},\mu_{t})$. In addition, using a Taylor expansion, we have that 
    \begin{equation}
        g(\gamma) = g(0) + \gamma g'(0) + \int_0^{\gamma} (\gamma - \tau) g''(\tau)\mathrm{d}\tau. \label{eq:g_taylor}
    \end{equation}
    Let us identify the remaining terms in this expansion. Using the chain rule, and in the second line also the fact that $\ddot{\theta}_{\tau}=0$, we have that 
    \begin{align}
        g'(\tau) &= \langle \dot{\theta}_{\tau}, \nabla_{\theta} \mathcal{F}(\theta_{\tau},\mu_{t})\rangle_{\mathbb{R}^{d_{\theta}}} = - \left\langle \nabla_{\theta}\mathcal{F}(\theta_t,\mu_{t}), \nabla_{\theta} \mathcal{F}(\theta_{\tau},\mu_{t}) \right\rangle_{\mathbb{R}^{d_{\theta}}} \label{eq:g'} \\
        g''(\tau)&= \langle \dot{\theta}_{\tau}, \nabla_{\theta}^2 \mathcal{F}(\theta_{\tau},\mu_{t})\dot{\theta}_{\tau} \rangle_{\mathbb{R}^{d_{\theta}}} = \left\langle \nabla_{\theta}\mathcal{F}(\theta_t,\mu_{t}), \mathrm{Hess}_{\theta}(\mathcal{F})(\theta_{\tau},\mu_{t}) \nabla_{\theta}\mathcal{F}(\theta_t,\mu_{t}) \right\rangle_{\mathbb{R}^{d_{\theta}}} \label{eq:g''} 
    \end{align}
    where $\mathrm{Hess}_{\theta}(\mathcal{F})$ denotes the Hessian matrix of $\mathcal{F}(\cdot,\mu_{t})$. Thus, $g'(0) =- ||\nabla_{\theta}\mathcal{F}(\theta_t,\mu_{t})||_{\mathbb{R}^{d_{\theta}}}^2$, and 
    $g''(\tau)\leq M ||\nabla_{\theta}\mathcal{F}(\theta_t,\mu_{t})||_{\mathbb{R}^{d_{\theta}}}^2$, 
    the latter by Assumption \ref{assumption:bounded_hessian0}. Putting everything together, we thus have that 
     \begin{align}
     \mathcal{F}(\theta_{t+1},\mu_{t}) &\leq  \mathcal{F}(\theta_{t},\mu_t) - \gamma ||\nabla_{\theta}\mathcal{F}(\theta_t,\mu_{t})||_{\mathbb{R}^{d_{\theta}}}^2 + M\int_{0}^{\gamma}(\gamma - \tau) ||\nabla_{\theta}\mathcal{F}(\theta_t,\mu_t)||_{\mathbb{R}^{d_{\theta}}}^2\mathrm{d}\tau \\
     &\leq \mathcal{F}(\theta_{t},\mu_t) - \gamma ||\nabla_{\theta}\mathcal{F}(\theta_t,\mu_{t})||_{\mathbb{R}^{d_{\theta}}}^2 + 
     \frac{M\gamma^2}{2}||\nabla_{\theta}\mathcal{F}(\theta_t,\mu_t)||_{\mathbb{R}^{d_{\theta}}}^2
     \end{align}
     which implies, in particular, that 
     \begin{align}
     \mathcal{F}(\theta_{t+1},\mu_{t+1}) - \mathcal{F}(\theta_t,\mu_t) &\leq - \gamma \left(1 - \frac{M\gamma}{2}\right)||\nabla_{\theta}\mathcal{F}(\theta_t,\mu_t)||_{\mathbb{R}^{d_{\theta}}}^2. \label{eq:theta_descent}
     \end{align}
    We will use a very similar argument to obtain an upper bound on $\mathrm{(I)}$. Here, we follow rather closely the proofs of \cite[Proposition 5]{Korba2020} and \cite[Proposition 3.1]{Salim2022}, adapted appropriately to our setting. In the interest of completeness, we provide this argument in full. 
    
    For $\tau\geq 0$, let $\psi_{\tau} = \mathrm{id} - \tau\xi$, where $\xi = \mathcal{P}_{\mu} g \in L^2(\mu)$ $\mu\in\mathcal{P}_2(\mathcal{Z})$, for some $g\in L^2(\mu)$, 
    and where $\mathrm{id}$ denotes the identity operator. In addition, let $\rho_{\tau} = (\psi_{\tau})_{\#}\mu_t$. By definition, we then have that $\rho_0 = \mu_t$ and $\rho_{\gamma} = \mu_{t+1}$. Finally, let $\xi' = S_{\mu} g$, with $\xi(z) = \xi'(z)$ for all $z\in\mathcal{Z}$.
    We begin by observing that, for each $(\theta,z)\in \Theta\times\mathcal{Z}$, the reproducing property and the Cauchy-Schwarz inequality yield the bounds
    \begin{align}
        ||\xi(z)||^2 &= \sum_{i=1}^{d_z} \langle k(z,\cdot), \xi'_i \rangle_{\mathcal{H}_k}^2 \leq ||k(z,\cdot)||^2_{\mathcal{H}_k} ||\xi'||_{\mathcal{H}_k^{d_z}} \leq B^2 ||\xi'||^2_{\mathcal{H}_k^{d_z}} \label{eq:bound1} \\
        ||J \xi(z)||^2_{\mathrm{HS}} 
        &= \sum_{i,j=1}^{d_z} \langle \partial_{z_j} k(z,\cdot), \xi'_i\rangle^2_{\mathcal{H}_k} \leq \sum_{i,j=1}^{d_z} ||\partial_{z_j}k(z,\cdot) ||^2_{\mathcal{H}_k} ||\xi'_i||^2_{\mathcal{H}_k} = ||\nabla k(z,\cdot)||^2_{\mathcal{H}_k}||\xi'||^2_{\mathcal{H}_k^{d_z}}\leq B^2 ||\xi'||^2_{\mathcal{H}_k^{d_z}}, \label{eq:bound2}
    \end{align}
    where in each case the final display follows from Assumption \ref{assumption:bounded_k}. Suppose, in addition, that for some $\alpha>1$, the step size $\gamma$ satisfies 
    \begin{equation}
        \gamma ||\xi'||_{\mathcal{H}_k^{d_z}} \leq \frac{\alpha - 1}{\alpha B}. \label{eq:step-size-condition}
    \end{equation} 
    It then follows from the previous bound that, for any $\tau\in[0,\gamma]$,  
    \begin{equation}
        ||\tau J \xi(z)||_{\mathrm{op}} \leq ||\tau J \xi(z)||_{\mathrm{HS}} \leq \gamma B ||\xi'||_{\mathcal{H}_{k}^{d_z}} \leq \frac{\alpha - 1}{\alpha}<1.
    \end{equation}
    where $J$ denotes the Jacobian matrix. This establishes that $J(\psi_{\tau})(z) = \mathrm{id} - \tau J \xi(z)$ is regular for all $z$, and that $\psi_{\tau}$ is a diffeomorphism for each $\tau\in[0,\gamma]$. In addition, we also have the bound
    \begin{equation}
        ||(J \psi_{\tau})^{-1}(z)||_{\mathrm{op}} \leq \sum_{i=0}^{\infty} ||\tau J \xi(z)||^{i}_{\mathrm{op}} \leq \sum_{i=0}^{\infty} \left(\frac{\alpha-1}{\alpha}\right)^{i} = \alpha. \label{eq:bound3}
    \end{equation}
    Now, let us define $h(\tau) = \mathcal{F}(\theta_{t+1},\rho_{\tau})$. Using the definition above, we have that $h(0) = \mathcal{F}(\theta_{t+1},\mu_t)$ and $h(\gamma) = \mathcal{F}(\theta_{t+1},\mu_{t+1})$. Now, via a Taylor expansion, 
    \begin{equation}
        h(\gamma) = h(0) + \gamma h'(0) + \int_0^{\gamma} (\gamma - \tau) h''(\tau)\mathrm{d}\tau. \label{eq:h_taylor}
    \end{equation}
    Similar to before, we need to identify the final two terms. We first recall, from \cite[Theorem 5.34]{Villani2003}, that the velocity field which rules the time evolution of $\rho_{\tau}$ is given by $w_{\tau}\in L^2(\rho_{\tau})$, defined according to $w_{\tau}(z)= -\xi (\psi_{\tau}^{-1}(z))$. Then, by an appropriate version of the chain rule \cite[e.g.][Section 8.2]{Villani2003}, one can show that 
    \begin{align}
        h'(0) &= -\langle \nabla_{W_2}\mathcal{F}(\theta_{t+1},\mu_t), \xi \rangle_{L^2(\mu_t)} = -\langle \mathcal{S}_{\mu_t}\nabla_{W_2}\mathcal{F}(\theta_{t+1},\mu_t), \xi' \rangle_{\mathcal{H}_k^{d_z}} \label{eq:h_taylor_1}\\
        h''(\tau) 
        &= \underbrace{\mathbb{E}_{z\sim \rho_{\tau}} \left[ \langle w_{\tau}(z), \mathrm{Hess}_{z}(-\log \pi_{\theta_{t+1}}(z))w_{\tau}(z)\right]}_{h_1(\tau)} + \underbrace{\mathbb{E}_{z\sim \rho_{\tau}} \left[ ||J(w_{\tau}(z))||^2_{\mathrm{HS}}\right]}_{h_2(\tau)}, \label{eq:h_taylor_2}
    \end{align}
     It remains to bound these two terms. Recalling that $\rho_{\tau} = (\psi_{\tau})_{\#}\mu_t$ and $w_{\tau}(z)= -\xi(\psi_{\tau}^{-1}(z))$, and using Assumption \ref{assumption:bounded_hessian0}, we have
    \begin{align}
        h_1(\tau) &=\mathbb{E}_{z\sim \rho_{\tau}} \left[ \langle w_{\tau}(z), \mathrm{Hess}(-\log \pi_{\theta_{t+1}}(z))w_{\tau}(z)\right] \\
        &=\mathbb{E}_{z\sim \rho_{\tau}} \left[ \langle \xi(\psi_{\tau}^{-1}(z)), \mathrm{Hess}(-\log \pi_{\theta_{t+1}}(z))\xi(\psi_{\tau}^{-1}(z))\right] \\
        &= \mathbb{E}_{z\sim\mu_t} \left[\left\langle \xi(z), \mathrm{Hess}(-\log \pi_{\theta_{t+1}}(z)) \xi(z) \right\rangle\right] \\
        &\leq M
        \mathbb{E}_{z\sim \mu_t}\left[||\xi(z)||^2\right]\leq M 
        B^2 \left|\left|\xi'\right|\right|_{\mathcal{H}_k^{d_z}}^2,  \label{eq:bound-h1}
    \end{align}
    where the final inequality follows from the  the bound in \eqref{eq:bound1}.
    For the remaining term, by the chain rule, we have $Jw_{\tau}(z) = J\xi(\psi^{-1}_{\tau}(z))(J\psi_{\tau})^{-1}(\psi_{\tau}^{-1}(z))$.
    It follows that
    \begin{align}
        h_2({\tau}) &= \mathbb{E}_{z\sim \rho_{\tau}} \left[ ||J(w_{\tau}(z))||^2_{\mathrm{HS}}\right] \\
        &=\mathbb{E}_{z\sim \rho_{\tau}} \left[ ||J\xi(\psi^{-1}_{\tau}(z))(J\psi_{\tau})^{-1}(\psi_{\tau}^{-1}(z)) ||^2_{\mathrm{HS}}\right] \\
        &=\mathbb{E}_{z\sim \mu_t}\left[ ||J \xi(z)(J\psi_{\tau})^{-1}(z)||_{\mathrm{HS}}^2\right]\\
        &\leq \mathbb{E}_{z\sim\mu_t}\left[ ||J\xi(z)||_{\mathrm{HS}}^2||(J\psi_{\tau})^{-1}(z)||_{\mathrm{op}}^2\right]
        \leq \alpha^2 B^2 \left|\left|\xi'\right|\right|_{\mathcal{H}_k^{d_z}}^2, \label{eq:bound-h2}
    \end{align}
    where the final display follows from the bounds in \eqref{eq:bound2} and \eqref{eq:bound3}. 
    Clearly, we would like to apply these results with $\xi = \xi_t = \mathcal{P}_{\mu_{t}}\nabla_{W_2}\mathcal{F}(\theta_{t+1},\mu_t)$, $\xi' = \xi'_t = \mathcal{S}_{\mu_t}\nabla_{W_2}\mathcal{F}(\theta_{t+1},\mu_t)$, which corresponds to the SVGD EM update. In this case, substituting \eqref{eq:h_taylor_1} and  \eqref{eq:h_taylor_2} back into \eqref{eq:h_taylor}, and using \eqref{eq:bound-h1} and \eqref{eq:bound-h2}, we have that
    \begin{align}
        \mathcal{F}(\theta_{t+1},\mu_{t+1}) \leq \mathcal{F}(\theta_{t+1},\mu_t) &- \gamma ||\mathcal{S}_{\mu_t}\nabla_{W_2}\mathcal{F}(\theta_{t+1},\mu_t)||^2_{\mathcal{H}_k^{d_z}} \\
        &+\left(M + 
        \alpha^2\right)B^2\int_0^{\gamma} (\gamma -\tau )||\mathcal{S}_{\mu_t}\nabla_{W_2}\mathcal{F}(\theta_{t+1},\mu_t)||^2_{\mathcal{H}_k^{d_z}}\mathrm{d}\tau \\
        \leq \mathcal{F}(\theta_{t+1},\mu_t) &- \gamma ||\mathcal{S}_{\mu_t}\nabla_{W_2}\mathcal{F}(\theta_{t+1},\mu_t)||^2_{\mathcal{H}_k^{d_z}} \\
        &+\left(\frac{M + \alpha^2}{2}\right)B^2\gamma^2||\mathcal{S}_{\mu_t}\nabla_{W_2}\mathcal{F}(\theta_{t+1},\mu_t)||^2_{\mathcal{H}_k^{d_z}},
    \end{align}
    and thus, in particular, that 
    \begin{equation}
        \mathcal{F}(\theta_{t+1},\mu_{t+1})-\mathcal{F}(\theta_{t+1},\mu_t) \leq -\gamma\left(1- 
        \frac{(M+\alpha^2)B^2\gamma}{2}\right)||\mathcal{S}_{\mu_t}\nabla_{W_2}\mathcal{F}(\theta_{t+1},\mu_t)||^2_{\mathcal{H}_k^{d_z}}. \label{eq:mu_descent}
    \end{equation}
    In order to apply these results to obtain \eqref{eq:mu_descent}, we need to show that the sequence $(\xi_t)_{t\in\mathbb{N}}$ satisfies the required condition on the step size in \eqref{eq:step-size-condition}. In particular, we need to establish that, for each $t\in\mathbb{N}$, under the assumption that $\gamma<\gamma_{*}$, it holds that 
    \begin{equation}
        \gamma ||\mathcal{S}_{\mu_{t}}\nabla_{W_2}\mathcal{F}(\theta_{t+1},\mu_t)||_{\mathcal{H}_k^{d_z}} \leq \frac{\alpha - 1}{\alpha B}. \label{eq:step-size-bound}
    \end{equation}
    In fact, we will establish this bound by proving that, under our assumptions, a slightly stronger result holds, namely, that for all $\theta\in\tilde{\Theta}$, where $\tilde{\Theta} = \{\theta_t\}_{t\in\mathbb{N}}$, it holds that
    \begin{equation}
        \gamma ||\mathcal{S}_{\mu_t}\nabla_{W_2}\mathcal{F}(\theta,\mu_t)||_{\mathcal{H}_k^{d_z}}\leq \frac{\alpha - 1}{\alpha B}. \label{eq:sup-step-size-bound}
    \end{equation}
    Similar to the proof of \cite[Theorem 3.2]{Salim2022}, we prove this via induction. First note, arguing similarly to the proof of \cite[Lemma C.1]{Salim2022}, that for each $\theta\in\Theta$, 
    \begin{align}
        ||\mathcal{S}_{\mu}\nabla_{W_2}\mathcal{F}(\theta,\mu)||_{\mathcal{H}_k^{d_z}} &= \left|\left| \mathbb{E}_{z\sim \mu} \left[\nabla_{z}\log \pi_{\theta}(z) k(z,\cdot) + \nabla_{z} k(z,\cdot)\right] \right|\right|_{\mathcal{H}_k^{d_z}} \\
        &\leq \mathbb{E}_{z\sim \mu} \left|\left|\nabla_{z}\log \pi_{\theta}(z)k(z,\cdot)\right|\right|_{\mathcal{H}_k^{d_z}} + \mathbb{E}_{z\sim \mu} \left|\left|\nabla_{z} k(z,\cdot) \right|\right|_{\mathcal{H}_k^{d_z}} \label{eq:94} \\[2mm]
        & \leq B\left[1 + \left|\left|\nabla_{z}\log p_{\theta}(0|x)\right|\right| + M\mathbb{E}_{z\sim \mu}\left[||z||\right]\right] \label{eq:95} \\[2mm]
        &\leq B\left[1 + \left|\left|\nabla_{z}\log p_{\theta}(0|x)\right|\right| + M\left[W_1(p_{\theta}(\cdot|x),\delta_0) + W_1(\mu,p_{\theta}(\cdot|x)\right]\right] \label{eq:96} \\
        &\leq B \left[1 + \left|\left|\nabla_{z}\log p_{\theta}(0|x)\right|\right| + M\int_{\mathcal{Z}}||z||p_{\theta}(z|x)\mathrm{d}z + M\sqrt{\frac{2\mathrm{KL}(\mu||p_{\theta}(\cdot|x)}{\lambda}}\right] \label{eq:97} \\
        &=BC_{\theta} + BM\sqrt{\frac{2\mathrm{KL}(\mu||p_{\theta}(\cdot|x))}{\lambda}} \label{eq:final-ineq}
    \end{align}
    where \eqref{eq:94} follows from the triangle inequality, \eqref{eq:95} follows from Assumption \ref{assumption:bounded_k} and Assumption \ref{assumption:bounded_hessian0}, \eqref{eq:96} follows from the triangle inequality for the metric $W_1$, \eqref{eq:97} follows from Assumption \ref{assumption:bounded_I_stein}, and in the final line we have defined 
    \begin{equation}
        C_{\theta}= 1 + \left|\left|\nabla_{z}\log p_{\theta}(0|x)\right|\right| + M\int_{\mathcal{Z}}||z||p_{\theta}(z|x)\mathrm{d}z. \label{eq:gamma-bound}
    \end{equation}
    Now, suppose that $0<\gamma<\gamma^{*}$. Then, using the definition of $\gamma^{*}$ given in Theorem \ref{prop:descent_lemma}, and the bound in \eqref{eq:final-ineq}, it follows straightforwardly that, for any $\theta\in\tilde{\Theta}$,   
    \begin{align}
        \gamma ||\mathcal{S}_{\mu_0}\nabla_{W_2}\mathcal{F}(\theta,\mu_0)||_{\mathcal{H}_k^{d_z}} &\leq \gamma\left[BC_{\theta} + BM\sqrt{\frac{2\mathrm{KL}(\mu_0||p_{\theta}(\cdot|x)}{\lambda}}\right] \\
        &\leq \frac{\alpha - 1}{\alpha B^2} \left[C_{\theta} + M\sqrt{\frac{2\mathrm{KL}(\mu_0||p_{\theta}(\cdot|x)}{\lambda}}\right]^{-1} \left[BC_{\theta} + BM\sqrt{\frac{2\mathrm{KL}(\mu_0||p_{\theta}(\cdot|x)}{\lambda}}\right]\\
        &\leq \frac{\alpha - 1}{\alpha B}.  \label{eq:step-size-arg}
    \end{align}
    This establishes that \eqref{eq:sup-step-size-bound} holds for $t=0$. We now prove the inductive step. Suppose that, for each $t\in[T-1]$, $0<\gamma\leq \gamma^{*}$ implies that, for each $\theta\in\tilde{\Theta}$, 
    \begin{equation}
        \gamma ||\mathcal{S}_{\mu_{t}}\nabla_{W_2}\mathcal{F}(\theta,\mu_t)||_{\mathcal{H}_k^{d_z}} \leq \frac{\alpha - 1}{\alpha B}. \label{eq:step-size-bound-v2}
    \end{equation}
    Using this assumption, we can substitute \eqref{eq:h_taylor_1} and  \eqref{eq:h_taylor_2} into \eqref{eq:h_taylor}, and use \eqref{eq:bound-h1} and \eqref{eq:bound-h2}, but now with $\xi = P_{\mu_t}\nabla_{W_2}\mathcal{F}(\theta,\mu_{t})$. This implies that, for all $t\in[T-1]$, and for all $\theta\in\tilde{\Theta}$, 
    \begin{equation}
        \mathcal{F}(\theta,\mu_{t+1})-\mathcal{F}(\theta,\mu_t) \leq -\gamma\left(1- 
        \frac{(M+\alpha^2)B^2\gamma}{2}\right)||\mathcal{S}_{\mu_t}\nabla_{W_2}\mathcal{F}(\theta,\mu_t)||^2_{\mathcal{H}_k^{d_z}}. \label{eq:mu_descent_v2}
    \end{equation}
    Thus, in particular, $\mathcal{F}(\theta,\mu_T)\leq \mathcal{F}(\theta,\mu_0)$ for all $\theta\in\tilde{\Theta}$. It follows straightforwardly that
    \begin{align}
        BC_{\theta} + BM \sqrt{\frac{2\mathrm{KL}(\mu_T||p_{\theta}(\cdot|x))}{\lambda}} 
        &= BC_{\theta} + BM \sqrt{\frac{2\left[\mathcal{F}(\theta,\mu_T) + \log p_{\theta}(x)\right]}{\lambda}} \\
        &\leq BC_{\theta} + BM \sqrt{\frac{2\left[\mathcal{F}(\theta,\mu_0) + \log p_{\theta}(x)\right]}{\lambda}} =BC_{\theta} + BM \sqrt{\frac{2\mathrm{KL}(\mu_0||p_{\theta}(\cdot|x))}{\lambda}}.
    \end{align}
    By combining this display with the bound in \eqref{eq:final-ineq}, and arguing as in \eqref{eq:step-size-arg}, we then have the required result in \eqref{eq:sup-step-size-bound}. In particular, for all $\theta\in\tilde{\Theta}$, we have
    \begin{equation}
        \gamma ||\mathcal{S}_{\mu_{T}}\nabla_{W_2}\mathcal{F}(\theta,\mu_T)||_{\mathcal{H}_k^{d_z}} \leq \gamma \left[BC_{\theta} + BM \sqrt{\frac{2\mathrm{KL}(\mu_T||p_{\theta}(\cdot|x))}{\lambda}} \right]\leq \gamma \left[BC_{\theta} + BM \sqrt{\frac{2\mathrm{KL}(\mu_0||p_{\theta}(\cdot|x))}{\lambda}}\right]\leq \frac{\alpha - 1}{\alpha B}.
    \end{equation}
    This completes the proof of \eqref{eq:sup-step-size-bound}, which in turn implies \eqref{eq:step-size-bound}, and thus \eqref{eq:mu_descent}. Finally, combining \eqref{eq:theta_descent} and \eqref{eq:mu_descent}, we have the desired bound, 
    \begin{align}
        \mathcal{F}(\theta_{t+1},\mu_{t+1}) - \mathcal{F}(\theta_t,\mu_t) \leq - \gamma &\left[\left(1 - 
        \frac{M\gamma}
        {2}\right)||\nabla_{\theta}\mathcal{F}(\theta_t,\mu_t)||_{\mathbb{R}^{d_{\theta}}}^2 + \left(1- 
        \frac{(M+\alpha^2)B^2\gamma}{2}\right)||\mathcal{S}_{\mu_t}\nabla_{W_2}\mathcal{F}(\theta_{t+1},\mu_t)||^2_{\mathcal{H}_k^{d_z}}\right].
    \end{align}
    This completes the proof of Theorem \ref{prop:descent_lemma}, with constants
    \begin{equation}
        A_1 = 1- \frac{M\gamma}{2},\quad A_2 = 1- \frac{(M+\alpha^2)B^2\gamma}{2}.
    \end{equation}
\end{proof}

\subsection{Proof of Corollary \ref{corollary:average_convergence}.}
\label{app:proof_average_convergence}
\begin{proof}
     Using Theorem \ref{prop:descent_lemma}, iterating, and using the definition of $c_{\gamma}$ given in Corrolary \ref{corollary:average_convergence}, we have that 
    \begin{align}
        \mathcal{F}(\theta_{T+1},\mu_{T+1}) - \mathcal{F}(\theta_0,\mu_0) \leq - c_{\gamma}\sum_{t=0}^{T}\bigg(\left|\left|\nabla_{\theta}\mathcal{F}(\theta_t,\mu_t)\right|\right|_{\mathbb{R}^{d_{\theta}}}^2+\left|\left|\mathcal{S}_{\mu_t}\nabla_{W_2}\mathcal{F}(\theta_{t+1},\mu_t)\right|\right|^2_{\mathcal{H}_k^{d_z}}\bigg),
    \end{align}
    Thus, rearranging, and using the fact that $\min_{(\theta,\mu)\in\Theta\times\mathcal{P}_2(\mathcal{Z})}\leq \mathcal{F}(\theta_{T+1},\mu_{T+1})$, we have as required that
    \begin{align}
        \frac{1}{T}\sum_{t=0}^{T}\bigg(\left|\left|\nabla_{\theta}\mathcal{F}(\theta_t,\mu_t)\right|\right|_{\mathbb{R}^{d_{\theta}}}^2+\left|\left|\mathcal{S}_{\mu_t}\nabla_{W_2}\mathcal{F}(\theta_{t+1},\mu_t)\right|\right|^2_{\mathcal{H}_k^{d_z}}\bigg) 
        &\leq \frac{\mathcal{F}(\theta_0,\mu_0) - \min_{(\theta,\mu)\in\Theta\times\mathcal{P}_2(\mathcal{Z})}\mathcal{F}(\theta,\mu)}{c_{\gamma}T}.
    \end{align}
\end{proof}

\subsection{Proof of Theorem \ref{prop:coin-em}}
\label{sec:coin-em-proof}

\begin{proof}
    Under Assumption \ref{coin-em-assumption-1}, the marginal likelihood $\theta\mapsto p_{\theta}(x)$ admits a unique maximizer $\theta^{*} = \argmax_{\theta\in\Theta}p_{\theta}(x)$ \cite[Theorem 4]{Kuntz2023}. Thus, in particular, the free energy $(\theta,\mu)\mapsto \mathcal{F}(\theta,\mu)$ admits a unique minimizer $(\theta^{*},\mu^{*})$, with $\mu^{*} = p_{\theta^{*}}(\cdot|x)$ \cite[Theorem 1]{Kuntz2023}. 
    
    Next, once more appealing to Assumption \ref{coin-em-assumption-1}, the function $(\theta,\mu)\mapsto \mathcal{F}(\theta,\mu)$ is (geodesically) convex on the product space $\Theta \times \mathcal{P}_2(\mathcal{Z})$. In particular, convexity w.r.t. the first argument follows directly from the definition of the free-energy, while geodesic convexity w.r.t. the second argument follows from \cite[Proposition 9.3.2, Theorem 9.4.12]{Ambrosio2008}. We thus have, using Jensen's inequality for \eqref{eq:jensen} and the first-order characterization of a (geodesically) convex function \cite[e.g.,][Chapter 10]{Ambrosio2008} for \eqref{eq:geo-convex}, that
    \begin{align}
        \mathcal{F}\left(\frac{1}{T}\sum_{t=1}^T\theta_t, \frac{1}{T}\sum_{t=1}^T \mu_t\right) - \mathcal{F}(\theta^{*},\mu^{*}) &\leq \frac{1}{T}\sum_{t=1}^T \left[ \mathcal{F}(\theta_t,\mu_t) - \mathcal{F}(\theta^{*},\mu^{*})\right] \label{eq:jensen} \\
        &\leq \frac{1}{T}\sum_{t=1}^T \langle -\nabla_{\theta} \mathcal{F}(\theta_t,\mu_t),\theta_t - \theta^{*}\rangle_{\mathbb{R}^{d_{\theta}}} \nonumber \\
        &~~~+ \frac{1}{T}\sum_{t=1}^T \int_{\mathbb{R}^{d_z}} \langle - \nabla_{W_2}\mathcal{F}(\theta_t,\mu_t)(z), t_{\mu_t}^{\mu_{*}}(z) - z\rangle_{\mathbb{R}^{d_z}} \mu_t(z)\mathrm{d}z. \label{eq:geo-convex}
    \end{align}
    The required bound given in Theorem \ref{prop:coin-em-v2} (see App. \ref{sec:coin-results}) now follows from existing results, arguing as in the proof of \cite[Corollary 5]{Orabona2016} with $\varepsilon=1$ to bound the first term,\footnote{For an informal overview of this approach, see also Part II of \cite{Orabona2020}.} and arguing as in the proof of \cite[Proposition 3.3']{Sharrock2023} with $w_0=1$ and replacing $\nabla_{W_2}\mathcal{F}(\mu_t) \mapsto \nabla_{W_2}\mathcal{F}(\theta_t,\mu_t)$ to bound the second term.
\end{proof}

\vfill

\pagebreak

\section{MARGINAL SVGD EM AND MARGINAL COIN EM}
\label{sec:marginal_algorithms}
In certain models, the M step is tractable. In other words, it is possible to derive a tractable expression for 
$\smash{\theta_{*}(\mu) = \argmax_{\mu\in\mathcal{P}_2(\mathcal{Z})}\mathcal{F}(\theta,\mu)}$. Thus, in particular, given the empirical measure $\smash{\mu^{N} = \frac{1}{N}\sum_{j=1}^N \delta_{z^{j}}}$, we can compute $\theta_{*}(z^{1:N}):=\theta_{*}(\mu^N)$, where $z^{1:N} = (z^{1},\dots,z^{N})\in\mathcal{Z}^N$. In such cases, instead of SVGD EM (Alg. \ref{alg:svgdEM}) or Coin EM (Alg. \ref{alg:coinEM}), we can use marginal variants of these algorithms, in which the $\theta$ updates are now exact.

\begin{algorithm}[ht] %
	\caption{Marginal SVGD EM}
	\label{alg:marginalsvgdEM}
	\begin{algorithmic}[1]
		\STATE {\bf input:} number of iterations $T$, number of particles $N$, initial particles $\{z_0^i\}_{i=1}^N \sim \mu_0$, initial $\theta_0$, target density $\pi$, kernel $k$, function $\mu\mapsto \theta_{*}(\mu)$, learning rate $\gamma$.
		\FOR{$t = 1,\dots,T -1$}
		\STATE For $i \in [N]$ 
        \begin{align*}
        \hspace{18mm}z_{n+1}^{i} &= z_{n}^{i} + \frac{\gamma}{N} \textstyle\sum_{j=1}^N\left[k(z_t^{j},z_t^{i}) \nabla_z \log \pi_{\theta_{*}\left(z_t^{1:N}\right)}(z_t^{j}) + \nabla_{z_t^{j}}k(z_t^{j},z_t^{i})\right]
        \end{align*}
		\ENDFOR
		\STATE {\bf return} $\theta_{T}$ and $\{z^i_{T}\}_{i=1}^{N}$.
	\end{algorithmic}
    \end{algorithm}

\begin{algorithm}[ht] %
	\caption{Marginal Coin EM}
	\label{alg:mcoinEM}
	\begin{algorithmic}[1]
		\STATE {\bf input:} number of iterations $T$, number of particles $N$, initial particles $\{z_0^i\}_{i=1}^N \sim \mu_0$, initial $\theta_0$, target density $\pi$, kernel $k$, function $\mu\mapsto \theta_{*}(\mu)$.
		\FOR{$t =1,\dots,T$}
		\STATE For $i \in [N]$ 
        \begin{align*}
        z_{t}^{i} = z_0^{i} &+\frac{{\textstyle\sum_{s=1}^{t-1} 
        \frac{1}{N} \textstyle\sum_{j=1}^N k(z_s^{j},z_s^{i}) \nabla_{z} \log \pi_{\theta_{*}\left(z_{s}^{1:N}\right)}(z_s^{j}) +  \nabla_{z^{j}}k(z_s^{j},z_s^{i})}}{t} \\
        &\times \big( 1 + \textstyle\sum_{s=1}^{t-1} \langle
        \frac{1}{N} \textstyle\sum_{j=1}^N k(z_s^{i},z_s^{j}) \nabla_{z} \log \pi_{\theta_{*}\left(z_{s}^{1:N}\right)}(z_s^{j}) +  \nabla_{z^{j}}k(z_s^{j},z_s^{i}), z_s^{i} - z_0^{i}\rangle \big)
        \end{align*}
		\ENDFOR
		\STATE {\bf return} $\theta_{T}$ and $\{z^i_{T}\}_{i=1}^{N}$.
	\end{algorithmic}
    \end{algorithm}

\section{ADAPTIVE COIN EM}
\label{sec:adaptive-coin}
In Sec. \ref{sec:coin}, the update equation given in \eqref{eq:betting-update}, and thus the update equations in Alg. \ref{alg:coinEM}, were given under the assumption that the sequence of outcomes, in this case $(c_t^{\theta})_{t\in[T]}= (\nabla_{\theta}\mathcal{F}(\theta_t,\mu_t^N))_{t\in[T]}$ and $(c_t^{\mu})_{t\in[T]}= (\mathcal{P}_{\mu_t}\nabla_{W_2}\mathcal{F}(\theta_t,\mu_t^N))_{t\in[T]}$, were bounded above by $1$ (see the remark in Footnote \ref{footnote:adaptive}). In practice, of course, this may not be the case. If, instead, there exists a known constant which upper bounds $(c_t^{\theta})_{t\in[T]}$ and $(c_t^{\mu})_{t\in[T]}$, then one can simply replace these quantities by their normalized versions in Alg. \ref{alg:coinEM}. In the more unlikely scenario that such a constant is unknown, we can instead use an adaptive version of Coin EM, in which an empirical estimate of this constant is updated as the algorithm progresses; see \cite[][Sec. 6]{Orabona2017} and \cite[][App. D]{Sharrock2023} for precedents. This algorithm, which we recommend and use in all experiments, is summarized in Alg. \ref{alg:coinEM-adaptive}.

In addition, once more following \cite[][Sec. 6]{Orabona2017}, whenever we apply Coin EM to a Bayesian neural network (see Sec. \ref{sec:bayes_nn_results} and App. \ref{sec:bayes-nn-alt-details}), we further alter the parameter update equation in Alg. \ref{alg:coinEM-adaptive} to read as 
\begin{equation}
\theta_{t,j} = \theta_{0,j} + \frac{\sum_{s=1}^{t-1} c_{s,j}^{\theta}}{\max(G_{t,j}^{\theta} + L_{t,j}^{\theta},100 L_{t,j}^{\theta})} (1 + \frac{R_{t,j}^{\theta}}{L_{t,j}^{\theta}}), 
\end{equation}
with an analogous modification for the particle update equation. It is worth noting that both of these adaptive versions of the Coin EM algorithm remain entirely tuning free.

\begin{algorithm}[H]
   \caption{Adaptive Coin EM}
   \label{alg:coinEM-adaptive}
\begin{algorithmic}[1]
   \STATE {\bfseries input:} number of iterations $T$, number of particles $N$, initial particles $\{z_0^{i}\}_{i=1}^N \sim \mu_0$, initial $\theta_0$, target density $\pi$, kernel $k$ 
   \STATE{\bfseries initialize:} for $i\in[N]$: $L^{\theta}_{0,j}=0$, $G_{0,j}^{\theta}=0$, $R_{0,j}^{\theta}=0$; for $i\in[N]$ and $j\in[d]$, $L^{z,i}_{0,j}=0$, $G_{0,j}^{z,i}=0$, $R_{0,j}^{z,i}=0$.
   \FOR{$t=1,\dots,T$}
   \STATE Compute
   \begin{equation*}
       c_{t-1}^{\theta} = \frac{1}{N}\sum_{j=1}^N \nabla_{\theta} \log \pi_{\theta_{t-1}}(z_{t-1}^{j}) \tag{parameter gradient}
   \end{equation*}
   \FOR{$j=1,\dots,d_{\theta}$}
   \STATE Compute
   \begin{align*}
       L_{t,j}^{\theta} &= \mathrm{max}(L_{t-1,j}^{\theta}, |c_{t,j}^{\theta}|)  \tag{max. observed scale} \\
       G_{t,j}^{\theta} &= G_{t-1,j}^{\theta} + |c_{t-1,j}^{\theta}| \tag{sum of absolute value of gradients} \\ %
       R_{t,j}^{\theta} &= \max(R_{t-1,j}^{\theta} + \langle c_{t-1,j}^{\theta}, \theta_{t-1,j} - \theta_{0,j} \rangle, 0) \tag{total reward} \\
       {\theta}_{t,j} &= {\theta}_{0,j} + \frac{\sum_{s=1}^{t-1} c_{s,j}^{\theta}}{G_{t,j}^{\theta} + L_{t,j}^{\theta}} (1 + \frac{R_{t,j}^{\theta}}{L_{t,j}^{\theta}}). \tag{parameter update}
    \end{align*}
    \ENDFOR
    \FOR {$i=1,\dots,N$}
    \STATE Compute 
    \begin{equation*}
        c_{t-1}^{z,i} = \frac{1}{N}\sum_{j=1}^N k(z_{t-1}^{j},z_{t-1}^{i}) \nabla_{z^j} \log \pi_{\theta_t}(z_{t-1}^{j}) +  \nabla_{z^{j}}k(z_{t-1}^{j},z_{t-1}^{i}) \tag{particles gradient}
    \end{equation*}
    \FOR{$j=1,\dots,d_{z}$}
    \STATE Compute 
    \begin{align*}
       L_{t,j}^{z,i} &= \mathrm{max}(L_{t-1,j}^{z,i}, |c_{t-1,j}^{i}|)  \tag{max. observed scale} \\
       G_{t,j}^{z,i} &= G_{t-1,j}^{z,i} + |c_{t-1,j}^{z,i}| \tag{sum of absolute value of gradients} \\ %
       R_{t,j}^{z,i} &= \max(R_{t-1,j}^{z,i} + \langle c_{t-1,j}^{z,i}, z^{i}_{t-1,j} - z^{i}_{0,j} \rangle, 0) \tag{total reward} \\
       {z}^{i}_{t,j} &= {z}^{i}_{0,j} + \frac{\sum_{s=1}^{t-1} c_{s,j}^{z,i}}{G_{t,j}^{z,i} + L_{t,j}^{z,i}} (1 + \frac{R_{t,j}^{z,i}}{L_{t,j}^{z,i}}). \tag{particles update}
    \end{align*}
    \ENDFOR
    \ENDFOR
    \ENDFOR
   \STATE {\bfseries output:} $\theta_T$ and $(z_T^{i})_{i=1}^N$.
\end{algorithmic}
\end{algorithm}

\section{ADDITIONAL EXPERIMENTAL DETAILS}
\label{app:additional-experimental-details}
We implement all of the algorithms using Python 3 and JAX \citep{Bradbury2018}. We perform all experiments using a MacBook Pro 16" (2021) laptop with Apple M1 Pro chip and 16GB of RAM.

\subsection{Toy hierarchical model}
\label{sec:toy-model-details}
\textbf{Implementation Details}. For the results in Fig. \ref{fig:toy_1}, we initialize all methods with $\theta_0\sim \mathcal{N}(0,0.1^2)$ and $z_0^{i}\sim\mathcal{N}(0,1)$. We use $N=10$ particles and run each algorithm for $T=500$ iterations. Additional results for larger numbers of particles are given in App. \ref{sec:toy-add-results}. In Fig. \ref{fig:toy_1a}, we run each algorithm over a grid of 50 logarithmically spaced learning rates $\gamma\in[10^{-5}, 10^{3}]$. We use Adagrad \cite{Duchi2011} to automatically adjust the learning rates of SVGD EM and PGD. In Fig. \ref{fig:toy_1b} and Fig. \ref{fig:toy_1c}, we run each algorithm using the learning rate which obtained the lowest final iterate MSE in Fig. \ref{fig:toy_1a}. 

For Fig. \ref{fig:toy_2a} and Fig. \ref{fig:toy_2b}, we run the algorithms for $T=5000$ iterations, and report the empirical variance of the latent particles $(z_t^{i})_{i=1}^N$ for each $t\in[0,T]$. In Fig. \ref{fig:toy_2c}, we run each algorithm for $T=250$ iterations with $N=50$ particles. For Coin EM and SVGD EM, we compute the MSE between the empirical variance of the final particles $(z_T^{i})_{i=1}^N$, and the true posterior variance, which in this case is just given by 0.5 \cite[][App. E.1]{Kuntz2023}. For PGD and SOUL, we use the empirical variance of  time-averaged particles, due to the noise used in the particle updates. All results are averaged over 10 random seeds (5 random seeds for Fig. \ref{fig:toy_2c}), and we report 95\% bootstrap CIs as well as the mean.

\subsection{Bayesian logistic regression}
\label{sec:bayes-lr-details}
\textbf{Model}. We consider the setup in \cite[][Sec. 4.1]{DeBortoli2021}. The observed data is $\mathcal{D} = \{x_{i}, y_{i}\}_{i=1}^{N}$, where $x_i\in\mathbb{R}^{d_x}$ are $d_x$-dimensional real-valued covariates, and $y_i\in\{0,1\}$ are binary class labels. We assume the labels $y_i$ are conditionally independent given the features $x_i$ and the regression weights $z\in\mathbb{R}^{d_z}$, with $p(\{y_i, x_i\}| z) = \sigma(x_i^T z)^{y_i}[1-\sigma(x_i^T z)]^{1-y_i}$ for $i\in[N]$, where $\sigma(u):=e^u/(1+e^{u})$ denotes the standard logistic function. We place a Gaussian prior on the latent weights, $p(z) =\mathcal{N}(z|\theta \mathbf{1}_{d_z}, 5\mathbf{1}_{dz})$, for some real parameter $\theta\in\mathbb{R}$ which we would like to estimate. The model is thus
\begin{equation}
    p_{\theta}(z,\mathcal{D}) = \mathcal{N}(z|\theta 
    \mathbf{1}_{d_z},5\mathbf{1}_{d_z}) \prod_{i=1}^N \sigma(x_i^T z)^{y_i}[1-\sigma(x_i^T z)]^{1-y_i}.
\end{equation}
\textbf{Data}. We fit this model using the Wisconsin Breast Cancer dataset \cite{Wolberg1990}. This dataset contains 683 datapoints, each with nine features $x_i\in\mathbb{R}^{9}$ extracted from a digitized image of a fine needle aspirate of a
breast mass, and a label $y_i\in\{0,1\}$ indicating whether the mass is benign ($y_i=0$) or malign ($y_i = 1$). We normalize the features so that each has mean zero and unit standard deviation across the dataset, and use a random 80-20 train-test split.

\textbf{Implementation Details}. For the results in Fig. \ref{fig:bayes_lr}, following \cite{DeBortoli2021,Kuntz2023}, we initialize $\theta_0 = 0$ and $z_0^{i}\sim\mathcal{N}(0,1)$. In Fig. \ref{fig:bayes_lr_a} and Fig. \ref{fig:bayes_lr_b}, we use a learning rate of $\gamma=0.02$ for PGD, PMGD, and SOUL, and a learning rate of $\gamma=0.2$ for SVGD EM, which typically obtains its best performance for a higher learning rate than the other methods (see, e.g., Fig. \ref{fig:toy_1a} or Fig. \ref{fig:toy_2c}). We run each algorithm with $N=100$ particles for $T=800$ iterations. In Fig. \ref{fig:bayes_lr_c}, we run each algorithm using a grid of 50 logarithmically spaced learning rates $\gamma\in[10^{-10}, 10^{2}]$, using $N=20$ particles and $T=400$ iterations. We repeat each experiment over 10 random test-train splits of the data.

\subsection{Bayesian logistic regression (alternative model)}
\label{sec:bayes-lr-alt-details}
In App. \ref{sec:bayes-lr-alt-results}, we present results for an alternative Bayesian logistic regression model, which were omitted from the main text due to space constraints. Here, we provide full details of this model.

\textbf{Model}. We now follow the setup in \cite{Gershman2012}. Similar to before, the observed data is given by $\mathcal{D} = \{x_{i}, y_{i}\}_{i=1}^{N}$, where $x_i\in\mathbb{R}^{d_x}$ are $d_x$-dimensional real-valued covariates, and $y_i\in\{0,1\}$ are binary class labels. Meanwhile, the latent variables are now given by $z = \{w,  \log \alpha\}$, consisting of a $d_x$-dimensional real-valued regression coefficients $w_k\in\mathbb{R}^{d_x}$, and a positive precision parameter $\alpha\in\mathbb{R}_{+}$. Finally, the parameters are given by $\theta = (\log a, \log b)$, for positive $a,b\in\mathbb{R}_{+}$. 

We assume that the labels $y_i$ are conditionally independent given the features $x_i$ and the regression weights $z\in\mathbb{R}^{d_z}$, with $p(\{y_i, x_i\}| z) = \sigma(x_i^T z)^{y_i}[1-\sigma(x_i^T z)]^{1-y_i}$ for $i\in[N]$, where $\sigma(u):=e^u/(1+e^{u})$ denotes the standard logistic function. We now also assume that $p(w_k | a) = N(w_k; 0, \alpha^{-1})$ for $k\in[d_x]$, and place a Gamma prior on $\alpha$, so $p(\alpha) = \Gamma(\alpha| a, b)$. In this case, the model is given by 
\begin{equation}
    p_{\theta}(z,x) = \operatorname{Gamma}(\alpha| a, b) \prod_{k=1}^{d_x} N(w_k; 0, \alpha^{-1}) \prod_{i=1}^N \sigma(x_i^T z)^{y_i}[1-\sigma(x_i^T z)]^{1-y_i}.
\end{equation}

\textbf{Data}. We fit this model to several datasets from the UCI Machine Learning repository \cite{Dua2019}: the Covertype dataset, which contains 581012 datapoints and 54 attributes, the Banknote dataset, which contains 1372 datapoints and 4 features, and the Cleveland heart disease dataset, which contains 303 datapoints and 13 features. We split the dataset into train-test sets using a 70-30 train-test split.

\textbf{Implementation Details}. For the results in Fig. \ref{fig:logistic_regression_2_auc}, we initialize $a_0 = 1$, $b_0 = 0.01$, $\alpha_0^{i} \sim \mathrm{Gamma}(a_0,b_0)$, and $\smash{w_{0}^{i} \sim \mathcal{N}(0,{1}{/\alpha_{0}^{i}})}$. We run each algorithm using $N=10$ particles for $T=1000$ iterations, over a grid of logarithmically spaced learning rates $\gamma\in[10^{-9},10^{1}]$. We repeat each experiment over 10 random train-test splits of the data.

\subsection{Bayesian neural network}
\label{sec:bayes-nn-alt-details}

\textbf{Model}. We consider the setting described in \citep[Setion 6.5]{Yao2022a} and \citep[Sec. 3.2]{Kuntz2023}, applying a neural network to classify MNIST images \citep{Lecun1998}. In particular, we use a Bayesian two-layer neural network with tanh activation functions, a softmax output layer, and Gaussian priors on the weights. 

In this case, the observed data is of the form $\mathcal{D} = \{x_i,y_i\}_{i=1}^N$, where $x_i\in\mathbb{R}^{784}$ are $28\times 28$ grayscale images of handwritten digits, and $y_i\in\{0,1\}$ are labels denoting whether the image is a 4 or 9. We normalise the 784 features so that each has mean zero and unit standard deviation across the dataset.  We assume the labels $y_i$ are conditionally independent given the features $x_i$ and the network weights $z = (w,v)$, where $w\in\mathbb{R}^{d_w=40\times 784}$ and $v\in\mathbb{R}^{d_v=2\times 40}$, and that 
\begin{equation}
    p(y_i|x_i,w) =\exp \textstyle \left(\sum_{j=1}^{40} v_{ij} \tanh\left(\sum_{k=1}^{784} w_{jk}x_{ik}\right) \right)
\end{equation}
for $i\in[n]$. We then place Gaussian priors on the weights of input layer and the output layer, viz $p(w_k)=\mathcal{N}(w_k|0, e^{2\alpha})$, for $k\in[784]$ and $p(v_j)=\mathcal{N}(v_j|0,e^{2\beta})$ for $j\in[40]$. In this case, rather than assigning a hyperprior to $\alpha$ and $\beta$, we will learn them from data. Thus, our model takes the form
\begin{equation}
    p_{\theta}(z,\mathcal{D}) = \prod_{j=1}^{40} \mathcal{N}(w_k|0, e^{2\alpha}) \prod_{k=1}^{784}\mathcal{N}(v_k|0,e^{2\beta})\prod_{i=1}^N p(y_i|x_i,w)
\end{equation}
where $z=(w, v) \in \mathbb{R}^{40\times 784 + 2\times 40}$, and $\theta = (\alpha,\beta)$.

\textbf{Data}. We use the MNIST dataset \cite{Lecun1998}, which contains 70'000 $28 \times 28$ grayscale images $x_i\in\mathbb{R}^{784}$ of handwritten digits between 0 and 9, each with an accompanying label $y_i\in \{0,1,\dots,9\}$. Following \cite{Kuntz2023,Yao2022a}, we subsample 1000 points with labels 4 and 9. We normalize the features so that each has mean zero and unit standard deviation across the dataset, and split the data using a random 80-20 train-test split.

\textbf{Implementation Details}. For the results in Fig. \ref{fig:bnn_mnist_compare_lr} - \ref{fig:bayes_nn_mnist}, we initialize $\alpha_0=0$, $\beta_0=0$, and $w_k^{i}\sim\mathcal{N}(0,e^{2\alpha_0})$, $v_j\sim\mathcal{N}(0,e^{2\beta_0})$, and run each algorithm for $T=500$ iterations. In addition, in Fig. \ref{fig:bayes_nn_mnist}, we use $N=10$ particles, and run each algorithm for 50 logarithmically spaced learning rates  $\gamma\in[10^{-9},10^{1}]$. We repeat each experiment over 5 (Fig. \ref{fig:bnn_mnist_compare_lr} - \ref{fig:bnn_mnist_compare_methods}) or 10 (Fig. \ref{fig:bayes_nn_mnist}) random train-test splits of the data.

\subsection{Bayesian neural network (alternative model)}
\label{sec:bayes-nn-details}
In App. \ref{sec:bayes-nn-add-results}, we present results for an alternative Bayesian neural network model, which were omitted from the main text due to space constraints. Here, we provide full details of this model.

\textbf{Model}. We consider the setup described in \cite{Hernandez-Lobato2015, Liu2016a}. The observed data is of the form $\mathcal{D} = \{x_i,y_i\}_{i=1}^N$, where $x_i\in\mathbb{R}^{d_x}$ are $d_x$-dimensional real-valued covariates, and $y_i\in\mathbb{R}$ are real-valued responses. We assume that the responses $y_i$ are conditionally independent given the covariates $x_i$ and the network weights $w\in\mathbb{R}^{d_w}$, and that $p(y_i|x_i,w) = \mathcal{N}(y_i | f(x_i,w), \gamma^{-1})$ for $i\in[n]$, where $f(x_i,w)$ denotes the output of the neural network. We place a Gaussian prior on each of the neural network weights, namely $p(w_j)=\mathcal{N}(w_j|0, \lambda^{-1})$, for $j\in[d_w]$. To complete our model, we assume a Gamma prior for the inverse covariance $\gamma\in\mathbb{R}_{+}$, and a Gamma hyperprior on the inverse covariance $\lambda\in\mathbb{R}_{+}$, so $p(\gamma) = \mathrm{Gamma}(\gamma|a_{\gamma},b_{\gamma})$ $p(\lambda) = \mathrm{Gamma}(\lambda|a_{\lambda},b_{\lambda})$. Our model thus takes the form
\begin{equation}
    p_{\theta}(z,\mathcal{D}) = \mathrm{Gamma}(\gamma|a_{\gamma},b_{\gamma}) \mathrm{Gamma}(\lambda|a_{\lambda},b_{\lambda}) \prod_{i=1}^N \prod_{j=1}^{d_z} \mathcal{N}(y_i | f(x_i,w), \gamma^{-1}) \mathcal{N}(w_j|0, \lambda^{-1})
\end{equation}
where $z=(w, \log \lambda, \log \gamma) \in \mathbb{R}^{d_w+1+1}$, and $\theta = (\log a_{\gamma}, \log b_{\gamma}, \log a_{\lambda}, \log b_{\lambda})$. Rather than fixing the parameters as in \cite{Hernandez-Lobato2015,Liu2016a}, we allow these parameters to be learned from the data.

\textbf{Data}. We fit the Bayesian neural network using several UCI datasets \cite{Dua2019}. The number of datapoints varies from 506 in the Boston Housing (\emph{Boston}) dataset, to 11934 in the Naval Propulsion (\emph{Naval}) dataset. The number of features is between 4 in the Combined Cycle Power Plant (\emph{Power}) to 16 in the Naval Propulsion dataset (\emph{Naval}). In each case, we normalize the features so that each has mean zero and unit standard deviation across the dataset, and split the data using a random 90-10 train-test split.

\textbf{Implementation Details}. For the results in Fig. \ref{fig:bayes_nn_additional}, we use a neural network with a single hidden layer and with 50 hidden units, and a $\mathrm{Relu}$ activation function. We use Adagrad \cite{Duchi2011} to automatically adjust the learning rates of SVGD EM and PGD. We initialize the parameters by setting $a_{\gamma_0}=1$, $b_{\gamma_0}=0.1$, $a_{\lambda_0}=1$, $b_{\lambda_0}=0.1$. Meanwhile, we initialize the particles for the precisions as $\lambda\sim\mathrm{Gamma}(a_{\lambda_0},b_{\lambda_0})$ and $\gamma\sim\mathrm{Gamma}(a_{\gamma_0},b_{\gamma_0})$, and for the weights $w$ according to zero-mean Gaussians with variance given by the reciprocal of the input dimension (the latent dimension for the first layer, and the number of hidden units for the second). We run each algorithm using $N=20$ particles for $T=1000$ iterations, over a logarithmically spaced grid of 30 learning rates $\gamma\in[10^{-9}, 10^{1}]$. We repeat each experiment over 10 random train-test splits of the data.

\subsection{Latent space network model}
\label{sec:network-details}

\textbf{Model}. We consider the latent space network model first introduced in \cite{Hoff2002}, where, in the context of social networks, each entity is represented by a node of the network. Edges between nodes indicate connections between entities. We restricted ourselves to binary undirected graphs, where an edge in the network between nodes $i$ and $j$ is represented by a binary indicator, $y_{i,j}=\{0,1\}$.

The latent space network model \cite{Hoff2002,Turnbull2019} is popular network model to simplify complex network data by embedding the network's nodes in a lower-dimensional space $z$. A general form of this model is,
\begin{align}
\label{eq:network-model}
Y_{i,j} &\sim \text{Bernoulli}(p_{ij}) \quad i \neq j; \quad i,j=1,\ldots, n \\
\text{logit}(p_{ij}) &= \theta + d(z_i,z_j), \nonumber
\end{align}
where $d(\cdot,\cdot)$ is a function of the similarity between the latent variables. In this paper, we let $d(z_i,z_j) = ||z_i-z_j||$ be the Euclidean distance between $z_i$ and $z_j$, although other similarity metrics which preserve the triangle inequality are also possible.

\textbf{Data}. We use a dataset curated by \cite{Beveridge2018} from fan scripts on the website Genius. The dataset contains the frequency of interactions among characters in the first four season of the TV series Game of Thrones. Each season is represented by a binary undirected network, where an edge indicates that two characters interacted at least 10 times in that season. For the purpose of visualization, we filter out minor characters with fewer than 10 interactions per season. The dataset covers four seasons of the TV series, one network per season,  with $n = 165$ nodes each season.

\textbf{Implementation Details}.  We follow a similar inference scheme as \cite{Hoff2002}. Firstly, we find the maximum likelihood estimates (MLEs) for $(\theta,z)$ by maximizing the likelihood function \eqref{eq:network-model}. Using the MLEs $\smash{(\hat\theta, \hat{z})}$, we initialize PGD, SOUL and Coin EM with $\smash{\theta_0=\hat\theta}$ and for $i=1,\ldots,N$, set $\smash{z_0^{i} \sim \mathcal{N}(\hat{z},0.1)}$. One of the well-known challenges of using latent space network models is that a set of points in Euclidean space will have the same distances regardless of how they are rotated, reflected, or translated. This means that the likelihood \eqref{eq:network-model} is invariant to transformation of the latent positions z. This issue can be resolved by using a Procrustes transformation of the latent variables, relative to a fixed point, which we choose to be the MLE, $\hat{z}$. Therefore, for all $z_t^{i}$ in all algorithms we store and plot $\tilde{z}_t^{i}= \text{argmin}_{Tz} \ \text{tr}(\hat{z} - T z_{t}^{i})^{\top}(\hat{z}-T z_{t}^{i})$. For the SOUL and PGD algorithms we use learning rate parameter $\gamma=0.001$. For all algorithms we use $N=10$ particles at $T=500$ iterations.

\section{ADDITIONAL NUMERICAL RESULTS}
\label{app:additional-numerical-results}
\subsection{Toy hierarchical model}
\label{sec:toy-add-results}

In this section, we provide additional results for the toy hierarchical model studied in Sec. \ref{sec:toy_results}. 

\textbf{Additional results for a different number of particles.} We first consider the impact of varying the number of particles. In Fig. \ref{fig:toy_1_20_particles} and Fig. \ref{fig:toy_1_50_particles}, we replicate the results in Fig. \ref{fig:toy_1}, now using $N=20$ and $N=50$ particles, rather than $N=10$. There is little to distinguish between the two sets of results. In particular, all methods still converge to $\theta_{*}$; and Coin EM and SVGD EM still tend to achieve a lower MSE than PGD and SOUL. We note that the appearance of wide blue bands in Fig. \ref{fig:toy_1_20_particles} and Fig. \ref{fig:toy_1_50_particles} (and Fig. \ref{fig:toy_1}) is an artifact of the plotting resolution, and the logarithmic scale. In truth, the Coin EM parameter estimates oscillate rapidly as they convergence towards the marginal maximum likelihood estimate, giving the appearance of these bands at this resolution. This type of oscillatory convergence is common for coin betting algorithms; see, e.g., \cite[][pg. 36]{Orabona2020} for a very simple example.

\begin{figure}[htb]
  \centering
  \subfigure[$\mathrm{MSE}(\theta_{t})$ vs learning rate. \label{fig:toy_1a_100_particles}]{\includegraphics[width=0.328\textwidth]{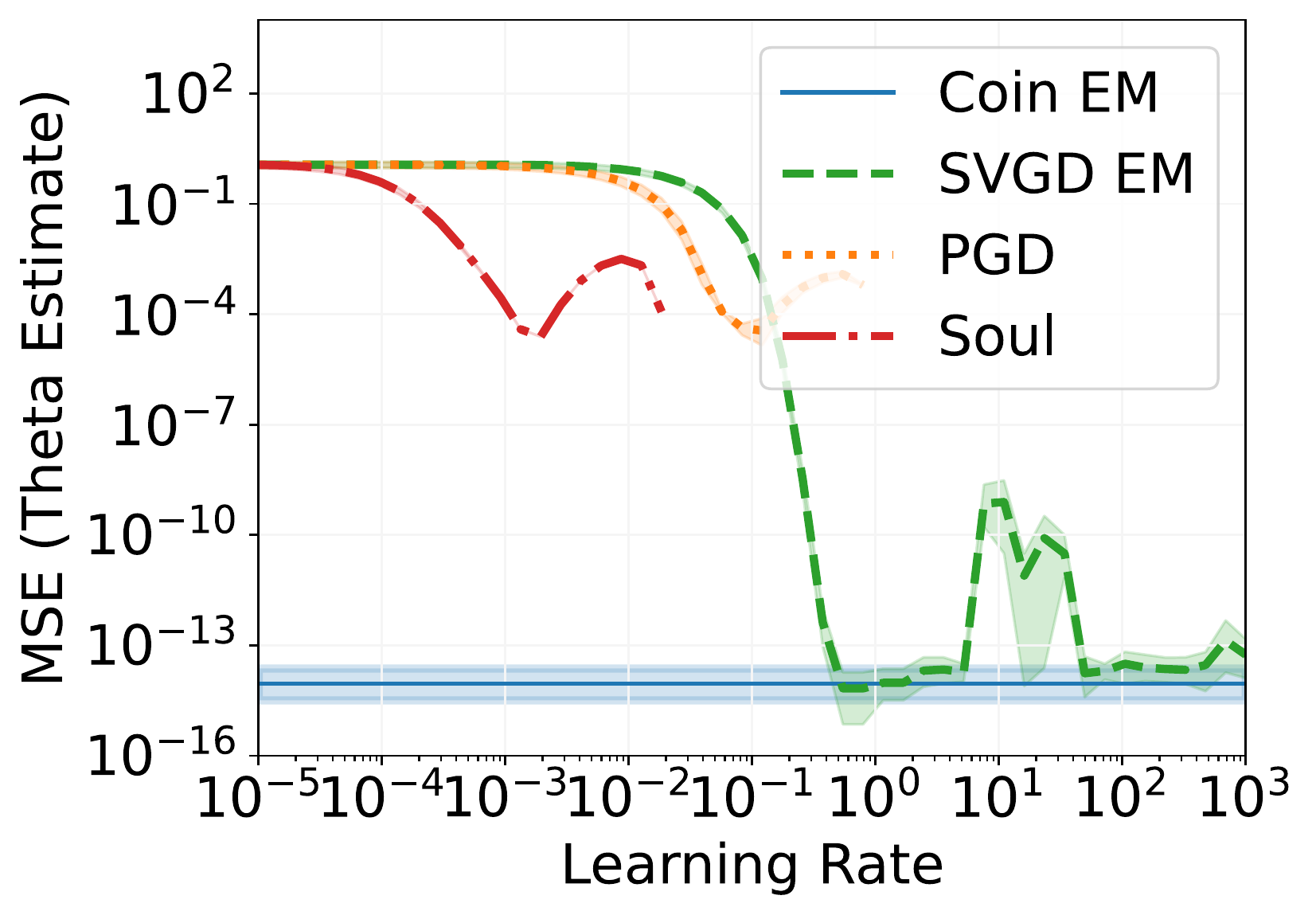}}
  \subfigure[$\mathrm{MSE}(\theta_{t})$ vs $t$.]{\includegraphics[width=0.315\textwidth]{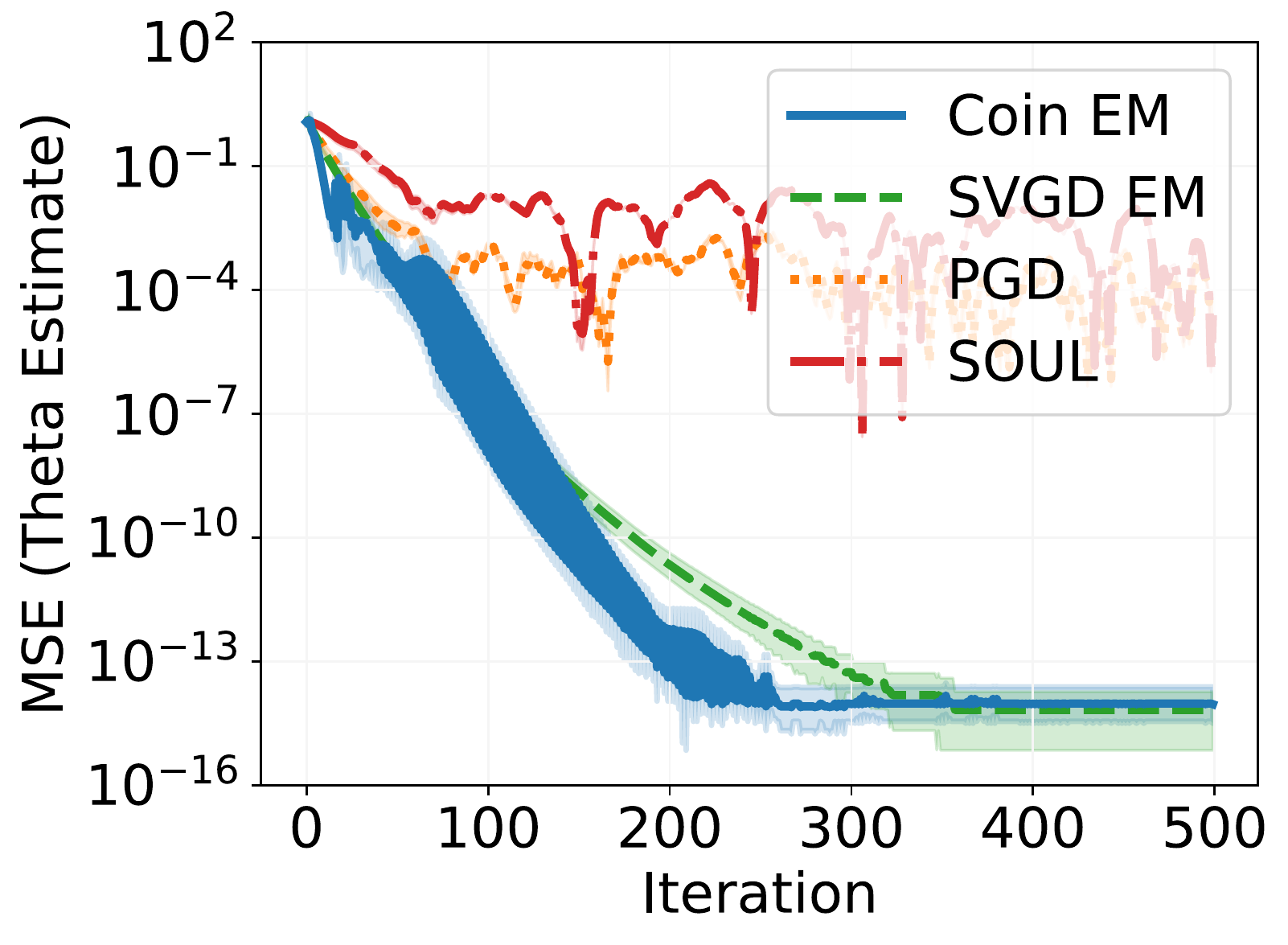}}
  \subfigure[$\mathrm{MSE}\left(\frac{1}{N}\sum_{i=1}^N z_t^{i}\right)$ vs $t$.]{\includegraphics[width=0.315\textwidth]{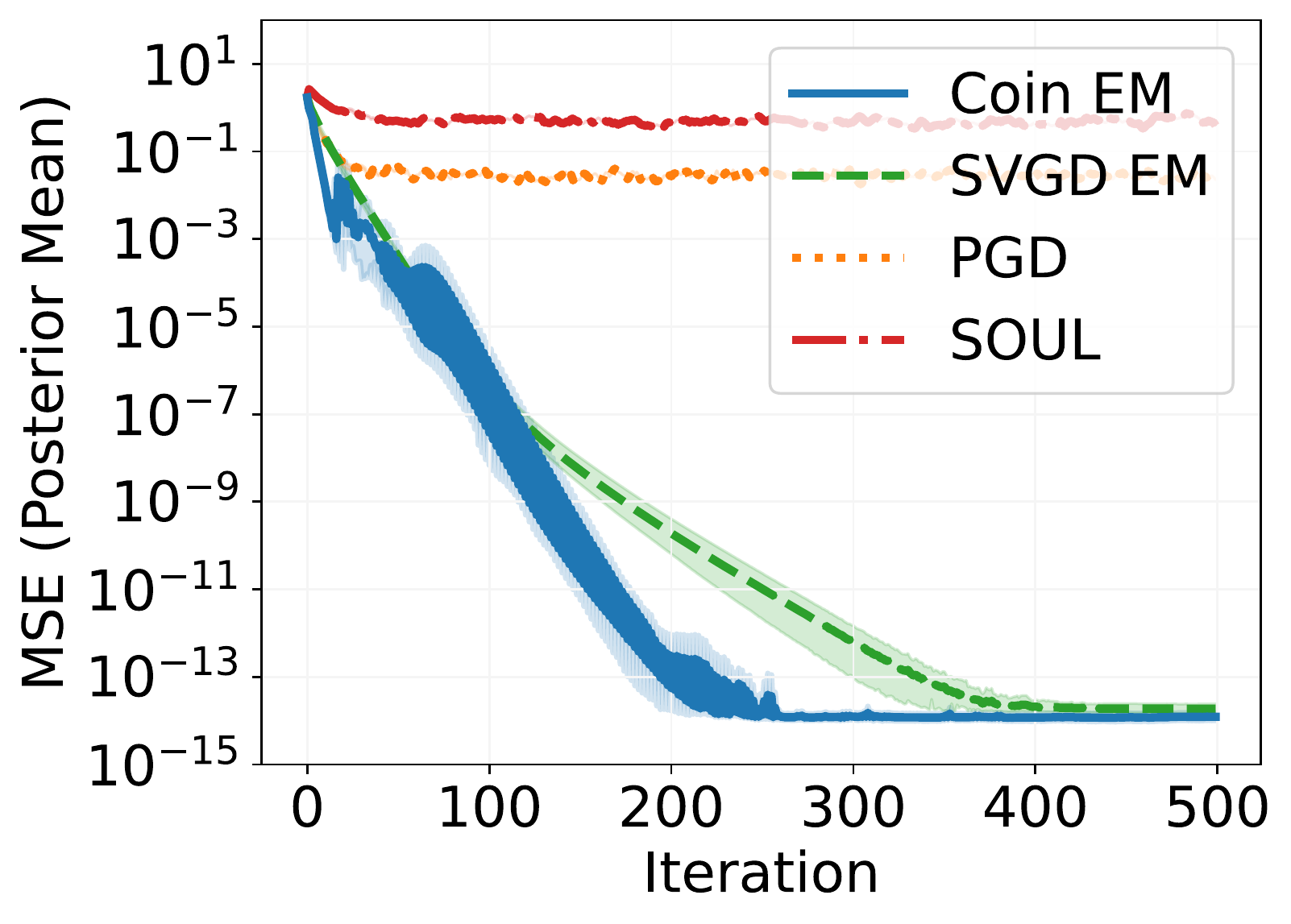}}
  \caption{\textbf{Additional results for the toy hierarchical model with $N=20$ particles.} MSE of the parameter estimate $\theta_{t}$ as a function of the learning rate after $T=500$ iterations (a); and MSE of the parameter estimate (b) and the posterior mean (c) as a function of the number of iterations, using the optimal learning rate from (a).} 
  \label{fig:toy_1_20_particles}
\end{figure}

\begin{figure}[htb]
  \centering
  \subfigure[$\mathrm{MSE}(\theta_{t})$ vs learning rate.\label{fig:toy_1a_50_particles}]{\includegraphics[width=0.325\textwidth]{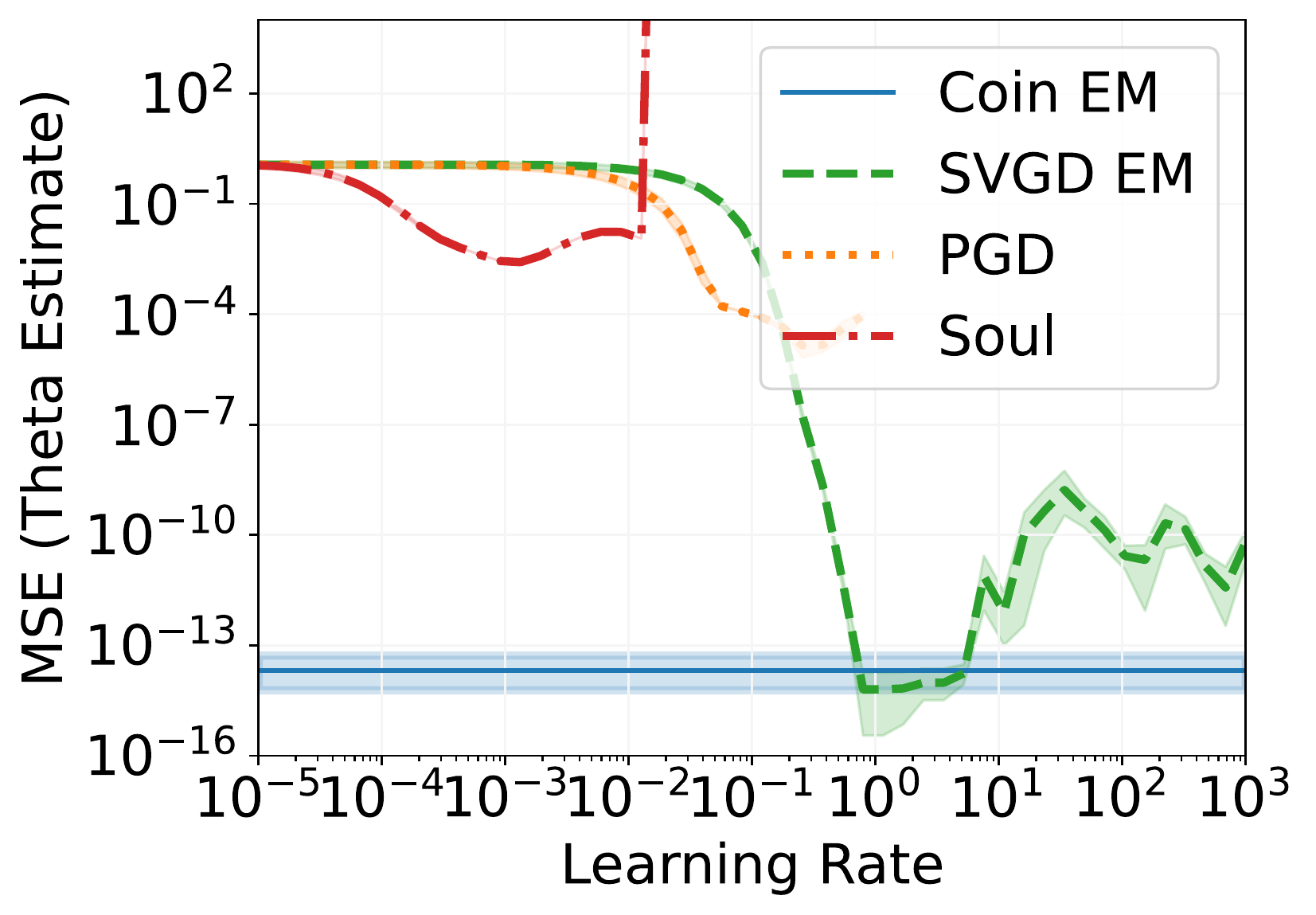}}
  \subfigure[$\mathrm{MSE}(\theta_{t})$ vs $t$.]{\includegraphics[width=0.325\textwidth]{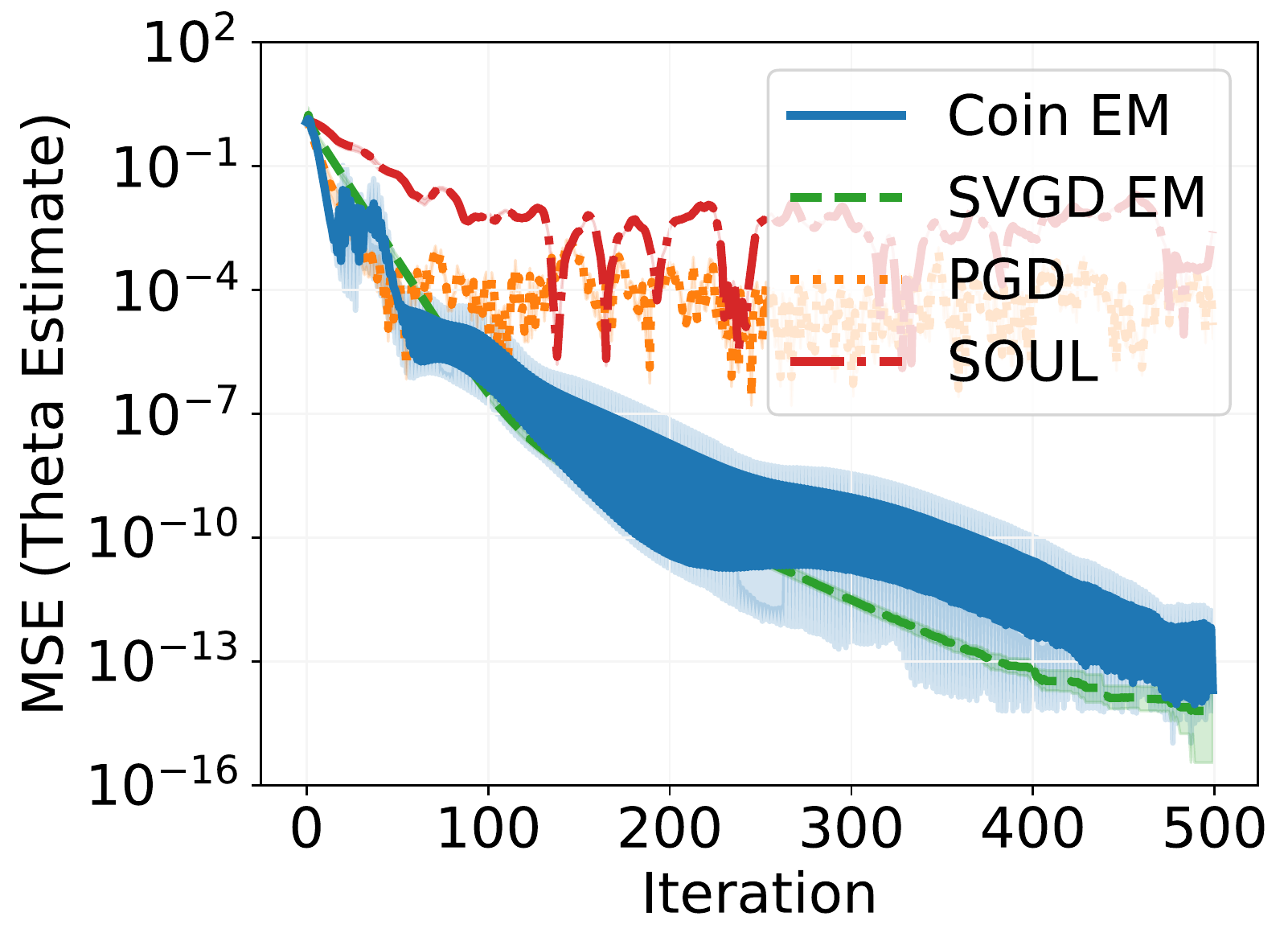}}
  \subfigure[$\mathrm{MSE}\left(\frac{1}{N}\sum_{i=1}^N z_t^{i}\right)$ vs $t$.]{\includegraphics[width=0.325\textwidth]{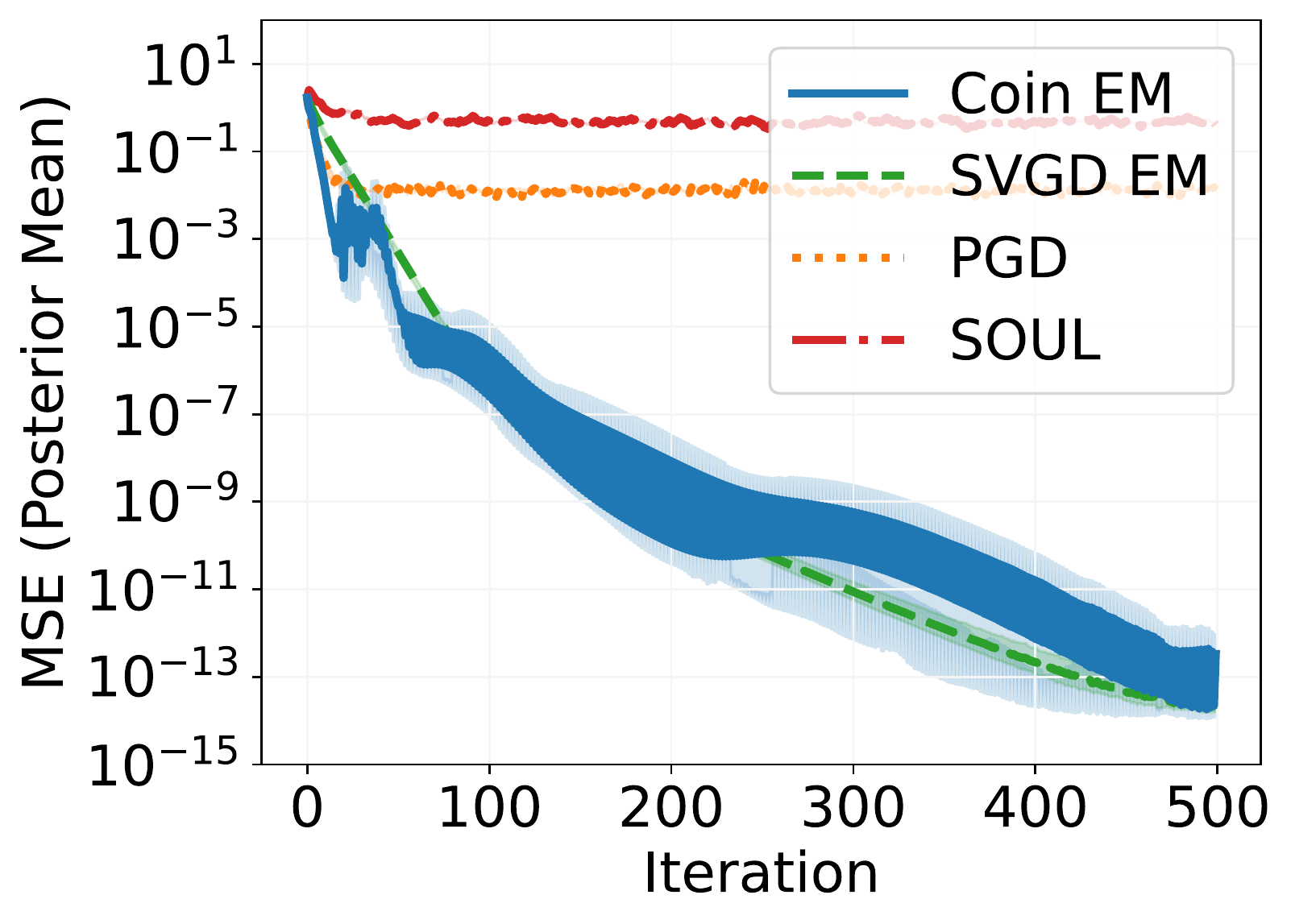}}
  \caption{\textbf{Additional results for the toy hierarchical model with $N=50$ particles.} MSE of the parameter estimate $\theta_{t}$ as a function of the learning rate after $T=500$ iterations (a); and MSE of the parameter estimate (b) and the posterior mean (c) as a function of the number of iterations, using the optimal learning rate from (a).} 
  \label{fig:toy_1_50_particles}
\end{figure}

\textbf{Additional results for different learning rates}. Next, we provide an additional demonstration of how the choice of learning rate can affect the parameter estimates generated by PGD, SOUL, and SVGD EM. In particular, in Fig. \ref{fig:toy_1_different_LR}, we plot the parameter estimates generated by SVGD EM, PGD, and SOUL for three different learning rates: the optimal learning rate as determined by Fig. \ref{fig:toy_1a_50_particles}, a smaller learning rate, and a larger learning rate `at the edge of stability'. The specific learning rates $\{\gamma_{\mathrm{opt}}, \gamma_{\mathrm{small}}, \gamma_{\mathrm{large}}\}$ used in this figure $\{0.79, 0.039, 100.00\}$ for SVGD EM,\footnote{We note that, when SVGD EM is implemented with an adaptive method such as Adagrad \cite{Duchi2011}, as is the case here, it remains stable even for very large values of the learning rate.} $\{0.26,0.013,1.15\}$ for PGD, and $\{0.0013, 0.000066, 0.018\}$ for SOUL. 

\begin{figure}[!htb]
  \centering
  \subfigure[SVGD EM \label{fig:toy_different_lr_SVGD}]{\includegraphics[width=0.325\textwidth]{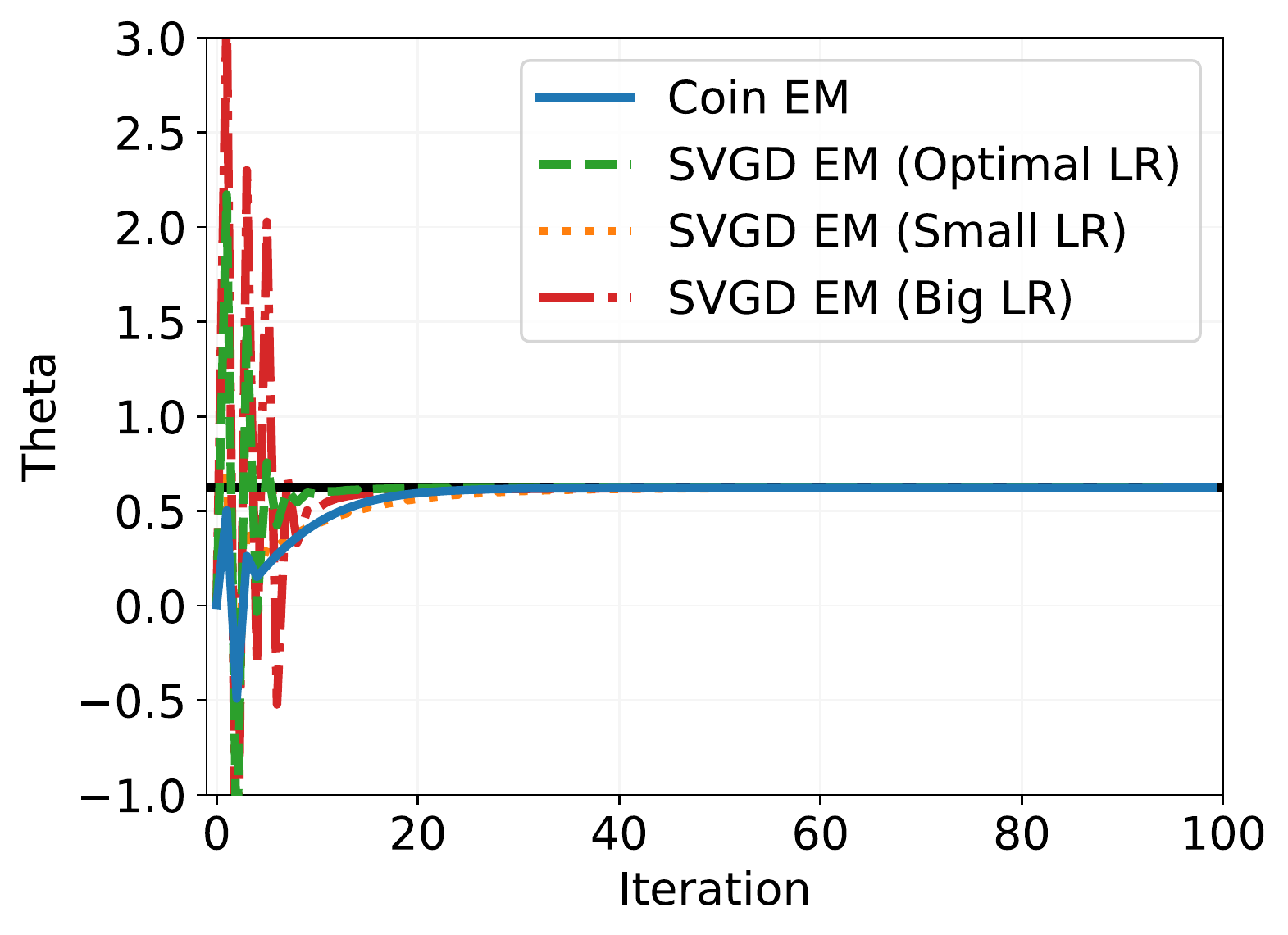}}
  \subfigure[PGD.]{\includegraphics[width=0.325\textwidth]{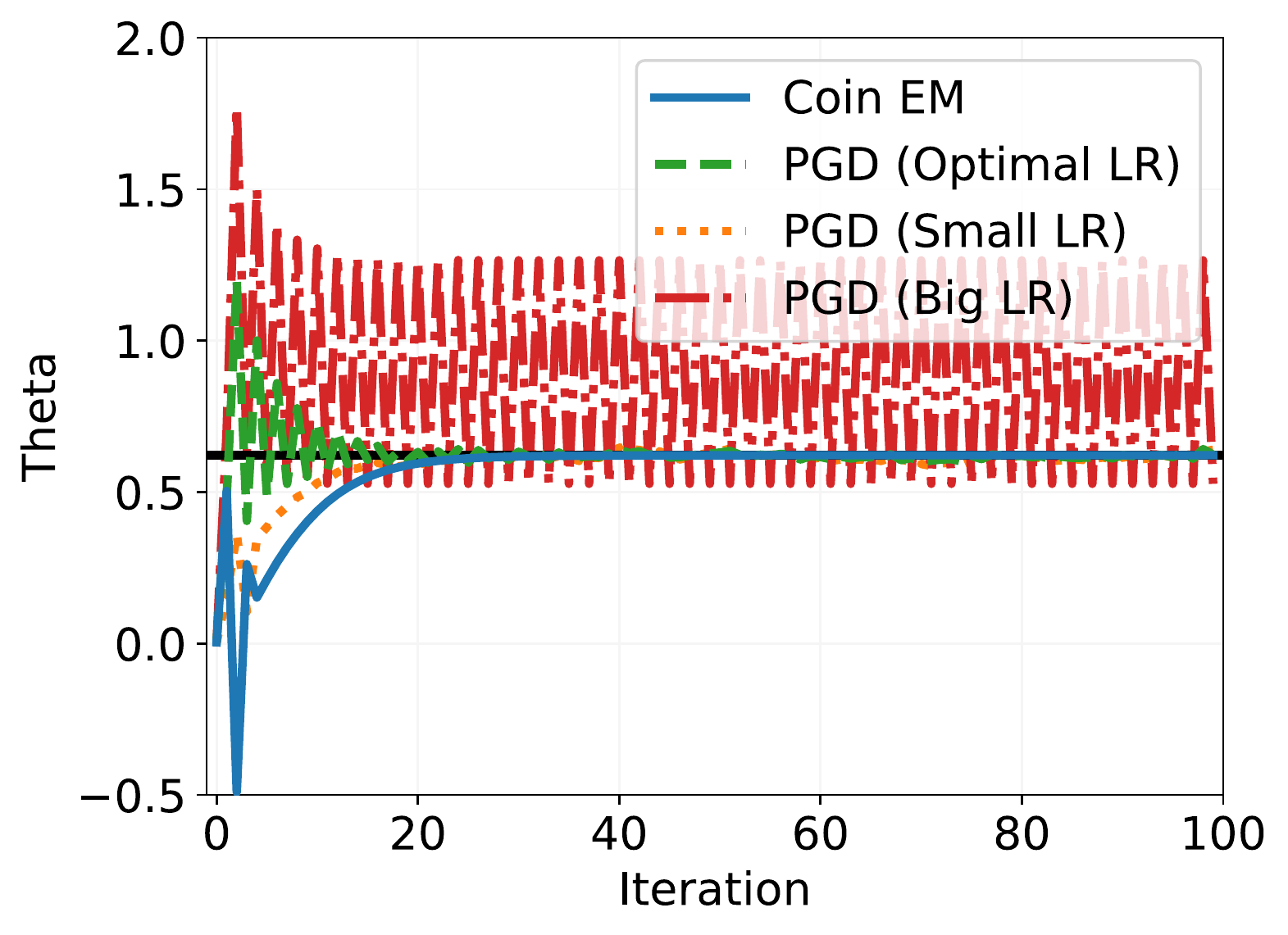}}
  \subfigure[SOUL.]{\includegraphics[width=0.325\textwidth]{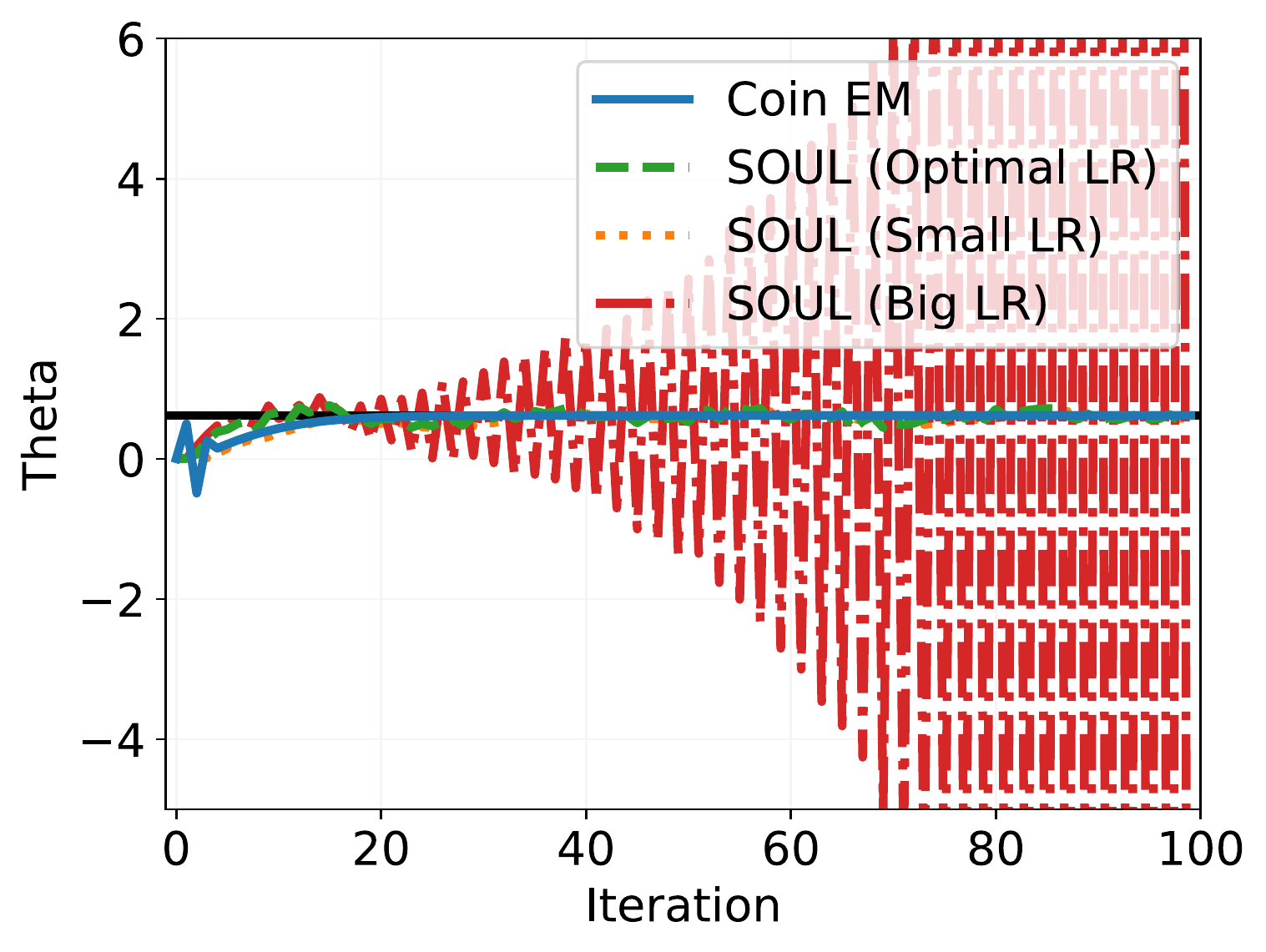}}
  \caption{\textbf{Additional results for the toy hierarchical model with different learning rates.} The sequence of parameter estimates generated by SVGD EM, PGD, and SOUL, for three different learning rates: the `optimal' learning rate from Fig. \ref{fig:toy_1a_50_particles}, a smaller learning rate, and a larger learning rate at the edge of stability.}
  \label{fig:toy_1_different_LR}
\end{figure}

\textbf{Additional results for a different initialization.} We now consider the impact of varying the initialization. In Fig. \ref{fig:toy_1_different_init}, we repeat the experiment in Sec. \ref{sec:toy_results}, but now using an initialization far away from the true parameter $\theta=1$. In particular, we now initialize the parameters using $\theta_0\sim\mathcal{N}(10,0.1^2)$. The results are rather similar to before. In particular, Coin EM and SVGD EM both converge rapidly to the true parameter, and tend to outperform the competitors. 

\begin{figure}[H]
  \centering
  \subfigure[$\mathrm{MSE}(\theta_{t})$ vs learning rate.]{\includegraphics[width=0.326\textwidth]{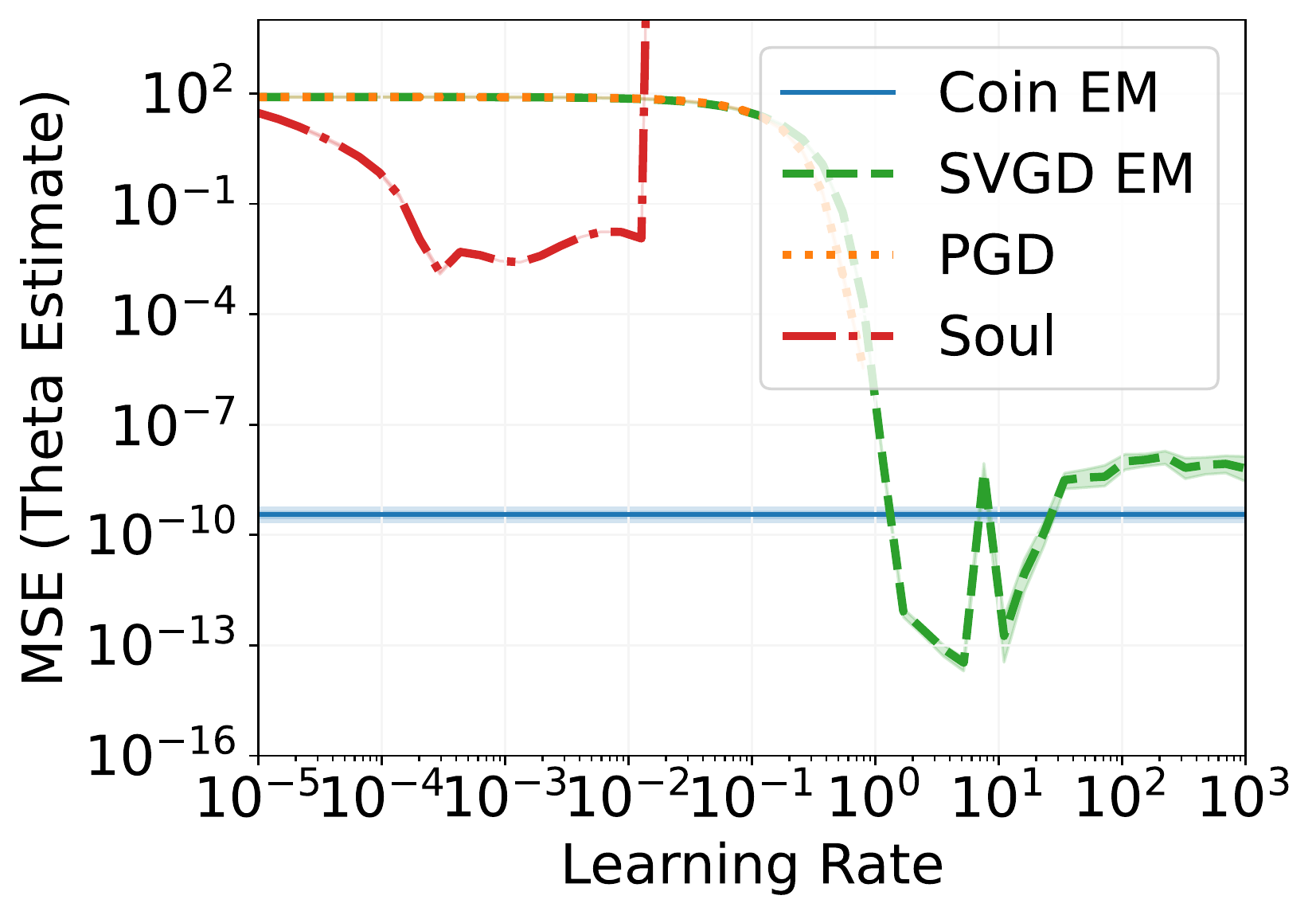}}
  \subfigure[$\mathrm{MSE}(\theta_{t})$ vs $t$.]{\includegraphics[width=0.318\textwidth]{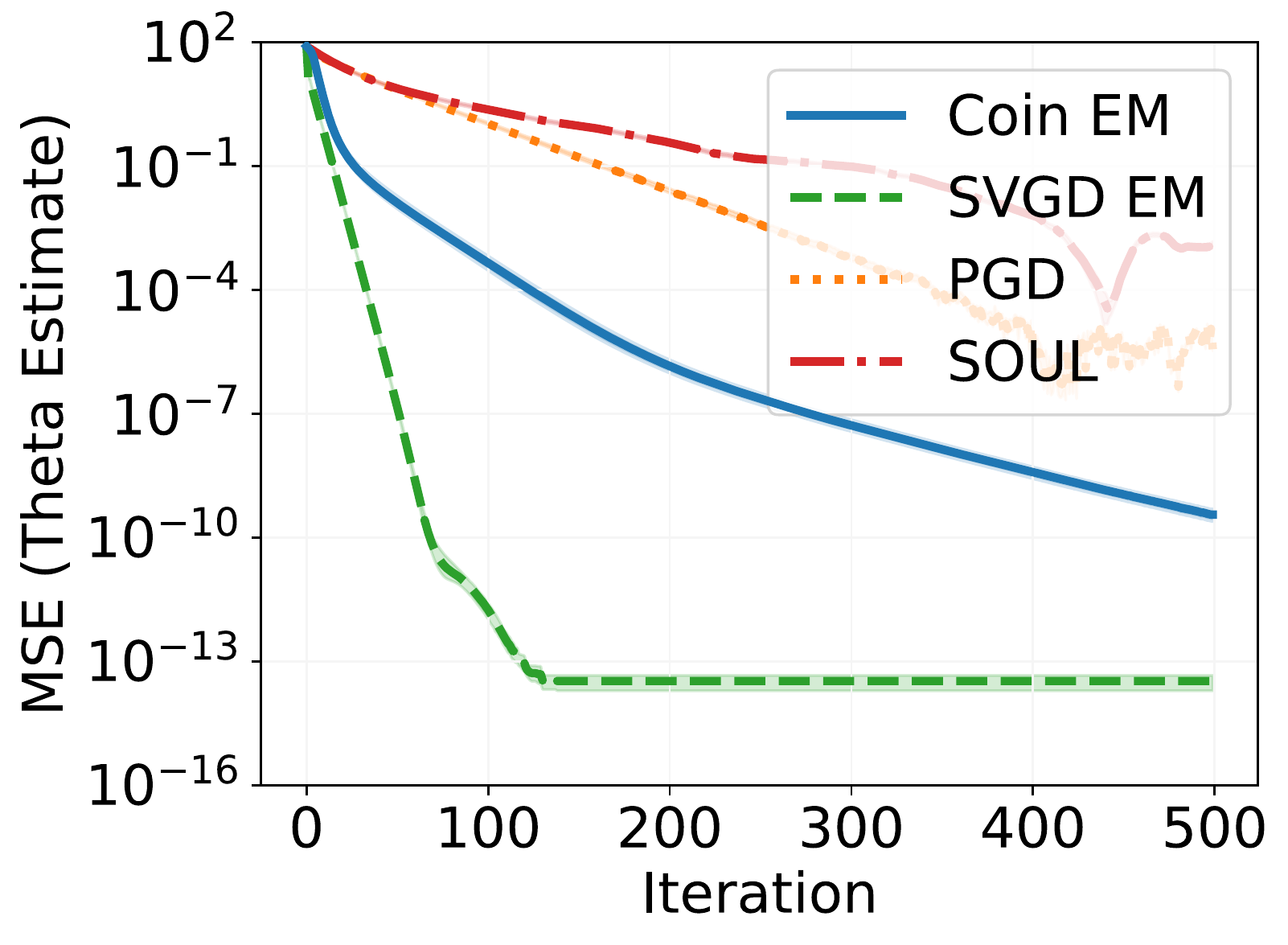}}
  \subfigure[$\mathrm{MSE}\left(\frac{1}{N}\sum_{i=1}^N z_t^{i}\right)$ vs $t$.]{\includegraphics[width=0.318\textwidth]{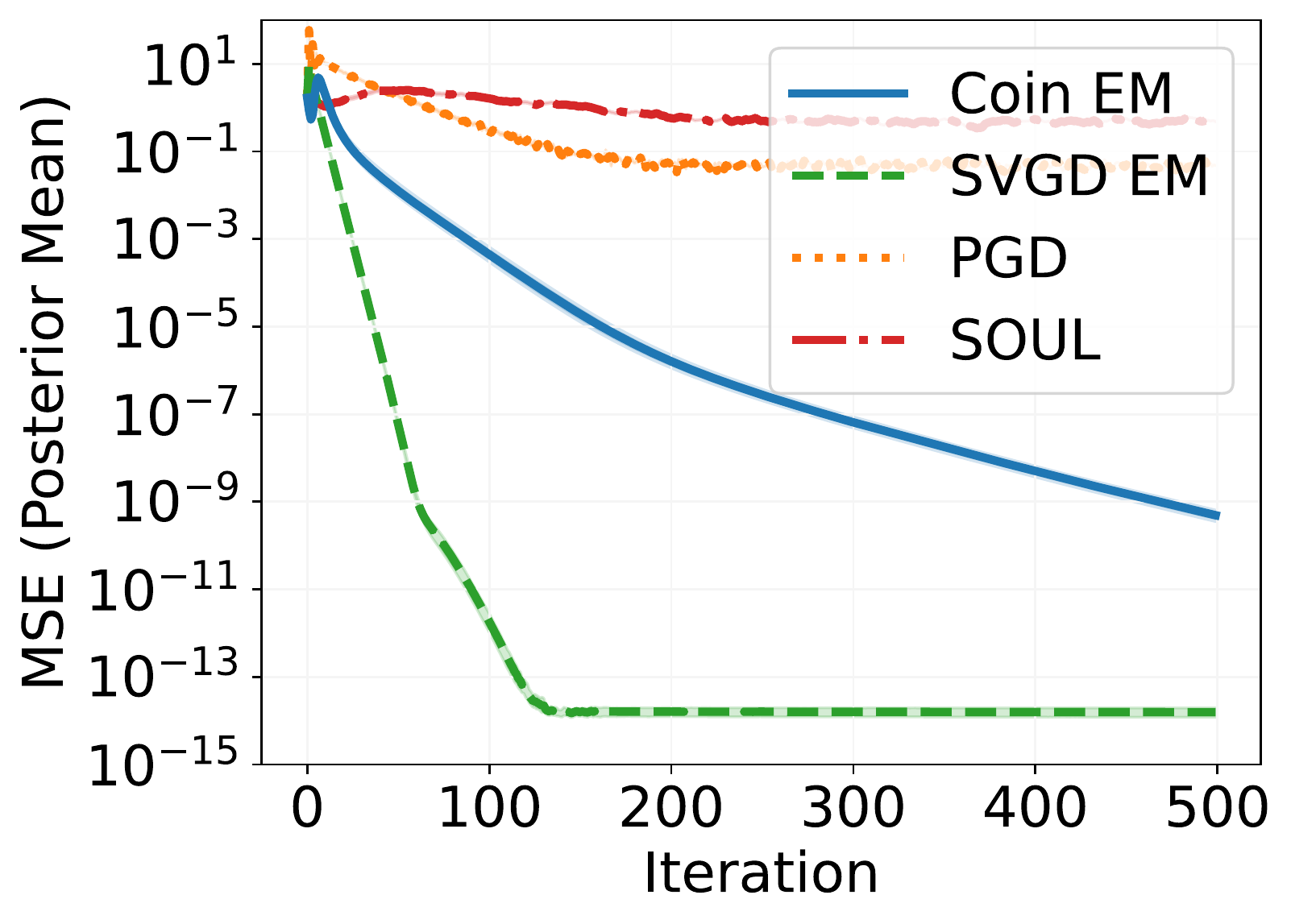}}
  \caption{\textbf{Additional results for the toy hierarchical model with initialization $\theta_0\sim\mathcal{N}(10,0.1)$.} MSE of the parameter estimate $\theta_{t}$ as a function of the learning rate after $T=500$ iterations (a); and MSE of the parameter estimate (b) and the posterior mean (c) as a function of the number of iterations, using the optimal learning rate from (a).}
  \label{fig:toy_1_different_init}
\end{figure}

\subsection{Bayesian logistic regression}
\label{sec:bayes-lr-add-results}
In this section, we present additional numerical results for the Bayesian logistic regression model described in Sec. \ref{sec:bayes_lr_results}. 

\textbf{Additional results for different initializations}. In Fig. \ref{fig:bayes_lr_different_init}, we plot the sequence of parameter estimates ouput by Coin EM, SVGD EM, PGD, PMGD, and SOUL, for different parameter intializations. In particular, we now initialize the parameter at $\theta_0 = 10$ or $\theta_0 = -10$, compared to $\theta_0=0$ in Fig. \ref{fig:bayes_lr_a}. 

In both cases, all of methods converge to a similar value. SOUL is known to obtain accurate estimates of $\theta_{*}$ in this example \cite{DeBortoli2021}, provided the learning rate is suitably small, and thus we use this as a benchmark. In these examples, where the parameter estimate is initialized far from the true parameter, the shorter transient exhibited by Coin EM relative to the other methods is even more evident.
\begin{figure}[htb]
  \centering
  \subfigure[$\theta_0 = 10$.]{\includegraphics[width=0.45\textwidth, trim = 10 10 20 20, clip]{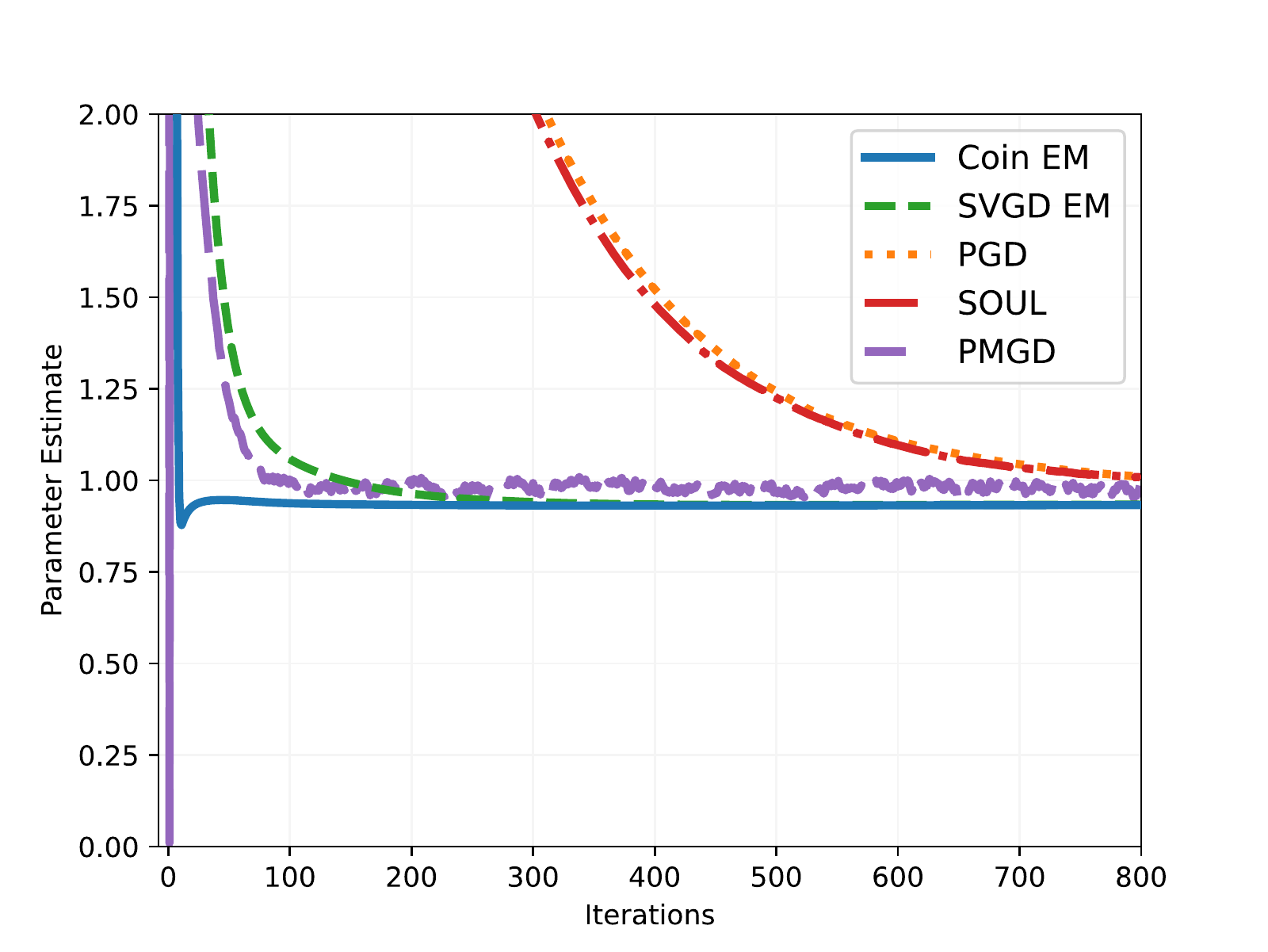}}
  \subfigure[$\theta_0 = -10$.]{\includegraphics[width=0.45\textwidth, trim = 10 10 20 20, clip]{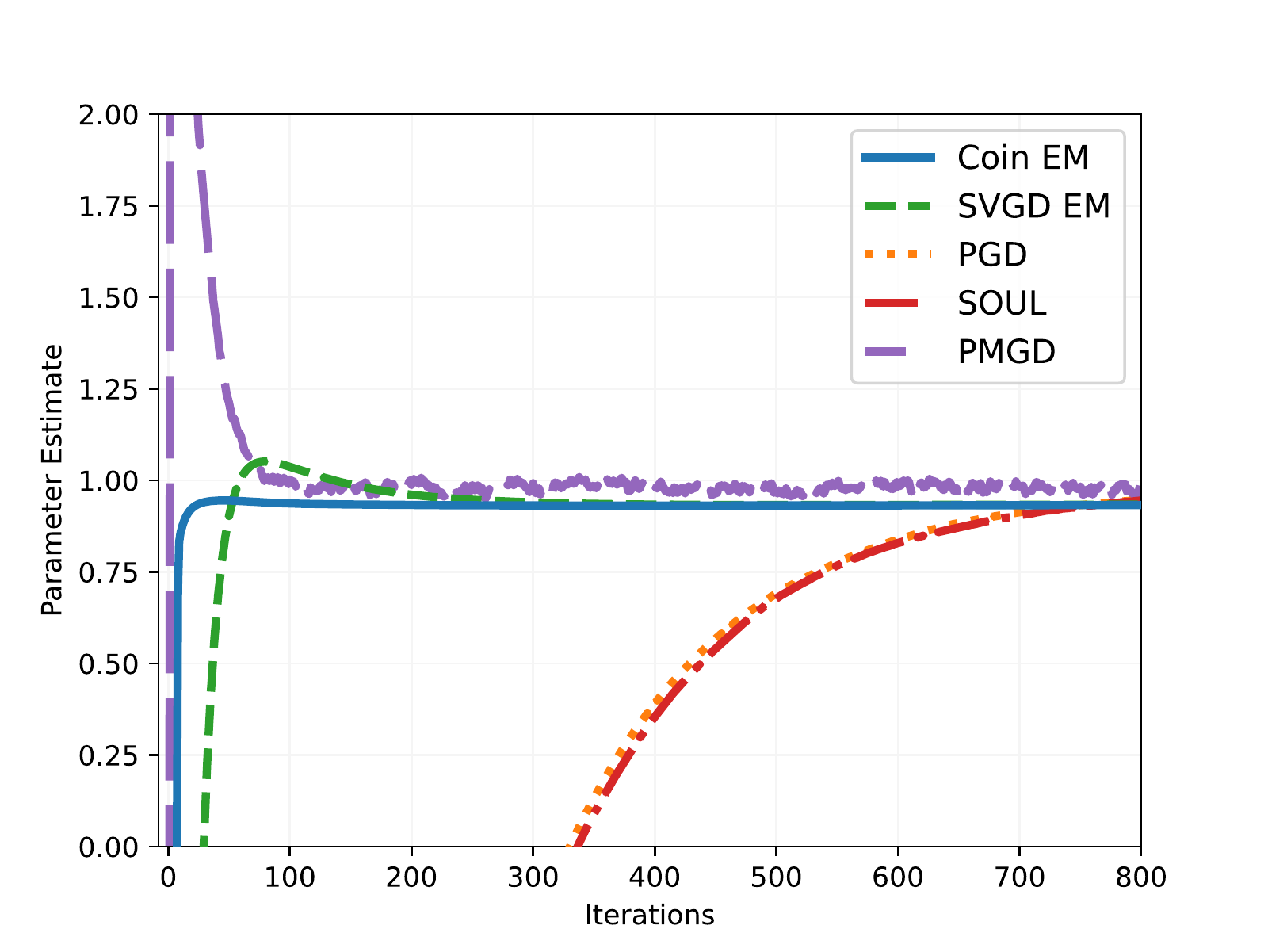}}
  \caption{\textbf{Additional results for the Bayesian logistic regression in Sec. \ref{sec:bayes_lr_results}}. Parameter estimates $\theta_{t}$ from Coin EM, SVGD EM, PGD, SOUL, and PMGD, initialized at (a) $\theta_0 = 10$ and (b)   $\theta_0 = -10$.}
  \label{fig:bayes_lr_different_init}
\end{figure}

\textbf{Additional results for different learning rates}. In Fig. \ref{fig:bayes_lr_different_lr}, we plot the sequence of parameter estimates ouput by Coin EM, SVGD EM, PGD, PMGD, and SOUL, for different learning rates. In particular, we now use smaller learning rates of $\gamma = 0.001$ (PGD, PMGD, SOUL) or $\gamma = 0.005$ (SVGD EM), or larger learning rates of $\gamma = 0.1$ (PGD, PMGD, SOUL) or $\gamma = 0.5$ (SVGD EM). For reference, we also include the results for the original learning rates of $\gamma = 0.02$ (PGD, PMGD, SOUL) or $\gamma = 0.2$ (SVGD EM) in Fig. \ref{fig:bayes_lr_a} (see Sec. \ref{sec:bayes-lr-details}). 

\begin{figure}[htb]
  \centering
  \subfigure[Original Learning Rate.\label{fig:bayes_lr_different_lr_a}]{\includegraphics[width=0.325\textwidth, trim = 0 0 0 0, clip]{figs/type1/lr/experiment_1/param_plot.pdf}}
  \subfigure[Smaller Learning Rate. \label{fig:bayes_lr_different_lr_b}]{\includegraphics[width=0.325\textwidth, trim = 0 0 0 0, clip]{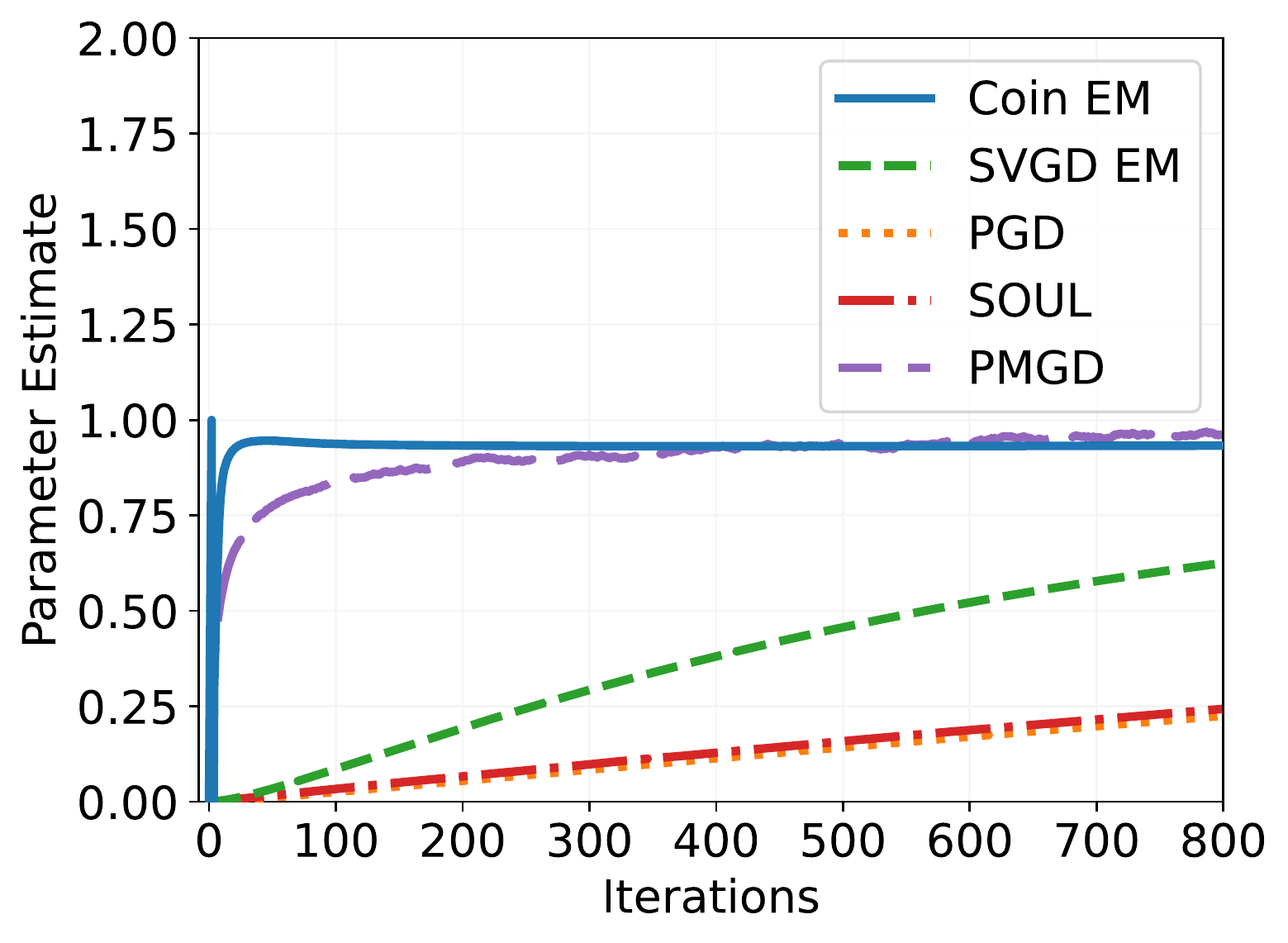}}
  \subfigure[Larger Learning Rate.\label{fig:bayes_lr_different_lr_c}]{\includegraphics[width=0.325\textwidth, trim = 0 0 0 0, clip]{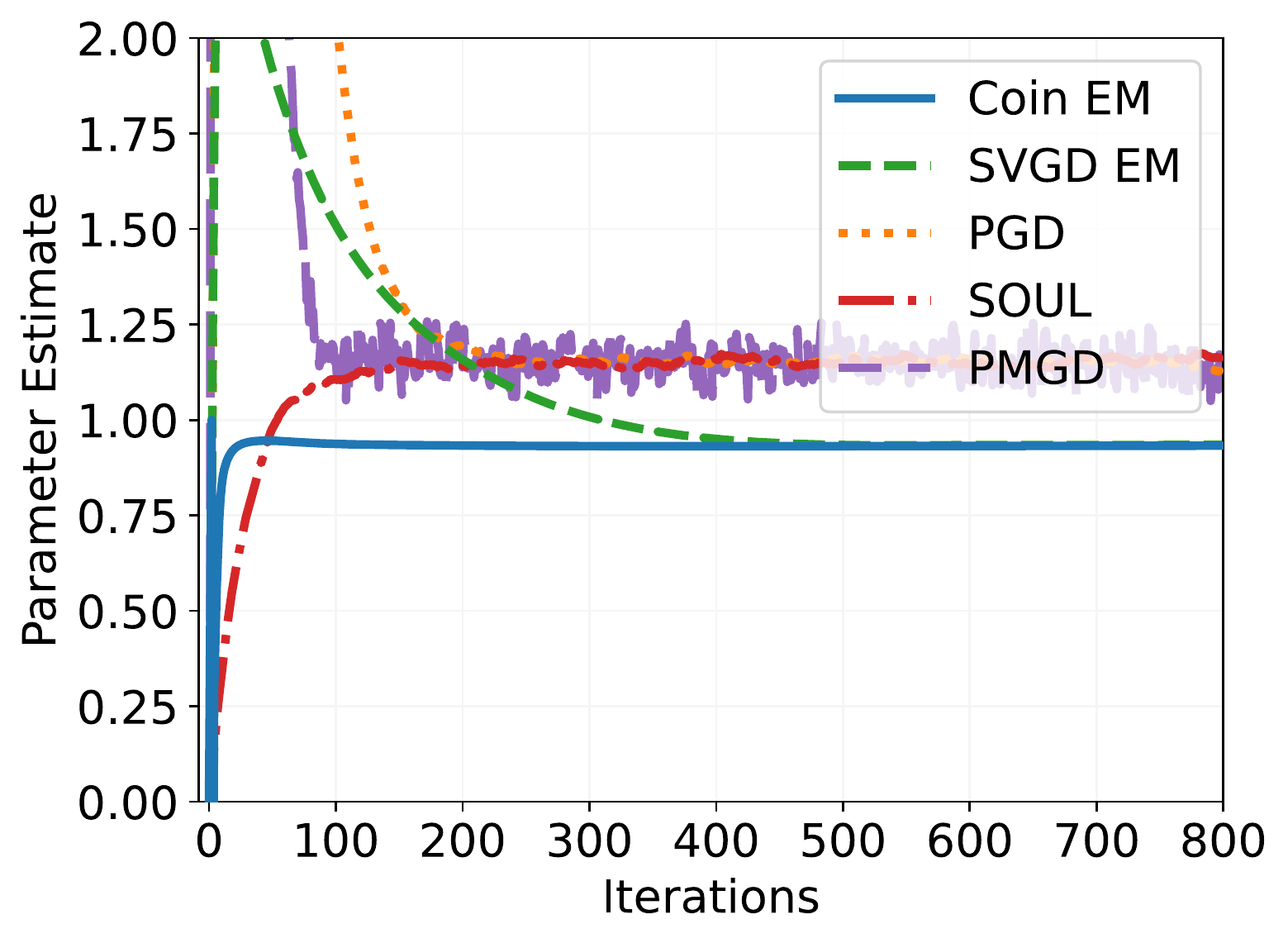}}
  \caption{\textbf{Additional results for the Bayesian logistic regression in Sec. \ref{sec:bayes_lr_results}}. Parameter estimates $\theta_{t}$ from Coin EM, SVGD EM, PGD, SOUL, and PMGD, using (a) smaller learning rates $\gamma = 0.001$ (PGD, SOUL, PMGD) and $\gamma=0.005$ (SVGD EM); and (b) larger learning rates $\gamma = 0.1$ (PGD, SOUL, PMGD) and $\gamma=0.5$ (SVGD EM).}
  \label{fig:bayes_lr_different_lr}
\end{figure}

These figures illustrate the difficulties associated with tuning the learning rate for PGD, PMGD, SOUL and, to a lesser extent, SVGD EM. In particular, if the learning rate is chosen too small (Fig. \ref{fig:bayes_lr_different_lr_b}), then convergence to the true parameter $\theta_{*}$ is painfully slow. On the other hand, if one selects a larger learning rate (Fig. \ref{fig:bayes_lr_different_lr_c}), convergence is more rapid, but one incurs a (significant) bias in the resulting parameter estimates. This is evident if one compares the asymptotic parameter estimates obtained in Fig. \ref{fig:bayes_lr_different_lr_a} and in Fig. \ref{fig:bayes_lr_different_lr_c}. For PGD, PMGD, and SOUL, this bias originates in the bias associated with an Euler-Maruyama discretization of the Langevin dynamics; see, e.g., the discussion in \cite[][Sec. 2]{Kuntz2023}. Interestingly, SVGD EM does not seem to incur this bias to the same extent, despite the error associated with the finite-particle SVGD dynamics. Coin EM, of course, has no dependence on the learning rate, and is consistent across these experiments.

\textbf{Additional results for different numbers of particles}. Finally, we consider the impact of changing the number of particles uses in the latent variable updates on the predictive performance. In Fig. \ref{fig:bayes_lr_different_N}, we repeat the experiment used to generate Fig. \ref{fig:bayes_lr_c}, but now using $N=5,20,100$ particles. In this example, we see that there is little to be gained from increasing the number of particles in terms of the predictive performance, other than a minor increase in the performance of the learning-rate dependent methods (SVGD EM, PGD, SOUL) for sub-optimal choices of the learning rate. In this example, the posteriors are peaked and unimodal - see \cite[][Fig. 2]{DeBortoli2021} - and can be approximated well using even a single particle in the vicinity of the modes.

\begin{figure}[htb]
\vspace{-3mm}
  \centering
  \subfigure[$N=5$.\label{fig:bayes_lr_different_N_a}]{\includegraphics[width=0.325\textwidth, trim = 0 0 0 0, clip]{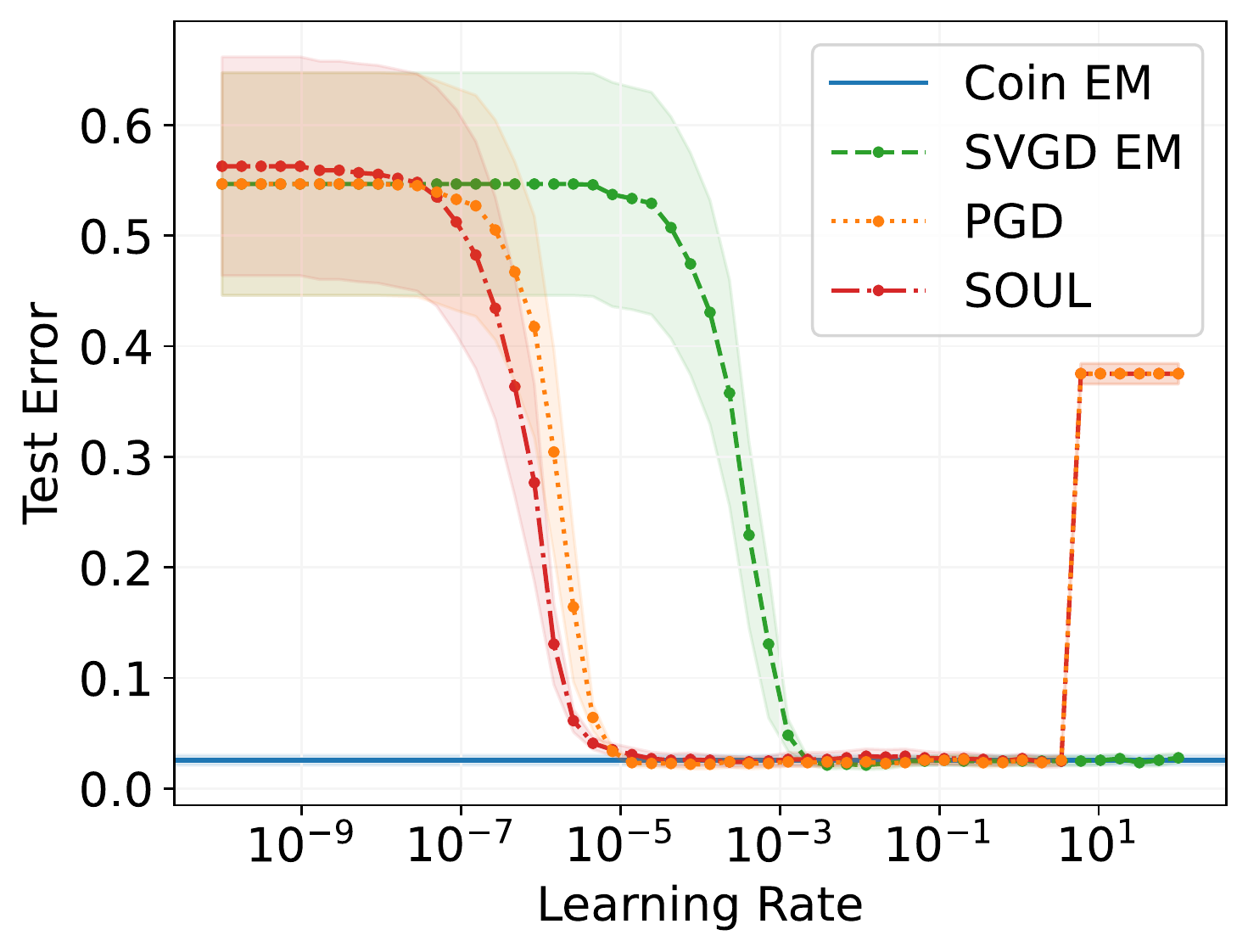}}
  \subfigure[$N=20$.\label{fig:bayes_lr_different_N_b}]{\includegraphics[width=0.325\textwidth, trim = 0 0 0 0, clip]{figs/type1/lr/experiment_1/test_error_vs_lr.pdf}}
  \subfigure[$N=100$.\label{fig:bayes_lr_different_N_c}]{\includegraphics[width=0.325\textwidth, trim = 0 0 0 0, clip]{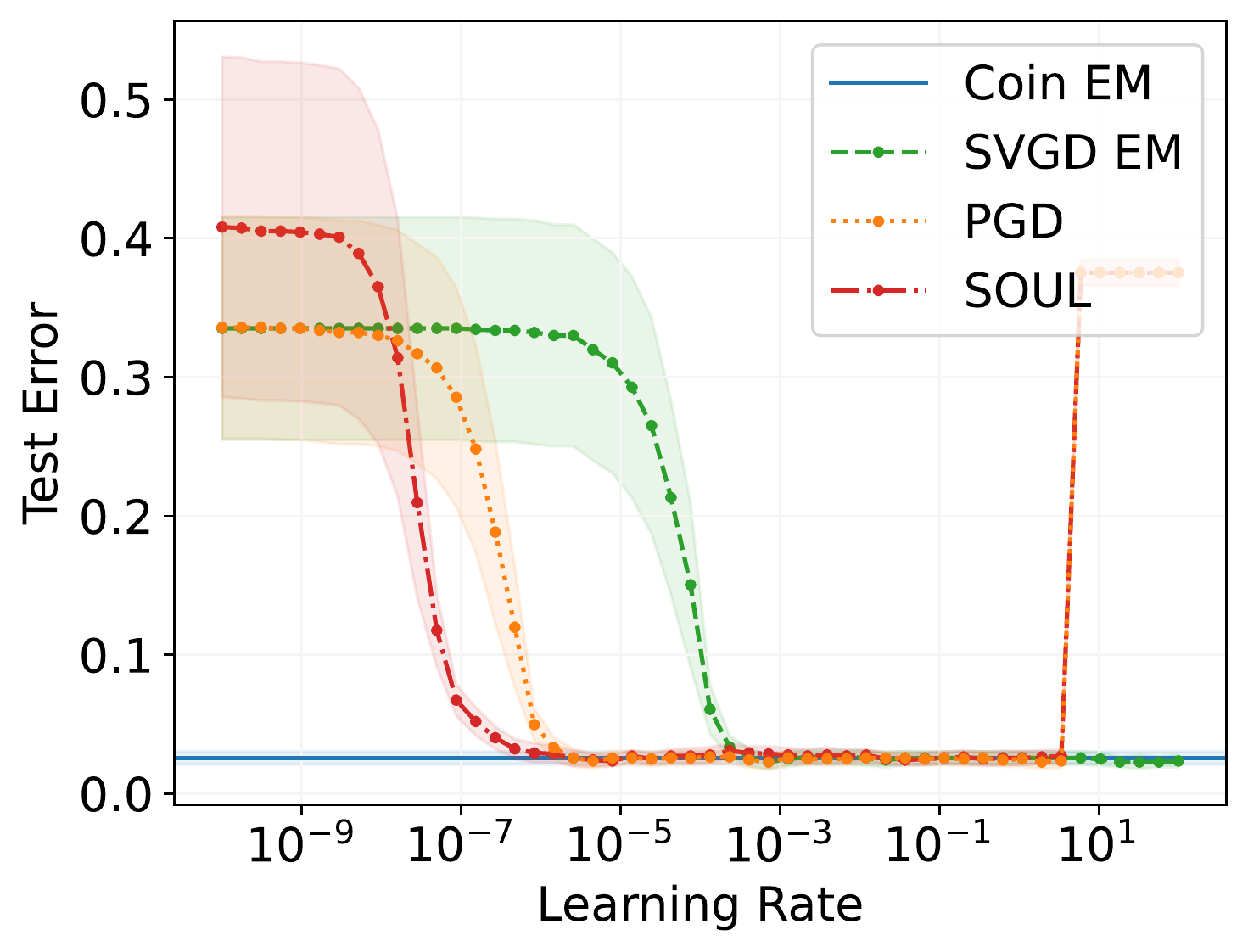}}
  \caption{\textbf{Additional results for the Bayesian logistic regression in Sec. \ref{sec:bayes_lr_results}}. Test error as a function of the learning rate, for different numbers of particles.}
  \label{fig:bayes_lr_different_N}
  \vspace{-3mm}
\end{figure}

\subsection{Bayesian logistic regression (alternative model)}
\label{sec:bayes-lr-alt-results}
We now present numerical results for the alternative Bayesian logistic regression model described in App. \ref{sec:bayes-lr-alt-details}. In Fig. \ref{fig:logistic_regression_2_auc}, we compare SVGD EM and Coin EM against PGD and SOUL, plotting the area under the receiver operator characteristic curve (AUC) as a function of the learning rate, after running each algorithm with 10 particles for 1000 iterations. For each of the datasets considered, the predictive performance of Coin EM is similar to the performance of SVGD EM, PGD and SOUL with well tuned learning rates. In comparison to the Bayesian logistic regression studied in Section \ref{sec:bayes_lr_results}, here the predictive performance of the SVGD EM, PGD, and SOUL is rather more sensitive to the learning rate, particularly for the Covertype dataset. In particular, in this case there is a much smaller range of values for which these methods exhibit performance on-par with, or superior to, Coin EM.

\begin{figure}[htb]
  \centering
  \subfigure[Covertype]{\includegraphics[width=0.325\textwidth]{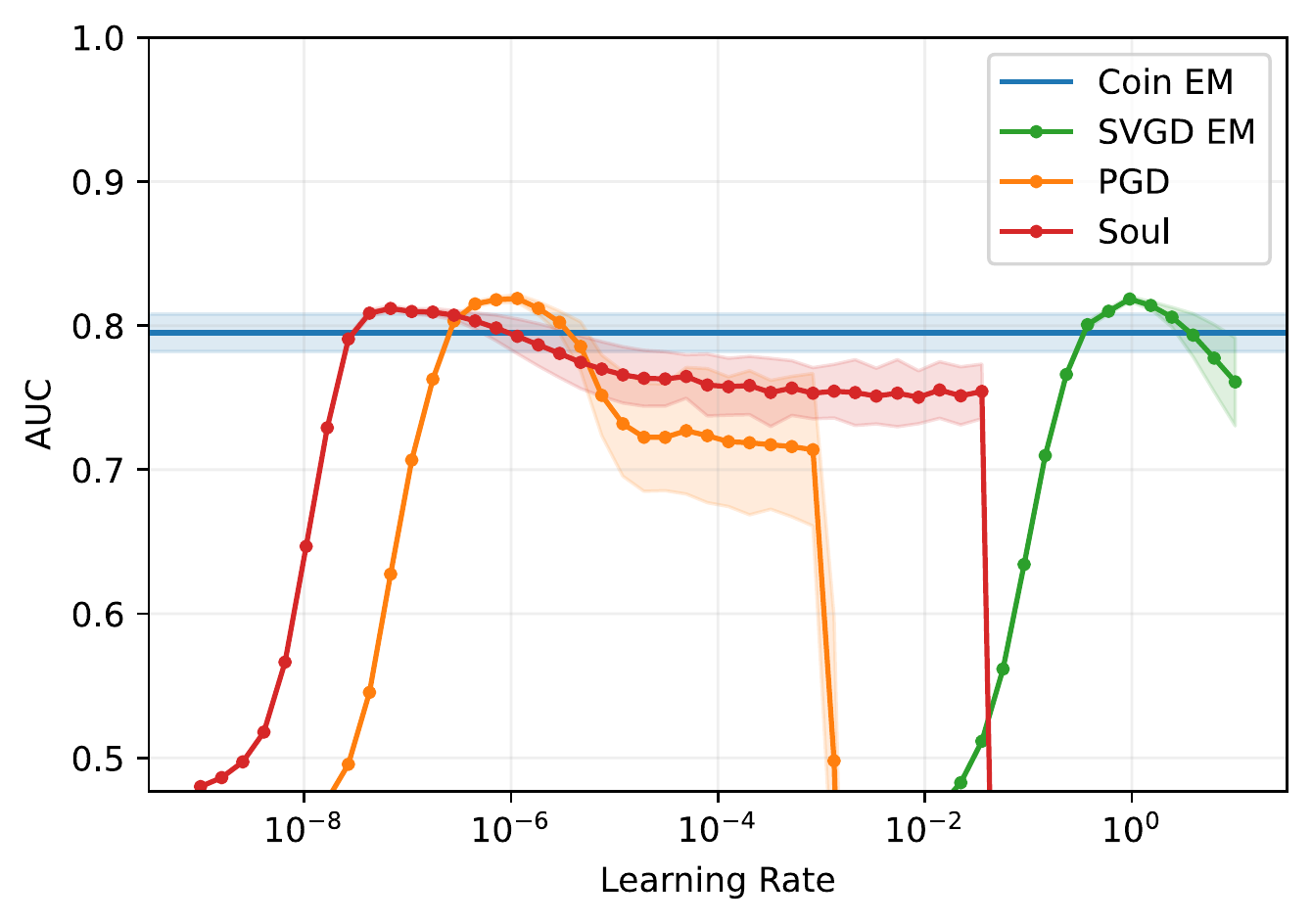}}
  \subfigure[Banknote]{\includegraphics[width=0.325\textwidth]{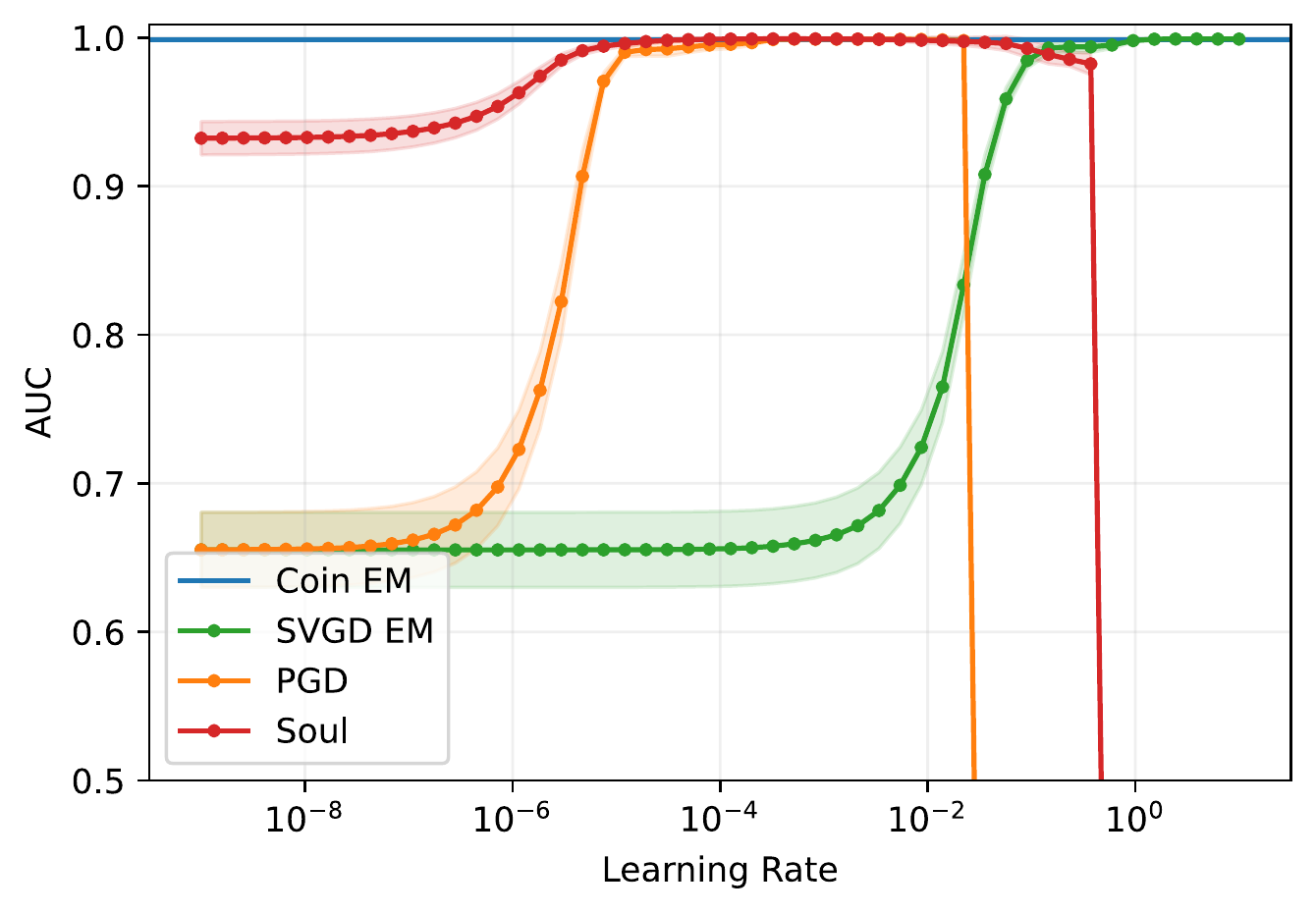}}
  \subfigure[Cleveland]{\includegraphics[width=0.325\textwidth]{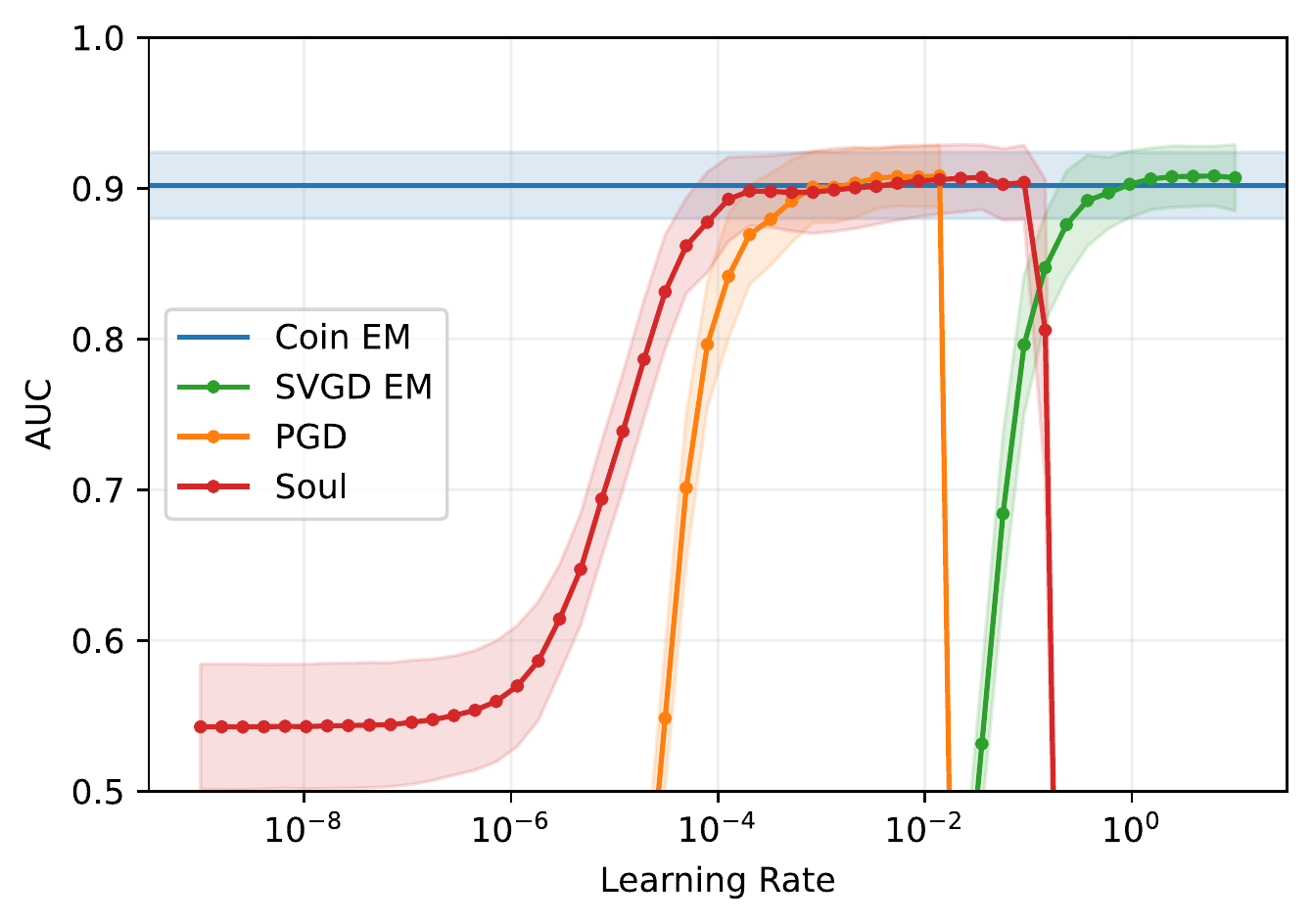}}
  \caption{\textbf{Results for the alternative Bayesian logistic regression in App. \ref{sec:bayes-lr-alt-details}}. AUC as a function of the learning rate, averaged over 10 random test-train splits.}
  \label{fig:logistic_regression_2_auc}
\end{figure}

\subsection{Bayesian neural network}
\label{sec:bayes-nn-alt-results}

We now present numerical results for the alternative Bayesian neural network model described in App. \ref{sec:bayes-nn-alt-details}. 

In Fig. \ref{fig:bnn_mnist_compare_lr}, we plot the test error achieved by SVGD EM, PGD, and SOUL for several different choices of learning rate; and for several different choices of the number of particles. For comparison, we also plot the test error achieved by Coin EM. We also include results for SVGD EM, PGD, and SOUL when using a scaling heuristic recommended in \cite[][Sec. 2]{Kuntz2023}, which is designed to stabilize the updates and avoid ill-conditioning. 
We refer to these methods as SVGD EM', PGD', and SOUL'. 

Unsurprisingly, all methods other than Coin EM are highly dependent on the choice of learning rate. While, in all cases, we observe that the convergence rate can be improved by increasing the learning rate, this approach can only go so far. In particular, if the learning rate is increased much beyond the largest values considered in Figure \ref{fig:bnn_mnist_compare_lr}, then the updates are likely to become unstable (see, e.g., Fig. \ref{fig:bnn_mnist_compare_lr_soul_N5} - \ref{fig:bnn_mnist_compare_lr_soul_N100}).

{\setlength{\subfigcapskip}{-1.2mm}
\begin{figure}[H]
  \centering
  \vspace{-3mm}
  \subfigure[PGD ($N=5$). \label{fig:bnn_mnist_compare_lr_pgd_N5}]{\includegraphics[width=0.25\textwidth, trim=0 0 0 0, clip]{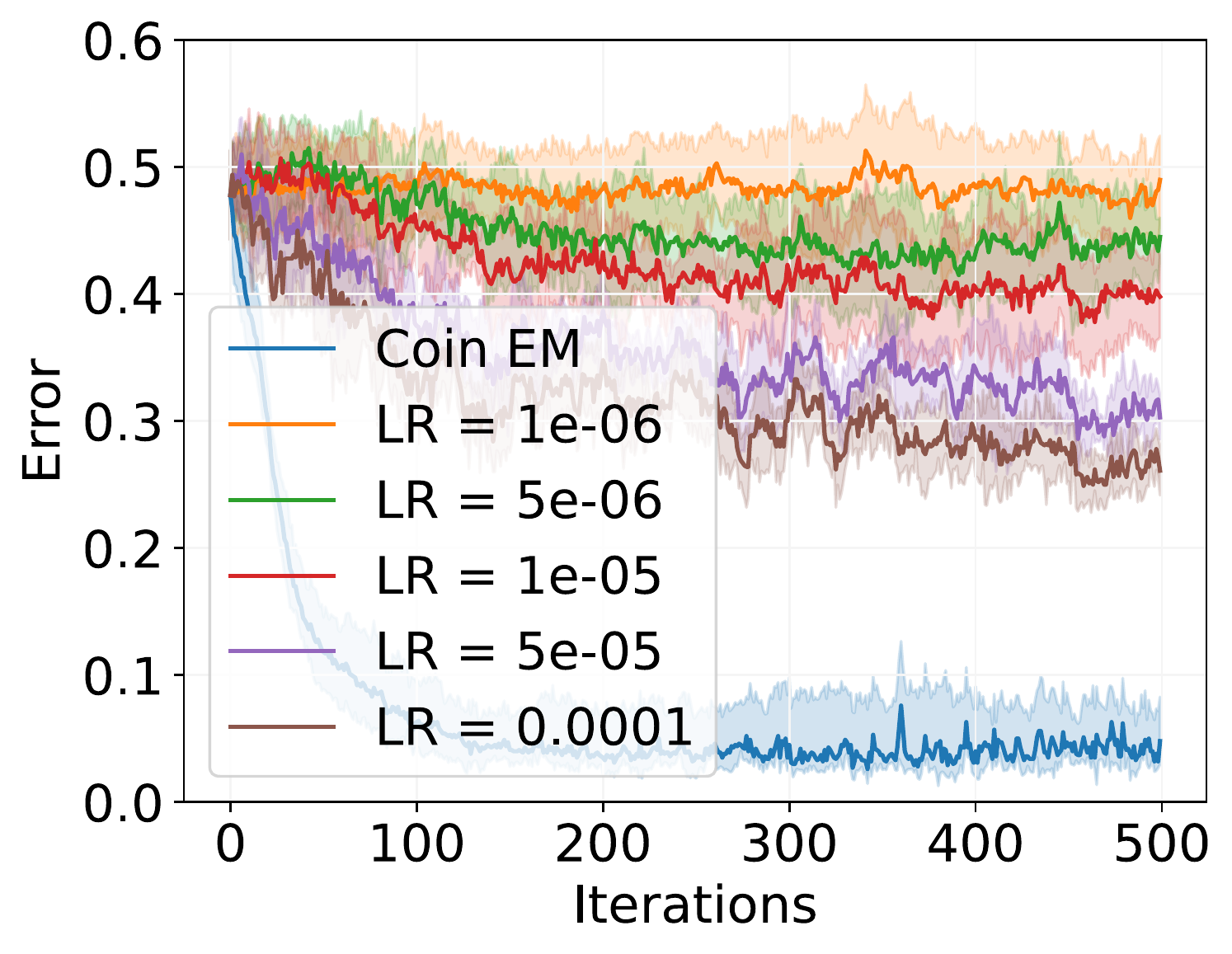}}
  \subfigure[PGD ($N=20$). \label{fig:bnn_mnist_compare_lr_pgd_N20}]{\includegraphics[width=0.25\textwidth, trim=0 0 0 0, clip]{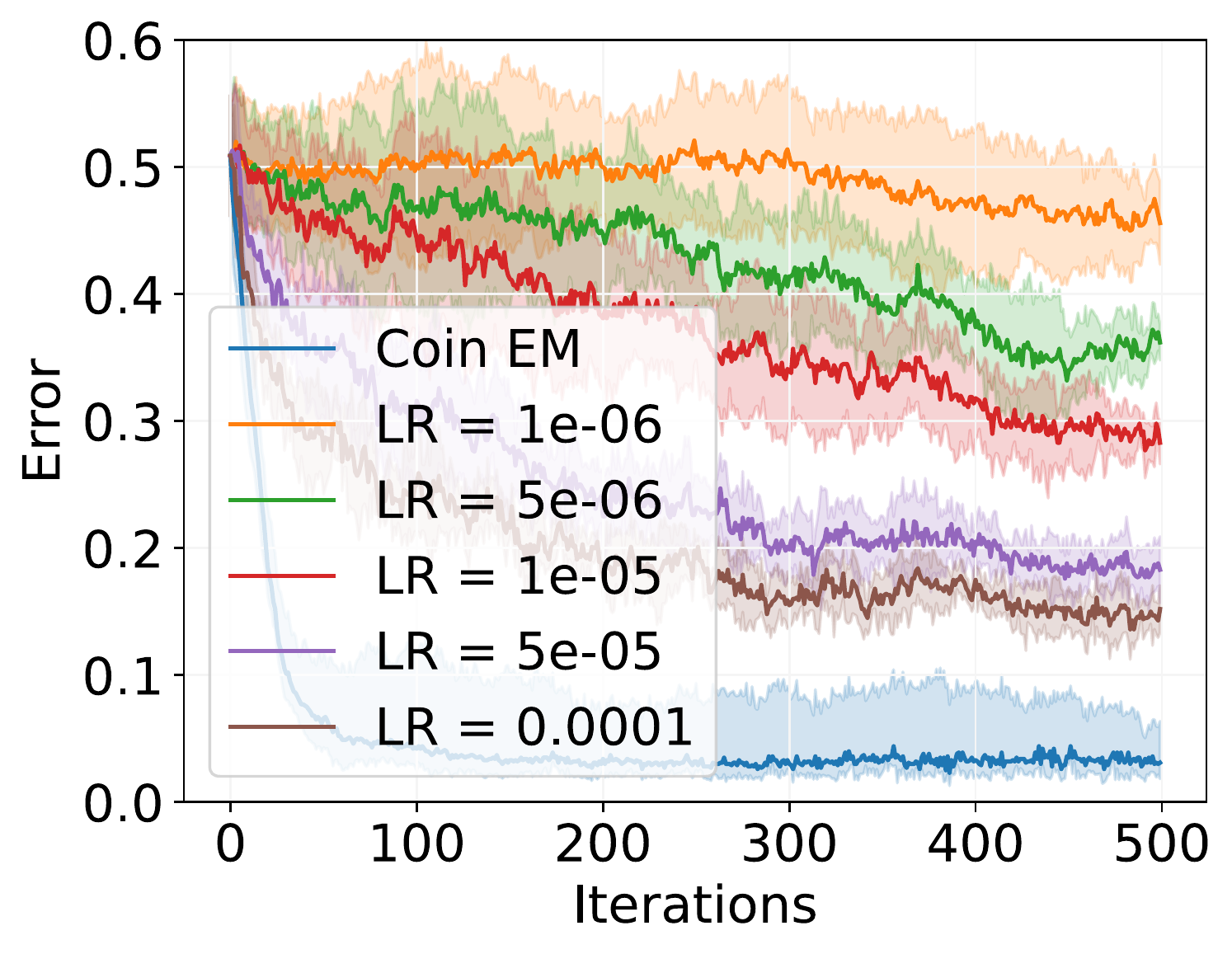}}
  \subfigure[PGD ($N=100$). \label{fig:bnn_mnist_compare_lr_pgd_N100}]{\includegraphics[width=0.25\textwidth, trim=0 0 0 0, clip]{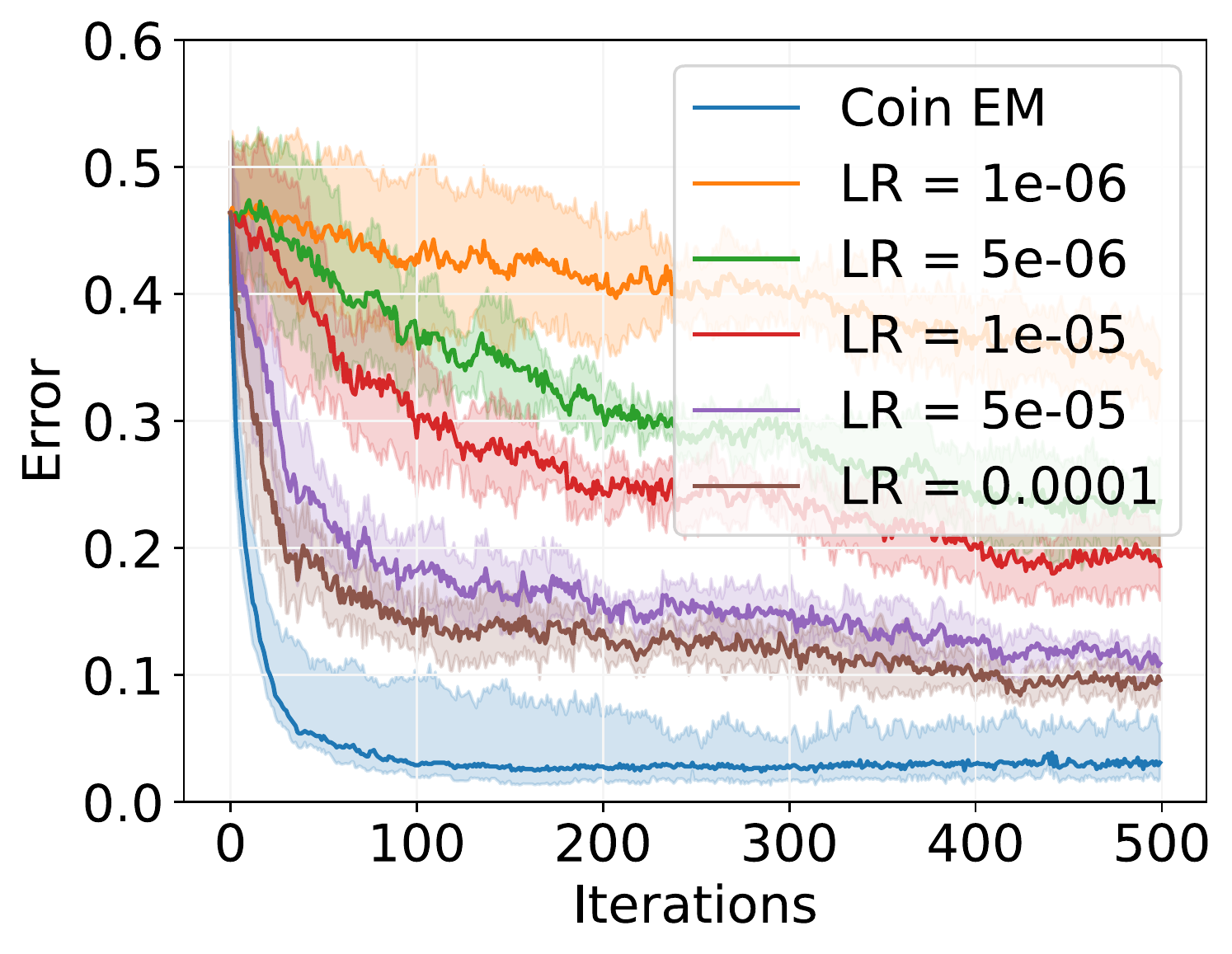}}
  \vspace{-1mm}
  \vspace{-1mm}
  \subfigure[PGD' ($N=5$). \label{fig:bnn_mnist_compare_lr_pgd_h_N5}]{\includegraphics[width=0.25\textwidth, trim=0 0 0 0, clip]{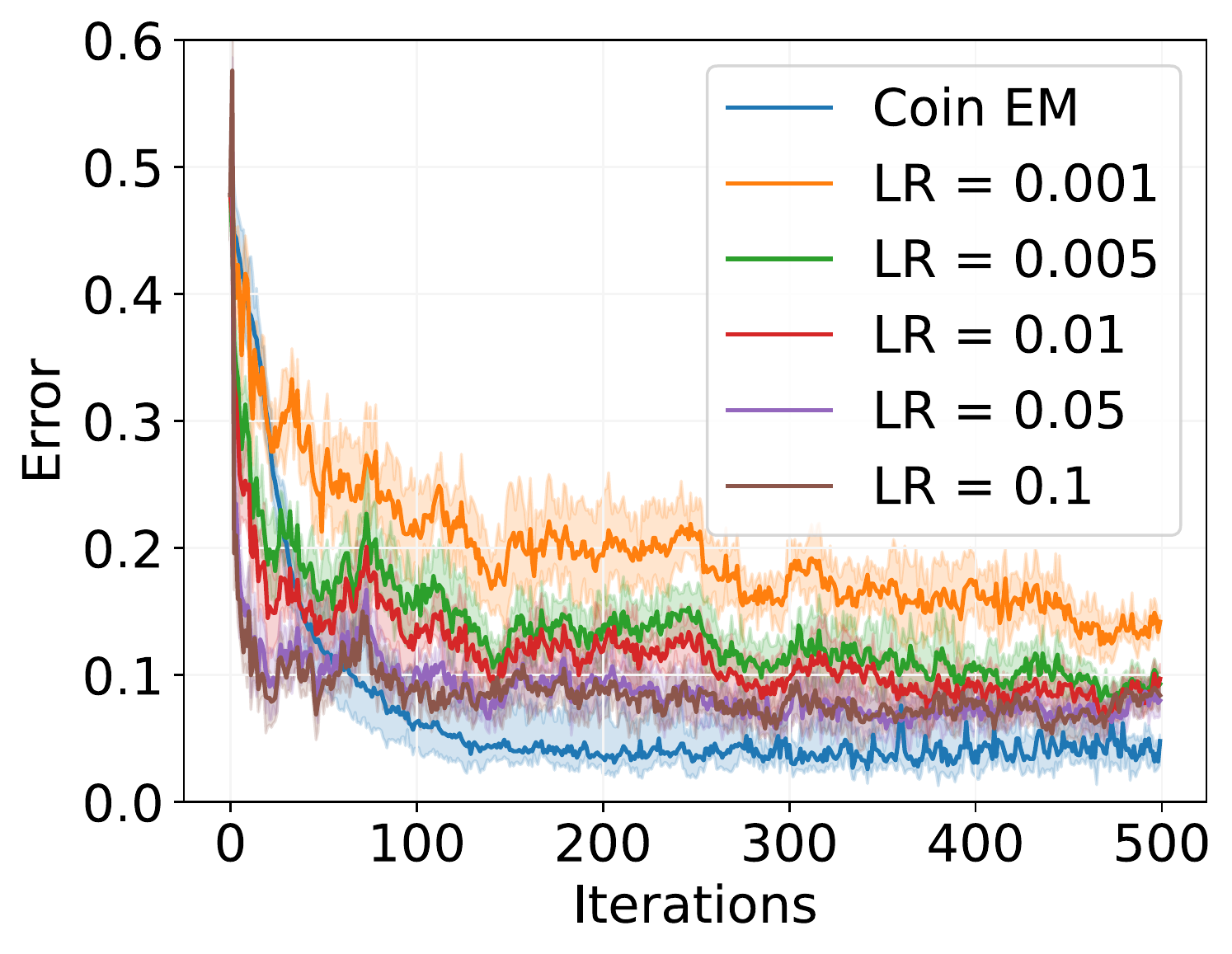}}
  \subfigure[PGD' ($N=20$). \label{fig:bnn_mnist_compare_lr_pgd_h_N20}]{\includegraphics[width=0.25\textwidth, trim=0 0 0 0, clip]{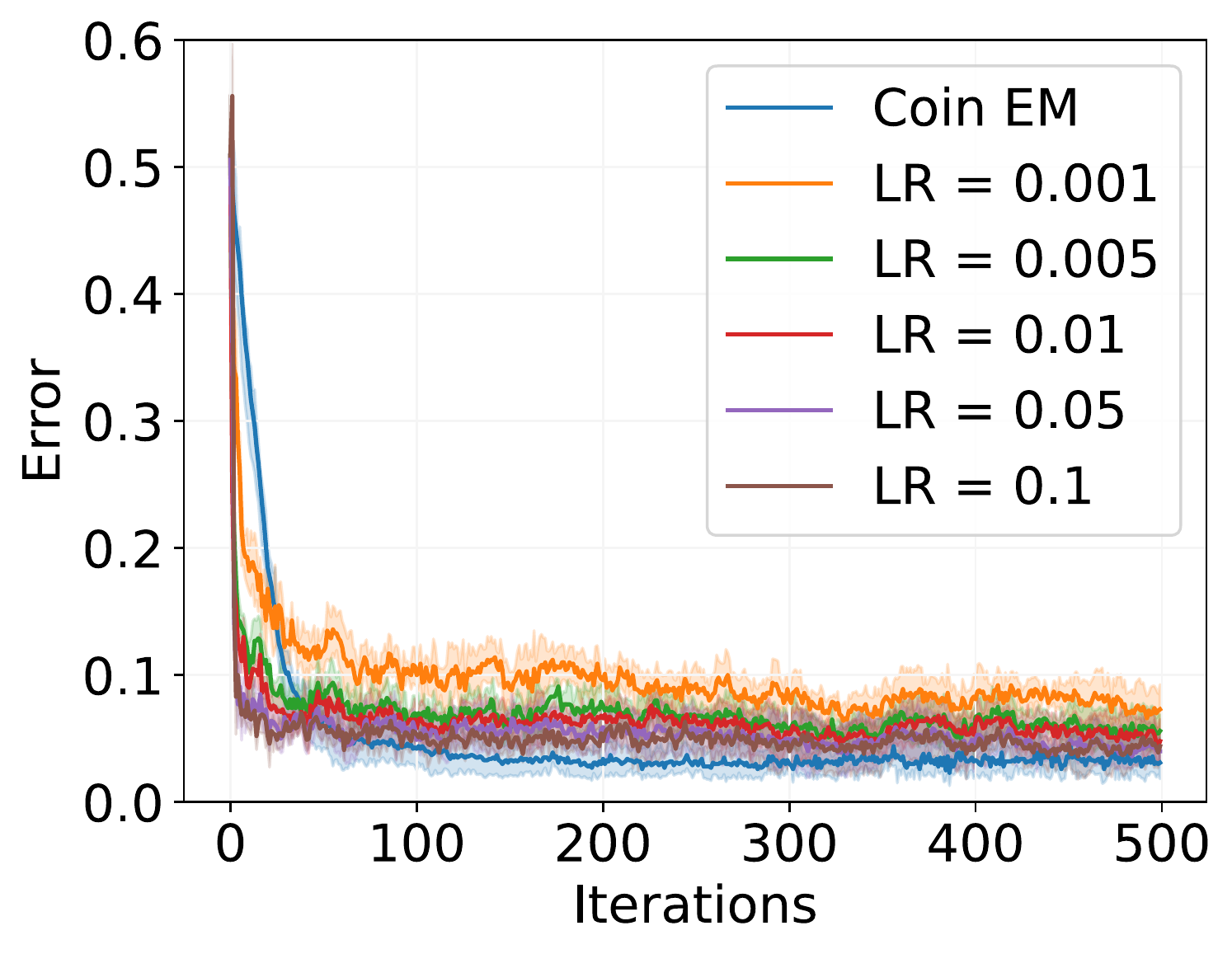}}
  \subfigure[PGD' ($N=100$). \label{fig:bnn_mnist_compare_lr_pgd_h_N100}]{\includegraphics[width=0.25\textwidth, trim=0 0 0 0, clip]{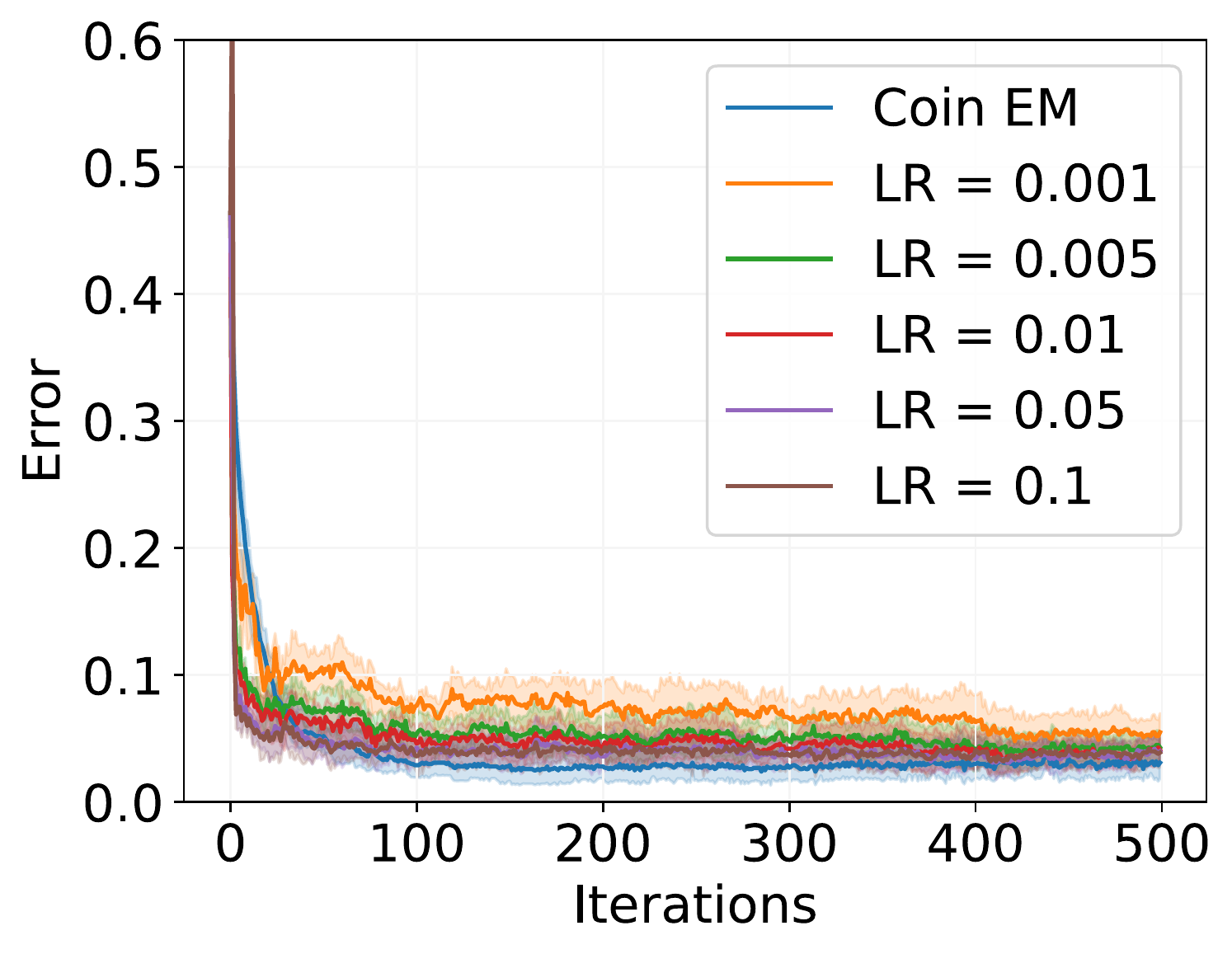}}
  \vspace{-1mm}
  \vspace{-1mm}
  \subfigure[SVGD EM ($N=5$). \label{fig:bnn_mnist_compare_lr_svgd_N5}]{\includegraphics[width=0.25\textwidth, trim=0 0 0 0, clip]{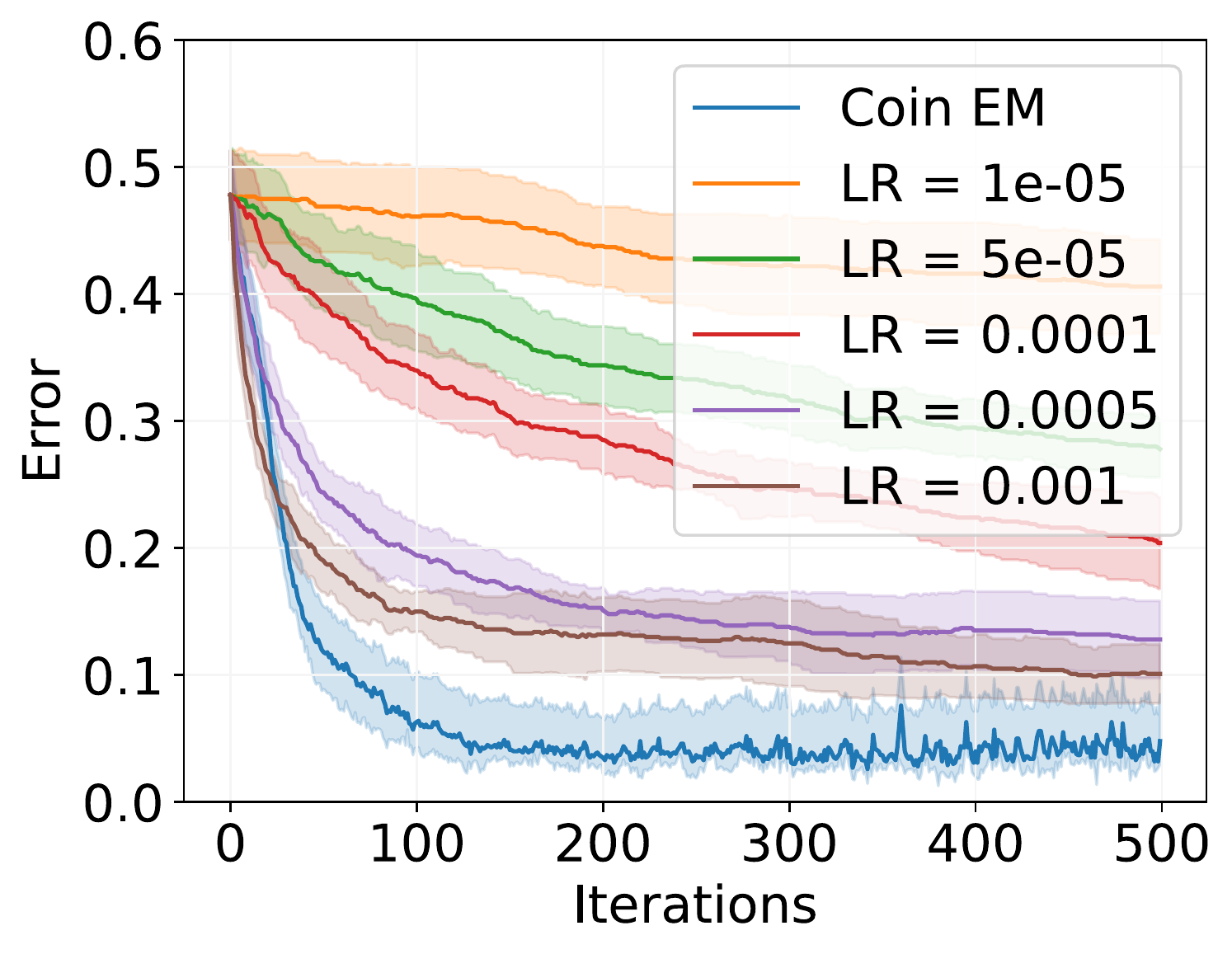}}
  \subfigure[SVGD EM ($N=20$). \label{fig:bnn_mnist_compare_lr_svgd_N20}]{\includegraphics[width=0.25\textwidth, trim=0 0 0 0, clip]{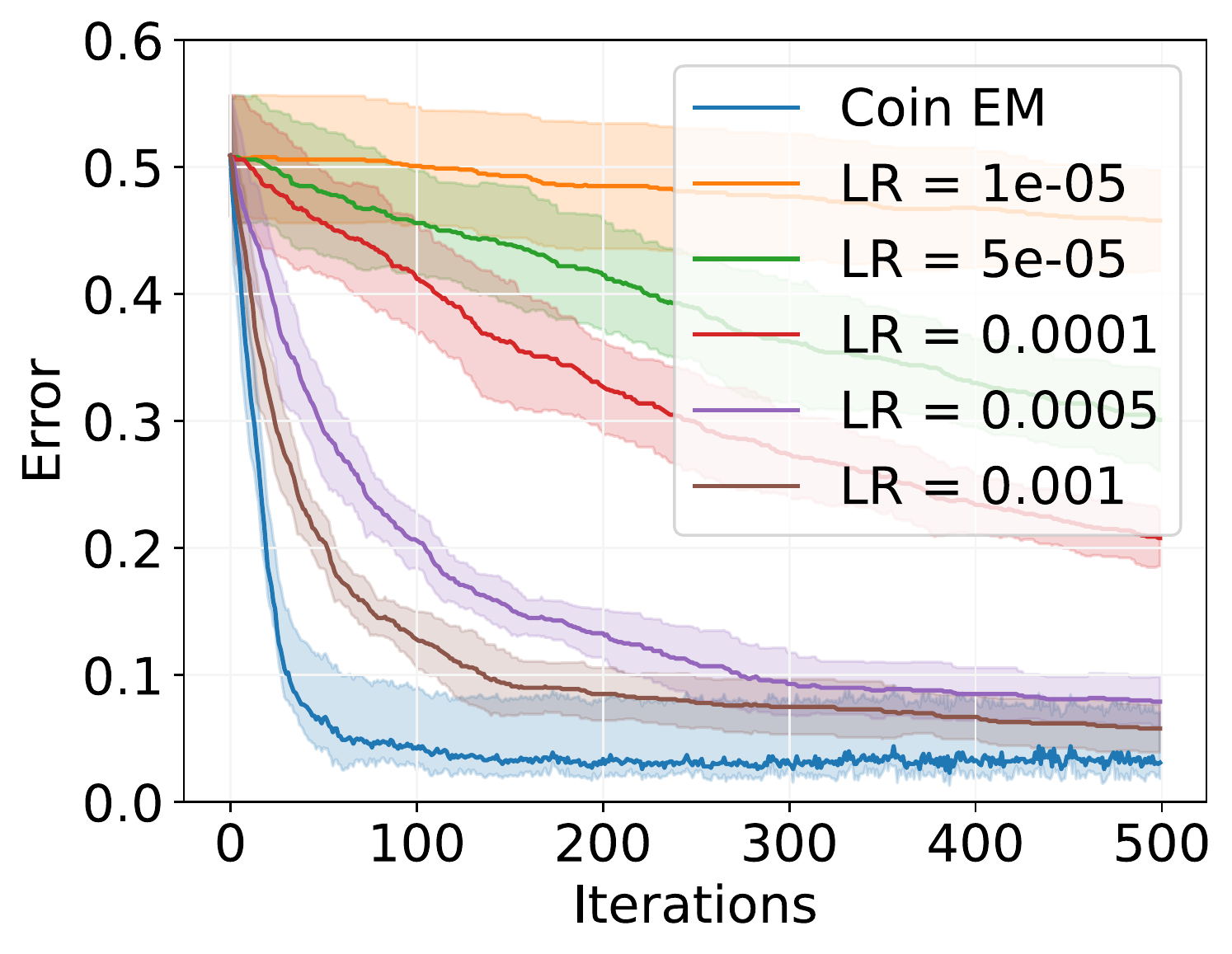}}
  \subfigure[SVGD EM ($N=100$). \label{fig:bnn_mnist_compare_lr_svgd_N100}]{\includegraphics[width=0.25\textwidth, trim=0 0 0 0, clip]{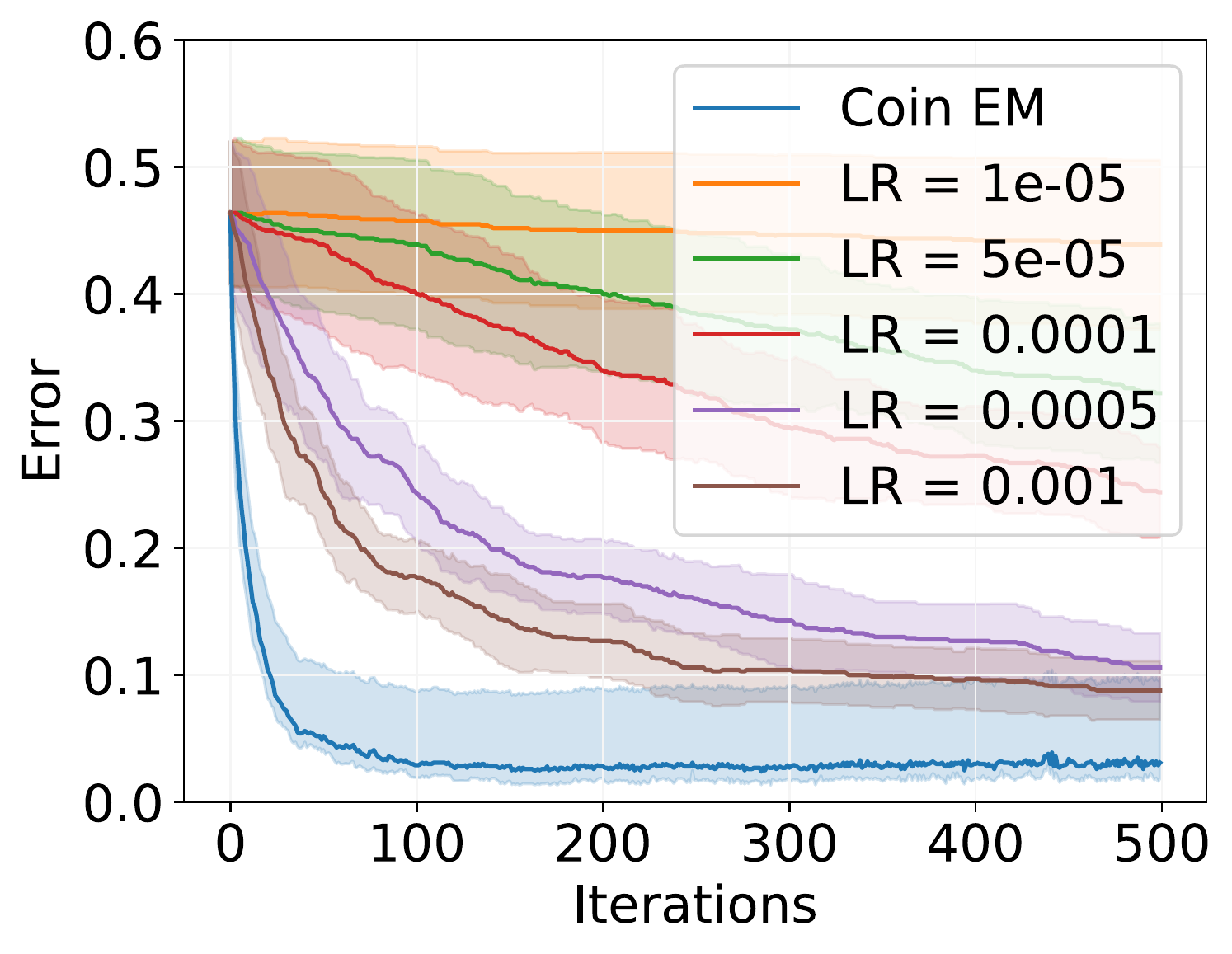}}
  \vspace{-1mm}
  \vspace{-1mm}
  \subfigure[SVGD EM' ($N=5$). \label{fig:bnn_mnist_compare_lr_svgd_h_N5}]{\includegraphics[width=0.25\textwidth, trim=0 0 0 0, clip]{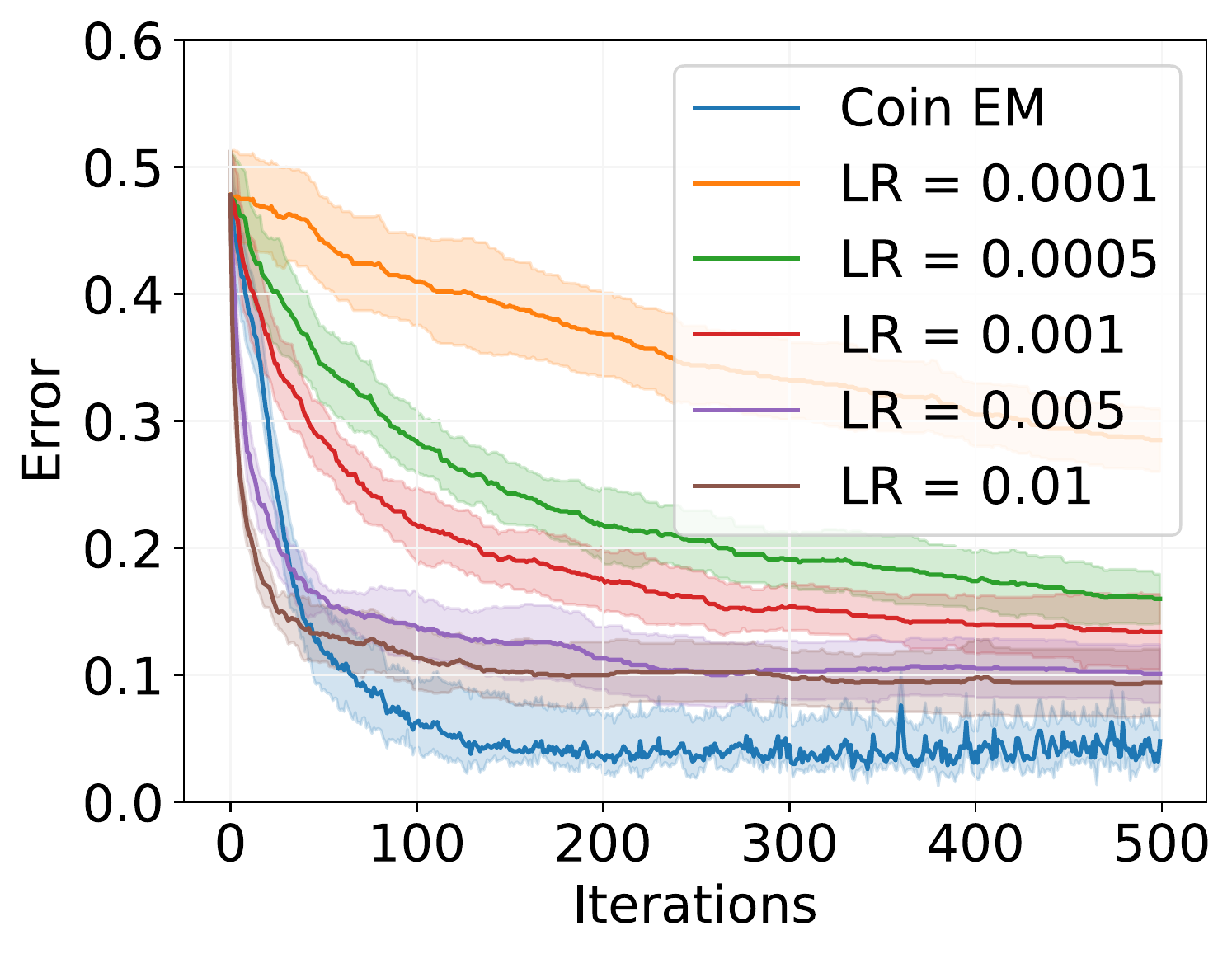}}
  \subfigure[SVGD EM' ($N=20$). \label{fig:bnn_mnist_compare_lr_svgd_h_N20}]{\includegraphics[width=0.25\textwidth, trim=0 0 0 0, clip]{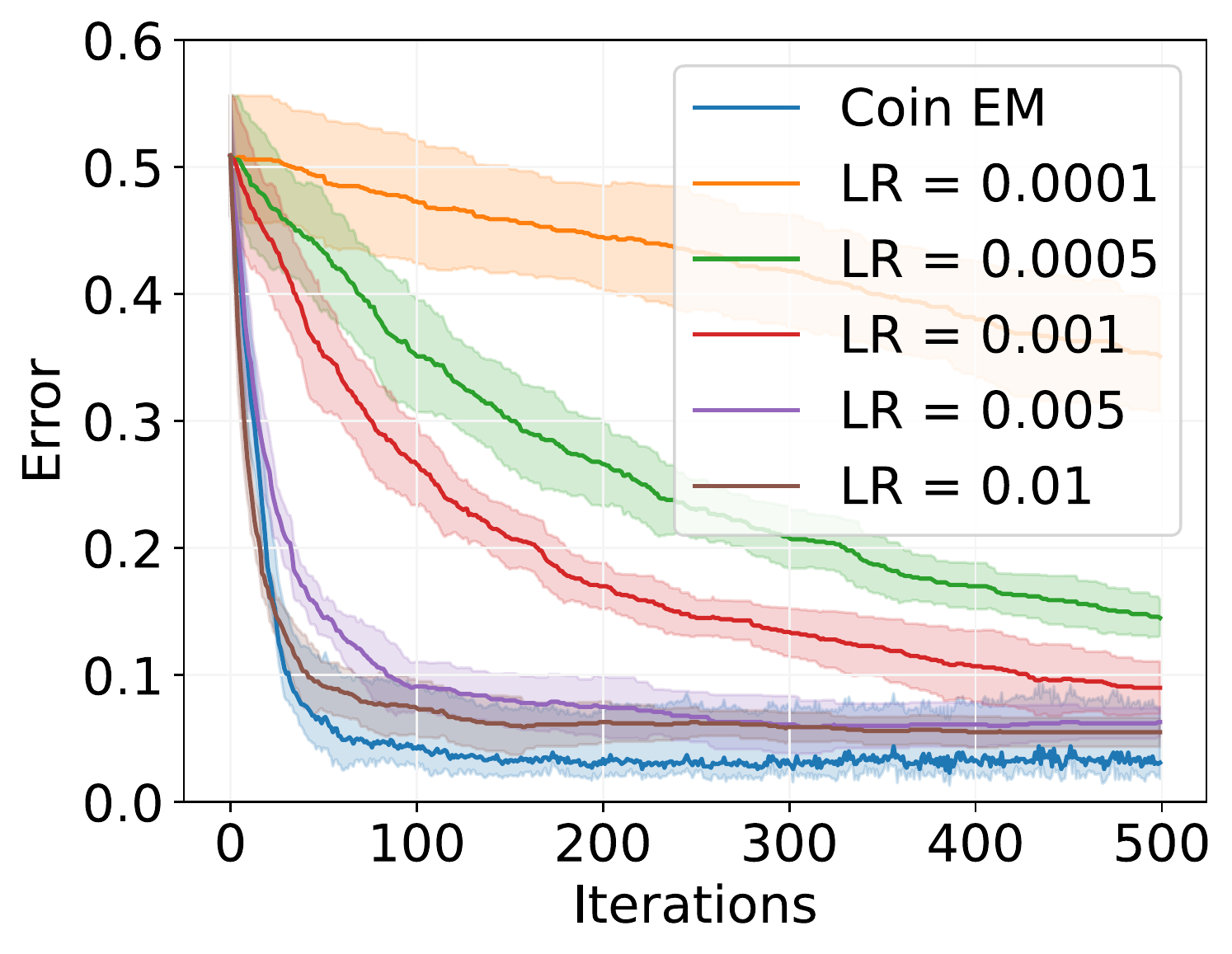}}
  \subfigure[SVGD EM' ($N=100$). \label{fig:bnn_mnist_compare_lr_svgd_h_N100}]{\includegraphics[width=0.25\textwidth, trim=0 0 0 0, clip]{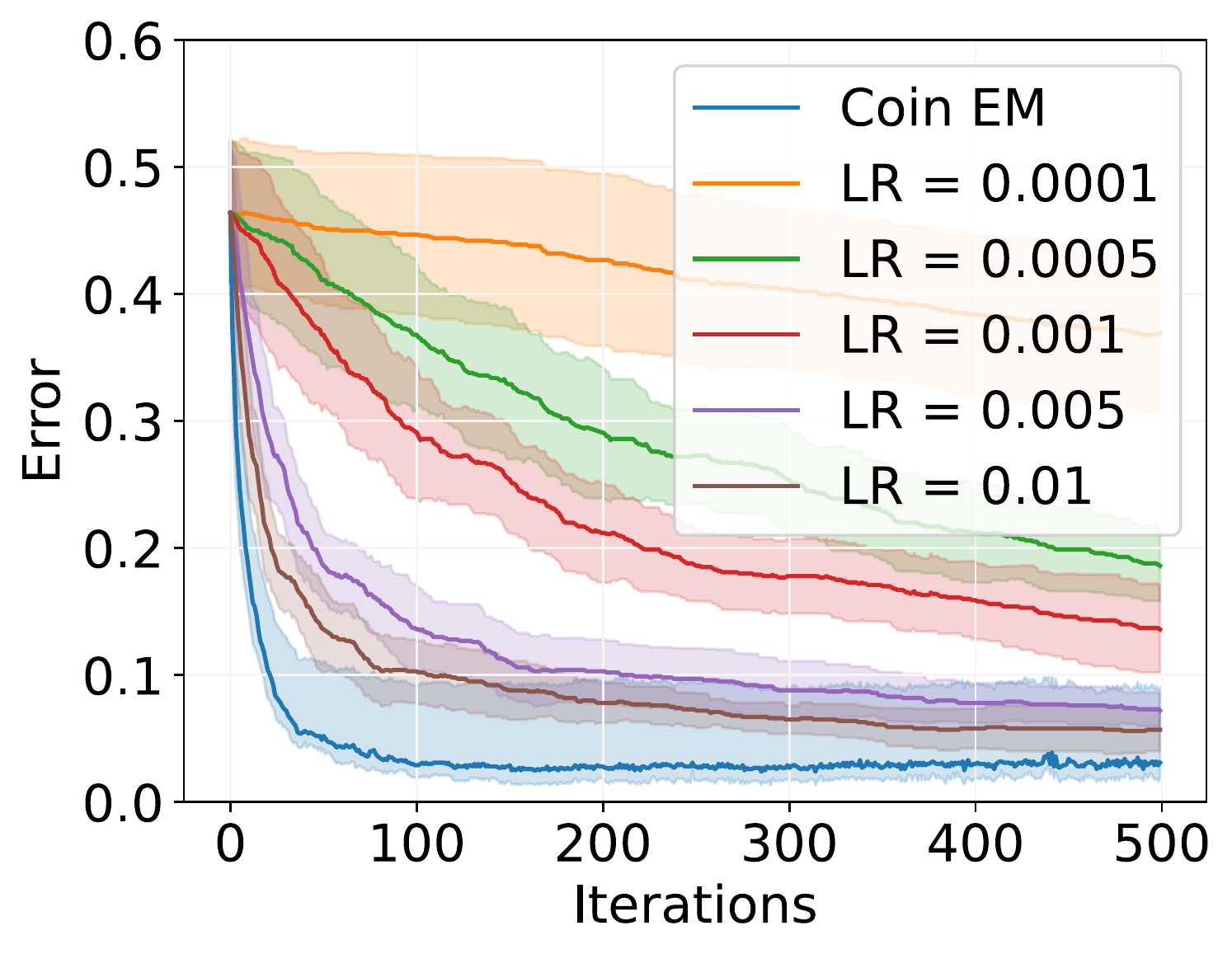}}
  \vspace{-1mm}
  \vspace{-1mm}
  \subfigure[SOUL ($N=5$). \label{fig:bnn_mnist_compare_lr_soul_N5}]{\includegraphics[width=0.25\textwidth, trim=0 0 0 0, clip]{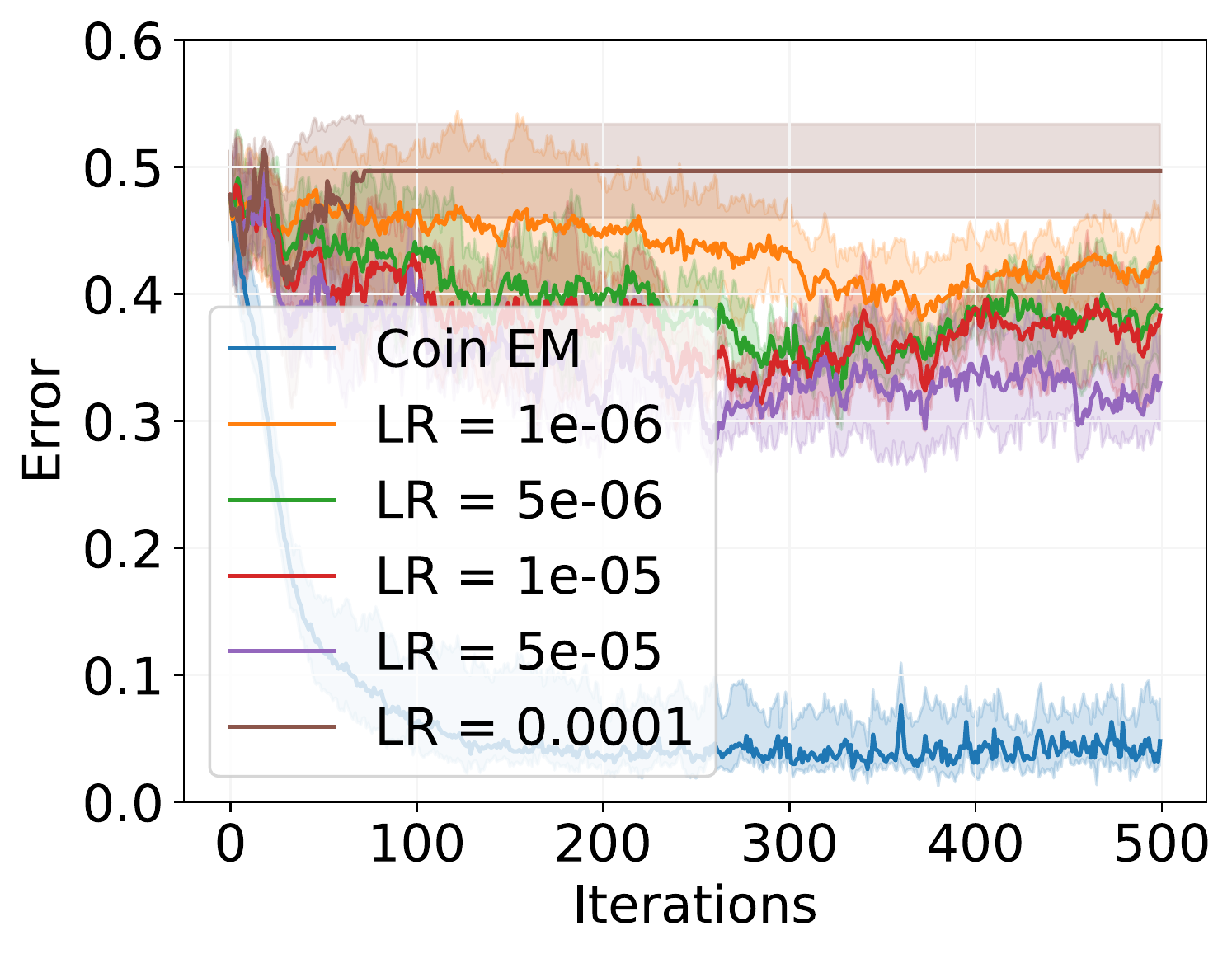}}
  \subfigure[SOUL ($N=20$). \label{fig:bnn_mnist_compare_lr_soul_N20}]{\includegraphics[width=0.25\textwidth, trim=0 0 0 0, clip]{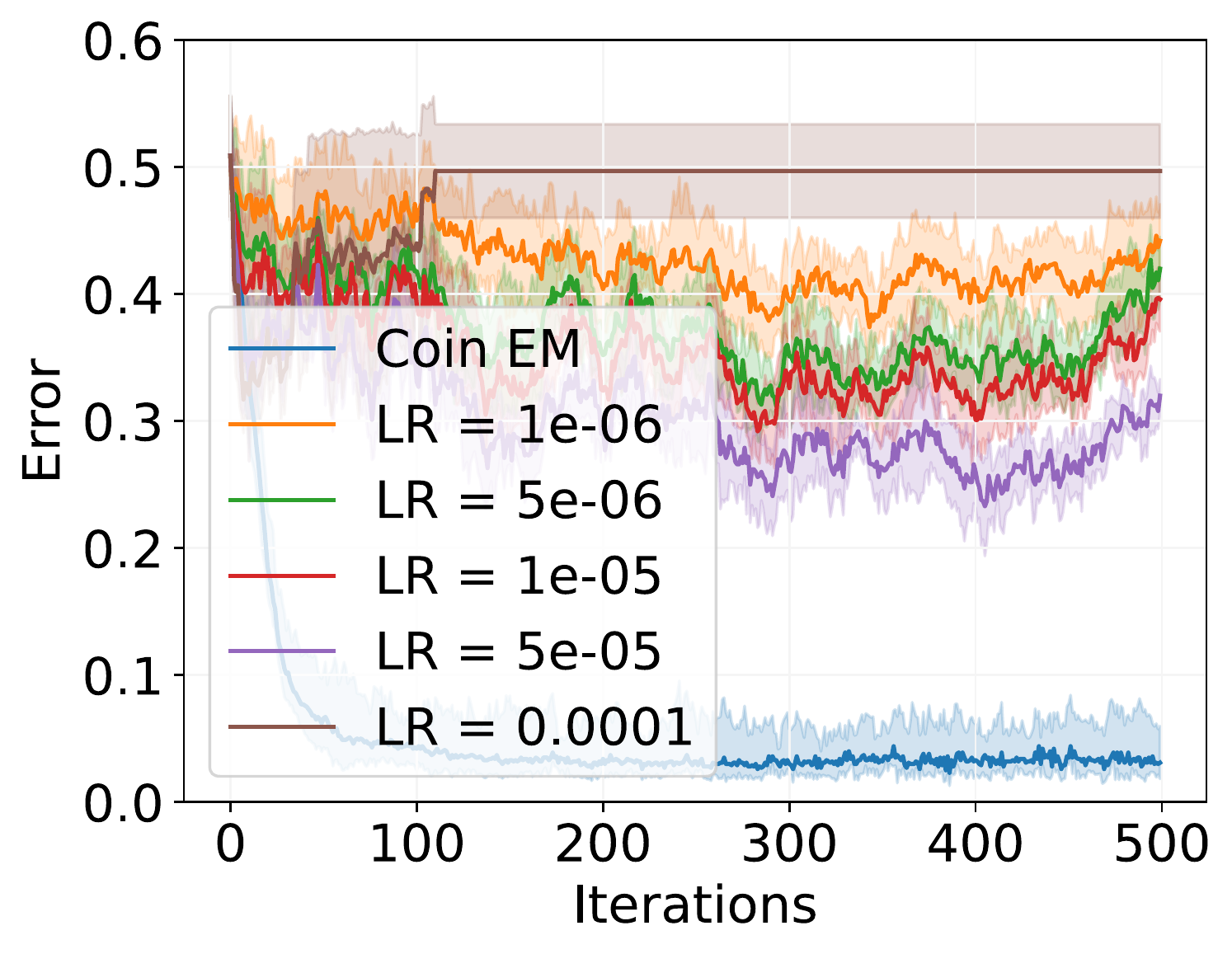}}
  \subfigure[SOUL ($N=100$). \label{fig:bnn_mnist_compare_lr_soul_N100}]{\includegraphics[width=0.25\textwidth, trim=0 0 0 0, clip]{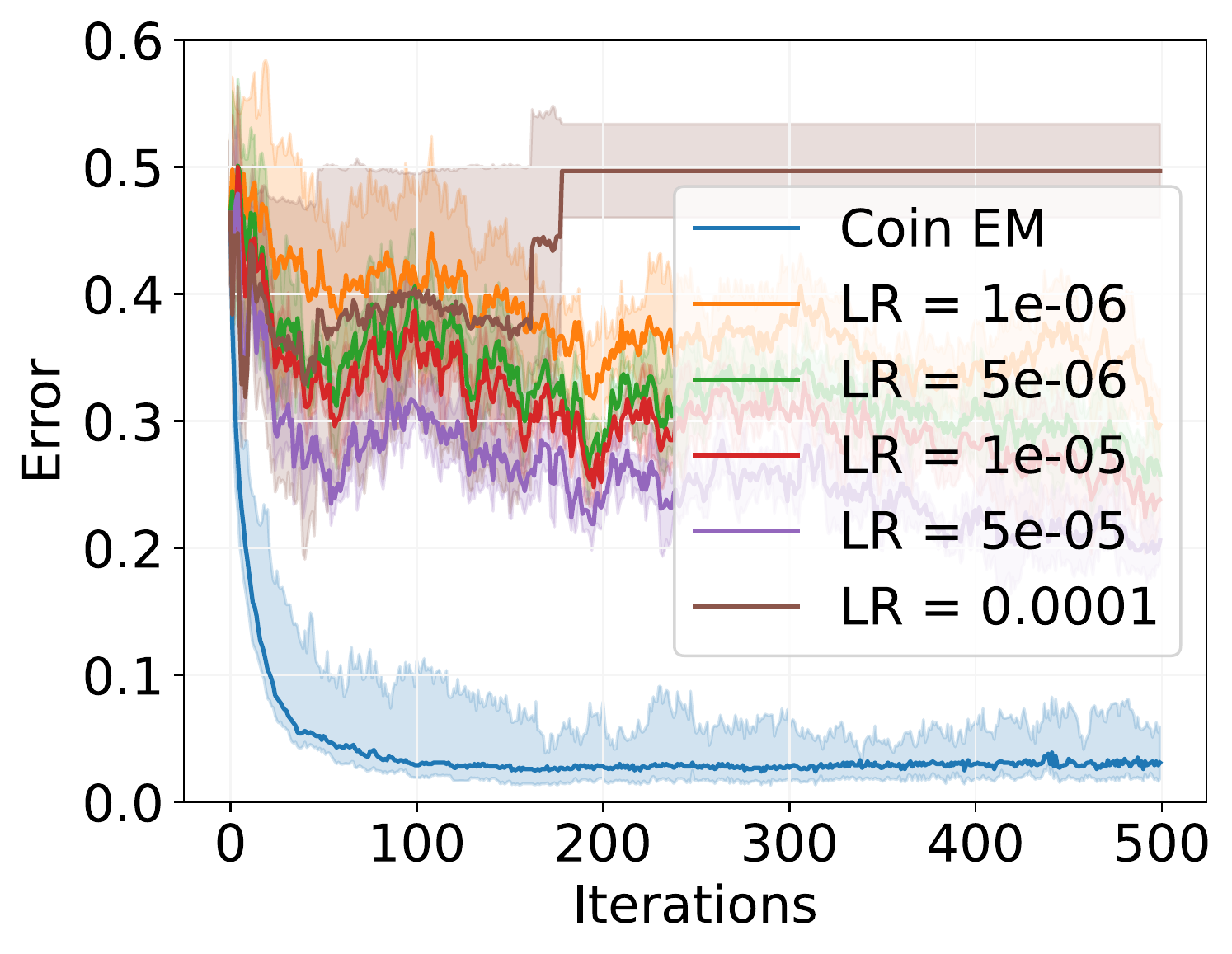}}
  \vspace{-1mm}
  \vspace{-1mm}
  \subfigure[SOUL' ($N=5$). \label{fig:bnn_mnist_compare_lr_soul_h_N5}]{\includegraphics[width=0.25\textwidth, trim=0 0 0 0, clip]{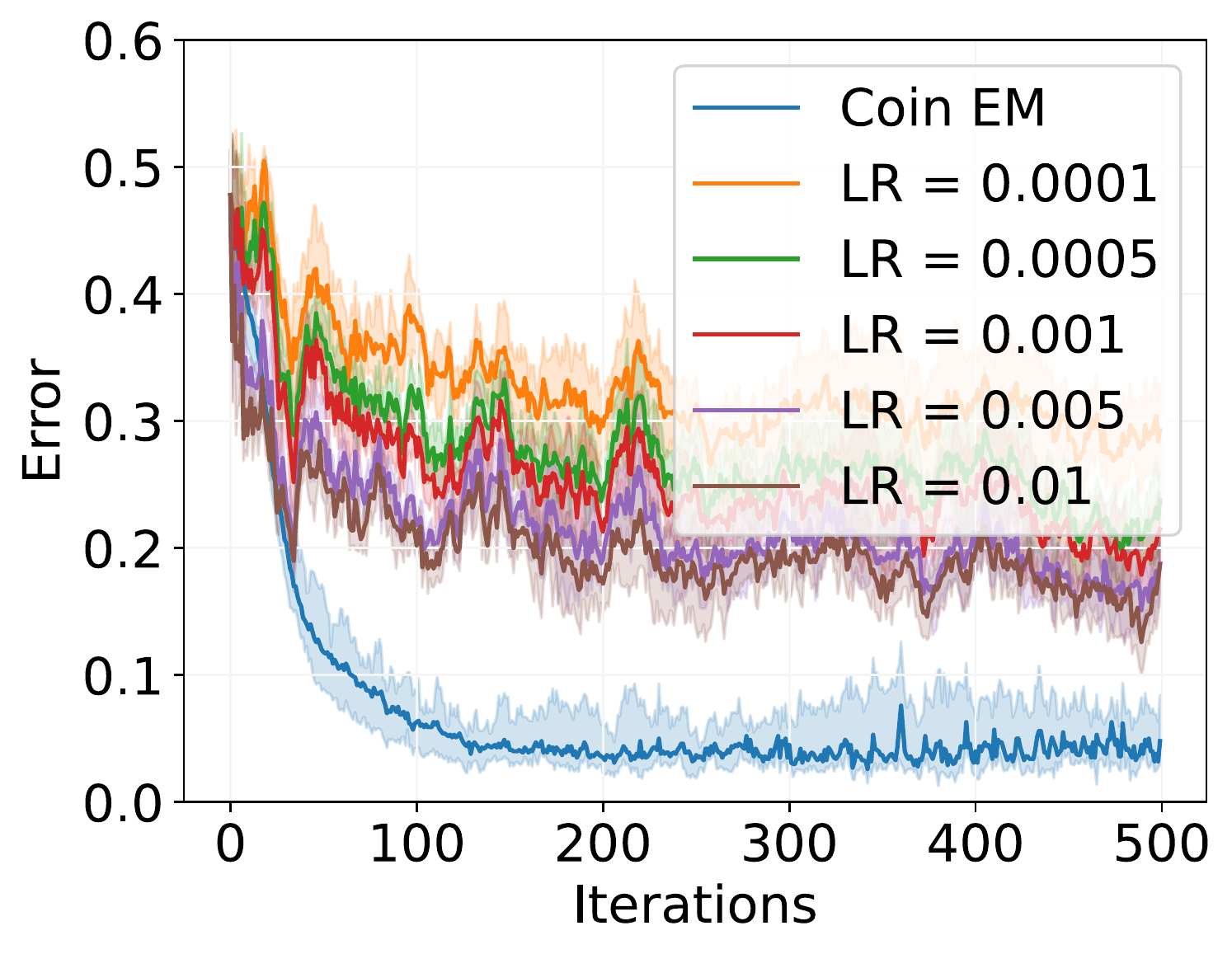}}
  \subfigure[SOUL' ($N=20$). \label{fig:bnn_mnist_compare_lr_soul_h_N20}]{\includegraphics[width=0.25\textwidth, trim=0 0 0 0, clip]{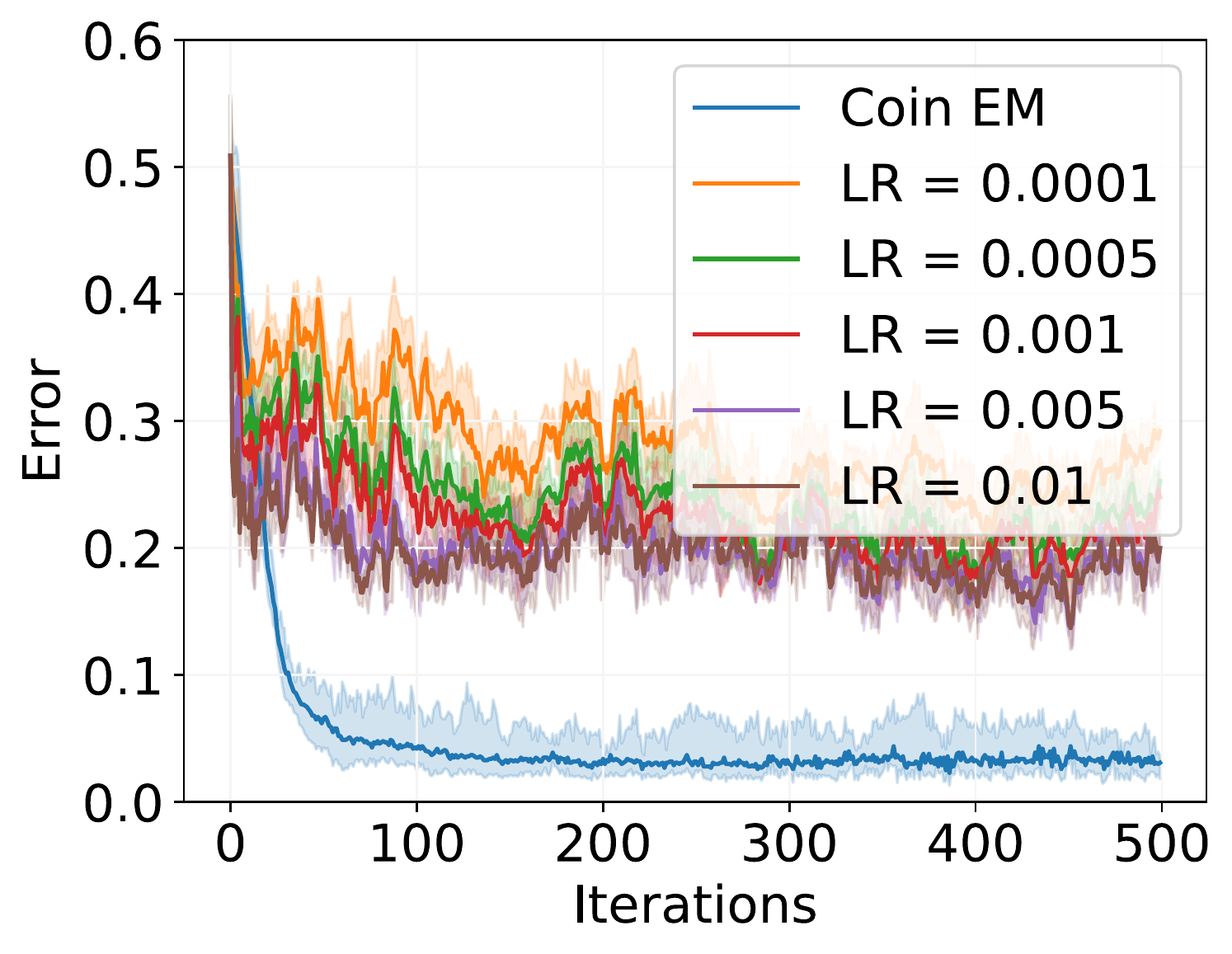}}
  \subfigure[SOUL' ($N=100$). \label{fig:bnn_mnist_compare_lr_soul_h_N100}]{\includegraphics[width=0.25\textwidth, trim=0 0 0 0, clip]{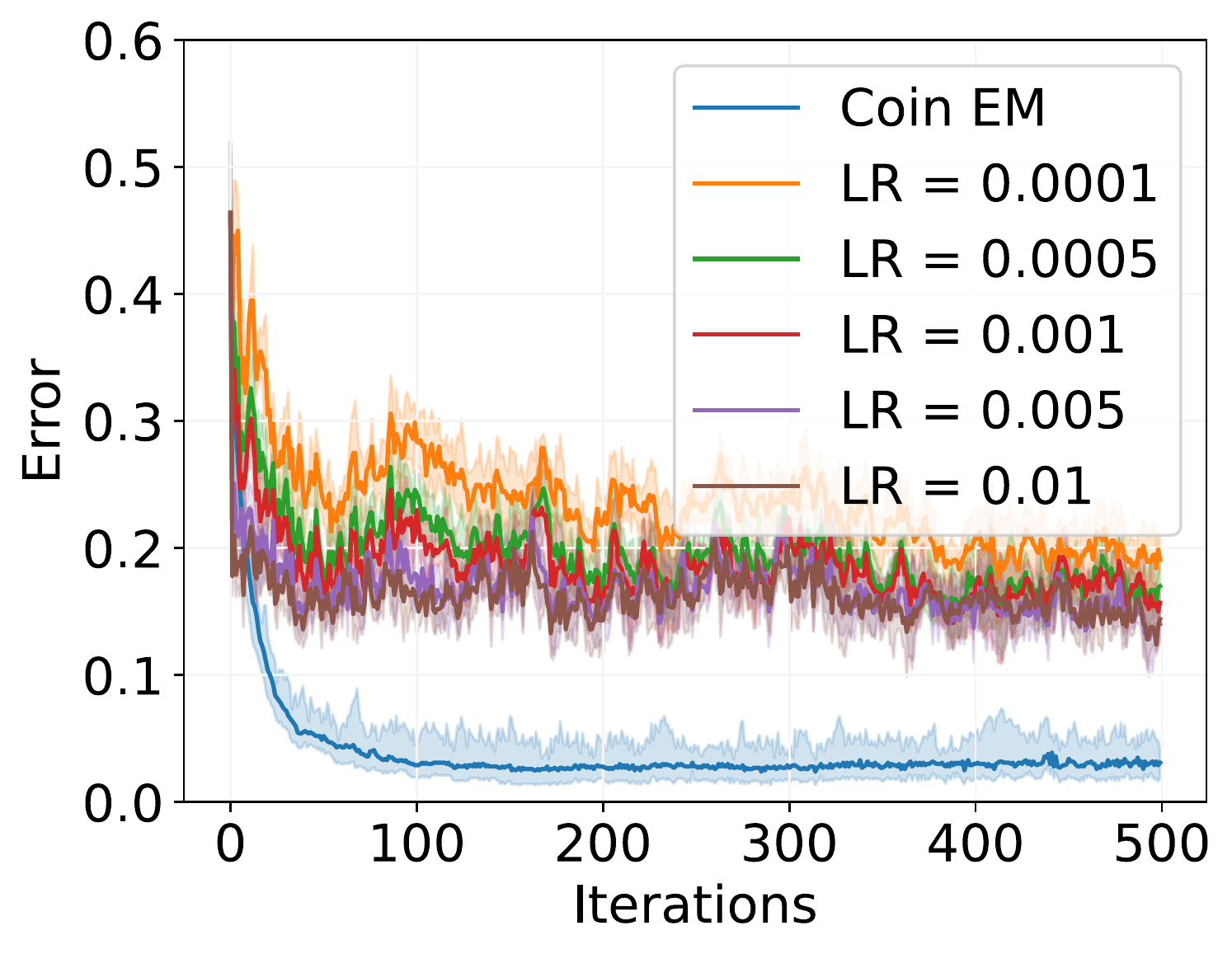}}
  \vspace{-3mm}
  \caption{\textbf{Additional results for the Bayesian neural network model in Sec. \ref{sec:bayes_nn_results}: learning rate comparison.} Test error achieved over $T=500$ training iterations, for different learning rates.}  
  \label{fig:bnn_mnist_compare_lr}
\end{figure}
}

In Fig. \ref{fig:bnn_mnist_compare_N}, we further investigate the performance of each algorithm as we vary the number of particles. In this case, each method is improved by increasing the number of particles. Interestingly, Coin EM is robust to the number of particles, achieving good predictive performance even when the number is small (Fig. \ref{fig:bnn_mnist_compare_N_coin}). The same is true, to a lesser extent, for SVGD EM (with or without the heuristic), which performs relatively well across experiments (Fig. \ref{fig:bnn_mnist_compare_N_SVGD}, Fig. \ref{fig:bnn_mnist_compare_N_SVGD_h}). PGD observes a more significant performance increase as the number of particles is increased (Fig. \ref{fig:bnn_mnist_compare_N_PGD}), consistent with the observations in \cite[][Sec. 3.2]{Kuntz2023}, although performs relatively well even for small particular numbers with the heuristic (Fig. \ref{fig:bnn_mnist_compare_N_PGD_h}). Finally, even for large numbers of particles, and when using the heuristic, SOUL struggles to compete with the other methods (Fig. \ref{fig:bnn_mnist_compare_N_SOUL}, Fig. \ref{fig:bnn_mnist_compare_N_SOUL_h}). 

\begin{figure}[H]
  \centering
  \subfigure[Coin EM. \label{fig:bnn_mnist_compare_N_coin}]{\includegraphics[width=0.26\textwidth, trim=0 0 0 0, clip]{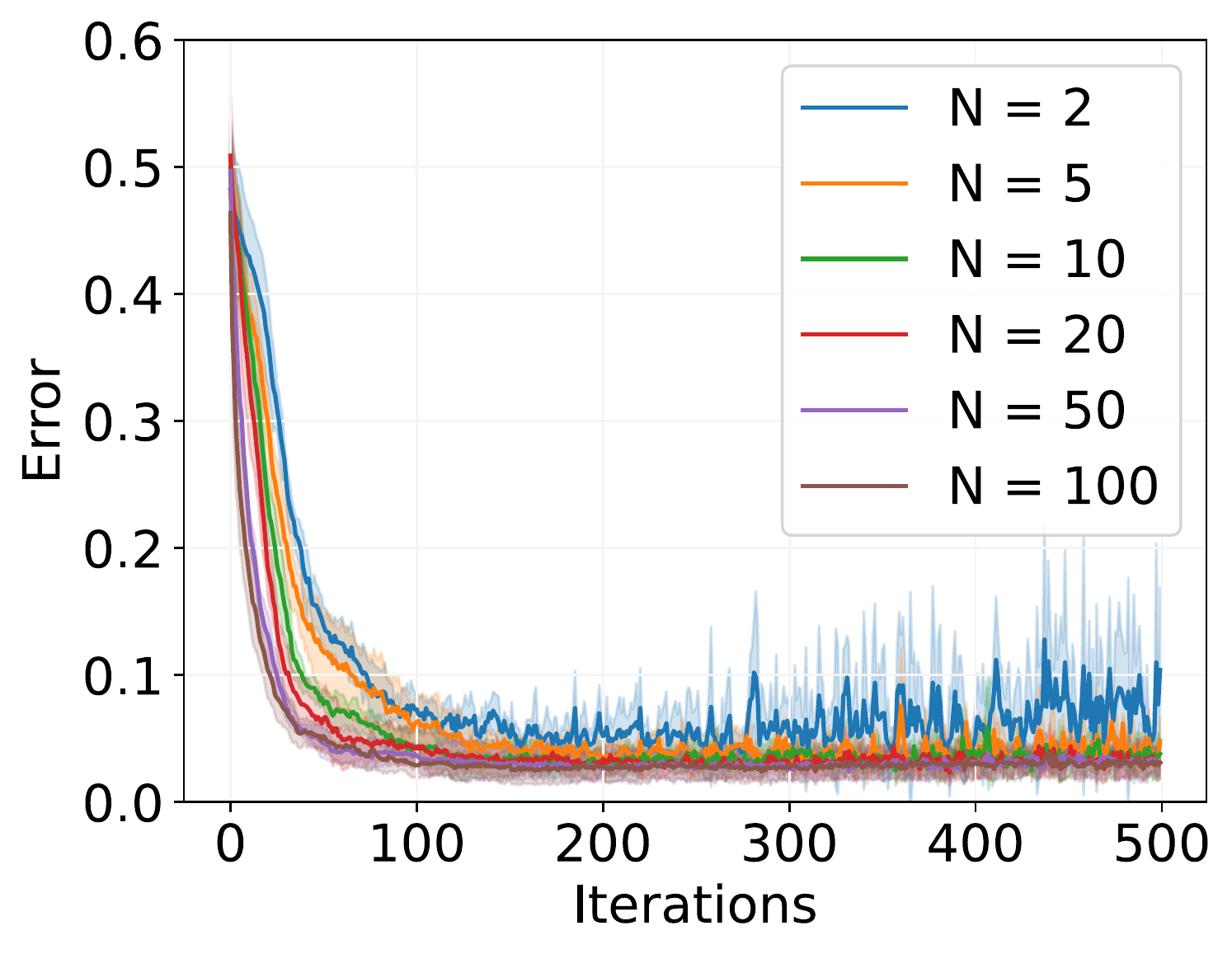}}
  \subfigure[SVGD EM. \label{fig:bnn_mnist_compare_N_SVGD}]{\includegraphics[width=0.26\textwidth, trim=0 0 0 0, clip]{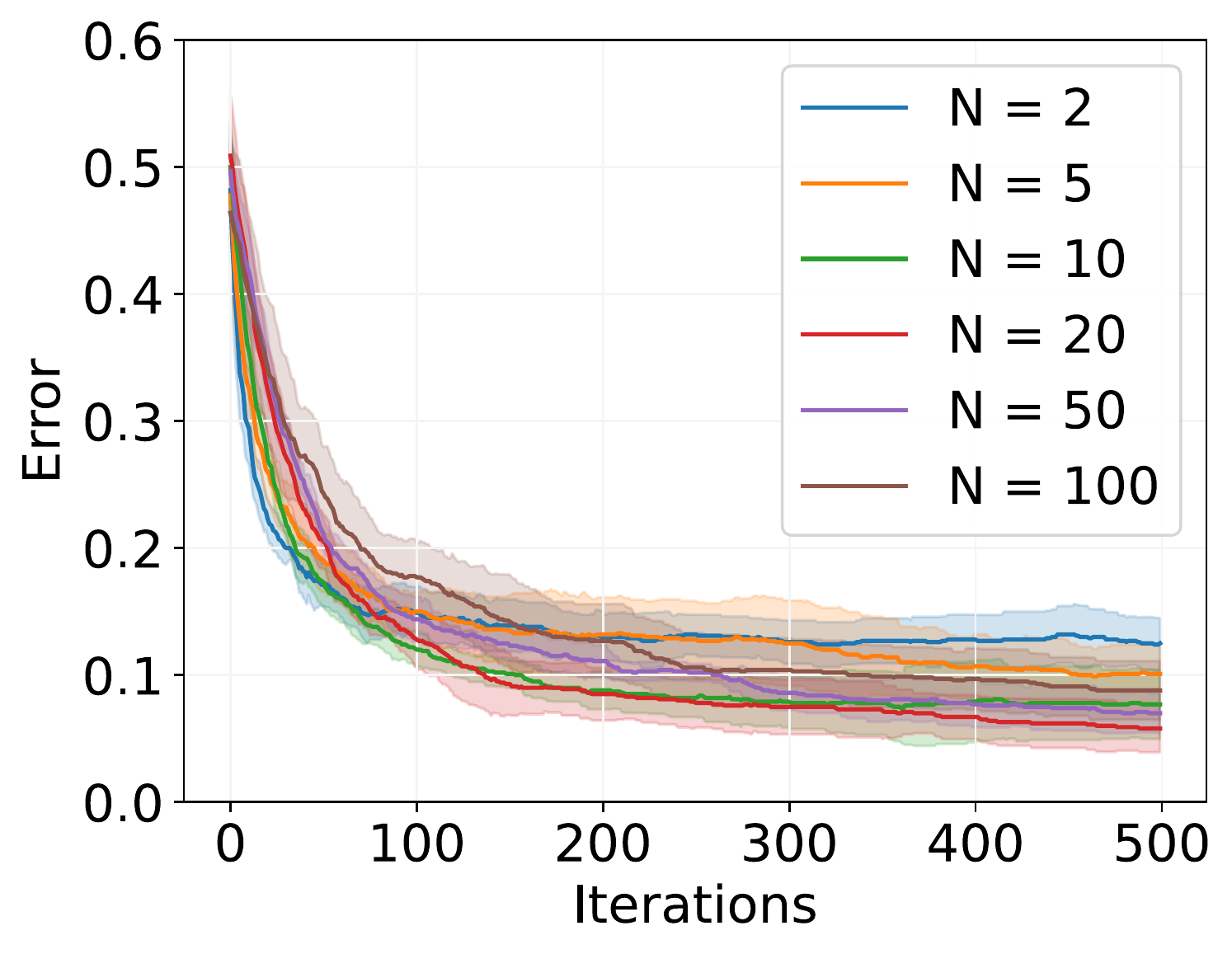}}
  \subfigure[SVGD EM'. \label{fig:bnn_mnist_compare_N_SVGD_h}]{\includegraphics[width=0.26\textwidth, trim=0 0 0 0, clip]{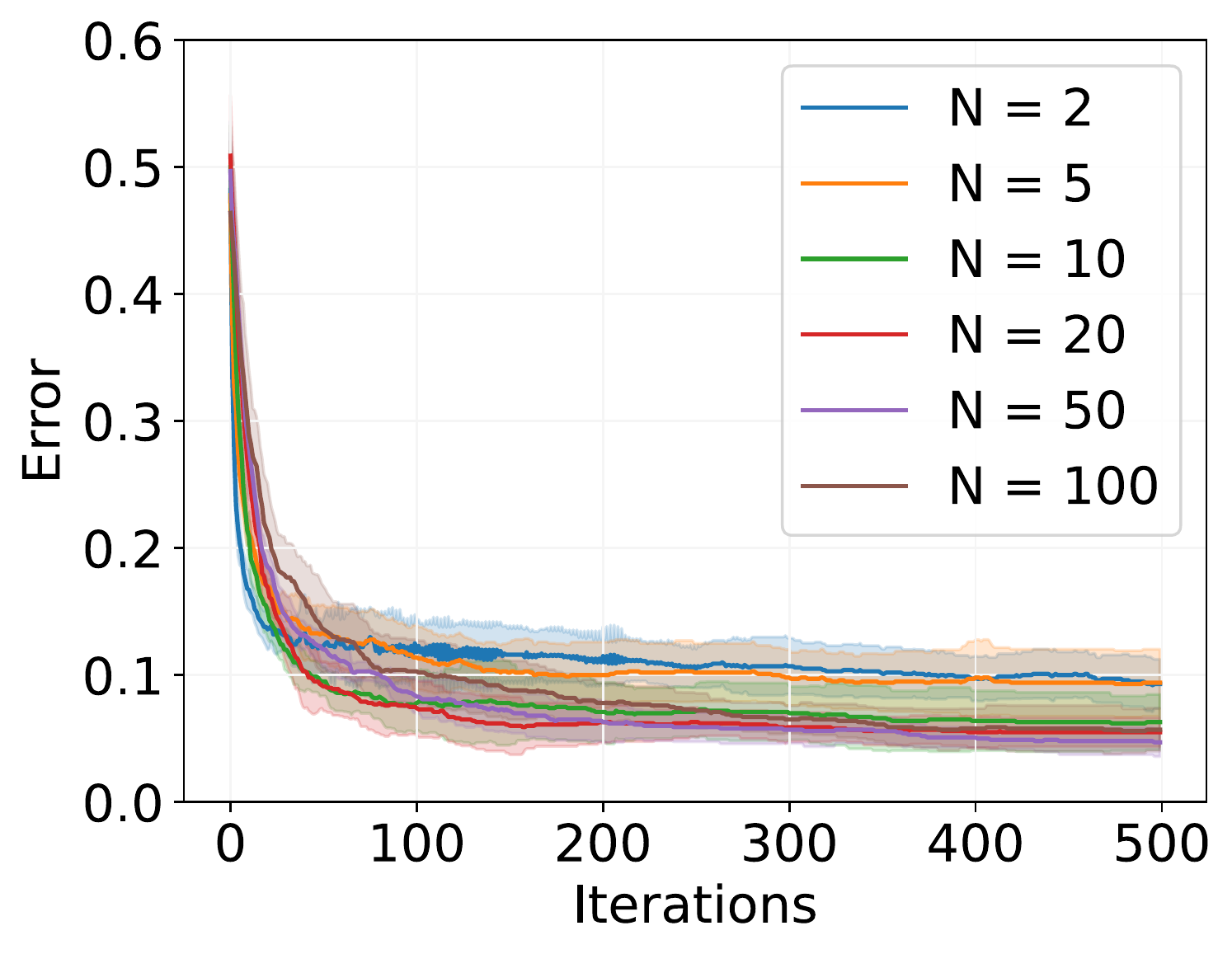}} \\
  \subfigure[PGD. \label{fig:bnn_mnist_compare_N_PGD}]{\includegraphics[width=0.24\textwidth, trim=0 0 0 0, clip]{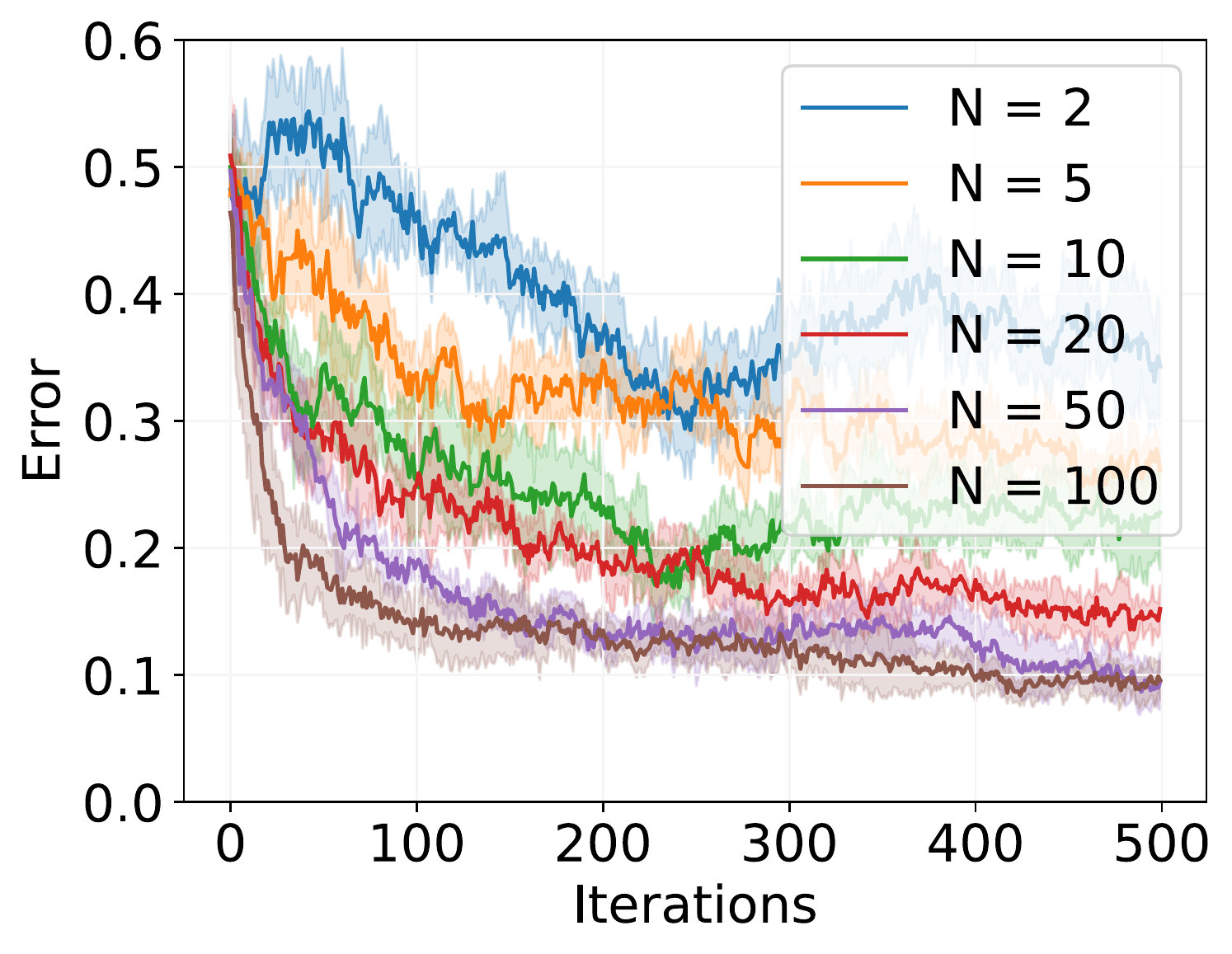}}
  \subfigure[PGD'. \label{fig:bnn_mnist_compare_N_PGD_h}]{\includegraphics[width=0.24\textwidth, trim=0 0 0 0, clip]{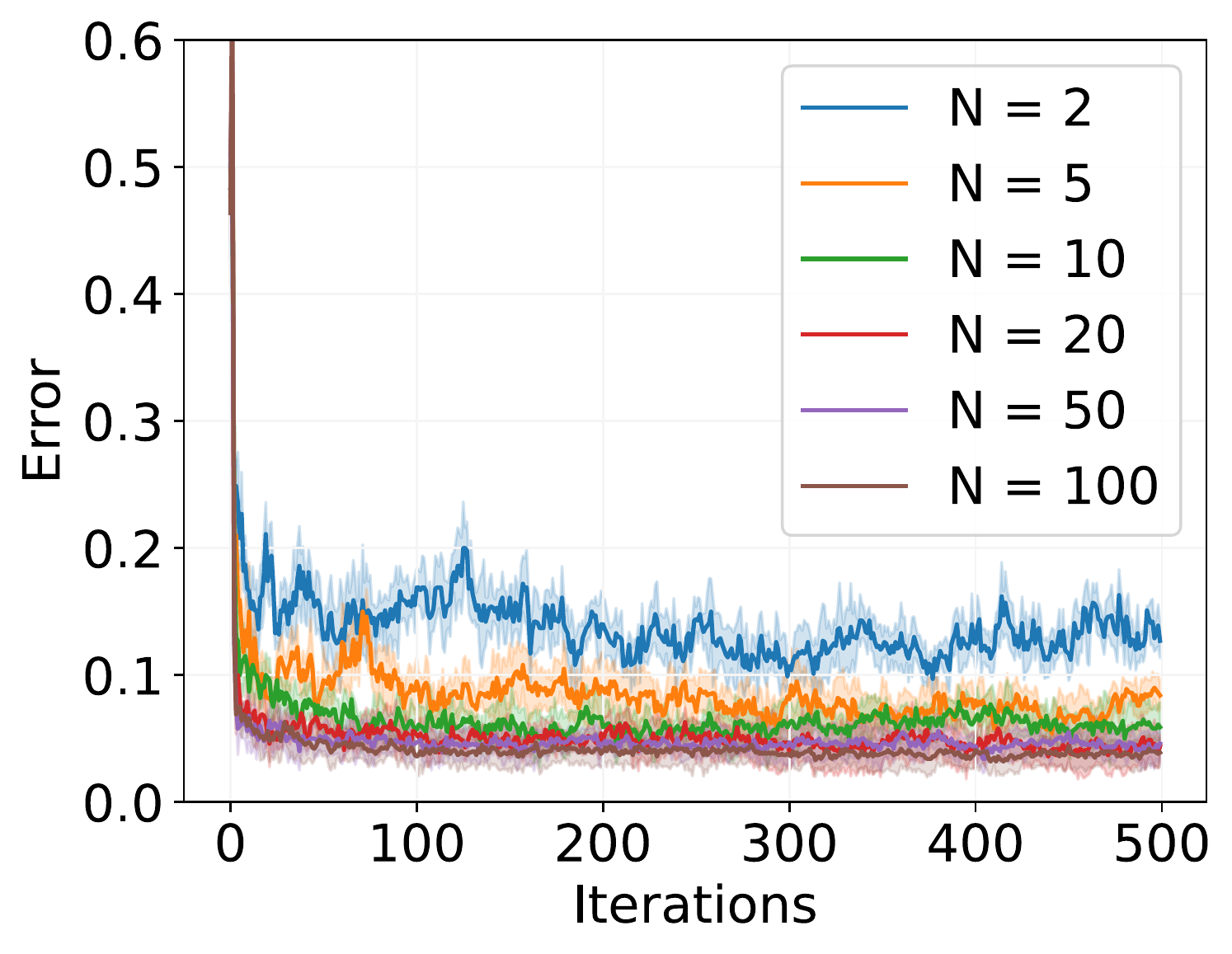}}
  \subfigure[SOUL. \label{fig:bnn_mnist_compare_N_SOUL}]{\includegraphics[width=0.24\textwidth, trim=0 0 0 0, clip]{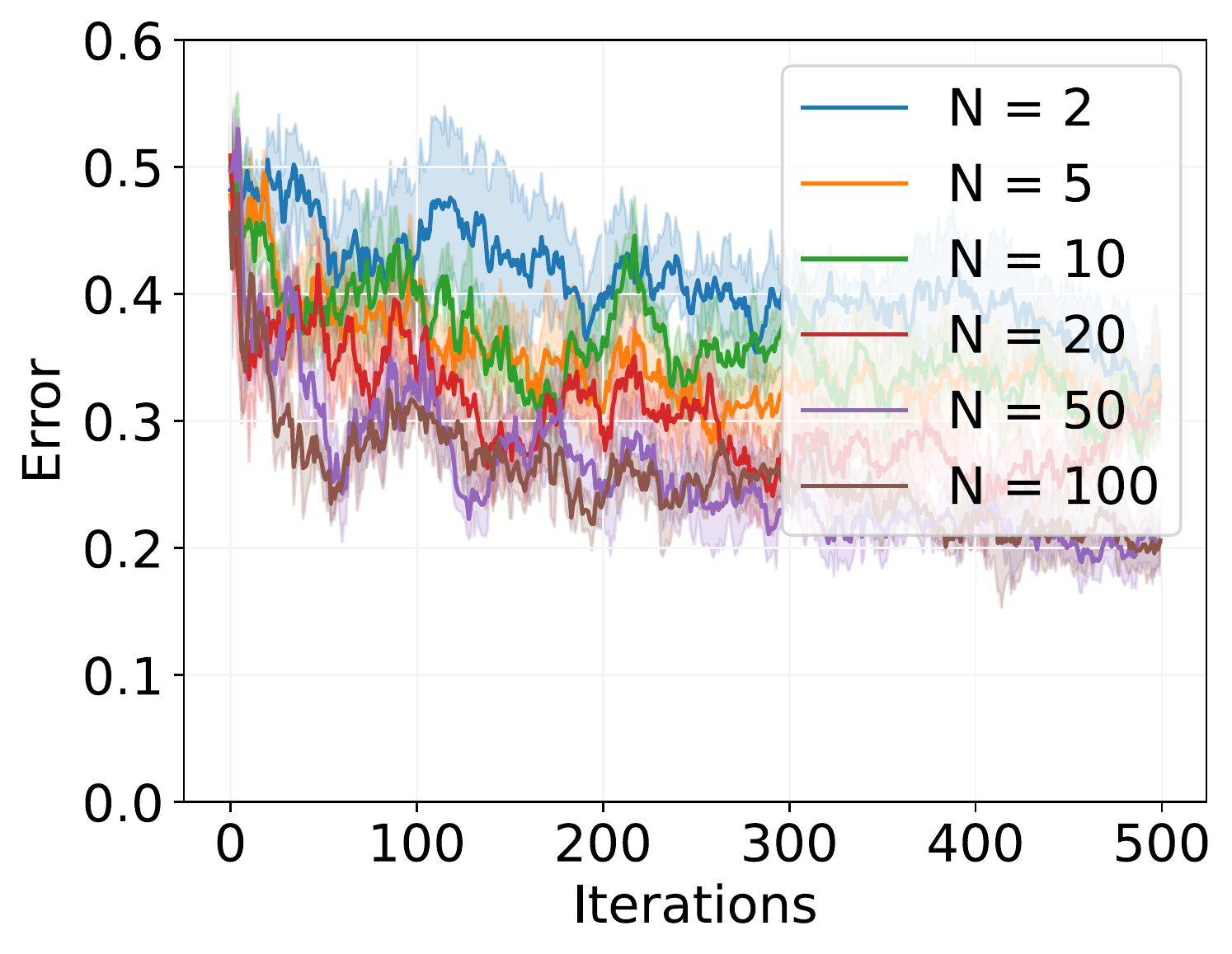}}
  \subfigure[SOUL'. \label{fig:bnn_mnist_compare_N_SOUL_h}]{\includegraphics[width=0.24\textwidth, trim=0 0 0 0, clip]{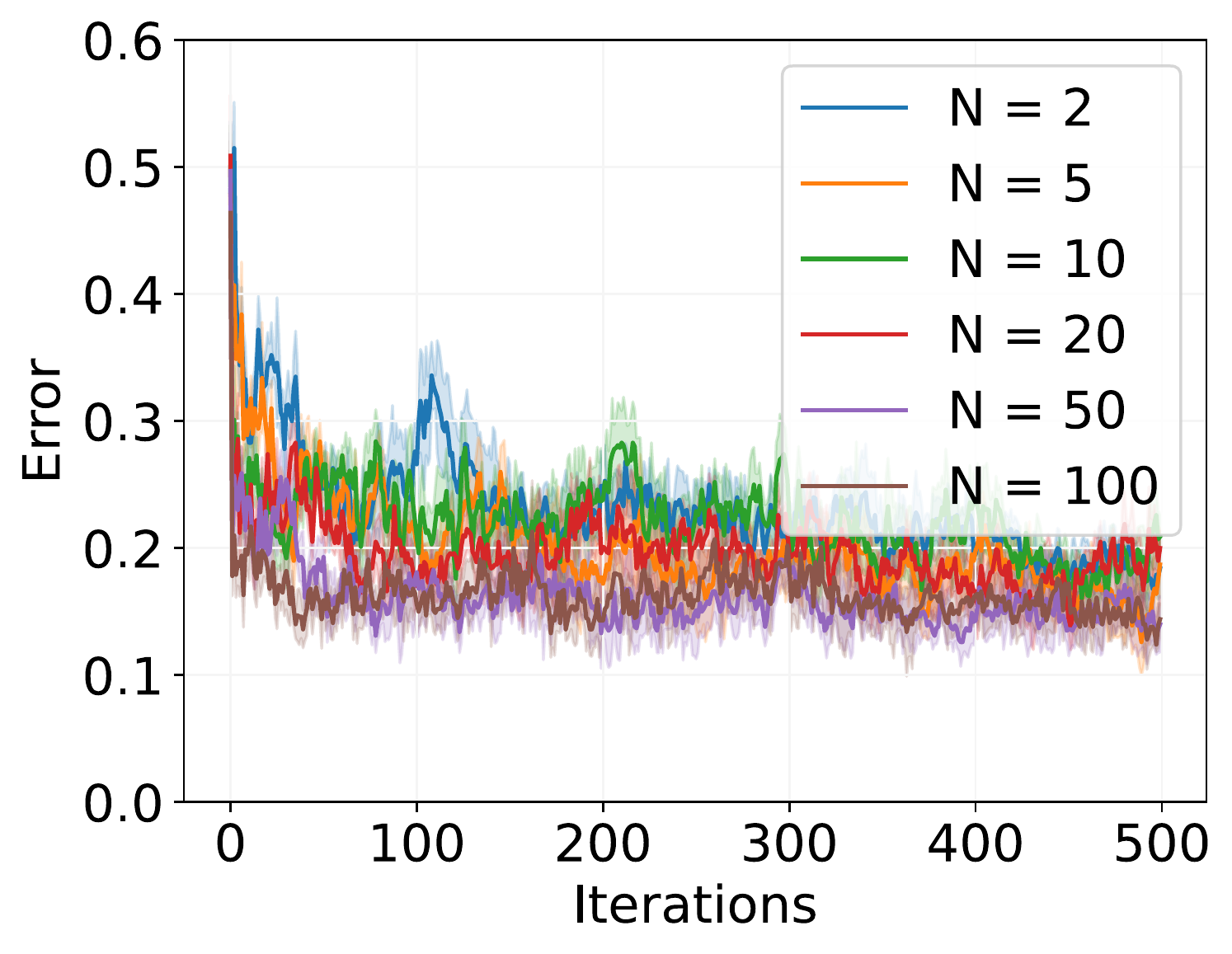}}
  \caption{\textbf{Additional results for the Bayesian neural network model in Sec. \ref{sec:bayes_nn_results}: particle number comparison.} Test error over $T=500$ training iterations, for different numbers of particles. For all learning-rate dependent methods, we use the best learning rate as determined by Fig. \ref{fig:bnn_mnist_compare_lr}.}
  \label{fig:bnn_mnist_compare_N}
\end{figure}

Finally, in Fig. \ref{fig:bayes_nn_mnist}, we provide a comparison of the test accuracy achieved by Coin EM, SVGD EM, PGD, and SOUL, now over a much finer grid of learning rates.

\begin{figure}[ht]
  \centering
  \subfigure[Standard]{\includegraphics[width=0.4\textwidth]{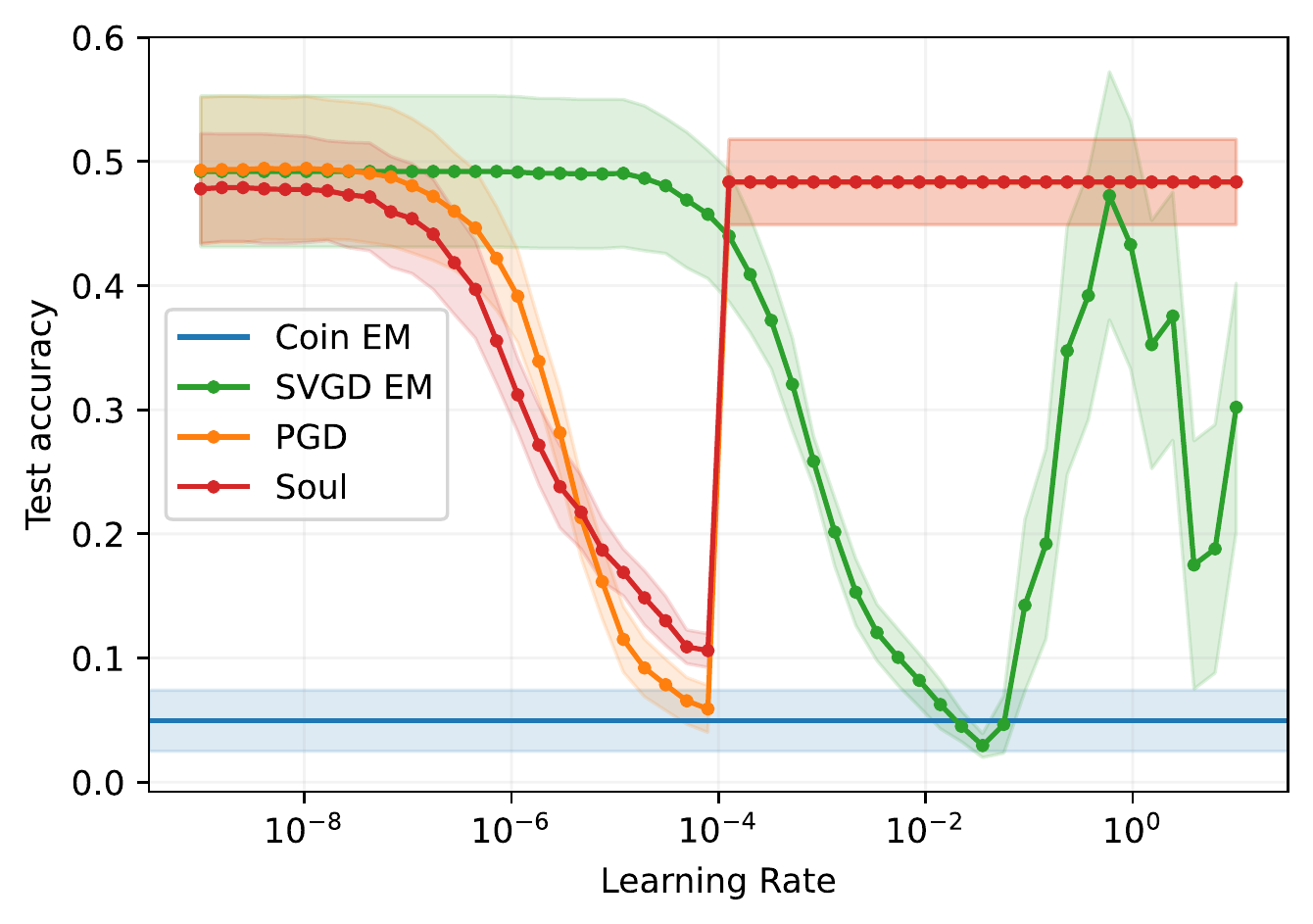}} \hspace{4mm}
  \subfigure[Heuristic]{\includegraphics[width=0.4\textwidth]{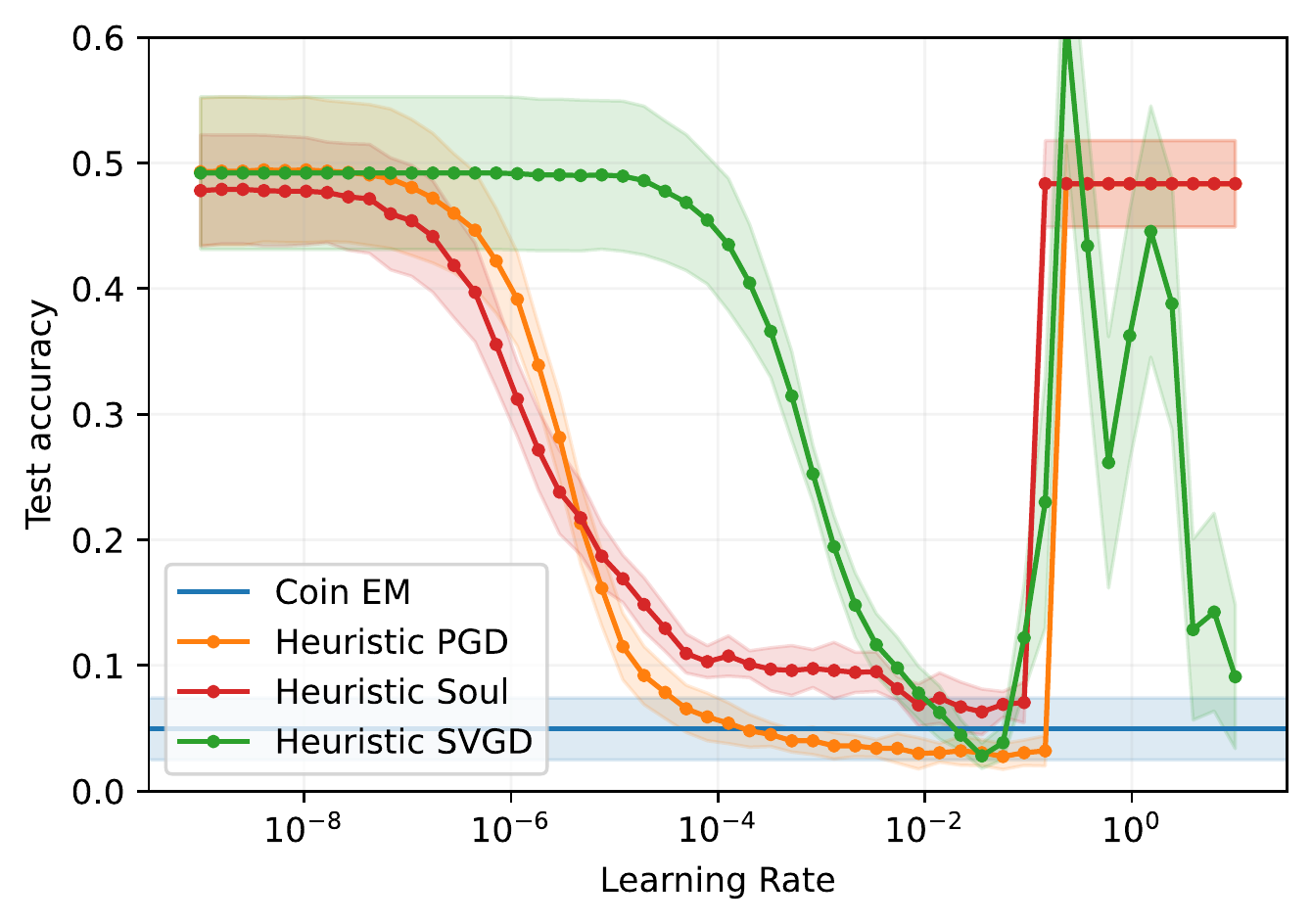}}
  \caption{\textbf{Additional results for the Bayesian neural network in Sec. \ref{sec:bayes_nn_results}}. Accuracy as a function of the learning rate, for the MNIST dataset, averaged over ten random test-train splits.}
  \label{fig:bayes_nn_mnist}
\end{figure}

\subsection{Bayesian neural network (alternative model)}
\label{sec:bayes-nn-add-results}

We now present numerical results for the alternative Bayesian neural network model described in App. \ref{sec:bayes-nn-details}. In Fig. \ref{fig:bayes_nn_additional}, we compare the performance of Coin EM and SVGD EM on several UCI benchmarks. Once again, we compare our algorithms with PGD and SOUL. As noted in Sec. \ref{sec:bayes_nn_results}, in this case the performance of the three learning-rate-dependent algorithms (SVGD EM, PGD, SOUL) is highly sensitive to the choice of learning rate. In particular, there is generally a very small range of learning rates for which these algorithms outperform Coin EM. Given an optimal choice of learning rate, the best predictive performance of SVGD EM is generally comparable with the best predictive performance of SOUL, and better than the best predictive performance of PGD, though there are cases in which Coin EM and SVGD EM significantly outperform the competing methods (see Fig. \ref{fig:bayes_nn_additional_power}). 

\begin{figure}[ht]
  \centering
\subfigure[Concrete]{\includegraphics[width=0.32\textwidth]{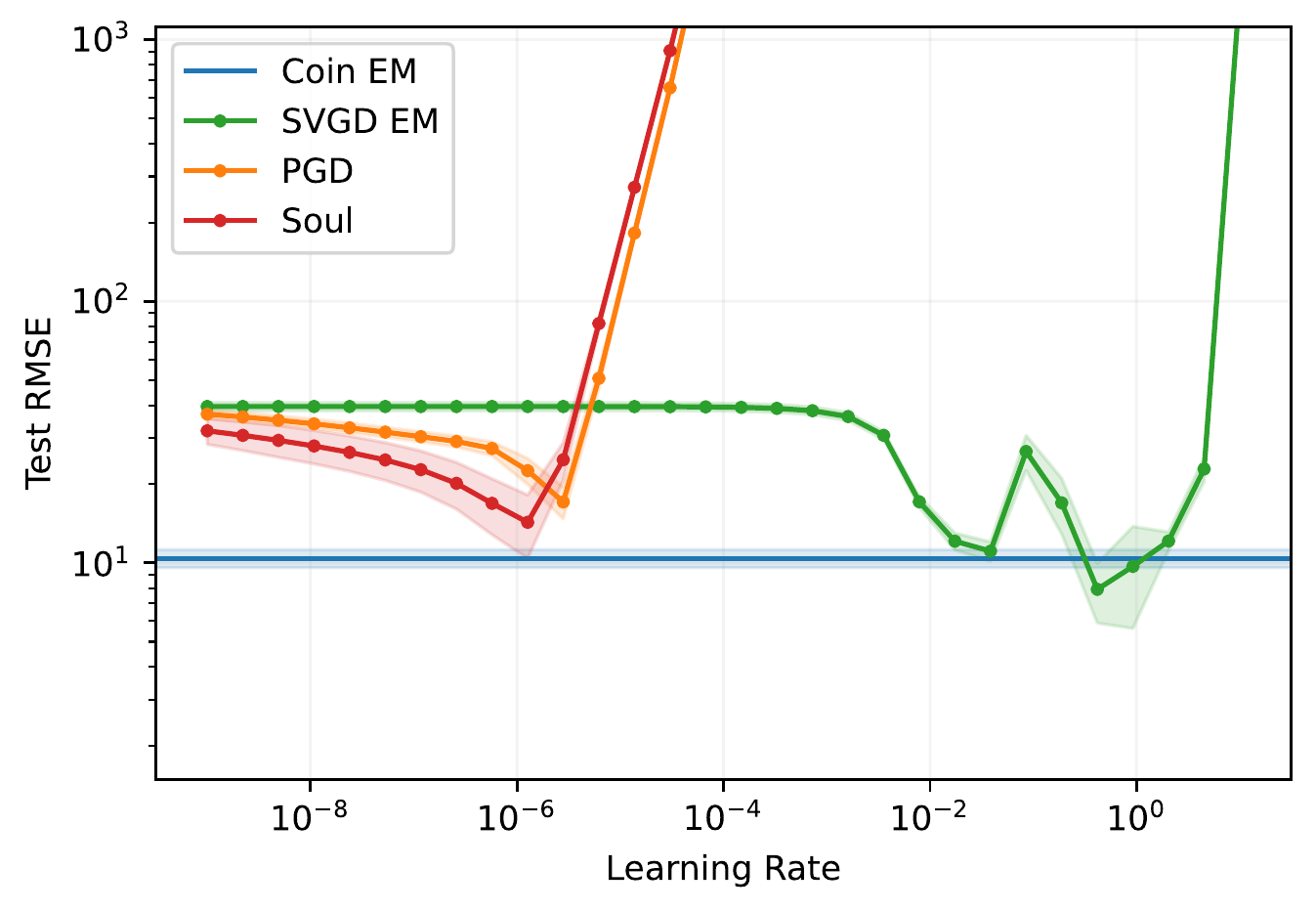}}
  \subfigure[Energy]{\includegraphics[width=0.32\textwidth]{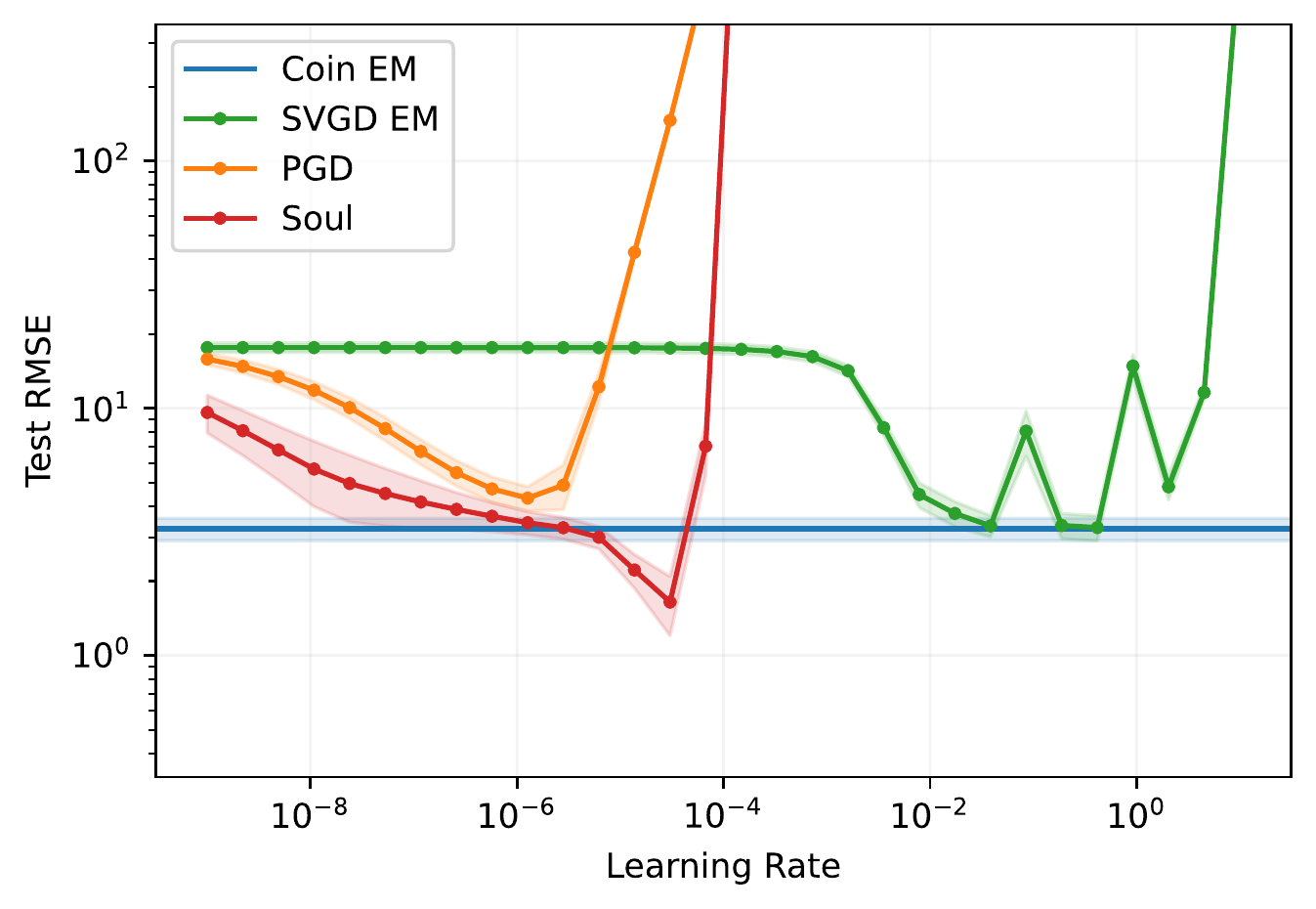}}
  \subfigure[Kin8nm]{\includegraphics[width=0.32\textwidth]{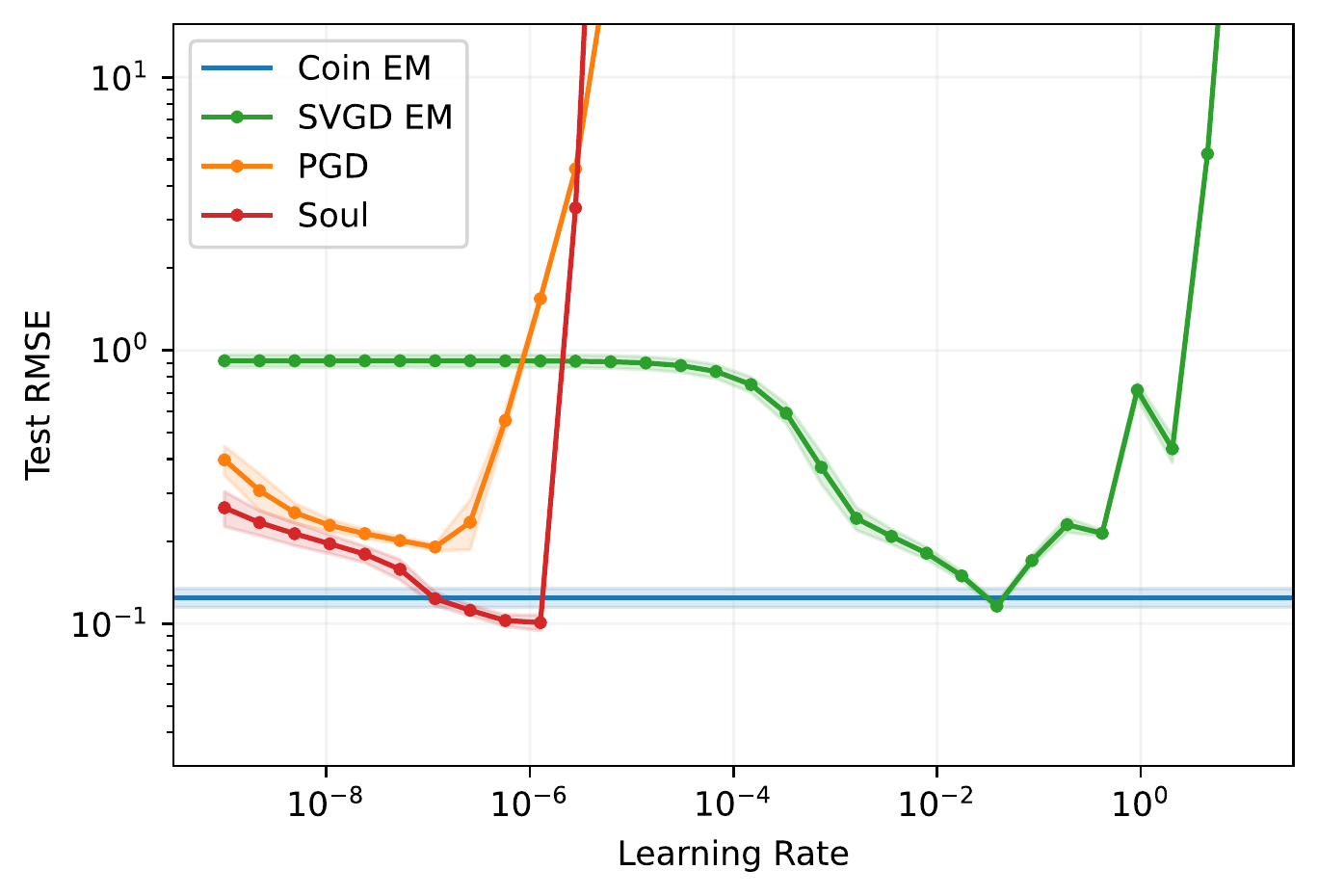}}
  \subfigure[Boston]{\includegraphics[width=0.32\textwidth]{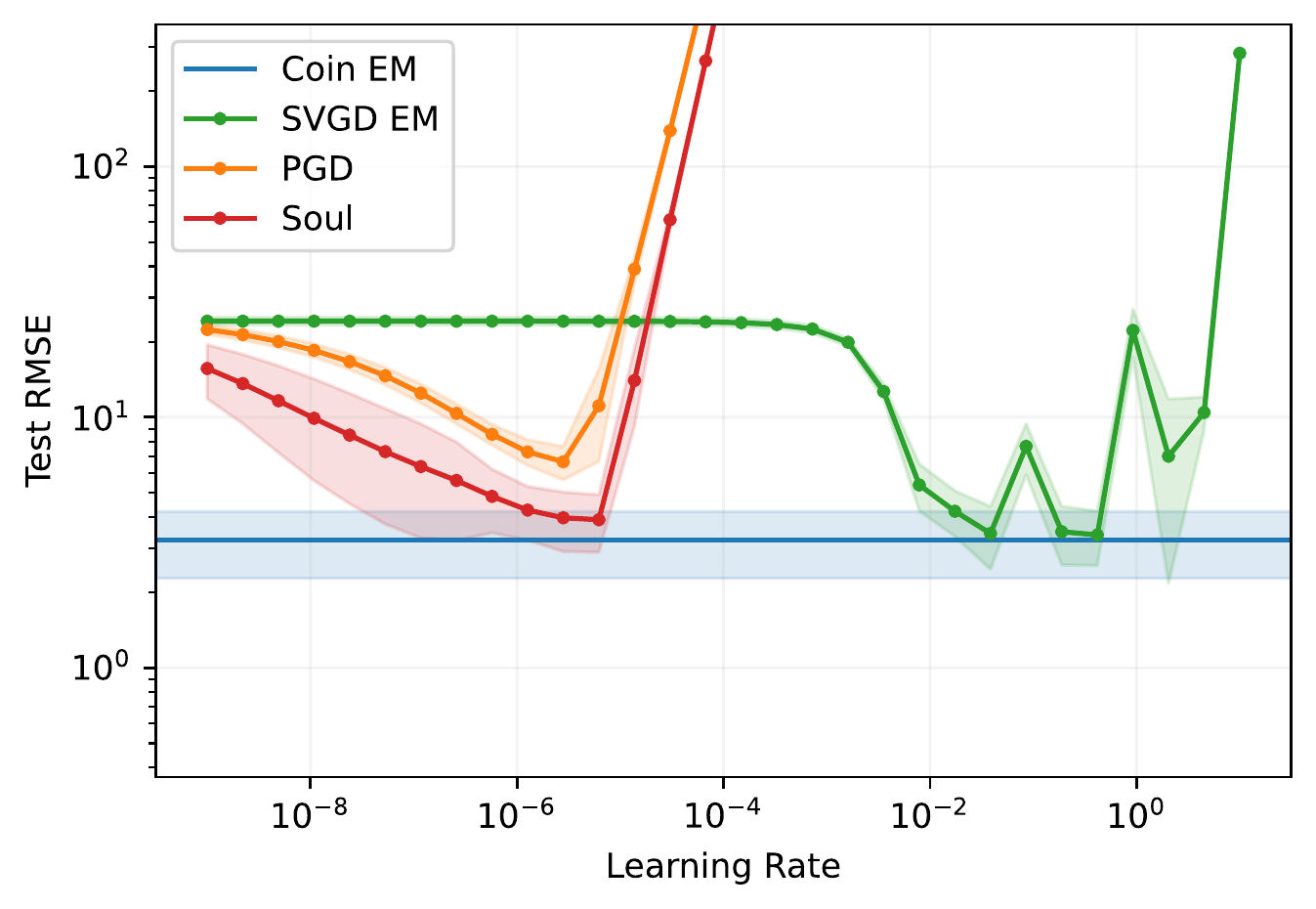}}
  \subfigure[Naval]{\includegraphics[width=0.32\textwidth]{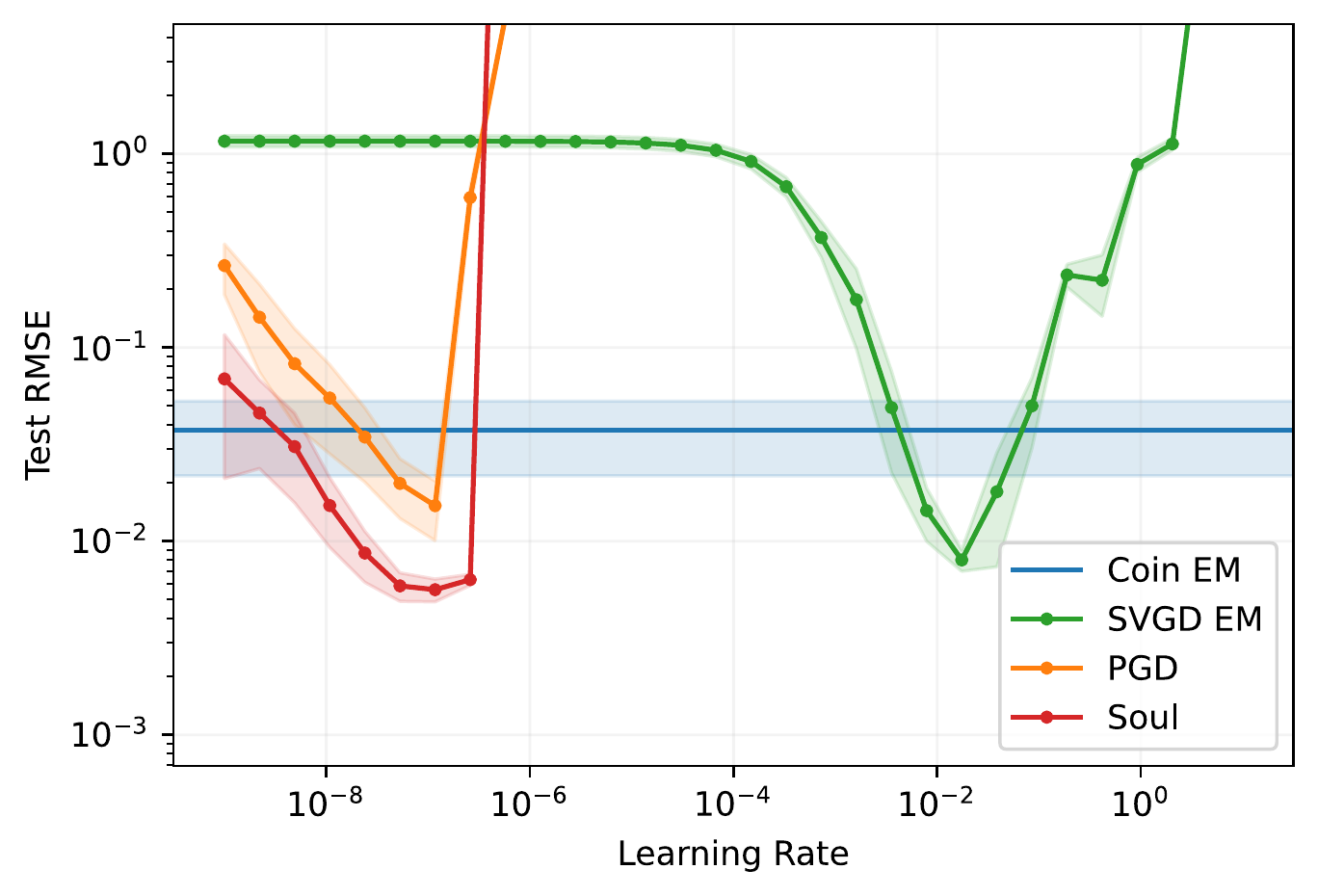}}
  \subfigure[Power. \label{fig:bayes_nn_additional_power}]{\includegraphics[width=0.32\textwidth]{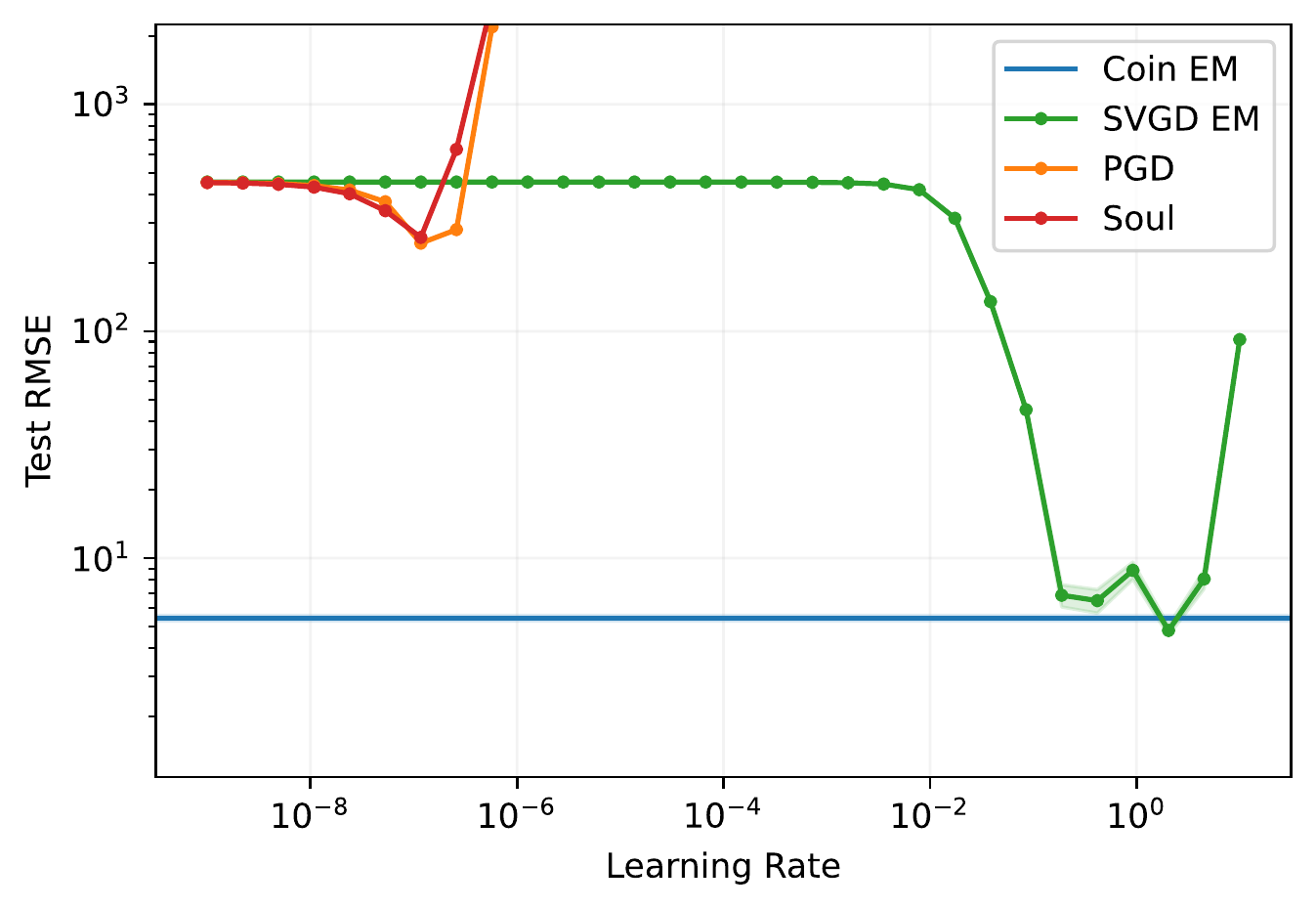}}
  \caption{\textbf{Results for the alternative Bayesian neural network in App. \ref{sec:bayes-nn-details}}. Root mean-squared-error (RMSE) as a function of the learning rate, for several UCI datasets, averaged over ten random test-train splits.}
  \label{fig:bayes_nn_additional}
\end{figure}

\subsection{Network model}
\label{sec:network-add-results}

In this section, we provide additional results for latent space network model studied in Sec. \ref{sec:latent_space_model}. In Fig. \ref{fig:coin_network_model} - Fig. \ref{fig:soul_network_model}, we plot the mean of the particles $\{z_{T}^{i}\}_{i=1}^N$ output by Coin EM (Fig. \ref{fig:coin_network_model}), PGD (Fig. \ref{fig:pgd_network_model}), and SOUL (Fig. \ref{fig:soul_network_model}), using $N=10$ particles and $T=500$ iterations. In this case, each latent variable represents a network node, which corresponds to a Game of Thrones character. In each plot, Fig. (a) - (d) correspond to Series 1 - 4 of the TV series, respectively. The nodes are colour coded according to their cluster assignment from running DBScan \cite{Ester1996}. 

In this case, only Coin EM is able to successfully capture the subjectively correct relationships between characters. 
We experimented with various learning rates for PGD and SOUL and were not able to improve the latent representation of the nodes to infer useful clusters among the characters. If additional covariate information were available, e.g. house labels such as Targaryen, Lannister, etc. then it is possible that the additional information could improve the latent representation for the PGD and SOUL algorithms.   

\begin{figure}[htb]
\vspace{-2mm}
  \centering
  \subfigure[Season 1]{\includegraphics[width=0.35\textwidth, trim = 20 20 20 40, clip]{figs/type1/network/got_network_latent_variables_season_1_coin.pdf}}
  \subfigure[Season 2]{\includegraphics[width=0.35\textwidth, trim = 20 20 20 40, clip]{figs/type1/network/got_network_latent_variables_season_2_coin.pdf}}
  \subfigure[Season 3]{\includegraphics[width=0.35\textwidth, trim = 20 20 20 40, clip]{figs/type1/network/got_network_latent_variables_season_3_coin.pdf}}
   \subfigure[Season 4]{\includegraphics[width=0.35\textwidth, trim = 20 20 20 40, clip]{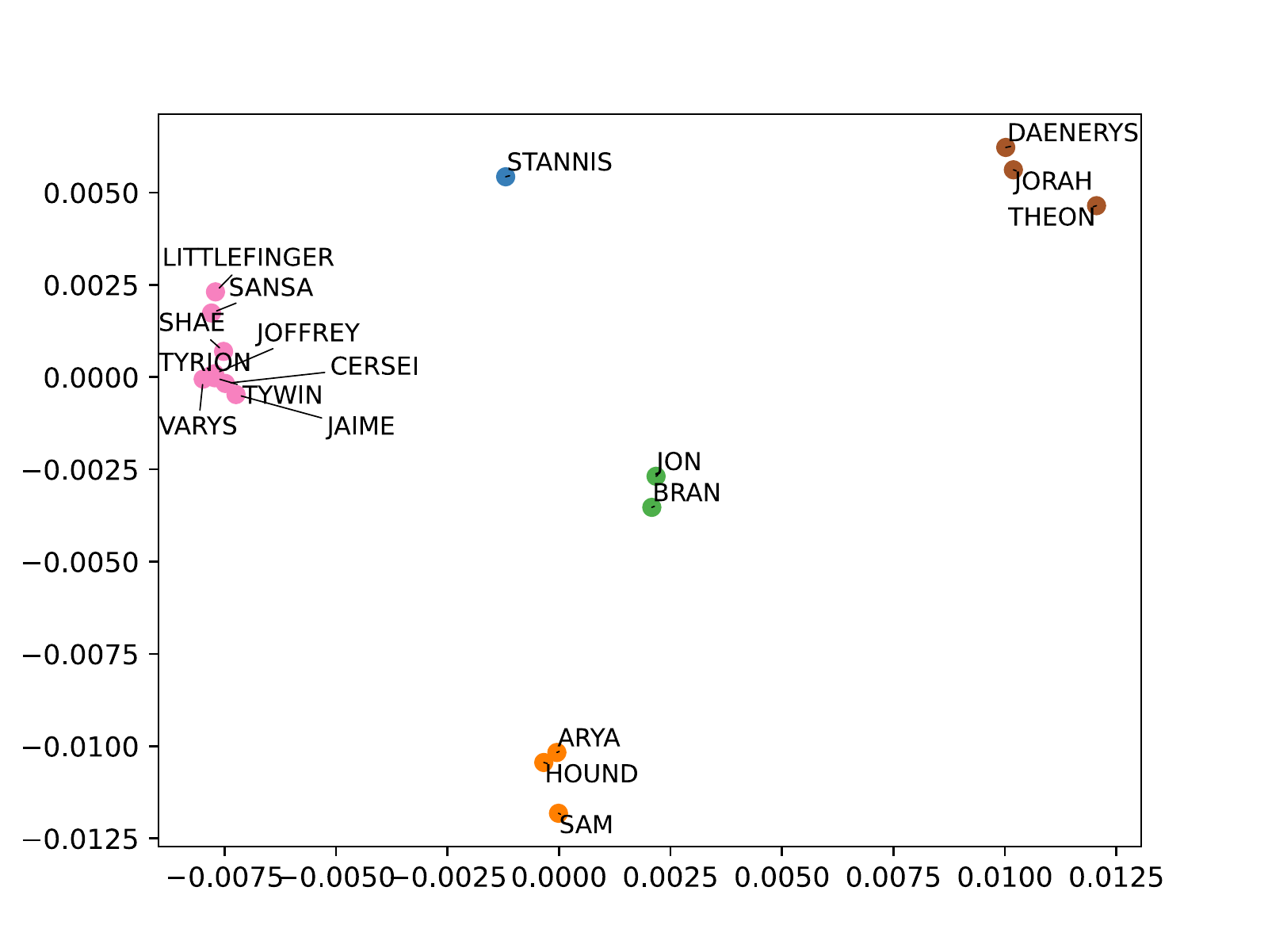}}
  \caption{\textbf{Additional results for the network model in Sec. \ref{sec:latent_space_model}}. Posterior mean $\frac{1}{N}\sum_{j=1}^N z_{T}^{j}$ of the particles generated by {Coin EM} after $T=500$ iterations.}
  \label{fig:coin_network_model}
  \vspace{-3mm}
\end{figure}

\begin{figure}[htb]
  \centering
  \subfigure[Season 1]{\includegraphics[width=0.35\textwidth, trim = 20 20 20 40, clip]{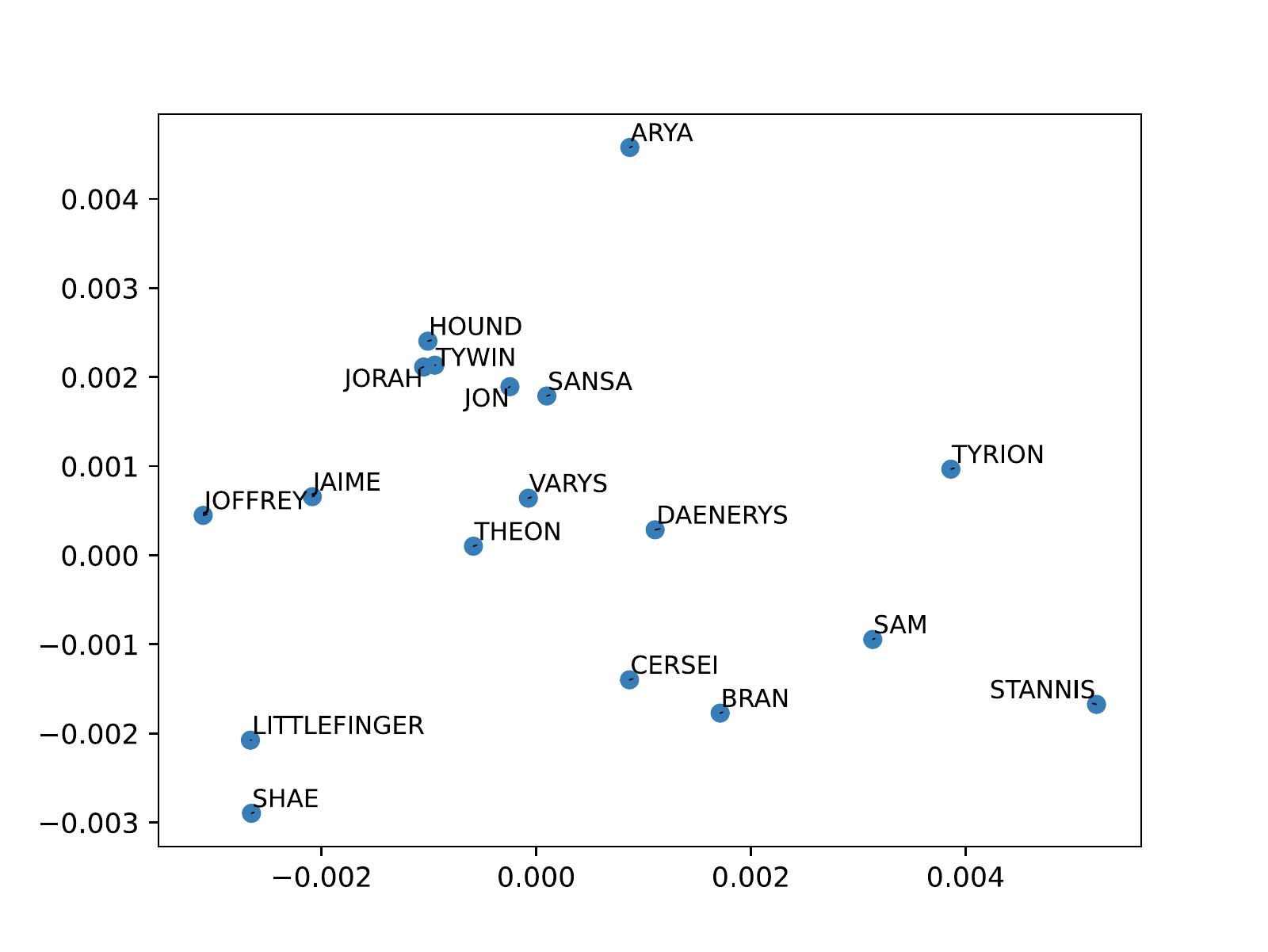}}
  \subfigure[Season 2]{\includegraphics[width=0.35\textwidth, trim = 20 20 20 40, clip]{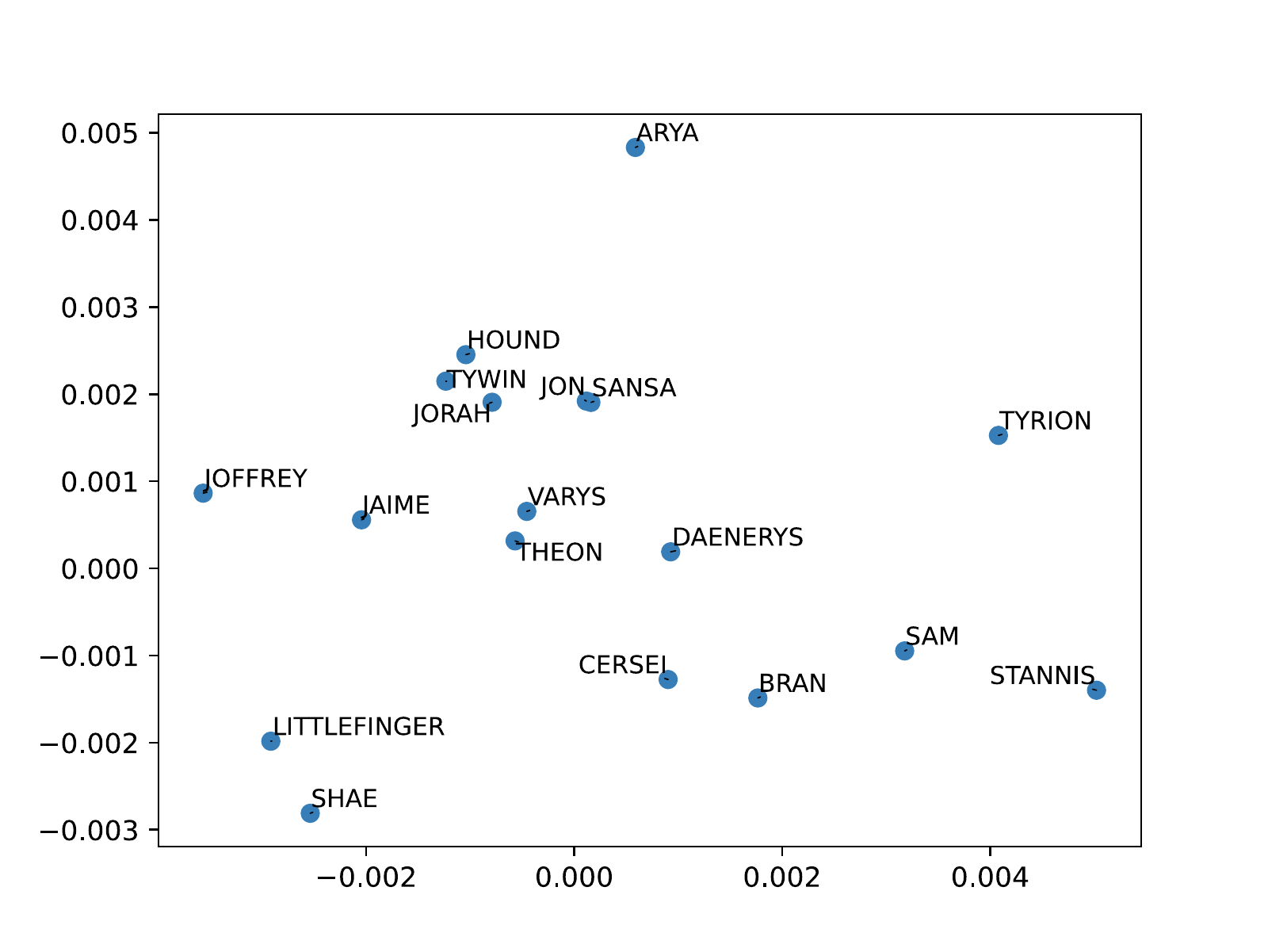}}
  \subfigure[Season 3]{\includegraphics[width=0.35\textwidth, trim = 20 20 20 40, clip]{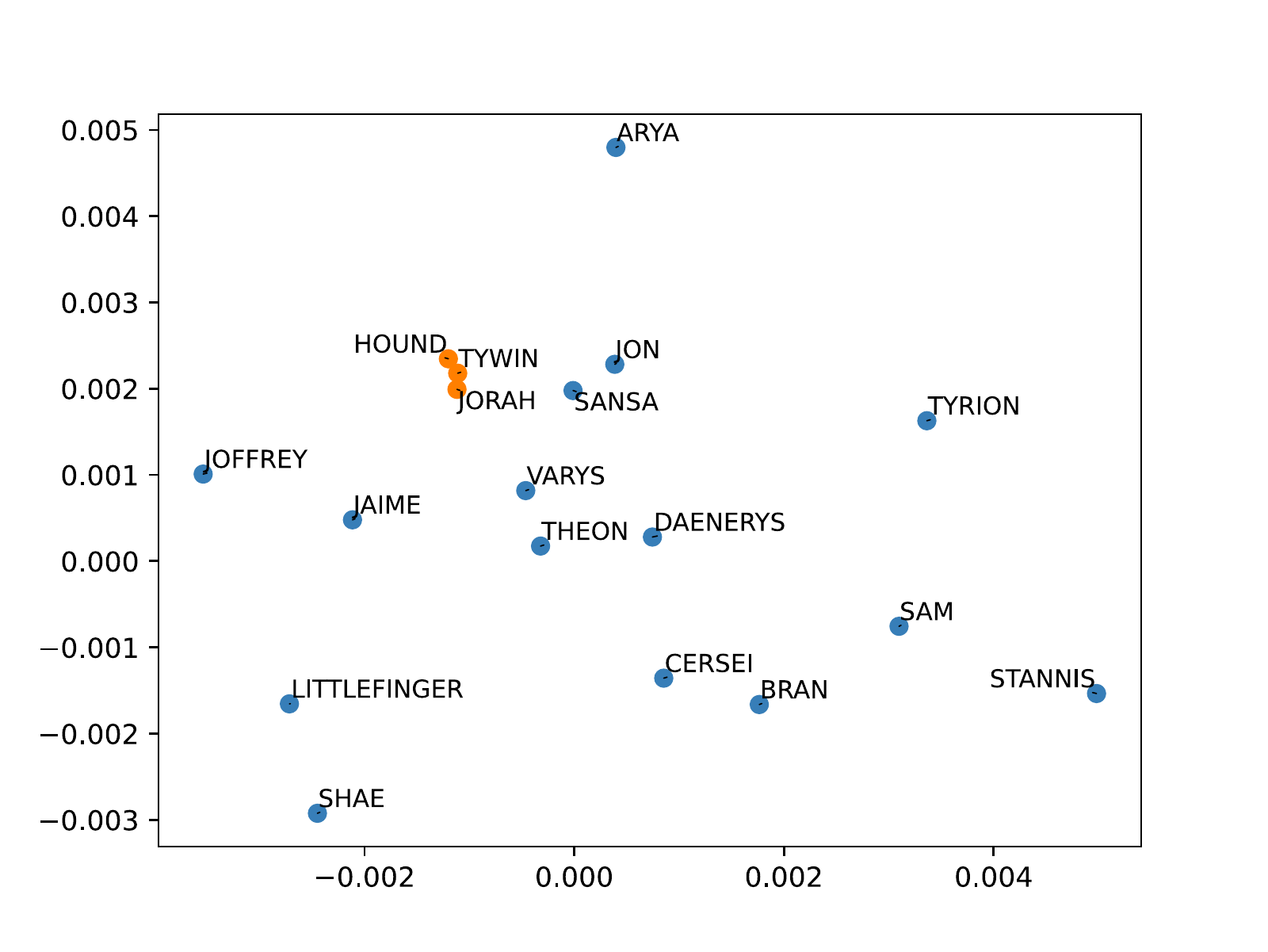}}
   \subfigure[Season 4]{\includegraphics[width=0.35\textwidth, trim = 20 20 20 40, clip]{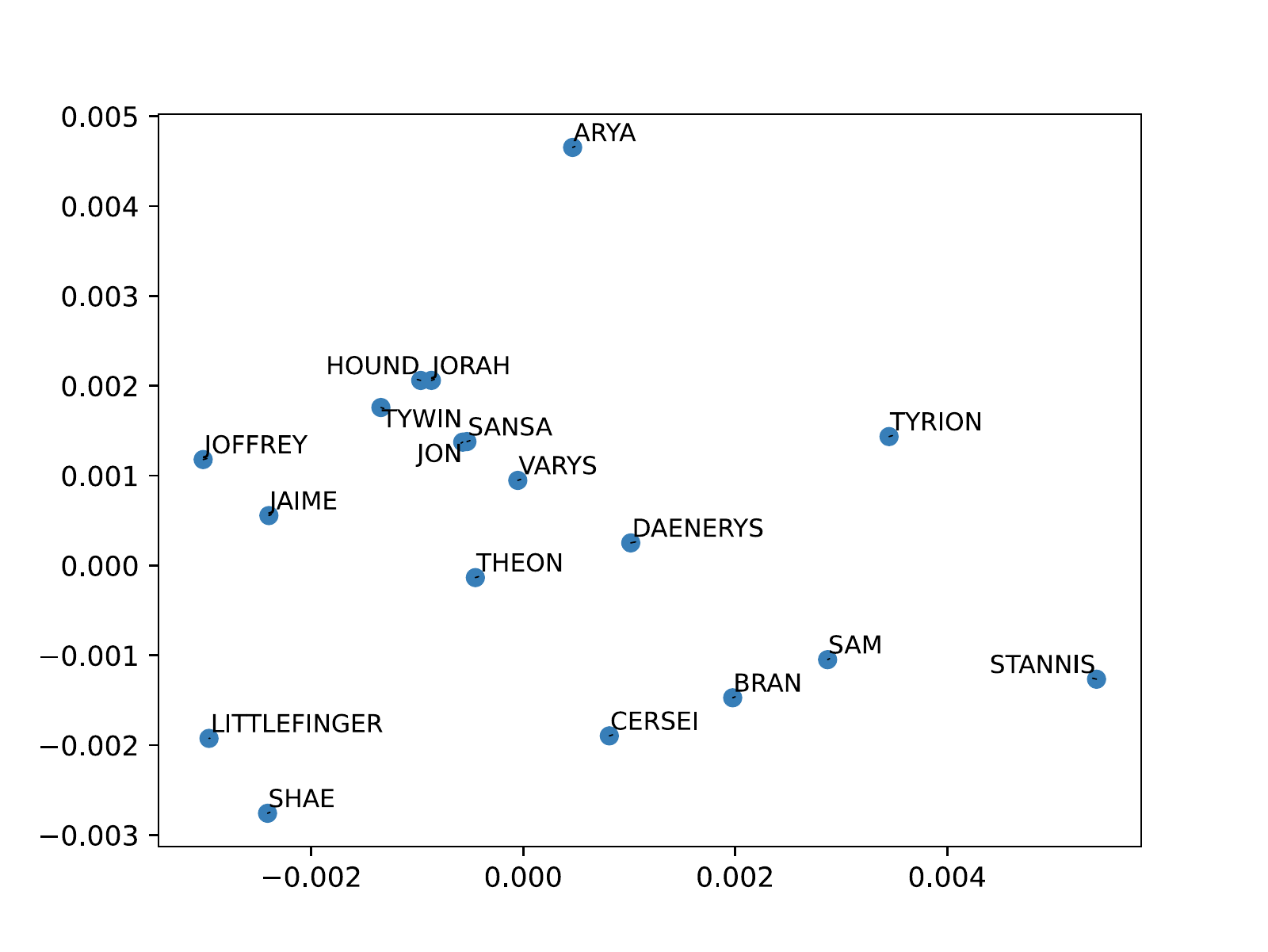}}
  \caption{\textbf{Additional results for the network model in Sec. \ref{sec:latent_space_model}}. Posterior mean $\frac{1}{N}\sum_{j=1}^N z_{T}^{j}$ of the particles generated by {PGD} after $T=500$ iterations.}
  \label{fig:pgd_network_model}
\vspace{-5mm}
\end{figure}

\begin{figure}[htb]
  \centering
  \subfigure[Season 1]{\includegraphics[width=0.35\textwidth, trim = 20 20 20 40, clip]{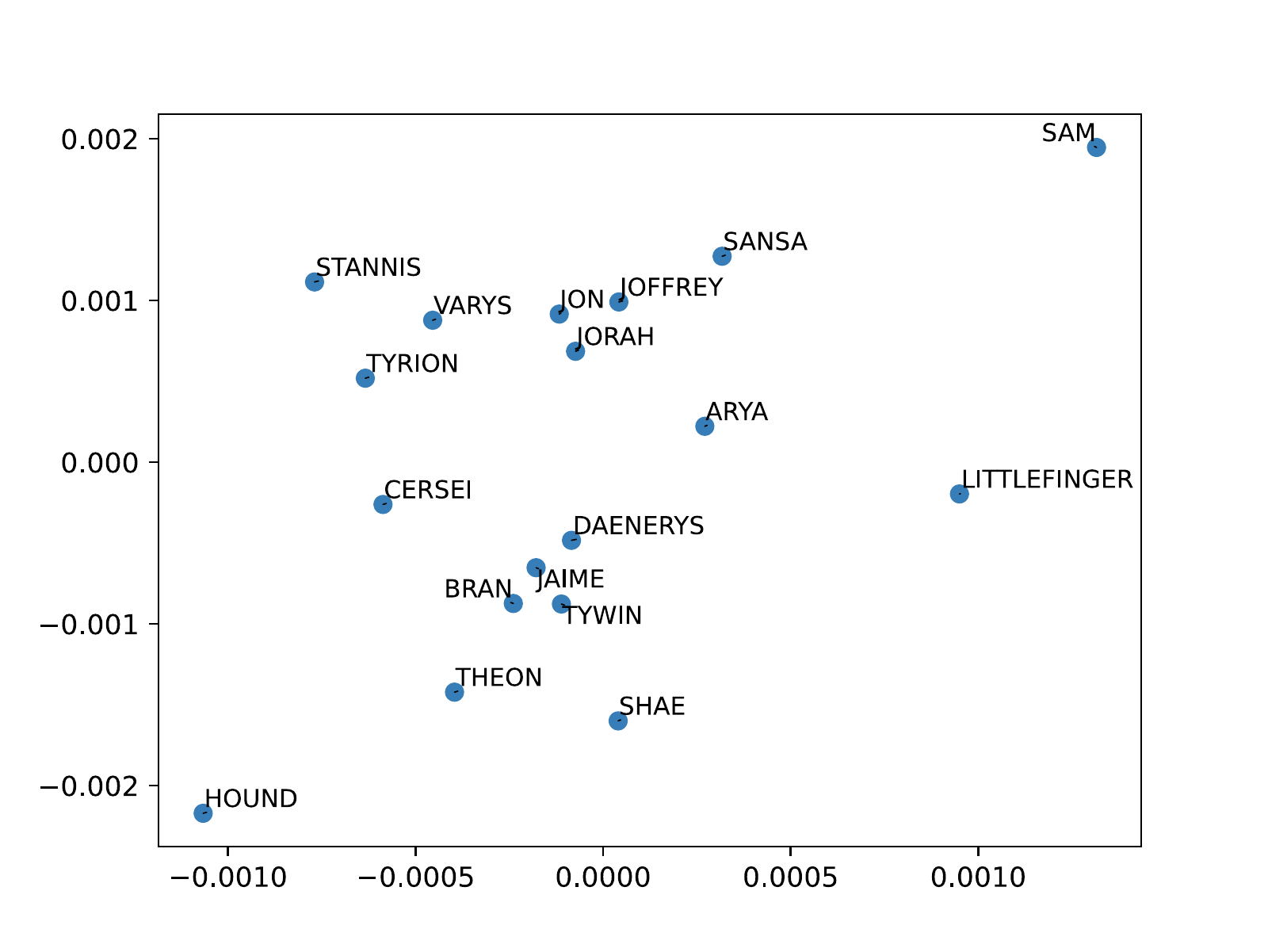}}
  \subfigure[Season 2]{\includegraphics[width=0.35\textwidth, trim = 20 20 20 40, clip]{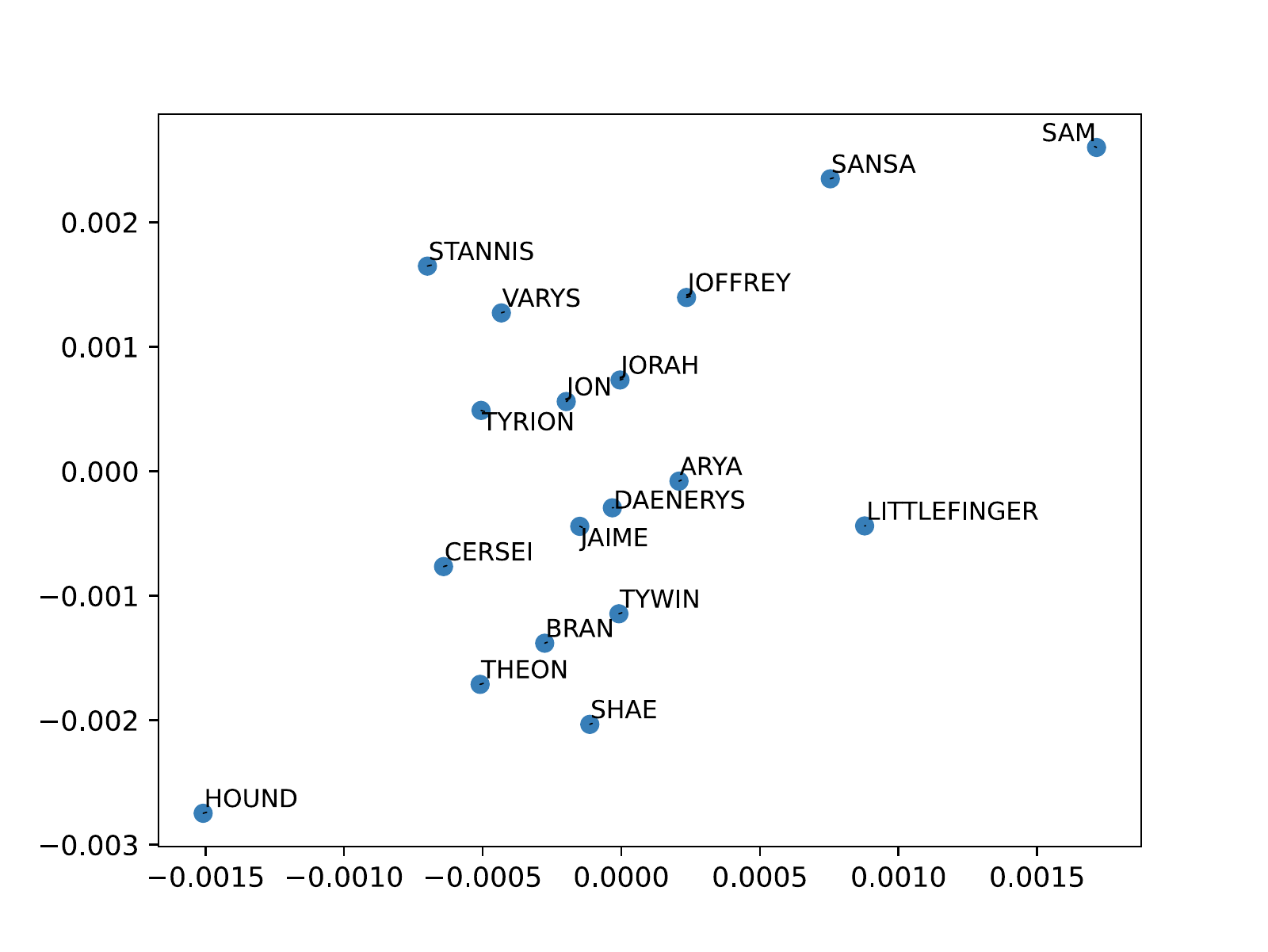}}
  \subfigure[Season 3]{\includegraphics[width=0.35\textwidth, trim = 20 20 20 40, clip]{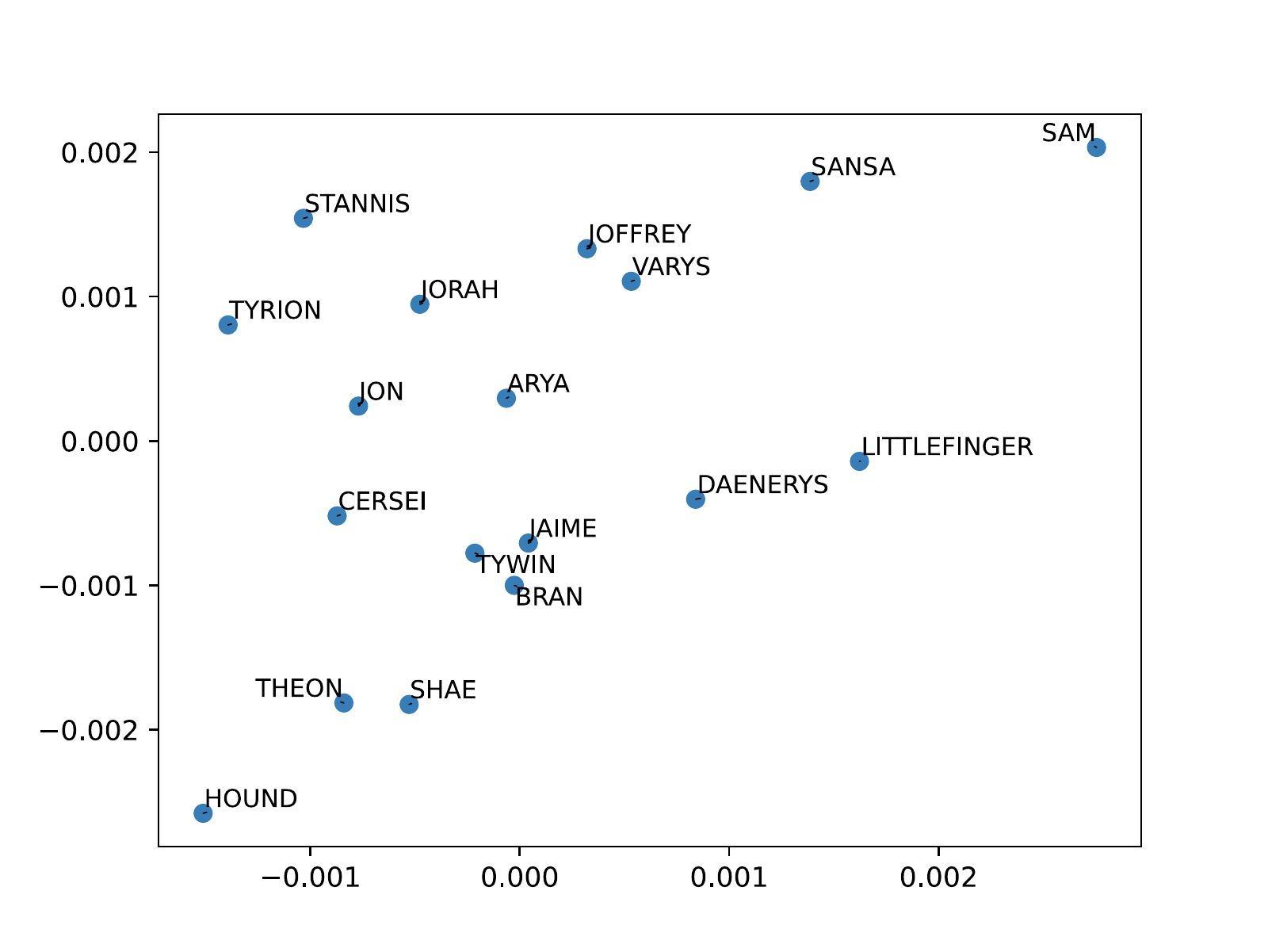}}
   \subfigure[Season 4]{\includegraphics[width=0.35\textwidth, trim = 20 20 20 40, clip]{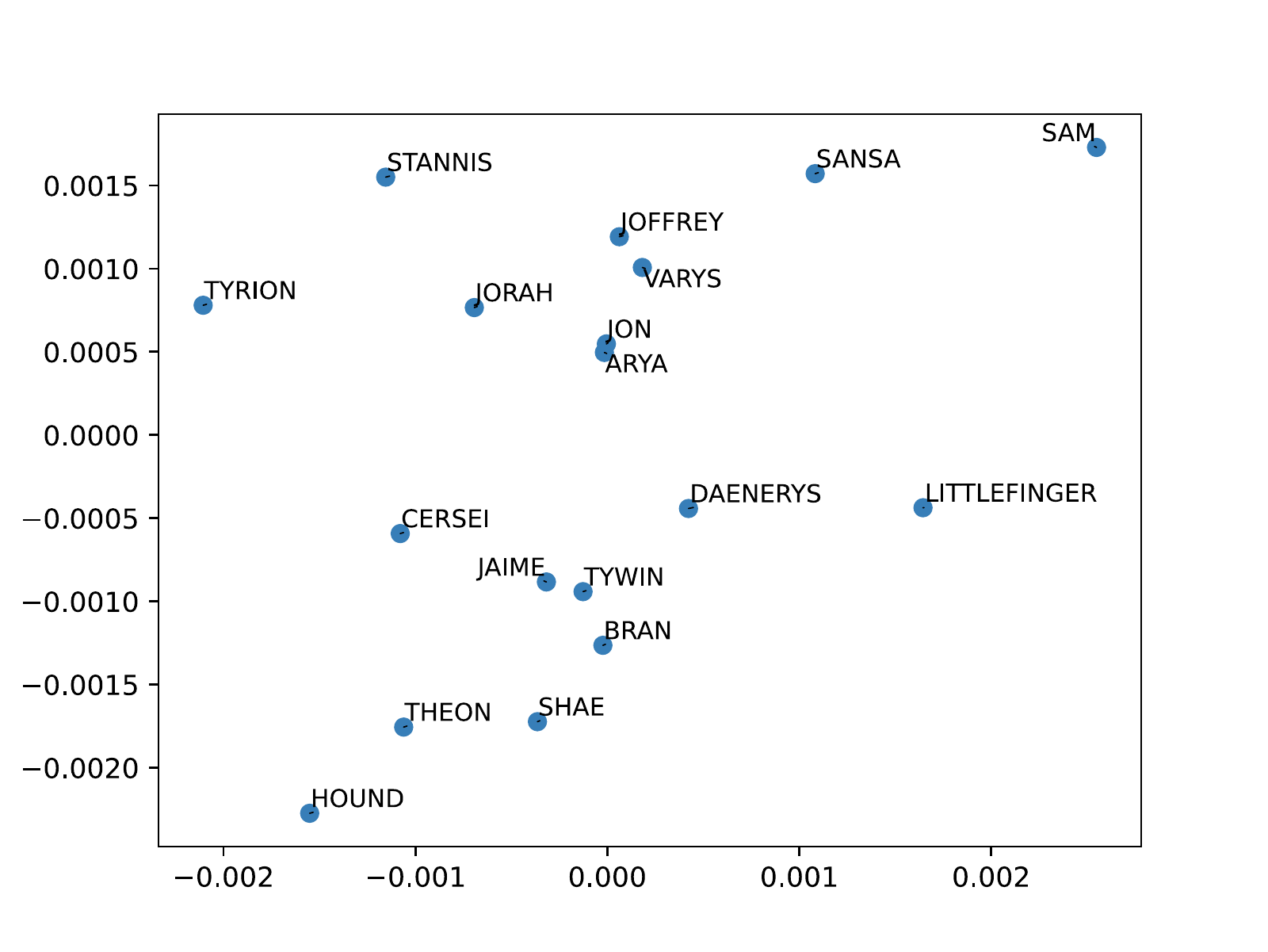}}
  \caption{\textbf{Additional results for the network model in Sec. \ref{sec:latent_space_model}}. Posterior mean $\frac{1}{N}\sum_{j=1}^N z_{T}^{j}$ of the particles generated by {SOUL} after $T=500$ iterations.}
  \label{fig:soul_network_model}
  \vspace{-4mm}
\end{figure}

\vfill

\end{document}